\documentclass{article}

\usepackage{microtype}
\usepackage{float}
\usepackage{graphicx}
\usepackage{booktabs} 
\usepackage{dsfont}
\usepackage{enumitem}
\usepackage{verbatim}
\usepackage{multirow}
\usepackage{longtable}
\usepackage{makecell}

\usepackage{caption}
\usepackage{hyperref}

\usepackage[accepted]{icml2026}

\usepackage{mathtools}
\usepackage{amsmath,amsthm,amsfonts,amssymb}

\usepackage[capitalize,noabbrev]{cleveref}

\theoremstyle{plain}
\newtheorem{theorem}{Theorem}[section]
\newtheorem{proposition}[theorem]{Proposition}
\newtheorem{lemma}[theorem]{Lemma}
\newtheorem{corollary}[theorem]{Corollary}
\theoremstyle{definition}
\newtheorem{definition}[theorem]{Definition}

\newtheorem{example}{Example}[section]
\newtheorem{remark}[theorem]{Remark}
\usepackage[textsize=tiny]{todonotes}

\newcommand{\set}[1]{\left\{ #1 \right\}}
\newcommand{\squarebrac}[1]{\left[ #1 \right]}
\newcommand{\roundbrac}[1]{\left( #1 \right)}

\newcommand{\absbrac}[1]{\left| #1 \right|}

\renewcommand{\ge}{\geqslant}
\renewcommand{\le}{\leqslant}

\newcommand{\R}{\mathbb{R}}
\newcommand{\E}{\mathbb{E}}
\newcommand{\F}{\mathcal{F}}
\newcommand{\X}{\mathcal{X}}

\newcommand{\Rad}{{\bf \Sigma}}
\newcommand{\RC}{\mathfrak{R}}
\newcommand{\ERC}{\widehat{\mathfrak{R}}}

\renewcommand{\P}{\mathbb{P}}
\newcommand{\1}[1]{{\mathds{1}}_{\{#1\}}}
\newcommand{\Lnorm}{\mathrm{L}}

\newcommand{\Lun}{\mathrm{L}_\phi}
\renewcommand{\E}{\mathbb{E}}

\newcommand{\barkappa}{\overline{\kappa}}
\newcommand{\z}{{\bf z}}
\newcommand{\x}{{\bf x}}
\renewcommand{\u}{{\bf u}}

\icmltitlerunning{Refined Generalization Analysis for SCRL}
\begin{document}

\twocolumn[
  \icmltitle{A Refined Generalization Analysis for Extreme Multi-class Supervised Contrastive Representation Learning}

  \icmlsetsymbol{equal}{*}

  \begin{icmlauthorlist}
    \icmlauthor{Nong Minh Hieu}{smu}
    \icmlauthor{Antoine Ledent}{smu}
  \end{icmlauthorlist}

  \icmlaffiliation{smu}{School of Computing and Information Systems, Singapore Management University}
  \icmlcorrespondingauthor{Nong Minh Hieu}{mh.nong.2024@phdcs.smu.edu.sg}
  \vskip 0.3in
]



\printAffiliationsAndNotice{}

\begin{abstract}
    Contrastive Representation Learning (CRL) has achieved strong empirical success in multiple machine learning disciplines, yet its theoretical sample complexity remains poorly understood. Existing analyses usually assume that input tuples are identically and independently distributed, an assumption violated in most practical settings where contrastive tuples are constructed from a finite pool of labeled data, inducing dependencies among tuples. While one recent work analyzed this learning setting using U-Statistics to estimate the population risk, the techniques used therein require the risk of each class to concentrate uniformly, making excess risk bounds scale in the order of $\rho_{\min}^{-{1}/{2}}$ where $\rho_{\min}$ denotes the probability of the rarest class. Such a dependency can be overly pessimistic in the extreme multiclass settings where there are many tail classes which contribute minimally to the overall population risk. Our contributions are two-fold. Firstly, we improve upon the previous work and prove a bound with a sample complexity of the same order as the number of classes $R$, regardless of the distribution over classes. Furthermore, we formulate a different estimator that captures the concentration of the risk \textit{across classes}, enabling sharper bounds in extreme multi-class learning scenarios, especially where class distributions are long-tailed. Under mild assumptions on the class distributions, the resulting sample complexity is $\mathcal{{O}}(k)$ where $k$ is the number of samples per tuple. 
\end{abstract}

\section{Introduction}
The performance of machine learning models depends critically on data representation. In multi-class classification, effective representations typically promote intra-class compactness and inter-class separability. Contrastive Representation Learning (CRL) has therefore become a popular approach for learning structured representations from raw data prior to downstream tasks. Despite its empirical success, theoretical understanding of CRL is still in the early stage. Existing analyses of excess risk bounds, such as \citet{article:arora19} and \citet{article:lei2023}, commonly assume that unlabeled input $(k+2)$-tuples are drawn i.i.d. from an unknown distribution. While analytically convenient, this assumption may not reflect practical CRL pipelines. In many real-world implementations, contrastive tuples are constructed from a finite pool of \textbf{labeled examples} \citep{article:sohn2016, article:khosla2020, article:dejiao2021}, leading to repeated use of data points across tuples, thereby violating the independence assumption.

To explicitly model tuple dependencies in supervised CRL, \citet{article:hieu2025} introduced a U-statistic based estimator that decomposes the population risk into a sum of class-wise risks then estimates each component independently (cf. Eqn.~\eqref{eq:hieu_ledent_estimator}). While this formulation is natural and effective when the number of classes is moderate, it implicitly ties the overall sample complexity to the worst class-wise estimation error. In particular, their analysis requires all class-wise risk estimators to concentrate simultaneously, leading to a sample complexity that scales with $\rho_{\min}^{-1}$ where $\rho_{\min}$ denotes the rarest class probability. As a consequence, the bound becomes overly pessimistic in regimes with many classes, where $\rho_{\min}$ is necessarily small, and reduces to a dependence on the number of classes $R$ only in the restrictive case of perfectly balanced class distributions.

Our first contribution relaxes this worst-case requirement by allowing heterogeneous accuracy across class-wise estimators. Intuitively, classes with negligible contribution to the population risk need not be estimated with the same precision as dominant ones. This eliminates the parasitic dependence on $\rho_{\min}$ and yields bounds that depend on $R$ irrespective of the class distribution. Nevertheless, even this scale remains pessimistic in extreme multi-class settings where $R$ is very large, and it stands in contrast with classical results on excess risk bounds for $k$-tuple learning where sample complexity depends only on the tuple size $k$ \citep{article:clemencon2008, journal:clemencon2016}. This mismatch suggests that controlling class-wise estimators, even in aggregate, is fundamentally sub-optimal. We therefore introduce an \textbf{entirely different} U-Statistic formulation (cf. Eqn.~\eqref{eq:natural_estimator}) that enforces joint concentration across classes. The resulting sample complexity scales as $k(1-\tau)^2$, where $\tau$ is the overall class collision probability (see, for example, \citet{article:ash2022, article:awasthi2022} or \citet[Section 4.2]{article:arora19}). Specifically, when negatives are sampled without excluding positive classes, $\tau$ represents the overall likelihood that there exists at least a positive-class sample among the selected negatives (cf. Eqn.~\eqref{eq:collision_probability}). In extreme multi-class regime, $\tau\to0$, recovering a dependency on $k$ alone and aligning with established results on tuple-learning.
\section{Related Work}
\textbf{Empirical Minimization of U-Statistics:} U-Statistics were introduced by \citet{article:hoeffding1948} for parameters estimation using symmetric kernels. Later, the concentration properties of U-Statistics were analyzed rigorously by \citet{article:ginezinn1984} and \citet{article:arconesgine1993}, laying the foundation for works in statistical learning theory where losses rely on two or more samples. Concentration inequalities for U-Statistics have been utilized in multiple machine learning disciplines, including ranking \citep{article:clemencon2008}, metric learning \citep{article:Cao2016, article:jin2009}, pairwise learning \citep{article:yleiregularizedpairwise2018, article:yleiledent2020}, clustering \citep{article:liliu2021} and self-supervised learning \citep{article:haochen2021}.

\textbf{Generalization under the i.i.d. Tuples Regime:} The i.i.d. tuple framework was introduced by \citet{article:arora19}, who formalized unsupervised risk for contrastive learning. Ignoring the dependency on the function class, their excess risk bounds scale as $\mathcal{O}(\sqrt{{k}/{\mathrm{N}}})$ where $k$ denotes the number of negative samples and ${\mathrm{N}}$ is the number of i.i.d. \textit{tuples} available to the learner. Using tools such as worst-case Rademacher complexity and fat-shattering dimension,  \citet{article:lei2023} subsequently improved this dependence to $\widetilde{\mathcal{O}}(\sqrt{{1}/{\mathrm{N}}})$. We note that in these works, `sample complexity' is expressed in terms of the number $\mathrm{N}$ of required $(k+2)$-\emph{tuples} (with each tuple being one `sample'), while in our work, it refers to the number $N$ of required \emph{labeled data points} which are subsequently used to form collision-free tuples. Hence a sample complexity of $\widetilde{\mathcal{O}}(k)$ in the supervised CRL formulation is comparable to a sample complexity of $\widetilde{\mathcal{O}}(1)$ in i.i.d. tuples regime. Indeed, this is what can be expected from a Hoeffding-type argument in the pure tuple-wise learning (without constraints on collisions), since $\mathcal{O}({N}/{k})$ tuples can be formed with $N$ samples. Subsequently, the work of \citet{article:arora19} was extended to other CRL settings such as adversarial CRL \cite{article:xinandwei2023, article:ghanooni2024},  PAC-Bayes analysis and deep CRL \cite{article:nozawa2020, article:nozawa2021,article:alves2024,article:hieu2024}. Apart from generalization, prior works inspired by the i.i.d. regime also investigates other angles of contrastive learning theory such as the role of negative samples \citep{article:ash2022, article:awasthi2022, bao2022surrogate}, mitigating spurious features in CRL \citep{article:ghanooni2025}, and the effect of the unsupervised risk on downstream tasks \cite{article:arora19, article:chuang2020,article:liliu2021, bao2022surrogate, article:huang2023,cui2025inclusive,article:cui2025}. 

\textbf{Theoretical Analysis for SS-CRL:} A strong body of work inspired by \citet{article:arora19} explores generalization and other theoretical aspects of self-supervised CRL. For example, \citet{article:wang2022} proposed augmentation overlap theory, suggesting that aggressive data augmentation can implicitly improve intra-class compactness. Along similar lines, \citet{article:cui2025} study how augmentation affects downstream performance by relating the supervised risk to the empirical self-supervised risk (over a set of i.i.d. tuples) and the expected same-class or same-instance augmentation distance. \citet{cui2025inclusive} provide both theoretical and practical insights on a variant of supervised CRL with non-collided i.i.d. tuples with label noise. \citet{article:huang2023} studied the effect of the self-supervised risk on classification by bounding the error rate of nearest-neighbor classifiers built on top of representation functions using misalignment probability. \citet{article:nozawa2021} provided a theoretical explanation for why using many negative samples improves downstream performance. In \citet{article:haochen2021}, the authors established guarantees for the excess contrastive risk for loss functions in \textit{two arguments} constructed from fixed pool of \textit{unlabeled} samples via independent augmentation of both samples. Whilst this involves a second order U-statistic, this work makes no constraint on the valid pairs/tuples in terms of class-collision, making the concentration of the unsupervised risk straightforward: the core contribution instead lies in the relationship between the contrastive risk and the classification risk of a linear probe under structural assumptions on the informativeness of the data-augmentation generation. Their framework was later extended to other generalization analyses for variants of SS-CRL including domain adaptation \citep{article:shen2022}, weakly-supervised learning \citep{article:cui2023}, multi-modal learning \citep{article:zhang2023multimodal}.

\textbf{Generalization in Supervised CRL:} \citet{article:hieu2025} explored the CRL setting where input tuples are selected from a finite pool of labeled samples. In this regime, the authors proposed a U-Statistic formulation that estimates each class-wise contrastive risk separately. In their analysis, all class-wise estimators are required to concentrate with equal accuracy, leading to a dependency on $\rho_{\min}$, the smallest class probability. For their estimator, we achieve a dependency of $R$ instead. We also achieve a faster rate of $k$ by constructing a radically different estimator and analysis.


\section{Preliminaries}
\subsection{Definitions and Notations}
Let $\mathcal{X}$ be the input space and $[R]=\{1,\dots, R\}$ be a labels set. We are given a (labeled) dataset $S=\set{X_j}_{j=1}^N\sim \mathcal{\bar D}^{\otimes N}$ where $\mathcal{\bar D}: =\sum_{r=1}^R \rho_r\mathcal{D}_r$ is a mixture of distributions with $\rho_1,\dots, \rho_R$ representing class probabilities and $\mathcal{D}_r$ being the class-conditional distributions over $\mathcal{X}$. In supervised CRL, we aim to estimate a risk associated to some contrastive loss function using the available labeled data points. Specifically, for a certain representation function $f\in\mathcal{F}$ where $\mathcal{F}$ is a prefixed class of functions (e.g. neural networks, linear maps, etc.), we define $\ell_{\phi, f}:\mathcal{X}^{k+2}\to\R_+$ as a tuple-wise loss evaluated on the representations produced by $f$ as follows:
\begin{align}
    &\ell_{\phi, f}\left(X, X^+, \set{X_i^-}_{i=1}^k\right) = \\
    &\qquad\qquad\phi\Big(\Big\{
        f(X)^\top\left[f(X^+) - f(X_i^-)\right]
    \Big\}_{i=1}^k\Big),\nonumber
\end{align}
where $k\ge1$ is the number of negative samples and $\phi:\R^k\to\R_+$ is a vector function often referred to as the contrastive loss. For example, if we use the logistic loss \citep{article:sohn2016} for training, then for all $k$-dimensional vectors ${\bf v}=(v_1,\dots,v_k)^\top$, we have $\phi({\bf v}) := \ln\Big(1+\sum_{i=1}^k e^{-v_i}\Big)$. In this paper, we are interested in bounding the \textbf{uniform excess risk} (over a class of representation functions) associated to the contrastive risk defined as follows.

\begin{definition}[Population Contrastive Risk]
    Let $f\in\mathcal{F}$ be a representation function and $\phi:\R^k\to\R_+$ be a contrastive loss (where $k\ge1$ is the number of negative samples). Then, the contrastive risk of $f$, denoted $\Lnorm_\phi(f)$, is defined as:
    \begin{align}
        \label{eq:population_contrastive_risk}
        &\Lun(f) = \\
        &\underset{r\sim\rho}{\E}\underset{\substack{X,X^+\sim\mathcal{D}_r^{\otimes 2}\\\set{X_i^-}_{i=1}^k\sim\mathcal{\bar D}_r^{\otimes k}}}{\E}\left[\ell_{\phi, f}\left(X, X^+, \set{X_i^-}_{i=1}^k\right)\right],\nonumber
    \end{align}
    where $\rho$ is the discrete distribution over the classes that assigns probability $\rho_r$ to class $r$ for all $r\in[R]$ and $\mathcal{\bar D}_r$ is a distribution over $\mathcal{X}$ defined by the following mixture:
    \begin{align}
        \label{eq:collision-free-negative-dist}
        \mathcal{\bar D}_r := \frac{1}{1-\rho_r}\sum_{q\ne r}\rho_q\mathcal{D}_q,
    \end{align}
    which is the re-normalized mixture of class-conditional distributions excluding class $r$, i.e., the distribution of negative samples when anchor-positive pairs come from class $r$.
\end{definition}
\begin{remark}
    The definition in Eqn.~\eqref{eq:population_contrastive_risk} differs from the definition of negative distribution used in \citet{article:arora19}. Our formulation forbids negative samples $\set{X_i^-}_{i=1}^k$ from sharing classes with the anchor–positive pair $(X, X^+)$, whereas \citet{article:arora19} allow such class collisions by drawing negatives i.i.d. from the full mixture $\bar{\mathcal D}$ rather than $\bar{\mathcal D}_r$. In the self-supervised case~\cite{article:chen2020,He_2020_CVPR,Caron20,article:cui2025}, class-collision is often unavoidable since the labels are not available to prevent it.  On the other hand, in supervised CRL, it is natural to avoid class-collision, as it has been shown to improve performance \citep{article:khosla2020, article:chuang2020, article:robinson2021, chen2022incrementalfalsenegativedetection,article:huynh2022}. Therefore,  it is the primary setting we investigate. However, we also provide excess risk guarantee (Thm. \ref{thm:concentration_of_bar_Uomega}) for the collision-allowed risk in \citet{article:arora19} although this result mostly serves as a intermediary step to provide excess risk guarantees for the desired risk $\Lnorm_\phi$.
\end{remark}
\textbf{Notations}: Throughout this manuscript, we use ${\E}_{X\sim\mu}[\cdot]$ to denote the population expectation over some distribution $\mu$ or simply $\E[\cdot]$ when the distribution is obvious in given context. Additionally, we use $\widehat\E_V[\cdot]$ or $\widehat\E_{v\sim V}[\cdot]$ to denote the empirical average over elements $v$ of some set $V$. For example, $\widehat\E_V[g]=\widehat\E_{v\sim V}[g(v)]:=\frac{1}{|V|}\sum_{v\in V}g(v)$. For each $r\in[R]$, we also denote $S_r\subseteq S$ as the set of labeled data points belonging to class $r$, $N_r:= |S_r|$ and $\widehat\rho_r:=\frac{N_r}{N}$. Finally, we assume that the contrastive loss is \textbf{uniformly bounded}, i.e., $|\ell_{\phi, f}|\le\mathcal{B}$ for all $f\in\mathcal{F}$ for some $\mathcal{B}\in\R_+$. For the reader's convenience, we provide an exhaustive list of notations in the appendix (Table \ref{tab:notations}).

\subsection{Contributions}
In this work, we are interested in formulating estimators for the class-collision free population risk defined in Eqn.~\eqref{eq:population_contrastive_risk} and bounding the corresponding uniform excess risk. Our contributions can be summarized as follows:
\begin{enumerate}[label=(\roman*)]
    \item We provide a refined excess risk guarantee for the U-Statistic estimator proposed in \citet{article:hieu2025} by completely removing the dependency on $\rho_{\min}^{-1}$, the rarest class probability, from the sample complexity. By imposing non-uniform accuracy for the concentration of each class-wise U-Statistic, we prove that the sample complexity of their estimator scales linearly with $R$, the number of classes, regardless of the class distribution's structure.
    \item We propose a different U-Statistic formula inspired by de-biasing the population unsupervised risk used by previous works in the i.i.d. tuples regime \citep{article:arora19,article:lei2023} in which class-collision is arbitrarily allowed. Specifically, we decompose the desired collision-free risk into a combination of risks conditioned on at least one and zero collided negative samples (Eqn.~\eqref{eq:decomposed_population_contrastive_risk}). Then, we approximate each component separately to formulate the final estimator.
    \item We prove that the proposed U-Statistic in this work yields a sample complexity that scales with $k(1-\tau)^2$ where $\tau$ is the population class-collision probability (Eqn.~\eqref{eq:collision_probability}). In typical extreme multi-class scenarios where all class probabilities are small, the collision probability $\tau$ is approximately zero and the sample complexity recovers the $\mathcal{{O}}(k)$ rate. Therefore, this type of bound is particularly useful for large multi-class representation learning problems where $R$ is significantly larger than $k$. Furthermore, for certain long-tailed imbalanced class distributions, we have $1-\tau\in\Omega(1)$ with high probability. Therefore, the dominant dependency on $k$ remains unchanged.
\end{enumerate}
An outline of our main results (summarized in sample complexities) with comparison to prior works is provided in Table \ref{tab:results_comparison}. Additionally, we also provide a more thorough summary of the literature in Appendix \ref{sec:summary_of_literature}. Furthermore, in Appendix \ref{sec:proof_discussion}, we provide a brief discussion on the differences between our analysis and previous works, as well as the additional technical difficulties involved. 

\begin{table*}[ht]
	\begin{center}
		\begin{small}
			\caption{Comparison of our work with the previous result in terms of sample complexity. For simplicity, we assume that the loss upper bound $\mathcal{B}=1$. The notation ignores polylogarithmic factors of all relevant quantities ($\mathfrak{C}^2_N$, $R$, $k$, ${1}/{(1-\tau)}$ and the failure probability $\Delta$). For $\mathcal{L}$-Lipschitz parameterized function classes $\mathcal{H}$, the complexity term $\mathfrak{C}_N(\mathcal{H})$ grows at most as $\mathcal{\widetilde O}[(W)^\frac{1}{2}]$ where $W$ denotes the number of parameters and the $\widetilde{\mathcal{O}}$ notation hides polylogarithmic factors of $\mathcal{L}$ and $W$. The probability $\gamma_k$ is defined in Eqn.~\eqref{eq:gamma_k}.}
            \label{tab:results_comparison}
			\centering
            \begin{tabular}{c|c|c|c|c}
            \toprule
            \textbf{Reference} & \textbf{Est.} & \textbf{Default} & $\gamma_k \in \Omega(1)$ & \textbf{Relevant quantity} \\
            \midrule
            \citet{article:hieu2025} & $U_N^\mathrm{hl}$ & \multicolumn{2}{c|}{$[\mathfrak{C}^2_N(\mathcal{H})+1]\max\left[\rho_{\min}^{-1}, (1-\rho_{\max})^{-1}\right]$} & \makecell{$\rho_{\min}:=\min_{r\in[R]}\rho_r$\\$\rho_{\max}:=\max_{r\in[R]}\rho_r$}\\
            \midrule
            This Work & $U_N^\mathrm{hl}$ & \makecell{$\mathfrak{C}^2_N(\mathcal{H})[\widehat\theta_{k+2}R+(1-\widehat\theta_{k+2})^2k] \ +$\\$[R+k^2]$\\(Theorem \ref{thm:refined_hieu_ledent_general}/\ref{thm:refined_hieu_ledent_general_appendix})}  & \makecell{$\mathfrak{C}^2_N(\mathcal{H})R \ +$\\$[R+k^2]$\\(Theorem \ref{thm:refined_hieu_ledent_small_probabilities})} & $\widehat\theta_{k+2}:={\P_\rho}\big[\rho_r\le \frac{2}{k+2}\big]$ \\
            \midrule
            This Work & $\makecell{U_N\\\bar U_N}$ & \makecell{$\mathfrak{C}^2_N(\mathcal{H})(1-\tau)^{-2}k \ +$\\$[R(1-\tau)^{-4} + \max(Rk, k^2+{(1-\tau)^{-1}})]$\\(Theorem \ref{thm:main_result_more_general}/\ref{thm:main_result_more_general_appendix})} & \makecell{$\mathfrak{C}^2_N(\mathcal{H})k \ +$\\$[R + Rk]$\\(Theorem \ref{thm:second_main_result_appendix})} & $\tau:=1-\sum_{r=1}^R\rho_r(1-\rho_r)^k$\\
            \bottomrule
            \end{tabular}
        \end{small}
    \end{center}
    \vskip -0.1in
\end{table*}

\section{Main Results}
\subsection{Refined Analysis of Prior Work}
For a given function $f\in\mathcal{F}$, \citet{article:hieu2025} estimates the collision-free population contrastive risk in Eqn.~\eqref{eq:population_contrastive_risk} using a U-Statistic formulation $U_N^\mathrm{hl}$ defined as follows:
\begin{align}
    \label{eq:hieu_ledent_estimator}
    U_N^\mathrm{hl}(f) :=\sum_{r=1}^R\widehat\rho_r U_{\Theta_r}(f).
\end{align}
\noindent  Here, for all $r\in[R]$, $U_{\Theta_r}(f) := \widehat\E_{\Theta_r}\left[\ell_{\phi,f}\right]$ and $\Theta_r$ is a collection of $(k+2)$-tuples, defined as:
\begin{align}
    \Theta_r := \Big\{\Big(&X, X^+, \set{X_i^-}_{i=1}^k\Big): \label{eq:no_collision_tuples} \\ 
    &X, X^+\in S_r, \set{X_i^-}_{i=1}^k\subseteq S\setminus S_r\Big\}. \nonumber
\end{align}
\noindent Essentially, their proposed estimator is a weighted sum of class-wise U-Statistics where each class' estimator averages over the set of class-collision free tuples with anchor-positive pairs belonging to that class. In their analysis, the uniform excess risk bound is proved to scale roughly as:
\begin{align}
    \label{eq:hieu_ledent_old_result}
    \mathcal{O}\Bigg[\sum_{r=1}^R\frac{\rho_r\mathfrak{C}_N(\mathcal{H})}{\sqrt{\min\left(\frac{N_r}{2}, \frac{N-N_r}{k}\right)}} + \mathcal{B}\sqrt{\frac{\ln (R/\Delta)}{N\rho_{\min}}}\Bigg],
\end{align}
\noindent with probability of at least $1-\Delta$ for any $\Delta\in(0,1)$ where $\mathcal{H}$ is the class of loss functions:
\begin{align}
    &\mathcal{H} := \big\{\big(X, X^+,\set{X_i^-}_{i=1}^k\big)\mapsto  \label{eq:loss_class}\\
    &\qquad\qquad\ell_{\phi, f}\big(X, X^+,\set{X_i^-}_{i=1}^k\big):f\in\mathcal{F}\big\},\nonumber
\end{align}
\noindent and $\mathfrak{C}_N(\mathcal{H})$ measures the worst-case Dudley entropy integral of $\mathcal{H}$ over the collections of $(k+2)$-tuples with size $N$ (cf. Eqn.~\eqref{eq:worst_case_dudley_integral}). For common function spaces (linear maps, neural networks, etc), $\mathfrak{C}_N(\mathcal{H})$ grows at most in polylogarithmic order of $N$. For instance, a parameter-counting argument \citep{article:longsedghi2020,article:graf2023} yields $\mathfrak{C}_N(\mathcal{H})\in \mathcal{\widetilde O}(\sqrt{WL})$ for an $L$-layered neural network with $W$ parameters, where the $\mathcal{\widetilde{O}}$ notation hides logarithmic factors in weight norms. In \citet{article:hieu2025}, the authors used a high-probability argument to show that the following lower bound holds simultaneously for all $r\in[R]$:
\begin{align*}
    \min\left(\frac{N_r}{2}, \frac{N-N_r}{k}\right)\ge\min\left(\frac{N\rho_{\min}}{2}, \frac{N[1-\rho_{\max}]}{k}\right),
\end{align*}
\noindent resulting in a sample complexity that scales in the order of $\mathcal{O}\left(\max\left(\frac{1}{\rho_{\min}}, \frac{k}{1-\rho_{\max}}\right)\right)$. While this estimation of the overall sample complexity is valid, it is vastly pessimistic. Specifically, it requires every class in the distribution to concentrate uniformly well around their respective class-wise risk. However, this is not necessary since tail classes often contribute minimally to the overall population risk. Therefore, the estimation of these small classes need not to be as accurate as major classes. Using this intuition, we improved upon previous proof techniques to derive a sample complexity that scales in the order of at most $\mathcal{O}\left(\max\left(R, k\right)\right)$.
\begin{theorem}[cf. Theorem \ref{thm:refined_hieu_ledent_general_appendix}, Table \ref{tab:results_comparison}]
    \label{thm:refined_hieu_ledent_general}
    Let $\mathcal{F}$ be a class of representation functions and let $U_N^\mathrm{hl}(f)$ be defined for each $f\in\mathcal{F}$ as in Eqn.~\eqref{eq:hieu_ledent_estimator}. Suppose $\rho_r\le\frac{1}{2}$ for all $r\in[R]$. Then, for $\Delta\in(0,1)$, as long as $N\ge \mathcal{\widetilde{O}}\left(k^2\right)$, we have:
    \begin{align}
        &\sup_{f\in\mathcal{F}}\left|U_N^\mathrm{hl}(f) - \Lun(f)\right| \le \\
        &\mathcal{O}\Bigg( \mathfrak{C}_N(\mathcal{H})\sqrt{\frac{\max(k, R)}{N}} + \mathcal{B}\sqrt{\frac{R\ln(R/\Delta)}{N}}\Bigg),\nonumber
    \end{align}
    \noindent with probability of at least $1-\Delta$. The complexity $\mathfrak{C}_N(\mathcal{H})$ of $\mathcal{H}$ is defined in Eqn.~\eqref{eq:worst_case_dudley_integral} or Definition \ref{def:worst_case_dudley_entropy} and the $\mathcal{\widetilde O}$ hides logarithmic orders of $R, \Delta$.
\end{theorem}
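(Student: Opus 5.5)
We decompose the error as
\begin{align*}
U_N^\mathrm{hl}(f)-\Lun(f) = \sum_{r=1}^R \widehat\rho_r\big[U_{\Theta_r}(f)-\Lnorm_r(f)\big] + \sum_{r=1}^R(\widehat\rho_r-\rho_r)\Lnorm_r(f),
\end{align*}
where $\Lnorm_r(f)$ is the class-$r$ conditional risk (the inner expectation in Eqn.~\eqref{eq:population_contrastive_risk}), so that $\Lun=\sum_r\rho_r\Lnorm_r$. The second term is handled by conditioning on nothing and using $|\Lnorm_r|\le\mathcal{B}$. Its supremum over $f$ is at most $\mathcal{B}\sum_r|\widehat\rho_r-\rho_r|$. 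A Bernstein bound per class plus a union bound gives $|\widehat\rho_r-\rho_r|\lesssim\sqrt{\rho_r\ln(R/\Delta)/N}+\ln(R/\Delta)/N$. Applying Cauchy--Schwarz, $\sum_r\sqrt{\rho_r}\le\sqrt{R}$, we get $\mathcal{B}\sqrt{R\ln(R/\Delta)/N}$ plus a lower-order term $R\ln(R/\Delta)/N$, which is dominated once $N\gtrsim R\ln(R/\Delta)$ (otherwise the bound is trivial).

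For the first term, we condition on the labels, i.e.\ on $(N_1,\dots,N_R)$. Given the labels, $S_r$ consists of $N_r$ i.i.d.\ draws from $\mathcal{D}_r$, and $S\setminus S_r$ consists of i.i.d.\ draws from a mixture over classes $q\neq r$ with \emph{empirical} weights $N_q/(N-N_r)$ rather than $\bar{\mathcal{D}}_r$. We therefore split further:
\begin{enumerate}[label=(\alph*)]
\item $U_{\Theta_r}-\E[U_{\Theta_r}\mid\text{labels}]$;
\item $\E[U_{\Theta_r}\mid\text{labels}]-\Lnorm_r$.
\end{enumerate}
Part (b) is a bias term. The conditional negative distribution differs from $\bar{\mathcal{D}}_r$ only through the weights $\widehat\rho_q/(1-\widehat\rho_r)$ versus $\rho_q/(1-\rho_r)$, and the tuple loss is bounded. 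Hence (b) is bounded by $k\mathcal{B}$ times the total variation between the two weight vectors. With $\rho_r\le 1/2$, this is controlled by $\sum_q|\widehat\rho_q-\rho_q|$, and the same Cauchy--Schwarz estimate yields $\mathcal{B}\sqrt{R\ln(R/\Delta)/N}$. We would first check whether the factor $k$ can be avoided (for instance via a direct coupling or by viewing the negatives as sampled without replacement). If it cannot, this is where the condition $N\ge\widetilde{\mathcal{O}}(k^2)$ enters, through a term of order $k/\sqrt{N}$ that must be absorbed.

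For (a), we invoke the class-wise U-statistic concentration from \citet{article:hieu2025}, conditioned on the labels. For each fixed $r$, with probability $1-\delta_r$,
\begin{align*}
\sup_f|U_{\Theta_r}-\E[U_{\Theta_r}\mid\text{labels}]| \lesssim \frac{\mathfrak{C}_N(\mathcal{H})+\mathcal{B}\sqrt{\ln(1/\delta_r)}}{\sqrt{\min(N_r/2,(N-N_r)/k)}}.
\end{align*}
This uses the Hoeffding decoupling into $\min(N_r/2,(N-N_r)/k)$ independent blocks and a Dudley bound on the resulting Rademacher average. The key step, and the main obstacle, is summing these bounds with weights $\widehat\rho_r$ without ever lower-bounding $N_r$ by $N\rho_{\min}$. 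Since $\rho_r\le1/2$, we have $N-N_r\gtrsim N$ with high probability, so the minimum is at least of order $\min(N_r,N/k)$. Then
\begin{align*}
\frac{\widehat\rho_r}{\sqrt{\min(N_r,N/k)}} \le \frac{\sqrt{\widehat\rho_r}}{\sqrt N}+\widehat\rho_r\sqrt{\frac{k}{N}},
\end{align*}
and summing over $r$ gives $\sqrt{R/N}+\sqrt{k/N}\lesssim\sqrt{\max(k,R)/N}$ multiplying $\mathfrak{C}_N(\mathcal{H})$.

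For the confidence term we use non-uniform failure probabilities $\delta_r=\Delta/(2R)$ and again sum with Cauchy--Schwarz, obtaining $\mathcal{B}\sqrt{\ln(R/\Delta)}\,\sqrt{\max(R,k)/N}$. The $k$ part of this needs $\sqrt{k\ln(R/\Delta)/N}\lesssim\sqrt{R\ln(R/\Delta)/N}$ or is absorbed when $N\gtrsim k^2$. Classes with $N_r<2$ have empty $\Theta_r$; we treat them by setting $U_{\Theta_r}=0$ and bounding their contribution by $\mathcal{B}\sum_{r:N_r\le1}\widehat\rho_r\le\mathcal{B}R/N$, which is lower order. A final union bound over the label events and the conditional concentration events gives the stated probability $1-\Delta$.
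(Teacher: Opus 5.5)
\textbf{There is a genuine gap in step (b).} It comes from conditioning on the full label vector instead of on $N_r$ alone.

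\emph{Why your bound for (b) fails.} Conditioning on all labels creates a bias between the negative law induced by the empirical weights and $\bar{\mathcal{D}}_r$. Your own estimate of it is $k\mathcal{B}$ times a total variation of order $\sum_q|\widehat\rho_q-\rho_q|$. After summing with the weights $\widehat\rho_r$, that is a contribution of order $k\mathcal{B}\sqrt{R\ln(R/\Delta)/N}$, not $\mathcal{B}\sqrt{R\ln(R/\Delta)/N}$ as you then write.

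This cannot be absorbed by a condition $N\ge\widetilde{\mathcal{O}}(k^2)$. The term decays at the same $N^{-1/2}$ rate as the target and carries an extra multiplicative factor $k$, so it would push the sample complexity to roughly $k^2R$. A lower bound on $N$ only gives $k/\sqrt{N}\le 1$; it never gives $k\sqrt{R/N}\lesssim\sqrt{\max(k,R)/N}$.

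Two smaller points:
\begin{itemize}
\item Given all labels, $S\setminus S_r$ is a fixed-composition sample of independent, non-identically distributed points. Its conditional mean therefore corresponds to drawing negative labels without replacement, not i.i.d.\ from the empirical weights.
\item The same invalid ``absorption by $N\gtrsim k^2$'' appears in your confidence term. The term $\mathcal{B}\sqrt{k\ln(R/\Delta)/N}$ must be kept alongside the complexity term, as the explicit bound of Theorem~\ref{thm:refined_hieu_ledent_general_appendix} does through its factor $[24\mathfrak{C}_N(\mathcal{H})+10\mathcal{B}\sqrt{2\ln(12R/\Delta)}]$.
\end{itemize}

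\textbf{The missing idea: condition only on $N_r$ when treating class $r$.} Given $N_r$ (equivalently, given which indices carry label $r$):
\begin{itemize}
\item the points of $S_r$ are i.i.d.\ $\mathcal{D}_r$;
\item the points of $S\setminus S_r$ are i.i.d.\ from the law of $X$ given $Y\ne r$, which is exactly $\bar{\mathcal{D}}_r$;
\item the two groups are independent.
\end{itemize}
Hence $\E[U_{\Theta_r}(f)\mid N_r]=\Lnorm^r_\phi(f)$ whenever $\Theta_r\neq\emptyset$, and term (b) vanishes. Hoeffding's block decoupling plus symmetrization then yields Proposition~\ref{prop:hieu_ledent_supporting_prop} directly: the target is $\Lnorm^r_\phi$ itself and the effective size is $\bar N_r=\lfloor\min(N_r/2,(N-N_r)/k)\rfloor$. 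Each such event has conditional, hence unconditional, probability at least $1-\delta$, so a union bound over $r$ is legitimate.

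Even with full-label conditioning, your total-variation step is what is too crude. Given $N_r$, the quantity $\E[U_{\Theta_r}\mid\text{labels}]$ is an order-$k$ U-statistic in the i.i.d.\ negative labels with mean $\Lnorm^r_\phi$. It therefore deviates by $\mathcal{B}\sqrt{k\ln(1/\delta)/(N-N_r)}$, not by $k$ times a total variation.

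\textbf{What already matches the paper.} With this fix, the rest of your argument is the paper's proof:
\begin{itemize}
\item the decomposition and the Cauchy--Schwarz treatment of $\sum_r|\widehat\rho_r-\rho_r|$;
\item the split $\widehat\rho_r/\sqrt{\min(N_r,N/k)}\le\sqrt{\widehat\rho_r/N}+\widehat\rho_r\sqrt{k/N}$;
\item the $\mathcal{B}R/N$ bound for classes with $N_r\le 1$.
\end{itemize}
In the paper, the requirement $N\ge 6(k+1)^2\ln(3R/\Delta)$ is not about any bias. It only ensures $\widehat\rho_r\le\frac{1}{2}(1+\frac{1}{k+1})$ for all $r$, so that $N-N_r\ge\frac{N}{2}\cdot\frac{k}{k+1}$. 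Your ``$N-N_r\gtrsim N$ with high probability'' already provides this, and for a constant-factor version it needs only $N\gtrsim\ln(R/\Delta)$.
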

\noindent In the above, we presented a simplified result where the bound scales in worst-case order of $\mathcal{O}(\sqrt{R/N})$. As previously claimed, the result is independent of the class distribution. In the precise version of the theorem (Theorem \ref{thm:refined_hieu_ledent_general_appendix}), the order of the excess risk interpolates between square-root dependencies on both $k$ and $R$. Specifically, when the total mass of small probabilities (cf. Eqn.~\eqref{eq:gamma_k}) isn't negligible, the reliance on the number of classes dominates the dependency on tuple size. For instance, we demonstrate in Theorem \ref{thm:refined_hieu_ledent_small_probabilities} that the order of the bound collapses to depending on $\sqrt{R}$ entirely when $\rho_r\lesssim 1/k,\forall r\in[R]$. In the next section, we formulate a fundamentally different estimator that encourages $\mathcal{O}(\sqrt{k})$ dependency when the class distribution consists of mostly small probabilities, making it suitable for extreme multi-class scenarios.

\subsection{Refined U-Statistics Formulation}
\subsection*{U-Statistics Formulation}
As seen in the previous section, the desired collision-free risk can be estimated by combining the class-wise U-Statistics, each weighted by the empirical class probability. However, we found that faster concentration can be achieved with an indirect approach involving the estimation of risks that \textbf{allow class-collision} to various extents. Specifically, we introduce the following population risks:
\begin{definition}[Contrastive Risks with Class-collision]
    Let $f\in\mathcal{F}$ be a representation function and $\phi:\R^k\to\R_+$ be a contrastive loss. We define $\Lnorm_\Omega(f)$ and $\Lnorm_\Lambda(f)$ as follows:
    \begin{align}
        &\Lnorm_\Omega(f):= \label{eq:lomega}\\
        &\underset{r\sim\rho}{\E}\underset{\substack{X,X^+\sim\mathcal{D}_r^{\otimes 2}\\\set{X_i^-}_{i=1}^k\sim\mathcal{\bar D}^{\otimes k}}}{\E}\left[\ell_{\phi, f}\left(X, X^+, \set{X_i^-}_{i=1}^k\right)\right], \nonumber \\
        &\Lnorm_\Lambda(f):=  \label{eq:llambda} \\
        &\underset{r\sim\rho}{\E}\underset{\substack{X,X^+, \bar X\sim\mathcal{D}_r^{\otimes 3}\\\set{X_i^-}_{i=1}^{k-1}\sim\mathcal{\bar D}^{\otimes k-1}}}{\E}\left[\ell_{\phi, f}\left(X, X^+, \bar X, \set{X_i^-}_{i=1}^{k-1}\right)\right].\nonumber
    \end{align}
    \noindent Here, $\Lnorm_\Omega(f)$ denotes the risk of $f$ when negative samples either collide with the anchor-positive pairs or not. On the other hand, $\Lnorm_\Lambda(f)$ denotes the risk where negative samples must collide \textbf{at least once} with the anchor-positive pairs. 
    
    \noindent Crucially, we can then express the collision-free contrastive risk $\Lun$ (Eqn.~\eqref{eq:population_contrastive_risk}) in terms of $\Lnorm_\Omega$ and $\Lnorm_\Lambda$ using a debiased formula as follows:
    \begin{align}
    	\Lun(f) &:= \frac{1}{1-\tau}\Lnorm_\Omega(f) - \frac{\tau}{1-\tau}\Lnorm_\Lambda(f). \label{eq:decomposed_population_contrastive_risk} \\
    	\tau    &:= 1- \sum_{r=1}^R \rho_r\squarebrac{1-\rho_r}^k.     \label{eq:collision_probability}
    \end{align}
    \noindent Here, $\tau$ denotes the overall class-collision probability. Using the above re-formulation, instead of directly estimating $\Lun$ with collision-free tuples, we can approach this indirectly by estimating $\Lnorm_\Omega$, $\Lnorm_\Lambda$ and $\tau$ separately. Specifically, we can formulate a U-Statistic to estimate $\Lun$ as follows:
    \begin{align}
    	U_N(f) = \frac{1}{1-\widehat\tau}U_\Omega(f) - \frac{\widehat\tau}{1-\widehat\tau}U_\Lambda(f). \label{eq:natural_estimator}
    \end{align}
\end{definition}
\noindent Where $\widehat\tau$ is an estimator of $\tau$, $U_\Omega(f):=\sum_{r=1}^R\widehat\rho_r\widehat\E_{\Omega_r}[\ell_{\phi, f}]$ and $U_\Lambda(f):=\sum_{r=1}^R\widehat\rho_r\widehat\E_{\Lambda_r}[\ell_{\phi, f}]$ where for all $r\in[R]$, the collections $\Omega_r$ and $\Lambda_r$ of $(k+2)$-tuples are defined as:
\begin{align}
    \Omega_r &:=\Big\{\Big(X, X^+, \set{X_i^-}_{i=1}^k\Big): \label{eq:arbitrary_collision_tuples} \\
        &X, X^+\in S_r, \set{X_i^-}_{i=1}^k\subseteq S\setminus\set{X, X^+}\Big\}, \nonumber \\
    \Lambda_r&:=\Big\{\Big(X, X^+, \bar X, \set{X_i^-}_{i=1}^{k-1}\Big): \label{eq:atleast_one_collision_tuples} \\
        &X, X^+, \bar X\in S_r, \set{X_i^-}_{i=1}^{k-1}\subseteq S\setminus\set{X, X^+, \bar X}\Big\}. \nonumber
\end{align}
\noindent Intuitively, when $U_\Omega(f), U_\Lambda(f)$ estimates the intermediary risks $\Lnorm_\Omega(f),\Lnorm_\Lambda(f)$ accurately, then $U_N(f)$ becomes a good estimator for the desired risk $\Lun(f)$ (depending also on the accuracy of $\widehat\tau$). Unfortunately, both $U_\Omega(f)$ and $U_\Lambda(f)$ are \textbf{biased estimators}, making it difficult to conduct any concentration analysis. To be more specific, for every tuple in each collection $\Omega_r$, the selection of negative samples $\set{X_i^-}_{i=1}^k$ is dependent on the previously selected anchor-positive pair $(X,X^+)$, making these negative samples no longer distributed according to the full mixture of class-conditional distributions $\mathcal{\bar D}$. Similarly, the same dependency structure exists for every tuple in each collection $\Lambda_r$.

\noindent To address the biased-ness caused by dependent tuples, we construct an \textbf{auxiliary estimator} $\bar U_N$ defined as follows:
\begin{align}
    \label{eq:combined_aux_estimators}
    \bar U_N(f) := \frac{1}{1-\widehat\tau}\bar U_\Omega(f) - \frac{\widehat \tau}{1-\widehat\tau}\bar U_\Lambda(f),
\end{align}
\noindent which is structurally similar to the definition of the proposed estimator $U_N$ in Eqn.~\eqref{eq:natural_estimator} except $U_\Omega$ and $U_\Lambda$ are replaced by $\bar U_\Omega$ and $\bar U_\Lambda$, which are (asymptotically) unbiased estimators of the intermediary population risks $\Lnorm_\Omega$ and $\Lnorm_\Lambda$. Particularly, in the construction of $\bar U_\Omega$ (and $\bar U_\Lambda$), we disassociate the selection of negative samples from anchor-positive pairs by first splitting the labeled dataset $S$ into non-overlapping partitions. Then, anchor-positives are only selected from one partition while negative samples are selected from the other, making all elements of each tuple independent. To preserve the overall flow, we defer the specific construction of both $\bar U_\Omega$ and $\bar U_\Lambda$ to Appendix \ref{sec:aux_estimators_construction}. Additionally, we highlight the difference between the construction of our estimator and that of the previous work in Figure \ref{fig:tuples_selection}. Furthermore, we also prove that when the sample size $N$ is large, the differences between $\bar U_\Omega$ and $U_\Omega$ as well as between $\bar U_\Lambda$ and $U_\Lambda$ can be relatively well-controlled (cf. Proposition \ref{prop:diff_between_natural_and_nonatural}). Therefore, we can focus primarily on concentration properties of the proposed auxiliary estimator $\bar U_N$.

\begin{remark}
	The idea of debiasing the empirical risk by decomposing the population risk based on class-collision has been explored in \citet[Section 4.2]{article:arora19}. This decomposition has also been employed by later works to bound the surrogate gap between supervised and unsupervised risk. Recently, the same idea was utilized by \citet{article:chuang2020} to mitigate negative sampling bias, although for a much simpler variant of contrastive learning (PU-learning) in \textbf{semi-supervised settings}. Therefore, we do not claim novelty for the empirical performance of the proposed U-Statistic $U_N$ in Eqn.~\eqref{eq:natural_estimator}. The primary focus of this work is providing sharpened excess risk bounds.
\end{remark}
\subsection*{Concentration of Auxiliary Estimators}
In this section, we will focus on the concentration properties of the auxiliary estimator defined in Eqn.~\eqref{eq:combined_aux_estimators}. Firstly, we introduce the following important definition.

\begin{definition}[Worst-case Dudley Entropy]
    \label{def:worst_case_dudley_entropy}
    Let $\mathcal{G}$ be a class of functions $g:\mathcal{Z}\to\R$ such that $|g|\le \mathcal{B}$ for all $g\in\mathcal{G}$ and $n\ge1$ be an integer. Then, the worst-case Dudley entropy of $\mathcal{G}$ over samples of size $n$ is defined as:
    \begin{align}
        \mathfrak{C}_n(\mathcal{G}):=\sup_{S\in\mathcal{Z}^n}\int_{\frac{1}{n}}^\mathcal{B} \sqrt{\ln\mathcal{N}\left(\mathcal{G}, \varepsilon, \Lnorm_2(S)\right)}d\varepsilon,
    \end{align}
    \noindent where $\mathcal{N}(\mathcal{G}, \varepsilon, \Lnorm_2(S))$ denotes the $\varepsilon$-covering number of $\mathcal{G}$ with respect to the $\Lnorm_2(S)$ norm restricted to the dataset $S=\set{\z_j}_{j=1}^n$, defined for any $g,\bar g\in\mathcal{G}$ as follows:
    \begin{align}
        \|g-\bar g\|_{\Lnorm_2(S)}^2:= \frac{1}{n}\sum_{j=1}^n|g(\z_j)-\bar g(\z_j)|^2.
    \end{align}
\end{definition}
\noindent In the following result, for a tuple-wise loss class $\mathcal{H}$, we express our generalization bounds in terms of the quantity $\mathfrak{C}_N(\mathcal{H})$, which is an upper bound on the worst-case Rademacher complexity \citep{LeiTIT, srebroSLFR} of the loss class $\mathcal{H}$ over any set of $N$ tuples $Z_1,\ldots,Z_N\in\mathcal{X}^{k+2}$. For a parametric model, $\mathfrak{C}_N(\mathcal{G})$ will be dominated by the the number of parameters in the model, up to polylogarithmic factors~\cite{article:longsedghi2020,article:graf2023}, and the argument can be extended to $\mathfrak{C}_N(\mathcal{H})$ with arguments such as those in~\cite{article:hieu2024} at the cost of polylogarithmic factors in all quantities including the number of negative samples $k$. For a class of linear maps $A\in\R^{m\times d}$ with norm bounded by $s$, the complexity will scale like $\widetilde{\mathcal{O}}(md)$ where the $\widetilde{\mathcal{O}}$ notation includes polylogarithmic factors of both $s$ and $k$, for a neural network, the complexity will scale as $\widetilde{\mathcal{O}}(\mathcal{D}L)$ where $L$ is the depth of the network and $\mathcal{D}$ is the number of parameters~\cite{anthony1999neural,article:graf2023,NEURIPS2025_d9556431}.

\begin{theorem}[cf. Theorem \ref{thm:main_result_more_general_appendix}, Table \ref{tab:results_comparison}]
	\label{thm:main_result_more_general}
	Let $\mathcal{F}$ be a class of representation functions and $\bar U_N(f)$ be defined for each $f\in\mathcal{F}$ as in Eqn.~\eqref{eq:combined_aux_estimators}. Suppose that $R\ge k$. Then, for any $\Delta\in(0,1)$, with probability of at least $1-\Delta$, we have:
	\begin{align}
		&\sup_{f\in\mathcal{F}}\left|\bar U_N(f)-\Lun(f)\right| \le  \\
		&\mathcal{O}\Bigg(\frac{\mathfrak{C}_N(\mathcal{H})}{1-\tau}\sqrt{\frac{k}{N}}+\frac{\mathcal{B}\sqrt{R}}{(1-\tau)^2}\sqrt{\frac{\ln \left(R/\Delta\right)}{N}}\Bigg), \nonumber
	\end{align}
	\noindent as long as $N\ge \mathcal{\widetilde O}\left(k\cdot\max\left\{R, k+\frac{1}{k(1-\tau)}\right\}\right)$. Here, the $\mathcal{\widetilde O}$ notation hides logarithmic orders of $R, \Delta$.
\end{theorem}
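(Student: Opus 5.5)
The strategy is to reduce the statement to three separate concentration results, one each for $\bar U_\Omega$, $\bar U_\Lambda$ and $\widehat\tau$, and then combine them through the debiased identity \eqref{eq:decomposed_population_contrastive_risk}. Write $a=\frac{1}{1-\tau}$ and $\widehat a=\frac{1}{1-\widehat\tau}$. Then
\begin{align*}
\bar U_N-\Lun = \widehat a(\bar U_\Omega-\Lnorm_\Omega) - (\widehat a-1)(\bar U_\Lambda-\Lnorm_\Lambda) + (\widehat a-a)(\Lnorm_\Omega-\Lnorm_\Lambda),
\end{align*}
using $\frac{\widehat\tau}{1-\widehat\tau}=\widehat a-1$. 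Since $|\Lnorm_\Omega-\Lnorm_\Lambda|\le\mathcal{B}$ and $|\widehat a-a|\le \widehat a\, a\,|\widehat\tau-\tau|$, it suffices to prove three things. First, $\sup_f|\bar U_\Omega-\Lnorm_\Omega|\lesssim \mathfrak{C}_N(\mathcal{H})\sqrt{k/N}+\mathcal{B}\sqrt{\ln(1/\Delta)/N}$. Second, $(\widehat a-1)\sup_f|\bar U_\Lambda-\Lnorm_\Lambda|$ satisfies a bound of the same shape, possibly with an extra factor $1/(1-\tau)$. Third, $|\widehat\tau-\tau|\lesssim\sqrt{R\ln(R/\Delta)/N}$. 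The final step is to check that $\widehat a\le 2a$ once $|\widehat\tau-\tau|\le(1-\tau)/2$, which is exactly where the condition $N\gtrsim 1/(1-\tau)$ (times log factors) enters. Multiplying out then gives $\frac{\mathfrak{C}_N}{1-\tau}\sqrt{k/N}$ together with $\frac{\mathcal{B}}{(1-\tau)^2}\sqrt{R\ln(R/\Delta)/N}$.

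\textbf{Concentration of $\bar U_\Omega$.} I will use the split construction from Appendix \ref{sec:aux_estimators_construction}: anchor-positive pairs come from one partition and negatives from the other, so within each tuple the anchor-positive pair and the negatives are independent. Condition on the labels, i.e. on the counts $N_r$. Averaged over classes with weights $\widehat\rho_r$, the estimator is then a generalized U-statistic whose kernel has order $2$ in the class-$r$ block and order $k$ in the negative pool. Hoeffding's decomposition into averages of independent blocks (the averaging-over-permutations trick) lets me write it as an average of i.i.d.-tuple sums containing $\approx\min(N_r/2,N'/k)$ terms. Symmetrization together with the Dudley bound for $\mathfrak{C}$ then controls each class-$r$ deviation by $\mathfrak{C}_N/\sqrt{\min(N_r,N/k)}$.

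The point is to avoid a per-class union bound, which would reintroduce $\rho_{\min}$. Instead I will treat the \emph{whole} weighted sum as a single U-statistic over the pooled sample with a class-indexed kernel $(X,X^+)\mapsto \1{y=y^+=r}\ell$. The negatives block is shared across classes, and the anchor-positive block is a degree-$2$ U-statistic over all $N$ points. This yields one Rademacher average with an effective sample size of $\min(N/2,N/k)=N/k$, hence the rate $\sqrt{k/N}$. The bias caused by normalizing by $\widehat\rho_r$ instead of $\rho_r$ contributes $\sum_r|\widehat\rho_r-\rho_r|\mathcal{B}\lesssim\mathcal{B}\sqrt{R/N}$ by a multinomial $\ell_1$ concentration bound. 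The same argument applies to $\bar U_\Lambda$, whose kernel has three same-class slots; this is where $N\ge k R$ is needed, to guarantee enough triples per class in aggregate.

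\textbf{Concentration of $\widehat\tau$.} I take $\widehat\tau=1-\sum_r\widehat\rho_r(1-\widehat\rho_r)^k$, or its unbiased U-statistic version. The map $\rho\mapsto\sum_r\rho_r(1-\rho_r)^k$ has partial derivatives bounded by $1+k\rho_r(1-\rho_r)^{k-1}\le 2$. A McDiarmid argument over the $N$ labels then gives $|\widehat\tau-\tau|\lesssim\sqrt{\ln(1/\Delta)/N}$, up to an $\mathcal{O}(k/N)$ bias. That bias is negligible when $N\gtrsim k^2$, which is where the $k^2$ in the sample-size condition comes from.

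\textbf{Main obstacle.} The hardest step is the middle one: getting $\sqrt{k/N}$ rather than $\sqrt{R/N}$ or a $\rho_{\min}$ dependence for the combined class-weighted U-statistic. The difficulty is that the anchor-positive slots require two points from the \emph{same} class, so the kernel is not a plain symmetric kernel on the pooled sample. My first attempt will be to define the decoupled blocks over random orderings of the entire pooled anchor partition, pairing consecutive points and keeping only the same-class pairs. This should produce roughly $\sum_r N_r^2/N$ useful pairs per block; I will then need to show that normalizing by $\widehat\rho_r$ does not inflate the variance beyond $R/N$ order.
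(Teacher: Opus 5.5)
Your decomposition of $\bar U_N-\Lun$ is fine; it is the paper's $\alpha+\beta$ split, slightly tightened. Two load-bearing steps do not go through, however.

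\textbf{The $\sqrt{k/N}$ step.} You leave it open, and the construction you propose would not deliver it.
\begin{itemize}
\item Pairing consecutive points of a random ordering of the \emph{pooled} anchor partition keeps only a fraction of about $\|\widehat\rho\|_2^2$ of the independent tuples, which is about $1/R$ for balanced classes.
\item Restoring weight $\widehat\rho_r$ on class $r$ then forces a multiplier of about $1/\widehat\rho_r$ on its pairs. The kernel range therefore scales like $\mathcal{B}/\widehat\rho_{\min}$, which is exactly the dependence you set out to remove.
\item Even your hoped-for variance of order $R/N$ would put $\sqrt{R/N}$ in front of $\mathfrak{C}_N(\mathcal{H})$. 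That is the rate of Theorem~\ref{thm:refined_hieu_ledent_general}, not the claimed one.
\end{itemize}
The missing idea is that $\bar U_\Omega$ is weighted by $\omega^\pi_r=\lfloor n^\pi_r/2\rfloor/\sum_q\lfloor n^\pi_q/2\rfloor$, not by $\widehat\rho_r$. The argument then runs as follows.
\begin{itemize}
\item Apply Jensen over $\pi$ (Lemma~\ref{lem:supporting_lemma_5}) and condition on the class counts $\mathbf{n}$ of the anchor partition.
\item Then $\sum_r\omega_rU_{\Omega^{\mathrm{id}}_r}$ is an average, over permutations within each partition, of a \emph{uniform} average over $\bar n=\sum_r\lfloor n_r/2\rfloor\ge(n-R)/2$ tuples. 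These tuples are formed by pairing consecutive points \emph{inside each class} and attaching disjoint blocks of $k$ negatives from the other partition.
\item The tuples are conditionally independent with mean $\sum_r\omega_r\Lnorm^r_\Omega(f)$. A single symmetrization plus a sub-Gaussian MGF bound, uniform over count vectors (Lemma~\ref{lem:supporting_lemma_6}, Proposition~\ref{prop:supporting_proposition_2}), yields $\mathfrak{C}_N(\mathcal{H})/\sqrt{n-R}\lesssim\mathfrak{C}_N(\mathcal{H})\sqrt{k/N}$ once $N\gtrsim Rk$.
\item The mismatch between $\omega^*_r$ and $\rho_r$ sits outside the supremum. It costs only $\mathcal{B}\bigl[R/(n-R)+\sum_r|\widehat\rho_r-\rho_r|\bigr]$ up to constants.
\end{itemize}

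\textbf{The control of $\widehat a$.} This is off by a power of $1-\tau$.
\begin{itemize}
\item Your McDiarmid bound $|\widehat\tau-\tau|\lesssim\sqrt{\ln(1/\Delta)/N}+k/N$ is correct, and indeed sharper than the paper's $\sqrt{R\ln(R/\Delta)/N}$.
\item However, the event $|\widehat\tau-\tau|\le(1-\tau)/2$ then needs $N\gtrsim(1-\tau)^{-2}\ln(1/\Delta)$, not $(1-\tau)^{-1}$.
\item This is not implied by the hypothesis. Take one class of mass $1-\epsilon$ and $k$ classes of mass $\epsilon/k$ each. Then $R=k+1$ and $1-\tau\asymp\epsilon$, while the theorem assumes only $N\ge\widetilde{\mathcal{O}}(k^2+\epsilon^{-1})$.
\end{itemize}
Bounded differences see only the worst-case $1/N$ sensitivity. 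They miss that in such regimes $1-\widehat\tau$ fluctuates at the much smaller scale $\sqrt{(1-\tau)/N}$. You therefore need a direct relative bound $1-\widehat\tau\ge(1-\tau)/8$.

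The paper obtains it from multiplicative Chernoff bounds (Propositions~\ref{prop:second_multiplicative_control_of_tauhat}--\ref{prop:final_multiplicative_control_of_tauhat}), splitting the classes at $\rho_r=1/(4k^2)$.
\begin{itemize}
\item If the larger classes carry at least half of $1-\tau$, then $\widehat\rho_r\ge\rho_r/\sqrt2$ and $(1-\widehat\rho_r)^k\ge(1-\rho_r)^k/\sqrt2$ hold for all of them once $N\gtrsim k^2\ln(R/\Delta)$. A class of mass above $1/2$ is handled separately via $1-\tau\le4(1-\rho_{\max})$.
\item Otherwise, under the same condition the tiny classes stay below $1/(2k^2)$ empirically. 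A Chernoff bound on their total empirical mass then needs only $N\gtrsim\ln(1/\Delta)/(1-\tau)$.
\end{itemize}
This, rather than the bias of $\widehat\tau$, is where the $k^2$ and $1/(1-\tau)$ in the sample-size condition come from.
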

\noindent First, we note that the above result and the corresponding sample complexity in Table \ref{tab:results_comparison} also applies to the natural estimator $U_N$ (cf. Remark \ref{rem:remark_on_natural_estimator}). For a fixed $\tau$ and sufficiently large $N$, the dominant term is $\mathfrak{C}_N(\mathcal{H})\sqrt{{k}/{N}}$, which indicates that the required number of samples grows like $k$. This is exactly what one would expect from a pure Hoeffding-type argument~\cite{article:clemencon2008} over tuples in the case where collisions are ignored.  We first note that even though a square-root dependency of $R$ shows up in the second term, it is completely detached from the \textit{function class complexity factor} $\mathfrak{C}_N(\mathcal{H})$, which is a dominant quantity. For instance, if the number of parameters is $W$ (for a Lipschitz parametrized function class), Theorem~\ref{thm:main_result_more_general} corresponds to a sample complexity of $\mathcal{\widetilde{O}}({Wk}{(1-\tau)^{-2}}+{R}{(1-\tau)^{-4}})$: the additive factor of $R$ is only relevant if $W\le {R}/{k}$. This is also largely unavoidable: if the number of samples $N$ is less than $\mathcal{O}(R)$, there are simply no valid tuples to be formed due to the inability to select \textit{any} (positive, anchor positive) pair with distinct elements. Next, the bound also depends on the inverse of the non-collision probability $1-\tau$. However, it should be noted that in extreme multi-class scenarios, this probability is typically well-controlled (i.e., constant) given that the ``total mass of small class probabilities'' (cf. Eqn.~\eqref{eq:gamma_k}) is sufficiently large, which certainly occurs in the approximately uniform extreme multi-class scenario: in this case we have $1-\tau \simeq 1$. To provide better insights, we present several illustrative examples.
\begin{figure*}[ht]
	\centering
	\centerline{\includegraphics[width=0.95\linewidth]{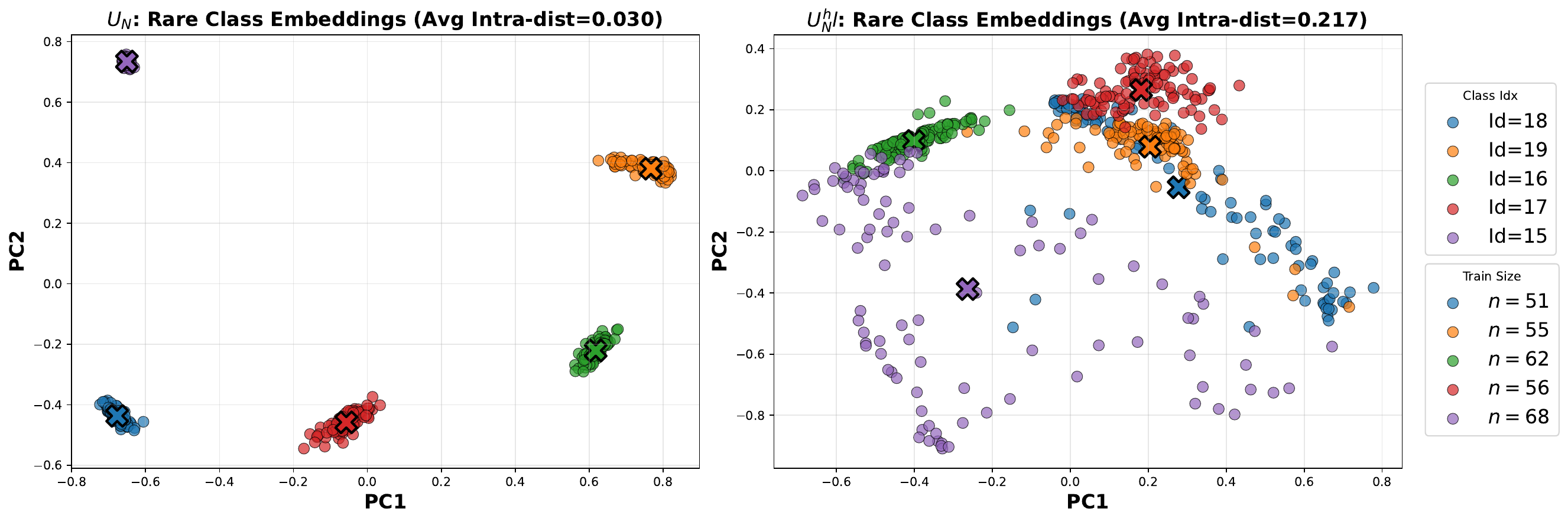}}
	\caption{Synthetic data experiment results for representation learning on an imbalanced dataset. \textbf{Left}: PCA reduced representations of the model trained with sub-sampled estimation of $U_N$ as empirical risk. \textbf{Right}: PCA reduced representations of model trained with sub-sampled estimation of $U_N^\mathrm{hl}$ as empirical risk. Both plots show representations of $100$ data points from top five rarest classes in the synthesized dataset. The legends show class indices as well as the number of data points from each class in the labeled training dataset.}
	\label{fig:rare_class_embeddings}
\end{figure*}
\begin{figure}[ht!]
	\centering
	\centerline{\includegraphics[width=\columnwidth]{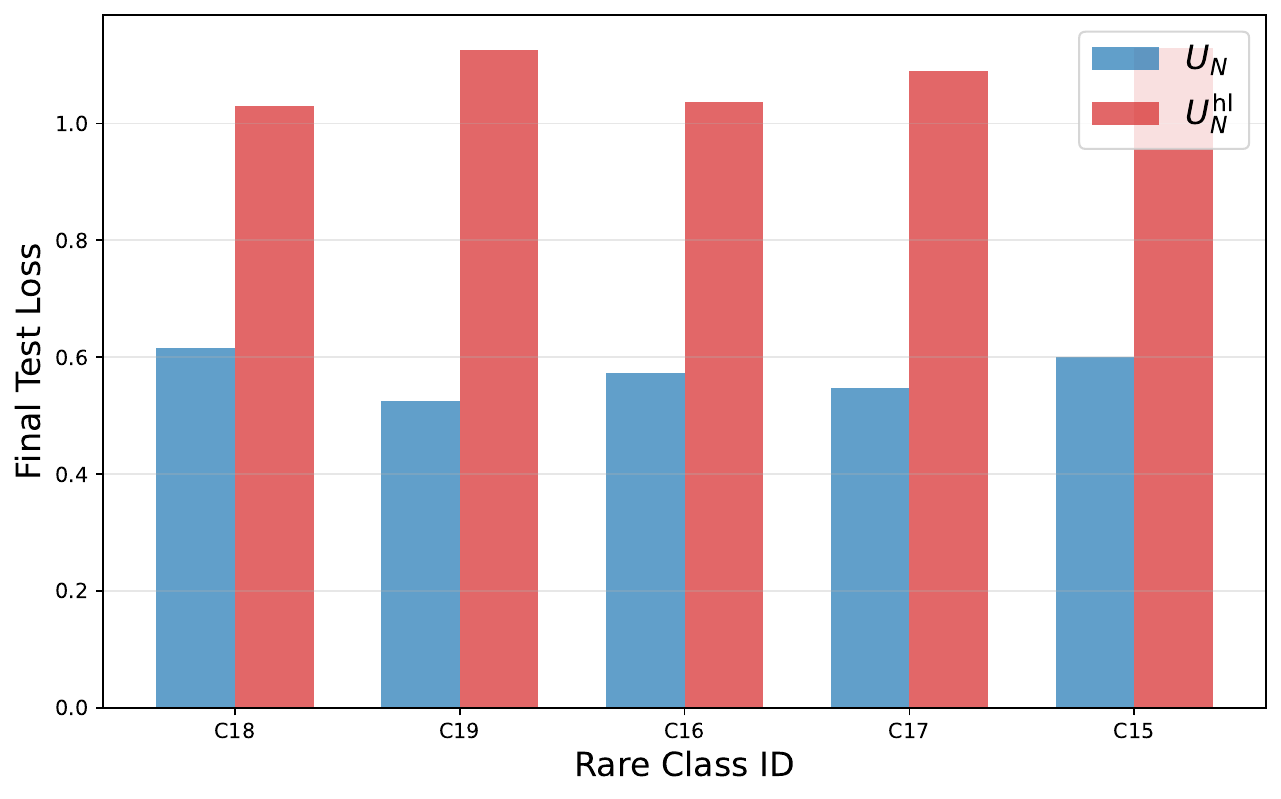}}
	\caption{
		Final contrastive test loss for the model trained using 
		sub-sampled estimations of two U-Statistics formulations, $U_N$ (blue) 
		and $U_N^\mathrm{hl}$, as empirical risks (red) on synthetic dataset.}
	\label{fig:rare_class_test_loss}
\end{figure}
\begin{example}[Typical Extreme Scenario]
    Suppose that $\rho_r\le \frac{1}{k+1}$ for all $r\in[R]$. Then, for triplet learning ($k=1$), we have $1-\tau = 1 - \sum_{r=1}^R\rho_r^2 \ge 1-\frac{1}{2}\sum_{r=1}^R\rho_r=\frac{1}{2}$. On the other hand, for $k\ge2$, we have:
    \begin{align*}
        1-\tau = \sum_{r=1}^R\rho_r(1-\rho_r)^k \ge \left(1-\frac{1}{k+1}\right)^k \ge \frac{1}{e}.
    \end{align*}
    \noindent Essentially, when $R$ is large and all classes are relatively well-spread we have $1-\tau\in\Omega(1)$.
\end{example} 

\begin{example}[General Extreme Scenario]
    Let $\gamma_k$ denote the total mass of small probabilities bounded by $1/k$. Formally, we define $\gamma_k$ as follows:
    \begin{align}
        \label{eq:gamma_k}
        \gamma_k:=\underset{r\sim\rho}{\P}(\rho_r\le1/k)=\sum_{r=1}^R\rho_r\1{\rho_r\le1/k}.
    \end{align}
    \noindent Suppose that $\gamma_k\in\Omega(1)$, i.e., the total mass of small probabilities is not vanishingly small. Then, for $k\ge2$, we have the following lower bound on $1-\tau$:
    \begin{align*}
        1-\tau &\ge \sum_{r:\rho_r\le1/k}\rho_r(1-\rho_r)^k \ge \gamma_k\left(1-\frac{1}{k}\right)^k \ge \frac{\gamma_k}{4}.
    \end{align*}
    \noindent Therefore, we also have $1-\tau\in\Omega(1)$ in this case. In other words, as long as $\gamma_k$ is not infinitesimally small, the main result (Theorem \ref{thm:main_result_more_general}) yields a sample complexity of $k\cdot\mathfrak{C}_N(\mathcal{H})$
    \noindent with probability of at least $1-\Delta$ as long as $N\ge\mathcal{\widetilde O}(\max(Rk, k^2))=\mathcal{\widetilde{O}}(Rk)$. 
\end{example}
\begin{example}[Imbalanced Distribution]
\noindent Another interesting case where our result will be tighter is in imbalanced distributions favoring a few major classes (e.g., anomaly detection, medical imaging problems). As a simple example, let $\{\rho_1,\dots,\rho_R\}$ be a class distribution with $\rho_{\max}=\frac{1}{2}$ and $\rho_r\le\frac{1}{k+1},\forall\rho_r\ne\rho_{\max}$. Since $[(1-1/(1+x))^x]'<0$:
\begin{align*}
    1-\tau &\ge \sum_{r:\rho_r<\rho_{\max}} \rho_r(1-\rho_r)^k \ge \frac{1}{2}\left(1-\frac{1}{k+1}\right)^k\\
    &\ge\frac{1}{2}\lim_{k\to\infty}\left(1-\frac{1}{k+1}\right)^k = \frac{1}{2e}.
\end{align*}
\noindent For clarity, this is a notable special case of the previous example with sufficiently large total mass of small probabilities (where $\gamma_k=1/2$ in this example). In the Appendix, we also provide some dedicated results (Theorem \ref{thm:main_result_appendix}, \ref{thm:second_main_result_appendix}) for the scenario where $\gamma_k\in\Omega(1)$. In the following section, we will present empirical results to verify that the proposed estimator performs well under class imbalance.
\end{example}
\begin{table*}[ht]
	\begin{center}
		\begin{small}
			\caption{Average precisions, recalls and F1-scores of a simple deep neural network trained using sub-sampled estimations of $U_N$ and $U_N^\mathrm{hl}$ as empirical risks. The one with higher score is highlighted in bold font.}
			\label{tab:experiment_results}
			\centering
			\begin{tabular}{cl|cc|cc|cc}
				\toprule
				&
				& \multicolumn{2}{c}{\textbf{MNIST}}
				& \multicolumn{2}{c}{\textbf{FashionMNIST}}
				& \multicolumn{2}{c}{\textbf{CIFAR10}} \\
				\cmidrule(lr){3-4}
				\cmidrule(lr){5-6}
				\cmidrule(lr){7-8}
				\textbf{No. Negatives} & \textbf{Metric}
				& $U_N$ & $U_N^{\mathrm{hl}}$
				& $U_N$ & $U_N^{\mathrm{hl}}$
				& $U_N$ & $U_N^{\mathrm{hl}}$ \\
				\midrule
				\multirow{3}{4em}{$k=3$} &Avg.\ Precision &  
				{\bf 0.9473} & 0.8243 & {\bf 0.7729} & 0.6369 & 0.2516 & {\bf 0.2916} \\
				& Avg.\ Recall    &  
				{\bf 0.8965} & 0.7650 & {\bf 0.7280} & 0.5545 & {\bf 0.1745} & 0.1240 \\
				& Avg.\ F1-Score  &  
				{\bf 0.9208} & 0.7830 & {\bf 0.7473} & 0.5595 & {\bf 0.1994} & 0.1311 \\
				\midrule
				\multirow{3}{4em}{$k=5$} &Avg.\ Precision &  
				{\bf 0.9460} & 0.6119 & {\bf 0.4891} & 0.3361 & {\bf 0.2817} & 0.2668 \\
				& Avg.\ Recall    &  
				{\bf 0.8930} & 0.4860 & {\bf 0.4501} & 0.2139 & {\bf 0.1465} & 0.1455 \\
				& Avg.\ F1-Score  &  
				{\bf 0.9186} & 0.4867 & {\bf 0.4123} & 0.1783 & {\bf 0.1795} & 0.1746 \\
				\midrule
				\multirow{3}{4em}{$k=7$} &Avg.\ Precision &  
				{\bf 0.9446} & 0.3928 & {\bf 0.4195} & 0.2822 & 0.2616 & {\bf 0.2968} \\
				& Avg.\ Recall    &  
				{\bf 0.9180} & 0.1970 & {\bf 0.3710} & 0.1870 & {\bf 0.1821} & 0.1455 \\
				& Avg.\ F1-Score  &  
				{\bf 0.9311} & 0.2507 & {\bf 0.3637} & 0.1471 & {\bf 0.2111} & 0.1708 \\
				\bottomrule
			\end{tabular}
		\end{small}
	\end{center}
    \vskip -0.1in
\end{table*}
\section{Numerical Experiments}
\subsection{Experiment on Synthetic Imbalanced Dataset}
\label{sec:synth_imbl_dataset}
We hypothesize that the estimator $U_N(f)$ in Eqn.~\eqref{eq:natural_estimator} generalizes better than the formula $U_N^\mathrm{hl}(f)$ used by prior work in certain long-tailed scenarios, especially for minority classes. For example, when a single class is overly dominant while others are extremely small and we have $k\ll R$. To verify this hypothesis, we synthesized an imbalanced dataset using a pre-defined mixture of Gaussian distributions $\mathcal{\bar D}=\sum_{r=1}^R\rho_r\mathcal{N}(\cdot|\mu_r, \sigma_r^2)$ where $R=20$, $\sigma_r^2=1$ for all $r\in[R]$ and $\rho_{\max}=0.5$ (one class dominating half of the dataset). Then, we trained the same model architecture (shallow neural network with ReLU activations) on the synthesized dataset with the \textbf{sub-sampled estimators} (cf. Section \ref{sec:subsample_estimators}) corresponding to $U_N$ and $U_N^\mathrm{hl}$ as empirical risks until convergence. After training, we evaluate the performance of each estimator on the test set. In Figure \ref{fig:rare_class_test_loss}, we report the final contrastive loss computed on test tuples whose anchor–positive pairs belong to the top five rarest classes. From the result, it can be seen that the model trained with the sub-sampled estimation of $U_N$ outperforms that trained using $U_N^\mathrm{hl}$ on every rare class. Additionally, we visualize the PCA transformed representations of each model on test data points of the rarest classes in Figure \ref{fig:rare_class_embeddings} and compare the intra-class compactness of each model's output. From the experiment results, we observe that the model trained with sub-sampled $U_N$ as the empirical risk achieved much better generalization performances on every rare class. Furthermore, the representations produced by this model are also much more compact compared to the representations produced by the model trained with the old estimator $U_N^\mathrm{hl}$. In Appendix \ref{sec:experiment_details}, we provide a more thorough description of experiment settings as well as detailed algorithms for calculating sub-sampled estimations of the two U-Statistics formulas $U_N$ and $U_N^\mathrm{hl}$.

\begin{table*}[ht]
	\begin{center}
		\begin{small}
        \caption{Average (macro) precisions, recalls and F1-scores of a simple \emph{convolutional} neural network trained using sub-sampled estimations of $U_N$, $U_N^\mathrm{hl}$ and SupCon as empirical risks. The one with higher score between the two estimators $U_N$ and $U_N^\mathrm{hl}$ is highlighted in bold font. The metrics of SupCon is underlined if it outperforms both $U_N$, $U_N^\mathrm{hl}$.}
			\label{tab:experiment_results_cnn}
			\centering
			\resizebox{\textwidth}{!}{\begin{tabular}{ll|ccc|ccc|ccc|ccc}
				\toprule
				&
				& \multicolumn{3}{c}{\textbf{MNIST} (imbl)}
				& \multicolumn{3}{c}{\textbf{FashionMNIST} (imbl)}
				& \multicolumn{3}{c}{\textbf{CIFAR10} (imbl)} 
                & \multicolumn{3}{c}{\textbf{CIFAR100}} \\
				\cmidrule(lr){3-5}
				\cmidrule(lr){6-8}
				\cmidrule(lr){9-11}
                \cmidrule(lr){12-14}
				\textbf{Metr.} & \textbf{$\sharp$Neg.}
				& $U_N$ & $U_N^{\mathrm{hl}}$ & SupCon 
				& $U_N$ & $U_N^{\mathrm{hl}}$ & SupCon
				& $U_N$ & $U_N^{\mathrm{hl}}$ & SupCon 
                & $U_N$ & $U_N^{\mathrm{hl}}$ & SupCon \\
				\midrule
				\multirow{3}{*}{Prec.} & $k=3$  &
				{\bf 0.9835} & 0.9238 & \multirow{3}{*}{0.9874} & {\bf 0.4146} & 0.3487 & \multirow{3}{*}{\underline{0.9382}} & {\bf 0.8128} & 0.7794 & \multirow{3}{*}{0.8057} & {\bf 0.3490} & 0.2836 & \multirow{3}{*}{0.2772} \\
				& $k=5$ &
				{\bf 0.9898} & 0.5031 & & {\bf 0.9212} & 0.3560 & & {\bf 0.8078} & 0.8045 & & {\bf 0.3239} & 0.3196 \\
				& $k=7$ &
				{\bf 0.9903} & 0.2845 & & {\bf 0.9253} & 0.3765 & & {0.7977} & {\bf 0.8033} & & {\bf 0.3173} & 0.2778 \\
				\midrule
				\multirow{3}{*}{Rec.} & $k=3$ &
				{\bf 0.9809} & 0.7599 & \multirow{3}{*}{0.9801} & {\bf 0.4998} & 0.4076 & \multirow{3}{*}{0.8435} & {\bf 0.5649} & 0.5404 & \multirow{3}{*}{\underline{0.6045}} & {\bf 0.3298} & 0.2675 & \multirow{3}{*}{0.3095} \\
				& $k=5$ &
				{\bf 0.9869} & 0.4315 & & {\bf 0.8629} & 0.4071 & & {\bf 0.5507} & {0.5324} & & {\bf 0.3422} & 0.2710 \\
				& $k=7$ &
				{\bf 0.9791} & 0.3711 & & {\bf 0.8451} & 0.3966 & & {\bf 0.5623} & 0.5515 & & {\bf 0.2895} & 0.2544 \\
				\midrule
				\multirow{3}{*}{F1} & $k=3$ &
				{\bf 0.9821} & 0.7771 & \multirow{3}{*}{0.9837} & {\bf 0.4195} & 0.3222 & \multirow{3}{*}{0.8745} & {\bf 0.6592} & 0.6328 & \multirow{3}{*}{\underline{0.6813}} & {\bf 0.3335} & 0.2723 & \multirow{3}{*}{0.2919} \\
				& $k=5$ &
				{\bf 0.9883} & 0.4209 & & {\bf 0.8862} & 0.3212 & & {\bf 0.6406} & {0.6366} & & {\bf 0.3306} & 0.2929 \\
				& $k=7$ &
				{\bf 0.9846} & 0.3141 & & {\bf 0.8744} & 0.3111 & & {0.6481} & {\bf 0.6505} & & {\bf 0.2997} & 0.2640 \\
				\bottomrule
			\end{tabular}}
		\end{small}
	\end{center}
    \vskip -0.1in
\end{table*}
\subsection{Experiment on Real Imbalanced Dataset}
\label{sec:real_imbl_dataset}
We follow the same representation learning procedure on synthetic data for real datasets. For the real-data experiments, we extract subsets from MNIST, FashionMNIST, and CIFAR-10 to simulate a long-tailed class distribution. Specifically, we choose samples from the full datasets such that one class is vastly dominant (occupying roughly half of the extracted subset) while the sizes of the remaining classes decay at an exponential rate. Using the representations learned by each model, one trained with sub-sampled $U_N$ and the other with sub-sampled  $U_N^\mathrm{hl}$, we then train a simple linear classifier on top. Finally, we report the average precisions, recalls, and F1-scores of both models evaluated on the five rarest classes in each test set of the corresponding dataset. These metrics are reported for different values of negative samples $k$ in  Table \ref{tab:experiment_results}. Overall, all performance metric deteriorates as the number of negatives increases. This observation is consistent with the assessment of several empirical works in contrastive learning for long-tailed classification problems \citep{article:khosla2020, article:wang2021, article:li2022, article:zhu2022, article:hou2023}. Intuitively, standard supervised contrastive learning typically performs poorly on long-tailed data because the overwhelming number of negative samples from major classes distorts the feature space and harms tail-class generalization. Interestingly, we can see that, despite the long-tailed condition, the model trained on the sub-sampled version of $U_N$ outperforms the one trained with sub-sampled $U_N^\mathrm{hl}$ on almost every metric across considered benchmarks. This empirical result further confirms our initial hypothesis about the tail-class generalization behavior of both U-Statistics.

\subsection{Typical Extreme Multi-class Scenarios}
\label{sec:typical_dataset}
In order to further verify that the proposed estimator $U_N$ outperforms the class-wise estimator $U_N^\mathrm{hl}$ in typical extreme multi-class scenarios, we conducted further experiments on MNIST, FashionMNIST, CIFAR10 and CIFAR100 with convolutional neural networks (CNNs). For MNIST, FashionMNIST and CIFAR10, we simulate the class-imbalanced distribution illustrated in Figure \ref{fig:class_distributions} as originally done for deep neural networks in Section \ref{sec:real_imbl_dataset}. For CIFAR100, we use the entire balanced dataset without modifying the class distribution. The CNN architecture used in this experiment comprises of three $(\text{Conv}\to\text{BN}\to\text{ReLU}\to\text{MaxPool})$ blocks with $32\to64\to128$ channels, respectively. Aside from the comparison to $U^\mathrm{hl}_N$, we also ran this experiment on SupCon \citep{article:khosla2020} to provide a more comprehensive comparative study. Overall, the debiased estimator $U_N$ proposed in this work outperforms the class-wise estimator proposed in \citet{article:hieu2025}, as expected, with only a few exceptions on CIFAR10. Notably, in the typical extreme multi-class scenario reflected in CIFAR100, the performance of $U_N$ is better than both $U_N^\mathrm{hl}$ and SupCon.

\subsection{Estimation of U-Statistics with Sub-sampling}
\label{sec:subsample_estimators}
Direct computation of either $U_N$ or $U_N^\mathrm{hl}$ requires contrastive loss evaluation over massive collections of valid tuples. Therefore, in the experiments described in Sections \ref{sec:synth_imbl_dataset}, \ref{sec:real_imbl_dataset} and \ref{sec:typical_dataset}, we used the \textbf{sub-sampled versions} of the U-Statistics estimators instead of the full averages. In this section, we provide a brief overview of sub-sampled estimators and a more detailed description along with experiment details are provided in Appendix \ref{sec:experiment_details}. Let $\{\z_j\}_{j=1}^K\subset\mathcal{Z}$ be a finite population of elements in some input space $\mathcal{Z}$ and $K$ is an extremely large number\footnote{In our case, $K$ refers to the size of valid tuples collection.}. Let $h:\mathcal{Z}\to\R$ and $A^w_K(h) := \sum_{j=1}^Kw(\z_j)h(\z_j)$ where $\forall j\in[K]$, $w(\z_j)\in\R$ and $\sum_{j=1}^Kw(\z_j)=1$. Let $q$ be a distribution over the full population $\set{\z_j}_{j=1}^K$ and let $\set{\z^*_\ell}_{\ell=1}^M\subseteq\set{\z_j}_{j=1}^K$ be drawn i.i.d. from $q$. We define the sub-sampled average:
\begin{align}
    \widehat A^q_M(h) = \frac{1}{M}\sum_{\ell=1}^M \frac{w(\z^*_\ell)}{q(\z^*_\ell)}h(\z^*_\ell).
\end{align}
\noindent Trivially, we can show that $\E_q\big[\widehat A^q_M(h)\big]=A_K^w(h)$. Hence, by the law of large numbers, we have the convergence $\widehat A_M^q(h)\to A_K^w(h)$ with probability one as $M\to\infty$. In fact, one can easily show $|A_K^w(h)-\widehat A_M^q(h)|\in\mathcal{O}(1/\sqrt{M})$ with high-probability (with respect to draws from $q$) using simple concentration inequalities (e.g., see \citet[Theorem 5.2]{article:hieu2025}). Therefore, if the weight function $w$ is given, we can always construct a sub-sampled estimator that is close to the weighted sum over the full population using a proposed sub-sampling distribution $q$. Indeed, both U-Statistics in Eqn.~\eqref{eq:hieu_ledent_estimator} and Eqn.~\eqref{eq:natural_estimator} can be written in the form of $A_K^w(h)$, which means that we can construct sub-sampled estimators corresponding to both formulations to conduct computationally feasible experiments.

\section{Conclusion}
In this work, we conduct a refined generalization analysis for the supervised contrastive representation learning framework. On the one hand, we provide a much tighter excess risk bound for the combined class-wise U-Statistic proposed by the previous work of \citet{article:hieu2025}. Specifically, we proved that the U-Statistic in Eqn.~\eqref{eq:hieu_ledent_estimator} yields a sample complexity that scales in the worst case with $R$, the number of classes, a rate only achievable by the previous work under the perfectly balanced classes assumption. On the other hand, we propose an entirely different estimator based on separately estimating two auxiliary contrastive risks, one with arbitrary number of class collisions and the other with at least one collision. Then, we combine both estimators via the overall class-collision probability. Under a mild assumption on the class distributions, we show that this estimator achieves a sample complexity that scales with $k$, the tuple size, in extreme multi-class scenarios and atypical conditions such as class imbalance. Using both theoretical examples and empirical experiments, we show that the U-Statistic estimator in Eqn.~\eqref{eq:natural_estimator} reflects better generalization performance in a typical extreme multi-class scenario (cf. Section \ref{sec:typical_dataset}, CIFAR100). Furthermore, in certain long-tailed class distributions, our experiments (Section \ref{sec:real_imbl_dataset}) also demonstrate that the proposed U-Statistic achieves better generalization performance compared to the class-wise formulation in prior work, especially for tail classes.

\section*{Impact Statement}

This paper is primarily theoretical in nature and we cannot foresee any negative societal impact.

\section*{Acknowledgement}

This research is supported by the National Research Foundation, Singapore under its AI Singapore Programme (AISG
Award No: AISG3-PhD-2025-08-066T).

\bibliographystyle{icml2026}
\bibliography{main}

\newpage\appendix\onecolumn

\newpage 
\section{Table of Notations}
\begin{longtable}{ll}
	\caption{Summary of key notations.} 
	\label{tab:notations}
	\\
	\toprule 
	\textbf{Ntn.} & \textbf{Description} \\
	
	\midrule
	\multicolumn{2}{c}{\textbf{Frequently-used Notations}} \\
	\midrule
	$\X$ & Input data space \\
	$R$ & The number of classes \\	
	$k$ & The number of negative samples \\	
	$\rho_r$ & The occurrence probability of class $r\in[R]$ \\
	$\mathcal{D}_r$ & The distribution of data points coming from class $r\in[R]$ \\
	$\mathcal{\bar D}$ & $\mathcal{\bar D}:=\sum_{r=1}^R\rho_r\mathcal{D}_r$, i.e., the full mixture of all class-conditional distributions \\
	$\mathcal{\bar D}_r$ & $\mathcal{\bar D}_r:=\frac{1}{1-\rho_r}\sum_{q\ne r}\rho_q\mathcal{D}_q$, i.e., re-normalized mixture that excludes class $r\in[R]$ \\
	$S$ & $S=\set{X_j}_{j=1}^N\sim\mathcal{\bar D}^{\otimes N}$, i.e., the given dataset of $N$ labeled data points \\
	$S_r$ & The set of data points in $S$ that belongs to class $r\in[R]$ \\
	$\bar S_r$ & The set of data points in $S$ that \textbf{do not} belong to class $r\in[R]$ \\
	$N_r$ & The cardinality of $S_r$ \\
	$\widehat\rho_r$ & $\widehat\rho_r:=\frac{N_r}{N}$, i.e., the empirical occurrence probability of class $r\in[R]$ \\
	$\rho_{\min}$ & $\rho_{\min} := \min_{r\in[R]}\rho_r$, i.e., the probability of the rarest class \\
	$\rho_{\max}$ & $\rho_{\max} := \max_{r\in[R]}\rho_r$, i.e., the probability of the most dominant class \\
	$\tau$ & $\tau:=1-\sum_{r=1}^R\rho_r(1-\rho_r)^k$, i.e., the population collision probability \\
	$\widehat\tau$ & $\widehat\tau:=1-\sum_{r=1}^R\widehat\rho_r(1-\widehat\rho_r)^k$, i.e., the plug-in estimator of $\tau$ \\
	$\phi$ & The $k$-dimensional vector-valued contrastive loss \hfill (e.g., see \citet{article:sohn2016}) \\
	$\ell_{\phi, f}$ & $\ell_{\phi, f}: \left(X, X^+, \set{X_i^-}_{i=1}^k\right)\mapsto \phi\left(\set{f(X)^\top\left[f(X^+)-f(X_i^-)\right]}_{i=1}^k\right)$ \\
    $\mathcal{B}$ & The upper-bound of $\ell_{\phi, f}$, i.e., $|\ell_{\phi, f}|\le \mathcal{B}$ for all $f\in\mathcal{F}$ \\
    $\Pi$ & The set of all bijections (i.e., permutations) $\pi:[N]\to[N]$ \\
	
	\midrule
	\multicolumn{2}{c}{\textbf{Operators}} \\
	\midrule
	${\E}[\cdot]$ & The population expectation over the distribution $\mu$ , written as $\E_\mu[\cdot]$ or $\underset{X\sim\mu}{\E}[\cdot]$ \\
	& to specify the underlying distribution ($\mu$) in given contexts \\
	$\mathbb{\widehat E}_V[\cdot]$ & $\mathbb{\widehat E}_V[g]:=\mathbb{\widehat E}_{v\sim V}[g(v)]=\frac{1}{|V|}\sum_{v\in V}g(v)$, i.e., empirical average over $V$ \\
	$\|\cdot\|_\mathcal{G}$ & Given a class of functions $\mathcal{G}$ and $F:\mathcal{G}\to\R$, $G:\mathcal{G}\to\R$ be two functionals, \\
	& we have $\|F-G\|_\mathcal{G} := \sup_{g\in\mathcal{G}}|F(g) - G(g)|$, i.e., the uniform difference over $\mathcal{G}$\\ 
	
	\midrule
	\multicolumn{2}{c}{\textbf{Key Population Risks}} \\
	\midrule
	$\Lun$ & Population contrastive risk with no class-collision \hfill (cf. Eqn.~\eqref{eq:population_contrastive_risk}) \\
	$\Lnorm_\Omega$ & Population contrastive risk with zero or more class-collisions \hfill (cf. Eqn.~\eqref{eq:lomega}) \\
	$\Lnorm_\Lambda$ & Population contrastive risk with one or more class-collisions \hfill (cf. Eqn.~\eqref{eq:llambda}) \\
	$\Lnorm_\phi^r$ & Population class-wise risk with no class-collision \hfill (cf. Eqn.~\eqref{eq:collision_free_classwise_risk}) \\
	$\Lnorm^r_\Omega$ & Population class-wise risk with zero or more class-collisions \hfill (cf. Eqn.~\eqref{eq:lomega_classwise}) \\
	$\Lnorm^r_\Lambda$ & Population class-wise risk with one or more class-collisions \hfill (cf. Eqn.~\eqref{eq:llambda_classwise}) \\
	
	\midrule
	\multicolumn{2}{c}{\textbf{Key Collections of Tuples}} \\
	\multicolumn{2}{c}{The following are collections of tuples constructed from $S$ whose} \\
	\multicolumn{2}{c}{anchor-positive pairs are of class $r\in[R]$} \\
	\midrule
	$\Theta_r$ & The negative samples in each tuple are not allowed to be of class $r$ \\
	$\Omega_r$ & The negative samples in each tuple include \textbf{zero or more} sample from class $r$ \\
	$\Lambda_r$ & The negative samples in each tuple include \textbf{one or more} sample from class $r$ \\
	
	\midrule
	\multicolumn{2}{c}{\textbf{Key Estimators used in Main Text}} \\
	\midrule
	$U_{\Theta_r}$ & $U_{\Theta_r}(f) := \widehat\E_{\Theta_r}[\ell_{\phi, f}]$, i.e., the (asymptotically) unbiased estimator for $\Lnorm_\phi^r(f)$ \\
	$U_{\Omega_r}$ & $U_{\Omega_r}(f):=\widehat\E_{\Omega_r}[\ell_{\phi, f}]$, i.e., the ``natural estimator" of $\Lnorm^r_\Omega(f)$ \\
	$U_{\Lambda_r}$ & $U_{\Lambda_r}(f):=\widehat\E_{\Lambda_r}[\ell_{\phi, f}]$, i.e., the ``natural estimator" of $\Lnorm^r_\Lambda(f)$ \\
	$U_\Omega$ & $U_\Omega(f):=\sum_{r=1}^R\widehat\rho_r U_{\Omega_r}(f)$, i.e., the ``natural estimator" of $\Lnorm_\Omega(f)$ (Zero or more collisions)\\
	$U_\Lambda$ & $U_\Lambda(f):=\sum_{r=1}^R\widehat\rho_r U_{\Lambda_r}(f)$, i.e., the ``natural estimator" of $\Lnorm_\Lambda(f)$ (One or more collisions)\\
	$U_N^\mathrm{hl}$ & $U_N^\mathrm{hl}(f):=\sum_{r=1}^R\widehat\rho_rU_{\Theta_r}(f)$, i.e., the U-Statistic in \citet{article:hieu2025} \\
	$U_N$ & $U_N(f):=\frac{1}{1-\widehat\tau}U_\Omega(f)-\frac{\widehat\tau}{1-\widehat\tau}U_\Lambda(f)$, i.e., the proposed U-Statistic estimator \\
	
	\midrule
	\multicolumn{2}{c}{\textbf{Construction of Auxiliary Estimators}} \\
	\multicolumn{2}{c}{For the following notations, we refer to $\Pi$ as the set of all bijections $\pi:[N]\to[N]$} \\
	\multicolumn{2}{c}{Furthermore, we let $n=2\lfloor N / (k+2)\rfloor$ and $m=3\lfloor N / (k+2)\rfloor$} \\
	\midrule
	$S_\pi$ & $S_\pi := \set{X_{\pi(j)}}_{j=1}^N$, i.e., the dataset $S$ permuted by $\pi$, meaning for each position \\
	& $j\in[N]$, the data point $X_j$ is swapped with $X_{\pi(j)}$\\
	$S^{(n)}$ & $S^{(n)} := \set{X_j}_{j=1}^n$, i.e., the set of the first $n$ data points in $S$ \\
	$\bar S^{(n)}$ & $\bar S^{(n)}:=\set{X_j}_{j=n+1}^{N-n}$, i.e., the set of the last $N-n$ data points in $S$ \\
	$S_\pi^{(n)}$ & $S_\pi^{(n)}:=\set{X_{\pi(j)}}_{j=1}^n$, i.e., the set of the first $n$ data points of $S_\pi$  \\
	$\bar S_\pi^{(n)}$ & $\bar S_\pi^{(n)}:=\set{X_{\pi(j)}}_{j=n+1}^{N-n}$, i.e., the set of the last $N-n$ data points of $S_\pi$ \\
	\midrule
	\multicolumn{2}{c}{The above notations extend to $S^{(m)}, \bar S^{(m)}, S^{(m)}_\pi$ and $\bar S^{(m)}_\pi$} \\
	\midrule
	$\mathrm{id}$ & $\mathrm{id}\in\Pi$ is the identity permutation, i.e., $\pi(j)=j,\forall j\in[N]$ \\
	$n_r^\pi$ & $n_r^\pi:=|S^{(n)}_\pi\cap S_r|$, i.e., number of data points in class $r$ from $S_\pi^{(n)}$ \\
	$m_r^\pi$ & $m_r^\pi:=|S^{(m)}_\pi\cap S_r|$, i.e., number of data points in class $r$ from $S_\pi^{(m)}$ \\
	$\omega_r^\pi$ & $\omega_r^\pi:=\lfloor n_r^\pi/2\rfloor\Big/\left[\sum_{q=1}^R\lfloor n_q^\pi/2\rfloor\right]$, i.e., the fraction of independent $2$-blocks \\
	& from class $r$ in $S_\pi^{(n)}$ \\
	$\lambda_r^\pi$ & $\lambda_r^\pi:=\lfloor m_r^\pi/3\rfloor\Big/\left[\sum_{q=1}^R\lfloor m_q^\pi/3\rfloor\right]$, i.e., the fraction of independent $3$-blocks \\
	& from class $r$ in $S_\pi^{(m)}$ \\
	\midrule
	\multicolumn{2}{c}{We also define $n_r=n_r^\mathrm{id}, m_r = m_r^\mathrm{id}, \omega_r = \omega_r^\mathrm{id}$ and $\lambda_r=\lambda_r^\mathrm{id}$,} \\
	\multicolumn{2}{c}{i.e., variants of $n_r^\pi, m_r^\pi, \omega_r^\pi,\lambda_r^\pi$ when no permutation takes place ($\pi=\mathrm{id}$)} \\
	\midrule
	$\Omega_r^\pi$ & The set of tuples with \textbf{zero or more} collisions whose anchor-positive pairs \\
	& come from $S^{(n)}_\pi$ and negative samples come from $\bar S^{(n)}_\pi$ \\
	$\Lambda_r^\pi$ & The set of tuples with \textbf{one or more} collisions whose anchor-positive-collided \\
	& triplets come from $S^{(m)}_\pi$ and negative samples come from $\bar S^{(m)}_\pi$ \\
	$U_{\Omega_r^\pi}$ & $U_{\Omega_r^\pi}(f):=\widehat\E_{\Omega_r^\pi}[\ell_{\phi, f}]$, i.e., an (asymptotically) unbiased estimator for $\Lnorm_\Omega^r(f)$ \\	
	$U_{\Lambda_r^\pi}$ & $U_{\Lambda_r^\pi}(f):=\widehat\E_{\Lambda_r^\pi}[\ell_{\phi, f}]$, i.e., an (asymptotically) unbiased estimator for $\Lnorm_\Lambda^r(f)$ \\
	$\bar U_\Omega$ & $\bar U_\Omega(f):=\widehat\E_\Pi\left[\sum_{r=1}^R\omega_r^\pi U_{\Omega_r^\pi}(f)\right]$, i.e., the auxiliary estimator for $U_\Omega(f)$ (Zero or more collisions) \\
	$\bar U_\Lambda$ & $\bar U_\Lambda(f):=\widehat{\E}_\Pi\left[\sum_{r=1}^R\lambda_r^\pi U_{\Lambda_r^\pi}(f)\right]$, i.e., the auxiliary estimator for $U_\Lambda(f)$ (One or more collisions) \\
	$\bar U_N$ & $\bar U_N(f):=\frac{1}{1-\widehat\tau}\bar U_\Omega(f)-\frac{\widehat\tau}{1-\widehat\tau}\bar U_\Lambda(f)$, i.e., the auxiliary estimator for $U_N(f)$ \\
	
	\midrule
	\multicolumn{2}{c}{\textbf{Auxiliary Risks}} \\
	\multicolumn{2}{c}{The following are auxiliary risks that we define for concentration analysis} \\
	\multicolumn{2}{c}{(Meaning they will be important for understanding the main proofs)} \\
	\midrule
	$\overline{\omega}_r$ & $\overline{\omega}_r:=\E[\omega_r]$, i.e., the \textbf{population expectation} of the fraction $\omega_r$ \\
	& Note that $\E[\omega_r]=\E[\omega_r^\pi]$ for any permutation $\pi\in\Pi$ \\
	$\omega_r^*$ & $\omega_r^* := \widehat\E_\Pi[\omega_r^\pi]$, i.e., the \textbf{empirical average} of fractions $\omega_r^\pi$ over all permutations \\	
	$\Lnorm_\Omega^*$ & $\Lnorm_\Omega^*(f):=\sum_{r=1}^R\omega_r^*\Lnorm_\Omega^r(f)$, i.e., the mixture of class-wise risks weighted by $\set{\omega_r^*}_{r=1}^R$ \\
	$\overline{\Lnorm}_\Omega$ & $\overline{\Lnorm}_\Omega(f):=\sum_{r=1}^R\overline{\omega}_r\Lnorm_\Omega^r(f)$, i.e., the mixture of class-wise risks weighted by $\set{\overline{\omega}_r}_{r=1}^R$ \\
	$\widehat\Lnorm_\Omega$ & $\widehat\Lnorm_\Omega(f|\pi):=\sum_{r=1}^R\omega_r^\pi\Lnorm_\Omega^r(f)$, i.e., the mixture of class-wise risks weighted by $\set{\omega_r^\pi}_{r=1}^R$ \\
	& Note that we also define $\widehat\Lnorm_\Omega(f):=\widehat\Lnorm_\Omega(f|\mathrm{id})=\sum_{r=1}^R\omega_r\Lnorm_\Omega^r(f)$ \\
	\midrule
	$\overline{\lambda}_r$ & $\overline{\lambda}_r:=\E[\lambda_r]$, i.e., the \textbf{population expectation} of the fraction $\lambda_r$ \\
	& Note that $\E[\lambda_r]=\E[\lambda_r^\pi]$ for any permutation $\pi\in\Pi$ \\
	$\lambda_r^*$ & $\lambda_r^* := \widehat\E_\Pi[\lambda_r^\pi]$, i.e., the \textbf{empirical average} of fractions $\lambda_r^\pi$ over all permutations \\
	$\Lnorm_\Lambda^*$ & $\Lnorm_\Lambda^*(f):=\sum_{r=1}^R\lambda_r^*\Lnorm_\Lambda^r(f)$, i.e., the mixture of class-wise risks weighted by $\set{\lambda_r^*}_{r=1}^R$ \\
	$\overline{\Lnorm}_\Lambda$ & $\overline{\Lnorm}_\Lambda(f):=\sum_{r=1}^R\overline{\lambda}_r\Lnorm_\Lambda^r(f)$, i.e., the mixture of class-wise risks weighted by $\set{\overline{\lambda}_r}_{r=1}^R$ \\
	$\widehat\Lnorm_\Lambda$ & $\widehat\Lnorm_\Lambda(f|\pi):=\sum_{r=1}^R\lambda_r^\pi\Lnorm_\Lambda^r(f)$, i.e., the mixture of class-wise risks weighted by $\set{\lambda_r^\pi}_{r=1}^R$ \\
	& Note that we also define $\widehat\Lnorm_\Lambda(f):=\widehat\Lnorm_\Lambda(f|\mathrm{id})=\sum_{r=1}^R\lambda_r\Lnorm_\Lambda^r(f)$ \\
    \midrule
	\multicolumn{2}{c}{\textbf{Other Special Quantities}} \\
	\midrule
    $\gamma_\alpha$ & $\gamma_\alpha:=\P_\rho(\rho_r\le\alpha^{-1})=\sum_{r:\rho_r\le\alpha^{-1}}\rho_r$, i.e., total mass of small class probabilities \\
    $\widehat\gamma_\alpha$ & $\widehat\gamma_\alpha:=\P_{\widehat\rho}(\widehat\rho_r\le \alpha^{-1})=\sum_{r:\widehat\rho_r\le\alpha^{-1}}\widehat\rho_r$, i.e., total mass of small empirical probabilities \\
    $\widehat\theta_{k+2}$ & $\widehat\theta_{k+2}:=\P_{\widehat\rho}(\widehat\rho_r\le 2/(k+2))$, used in the refined concentration analysis of $U_N^\mathrm{hl}$ \\
  	\bottomrule
\end{longtable}

\begin{longtable}{ll}
	\caption{Function classes and complexity measures.} 
	\label{tab:complexity_measures}
	\\
	\toprule 
	\textbf{Ntn.} & \textbf{Description} \\
	
	\midrule
	\multicolumn{2}{c}{\textbf{Function Classes}} \\
	\midrule
	$\mathcal{F}$ & The class of representation functions \\
	$\mathcal{H}$ & $\mathcal{H}:=\Big\{\left(X, X^+, \set{X_i^-}_{i=1}^k\right)\mapsto\ell_{\phi, f}\left(X, X^+, \set{X_i^-}_{i=1}^k\right):f\in\mathcal{F}\Big\}$ \\
	
	\midrule
	\multicolumn{2}{c}{\textbf{Complexity Measures}} \\
	\multicolumn{2}{c}{Given an input space $\mathcal{Z}$, a class $\mathcal{G}$ of real-valued functions $g:\mathcal{Z}\to\R$, a distribution $\mathcal{\mu}$} \\
	\multicolumn{2}{c}{over $\mathcal{Z}^m$ and let ${\bf\Sigma}_m=\set{\sigma_j}_{j=1}^m$ be a sequence of independent Rademacher variables} \\
	\multicolumn{2}{c}{Finally, let $S=\set{\z_j}_{j=1}^m\sim\mu$ be a dataset drawn from $\mu$ and suppose $|g|\le \mathcal{B}$ for all $g\in\mathcal{G}$} \\
	\midrule
	$\mathcal{N}(\mathcal{G}, \epsilon, \Lnorm_p(S))$ & The size of the smallest cover $\mathcal{C}\subseteq\mathcal{G}$ such that for all $g\in\mathcal{G}$, \\
	& there exists $\bar g\in\mathcal{C}$ such that $\frac{1}{m}\sum_{j=1}^m|g(\z_j) - \bar g(\z_j)|^p\le \epsilon^p$ \\
	$\mathfrak{C}_m(\mathcal{G})$ & $\mathfrak{C}_m(\mathcal{G}) := \sup_{S^*\in\mathcal{Z}^m}\int_{1/m}^\mathcal{B}\sqrt{\ln 2\mathcal{N}(\mathcal{G},\epsilon, \Lnorm_2(S^*))}d\epsilon$, i.e., the \\
	& worst-case Dudley integral over datasets of size $m$\\
	$\mathfrak{\widehat R}_S(\mathcal{G})$ & $\mathfrak{\widehat R}_S(\mathcal{G}) : = \E_{\Rad_m}\left[\sup_{g\in\mathcal{G}}\left|\frac{1}{m}\sum_{j=1}^mg(\z_j)\right|\right]$, i.e., the empirical Rademacher \\
	& complexity of $\mathcal{G}$ given the dataset $S$ \\
	$\mathfrak{R}_\mu(\mathcal{G})$ & $\mathfrak{R}_\mu(\mathcal{G})=\E_{S\sim\mu}\left[\mathfrak{R}_S(\mathcal{G})\right]$, i.e., the expected Rademacher complexity where \\
    & expectation is taken over the distribution $\mu$ over $\mathcal{Z}^m$ \\
    $\mathfrak{R}_m^\mathrm{wc}(\mathcal{G})$ & $\mathfrak{R}_m^\mathrm{wc}(\mathcal{G}):=\sup_{S^*\in\mathcal{Z}^m}\mathfrak{\widehat R}_{S^*}(\mathcal{G})$, i.e., the worst-case Rademacher complexity, \\
    & over datasets of size $m$ \\
	\bottomrule
\end{longtable}

\newpage 
\section{Summary of the Literature on the Concentration of the Unsupervised Risk}
\label{sec:summary_of_literature}
\begin{table*}[ht]
	\begin{center}
		\begin{small}
			\caption{Review of current works on \textit{generalization bounds} (i.e., finite-sample concentration of the contrastive risk) for various regimes of CRL.  Note that all results are expressed as sample complexity in terms of total number of labeled data points $N$: for the references studying the i.i.d. tuples regime~\citep{article:arora19,article:lei2023}, $N$ scales as $\mathcal{{O}}(N_{\mathrm{tup}}k)$, which explains the rate of $k^2$ in~\citet{article:arora19}. Indeed~\citet{article:arora19} requires $N_{\mathrm{tup}}\in \mathcal{\widetilde{O}}(k)$ tuples, which corresponds to $N\in \mathcal{\widetilde{O}}(k^2)$ individual samples. Similarly, ~\citet{article:lei2023} requires $N_{\mathrm{tup}}\in\mathcal{\widetilde{O}}(1)$ tuples, which corresponds to $\mathcal{\widetilde{O}}(k)$ individual samples.  }
            \label{tab:literature review}
			\centering
            \begin{tabular}{c|c|c|c}
            \toprule
            \textbf{Reference} & \textbf{Est.} & \textbf{Default} & $\gamma_k \in \mathcal{O}(1)$ \\
            \midrule
            \multicolumn{4}{c}{\makecell{\textbf{Unsupervised CRL} (i.i.d. tuples)}} \\
            \midrule
            \citet{article:arora19} & I.I.D-ERM & \multicolumn{2}{c}{$\mathfrak{C}_N^2(\mathcal{H})k^2$} \\
            \midrule
            \citet{article:lei2023} & I.I.D-ERM & \multicolumn{2}{c}{$\mathfrak{C}_N^2(\mathcal{H})k$} \\
            \midrule
            \multicolumn{4}{c}{\makecell{\textbf{Supervised CRL} (tuples constructed from pool of i.i.d. samples)}} \\
            \midrule
            \citet{article:hieu2025} & $U_N^\mathrm{hl}$ & \multicolumn{2}{c}{$[\mathfrak{C}^2_N(\mathcal{H})+1]\max\left[\rho_{\min}^{-1}, (1-\rho_{\max})^{-1}\right]$} \\
            \midrule
            This Work & $U_N^\mathrm{hl}$ & \makecell{$\mathfrak{C}^2_N(\mathcal{H})[\widehat\theta_{k+2}R+(1-\widehat\theta_{k+2})^2k] \ +$\\$[R+k^2]$\\(Theorem \ref{thm:refined_hieu_ledent_general}/\ref{thm:refined_hieu_ledent_general_appendix})}  & \makecell{$\mathfrak{C}^2_N(\mathcal{H})R \ +$\\$[R+k^2]$\\(Theorem \ref{thm:refined_hieu_ledent_small_probabilities})} \\
            \midrule
            This Work & $\bar U_N/U_N$ & \makecell{$\mathfrak{C}^2_N(\mathcal{H})(1-\tau)^{-2}k \ +$\\$[R(1-\tau)^{-4} + \max(Rk, k^2+{(1-\tau)^{-1}})]$\\(Theorem \ref{thm:main_result_more_general}/\ref{thm:main_result_more_general_appendix})} & \makecell{$\mathfrak{C}^2_N(\mathcal{H})k \ +$ \\ $[R + Rk]$\\(Theorem \ref{thm:second_main_result_appendix})} \\
            \bottomrule
            \end{tabular}
        \end{small}
    \end{center}
    \vskip -0.1in
\end{table*}

\begin{remark}[\textbf{on Theorems \ref{thm:main_result_more_general}/\ref{thm:main_result_more_general_appendix}}]
\label{rem:remark_on_natural_estimator}: We note that the sample complexity of $\bar U_N$ in Theorem \ref{thm:main_result_more_general} and Theorem \ref{thm:main_result_more_general_appendix} also applies for the natural proposed estimator $U_N$. Specifically, the bias of $U_N$ incurred a cost of at most $\mathcal{O}(Rk)$ in sample complexity (cf. Proposition \ref{prop:diff_between_natural_and_nonatural}), which is already present in the result for $\bar U_N$.
\end{remark}

\begin{remark}[\textbf{on Difference between Theorems \ref{thm:main_result_more_general}/\ref{thm:main_result_more_general_appendix} and \citet{article:lei2023}}]
At first glance, \citet{article:lei2023} seems to yield a better sample complexity (independent of $\tau$). However, we note that the sample complexity of their work is for estimating the much easier collision-allowed risk, which we denote as $\Lnorm_\Omega$. In the i.i.d.-sample regime investigated by this work, the result for $\Lnorm_\Omega$ is also provided in Theorem \ref{thm:concentration_of_bar_Uomega} (which yields $\mathcal{\widetilde O}(\mathfrak{C}_N^2(\mathcal{H})k + Rk)$ in sample complexity). However, this only serves as an intermediary step towards the result for the more interesting collision-free contrastive risk.
\end{remark}

\begin{remark}[\textbf{on Difference between i.i.d.-\emph{sample} and i.i.d.-\emph{tuple} Regimes}]
As explained in the main text, the dominant component in the above results is the one which includes a factor of the complexity term $\mathfrak{C}_N^2(\mathcal{H})$: the other additive terms (such as $R+k^2$ in Theorem~\ref{thm:refined_hieu_ledent_small_probabilities}) arise from the need for the empirical pool of samples to be non-degenerate w.r.t. the construction of valid tuples, and are \textit{independent of the hypothesis class}. 
 
These results only include contributions towards the speed of \textit{concentration of the contrastive risk}: as explained in the main text, there are many works~\cite{article:arora19, article:chuang2020, article:liliu2021, bao2022surrogate, article:huang2023} which analyze the relationship between the minimization of the population-level (collided) unsupervised risk and the performance at the downstream classification task under various augmentation strategies. Whilst extremely valuable, this direction is an orthogonal consideration to our work, which studies the concentration of the collision-free contrastive risk in the supervised case. These works often consider different generation and augmentation regimes, and the concentration of the unsupervised risk component of the error usually \textit{follows variants of the i.i.d. tuples regime}. For instance, \citet{article:cui2025} assumes that the positive and anchor positive samples are both generated with an augmentation strategy from the same sample (which elegantly matches practical applications). The negative samples are independently generated, similarly to the i.i.d. tuple regime. Similarly, in~\citet{cui2025inclusive}, study a variant of supervised CRL with non-collided i.i.d. tuples (as in~\citet{article:arora19} and \citet{article:lei2023}) under both augmentation and label noise.  

We also note that in the i.i.d. tuples regime, the \textit{collision-free} contrastive risk can only be estimated if the learner can generate i.i.d. tuples with positive and negative samples following the correct class distribution, which essentially requires access to supervised label information. On the other hand, the collided contrastive risk can often be estimated under credible (class-agnostic) i.i.d. tuples generation assumptions. The finite-sample concentration of both collision-free and collided contrastive risks is also much easier to prove in the i.i.d. tuples regime as there is no need for Hoeffding-type arguments. Accordingly, most works in this branch of the literature focus on other considerations such as downstream classification performance and augmentation strategy.
\end{remark}


\newpage
\section{Auxiliary Estimators}
\label{sec:aux_estimators_construction}
\subsection{Construction}
\begin{figure}
    \centering
    \includegraphics[width=\linewidth]{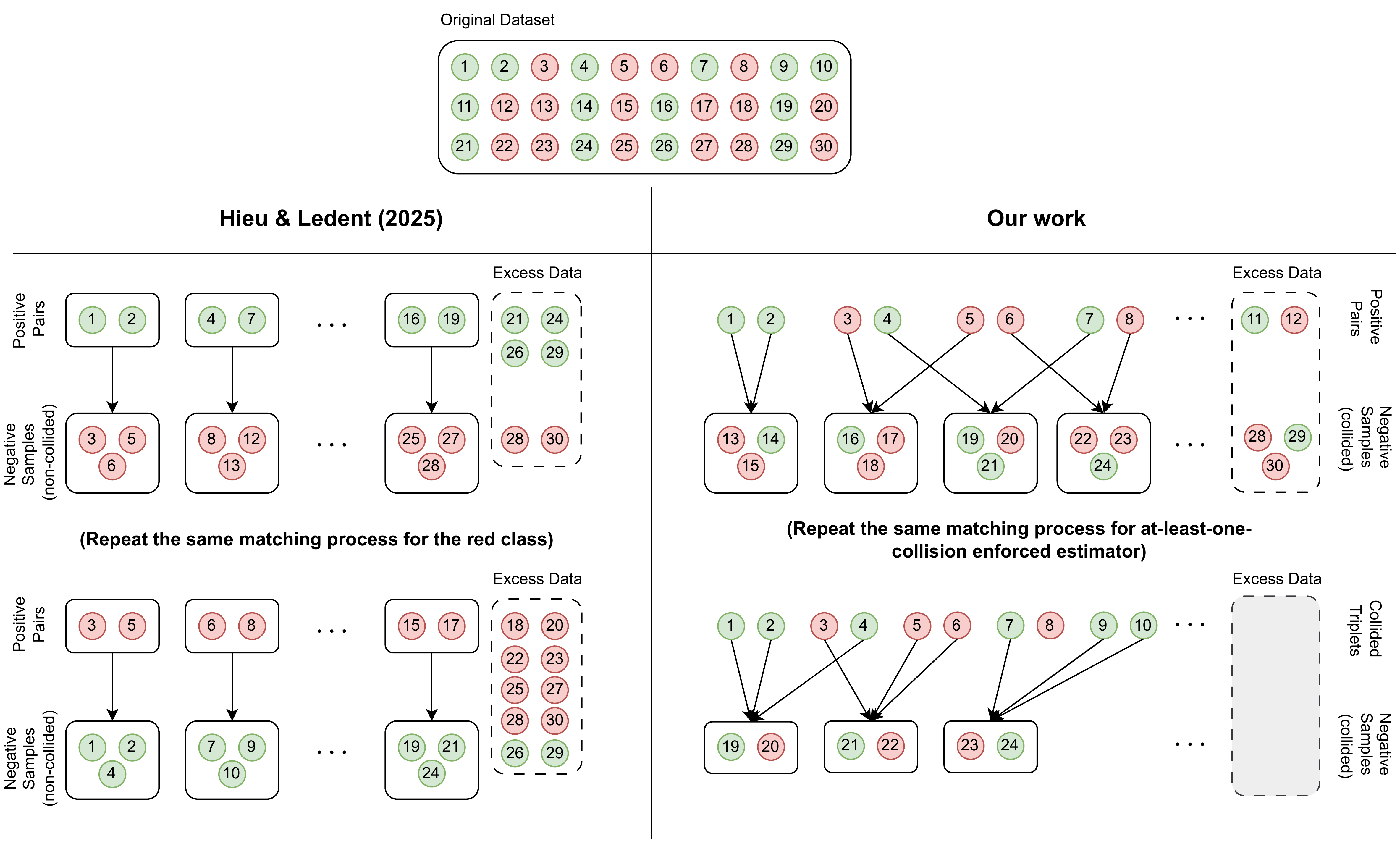}
    \caption{Comparison of the tuple selection process for each permutation of the labeled dataset $S$. For simplicity, only two classes are considered in this illustration (green and red). \textbf{Left}: Construction of the estimator $U_N^\mathrm{hl}$ from~\citet{article:hieu2025}: Tuples are selected class-wise for each component estimator defined in \citet{article:hieu2025}. $U_N^\mathrm{hl}$ is expressed as an average of this construction over all permutations of the dataset, which is equivalent to a naive class-wise tuple selection from the empirical distribution (averaging all valid tuples for each class). \textbf{Right}: Construction of the auxiliary estimator $\bar{U}_N$  in our work. The tuples are selected to estimate both arbitrary-collision risk $\bar{U}_{\Omega}$ (top-right) and at-least-one-collision risk $\bar{U}_{\Lambda}$ (bottom-right). $\bar{U}_N$ is defined as a weighted combination of the averages of $\bar{U}_{\Omega}$ and  $\bar{U}_{\Lambda}$ over all permutations of the dataset (cf. Eqns~\eqref{eq:bar_Uomega_Ulambda}, \eqref{eq:combined_aux_estimators_appendix}). In the final (natural) estimator $U_N$, the arbitrary and at-least-one-collision components are both constructed naturally by averaging all valid tuples. Unlike in the class-wise construction from~\citet{article:hieu2025}, the natural and auxiliary estimators $\bar{U}_N$ and $U_N$ are not equal to each other (due to a mismatch between the empirical and population collision probabilities), but they can be shown to be asymptotically close (cf. Proposition~\ref{prop:diff_between_natural_and_nonatural}). }
    \label{fig:tuples_selection}
\end{figure}
In this section, we construct (asymptotically) unbiased estimators of $\Lnorm_\Omega(f)$ and $\Lnorm_\Lambda(f)$ for $f\in\mathcal{F}$ that are close to our proposed natural estimators $U_\Omega(f), U_\Lambda(f)$. For a bijective map $\pi:[N]\to[N]$, we define $S_\pi=\{X_{\pi(j)}\}_{j=1}^N$, i.e., the original dataset $S$ shuffled by permuting positions of data-points according to $\pi$. Let $n=2\Big\lfloor \frac{N}{k+2}\Big\rfloor$ and $m=3\Big\lfloor\frac{N}{k+2}\Big\rfloor$, we define $S_\pi^{(n)}=\set{X_{\pi(j)}}_{j=1}^n$ and $S_\pi^{(m)}=\set{X_{\pi(j)}}_{j=1}^m$, i.e., the first $n$ and $m$ data-points of the shuffled dataset $S_\pi$. Then, we define the following collections of tuples:
\begin{align}
    \Omega_{r}^\pi :=\Big\{\Big(X, X^+, \set{X_i^-}_{i=1}^k\Big)&:X, X^+\in S_r \cap S_\pi^{(n)},  \label{eq:omega_r_pi}
    	\\& \set{X_i^-}_{i=1}^k\subseteq S\setminus S_\pi^{(n)}\Big\}. \nonumber \\
    \Lambda_r^\pi := \Big\{\Big(X, X^+, \bar X, \set{X_i^-}_{i=1}^{k-1}\Big)&: X, X^+,\bar X\in S_r\cap S_\pi^{(m)}, 
    	\\&\set{X_i^-}_{i=1}^{k-1}\subseteq S\setminus S_\pi^{(m)}\Big\}. \nonumber
\end{align}
Essentially, Eqn.~\eqref{eq:omega_r_pi} denotes the collection of tuples whose anchor-positive pairs are selected from class-$r$ data-points from the first partition of $S_\pi$ while negative samples are selected from the other partition. Similarly, $\Lambda_r^\pi$ denotes the collection of tuples where the anchor-positive-collided triplets are drawn from the first partition. Let $\Pi$ denote the set of all bijective maps $\pi:[N]\to[N]$, we define the auxiliary estimators $\bar U_\Omega(f)$ and $\bar U_\Lambda(f)$ as follows:
\begin{align}
    \bar U_{\Omega}(f) &:= \widehat\E_\Pi \left[\sum_{r=1}^R \omega_r^\pi U_{\Omega_r^\pi}(f)\right], \qquad 
    \bar U_{\Lambda}(f) := \widehat\E_\Pi \left[\sum_{r=1}^R \lambda_r^\pi U_{\Lambda_r^\pi}(f)\right].\label{eq:bar_Uomega_Ulambda}
\end{align}
Where for each $\pi\in\Pi$ and $r\in[R]$, we define:
\begin{align}
    U_{\Omega_r^\pi}(f) &:=\widehat\E_{\Omega_r^\pi}\left[\ell_{\phi,f}\right], \quad \omega_r^\pi:=\frac{\lfloor n_r^\pi/2\rfloor}{\sum_{q=1}^R\lfloor n_q^\pi/2 \rfloor}. \\
    U_{\Lambda_r^\pi}(f) &:=\widehat\E_{\Lambda_r^\pi}\left[\ell_{\phi,f}\right],
     \quad \lambda_r^\pi:=\frac{\lfloor m_r^\pi/3\rfloor}{\sum_{q=1}^R\lfloor m_q^\pi/3 \rfloor}.
\end{align}
\noindent And we define $n_r^\pi:=|S_r\cap S_\pi^{(n)}|$ and $m_r^\pi:=|S_r\cap S_\pi^{(m)}|$, i.e., the number of class-$r$ data points in $S_\pi^{(n)}$ and $S_\pi^{(m)}$, respectively. Then, we can combine the auxiliary estimators as follows:
\begin{align}
    \label{eq:combined_aux_estimators_appendix}
    \bar U_N(f) := \frac{1}{1-\widehat\tau}\bar U_\Omega(f) - \frac{\widehat \tau}{1-\widehat\tau}\bar U_\Lambda(f).
\end{align}
\noindent In the following section, we will prove that the auxiliary estimators $\bar U_\Omega(f)$ and $\bar U_\Lambda(f)$ are indeed (asymptotically) unbiased. Then, we will provide formal proof for our claim that for large $N$, the difference between $\bar U_\Omega(f)$ and $U_\Omega(f)$ (as well as the difference between $\bar U_\Lambda(f)$ and $U_\Lambda(f)$) is negligible (i.e., Proposition \ref{prop:diff_between_natural_and_nonatural}).

\subsection{Discussion of the Proofs}
\label{sec:proof_discussion}
As illustrated in Figure \ref{fig:tuples_selection}, the construction of the new estimators $U_N$ and $\bar{U}_N$ differs enormously from that of the previous work. To the best of our knowledge, this is necessary to achieve our improved sample complexity. This radical change of direction inevitably leads to a variety of difficulties in the analysis:
\begin{enumerate}[label=(\roman*)]
    \item The first difficulty has already been mentioned in the main text: the selection of negative samples to compute the arbitrary-collision estimator $U_\Omega$ (as well as the at-least-one-collision estimator $U_\Lambda$) follows the \textbf{wrong distribution}. Specifically, if we are given a dataset $S$ drawn i.i.d. from $\mathcal{\bar D}^{\otimes N}$ (where $\mathcal{\bar D}$ denotes the full mixture of class-conditional distributions), then randomly hold out two data points from class $r\in[R]$ (to be used as positive and anchor positive), the remaining $N-2$ data points \textbf{no longer follow $\mathcal{\bar D}$}. Intuitively, this is because every time we hold out data points in class $r$, the probability of re-selecting another data point from class $r$ slightly decreases while probabilities of other classes slightly increase. In other words, the weights of all member distributions in the full mixture $\mathcal{\bar D}$ have been distorted. The more serious problem is that the subsequently selected negatives now depend on the pre-selected anchor-positive pair, leading to \textbf{within-tuple dependence} (between samples). Note that this is a different and more challenging dependence type from what we and the work before us are dealing with. Simpler Hoeffding-type arguments are aimed at \textbf{between-tuple} dependence (present in all estimators including $U_N$, $\bar{U}_N$ and $U_N^{\mathrm{hl}}$), which usually means that while the tuples are not independent of each other due to \textit{repeating samples}, all member samples of every tuple are independent. In contrast, only $U_N$ exhibits \textit{within tuple dependence}, which is the reason why we must construct the auxiliary estimator $\bar{U}_N$.

    \item This issue directly motivates our decision to partition the dataset as illustrated in Figure~\ref{fig:tuples_selection} when constructing the auxiliary estimator $\bar{U}_N$. When we partition the dataset $S$ deterministically, i.e., shuffle $S$ by a permutation $\pi\in\Pi$ then separate $S$ into two partitions at a fixed index, we can treat each partition as a dataset drawn i.i.d. from $\mathcal{\bar D}$. However, one question might arise for many readers: \textbf{why do we have to construct $\bar U_\Omega$ (and $\bar U_\Lambda$) by averaging over all possible splits}? It is true that we could attempt to build $\bar U_\Omega$ as simply ${\widetilde{U}}_\Omega:=\sum_{r=1}^R\omega_r^\pi U_{\Omega_r^\pi}(f)$ based on a particular split $\pi\in\Pi$. However, let us investigate how this would affect bounding the uniform bias $\|{\widetilde{U}}_\Omega- U_\Omega\|_\mathcal{F}$. Applying a similar argument as Lemma \ref{lem:supporting_lemma}, we have:
    \begin{align*}
        \left\|U_\Omega-{\widetilde U}_\Omega\right\|_\mathcal{F} &\le \sum_{r=1}^R\widehat\rho_r\sup_{f\in\mathcal{F}}\Big|U_{\Omega_r}(f)-U_{\Omega_r^\pi}(f)\Big| + \mathcal{B}\sum_{r=1}^R|\omega_r^\pi-\widehat\rho_r|.
    \end{align*}
    \noindent This already reveals a significant problem: for each $r\in[R]$, the averages $U_{\Omega_r}$ and $U_{\Omega_r^\pi}$ are not supported on the same set of tuples. Furthermore, conditionally given the sample sizes $\set{N_r}_{r\in[R]}$, both $\Omega_r^\pi$ and $\omega_r^\pi$ are random due to the randomness of $\pi$, making it hard to bound either $|U_{\Omega_r}(f)-U_{\Omega_r^\pi}(f)|$ or $|\omega_r^\pi-\widehat\rho_r|$. On the other hand, if we average $U_{\Omega_r^\pi}$ over all permutations in $\Pi$, the task of controlling the uniform bias reduces to bounding the difference between two probability mass functions corresponding to the $\mathrm{Hypergeometric}(N,N_r,k)$ and $\mathrm{Hypergeometric}(N-2,N_r-2,k)$ distributions (see Lemma \ref{prop:diff_between_natural_and_nonatural_extra_assumption}).

    \item The most technically cumbersome challenge arises in the concentration analysis of the auxiliary arbitrary-collision estimator $\bar U_\Omega$ (and the auxiliary at-least-one-collision estimator $\bar U_\Lambda$). Because we constructed $\bar U_\Omega$ as an average over all $\pi\in\Pi$, we cannot directly apply the classic decoupling techniques in \citet{article:ginezinn1984,article:arconesgine1993, article:clemencon2008} as done in the previous work. In \citet{article:hieu2025}, each class-wise estimator is treated as a second-order U-statistic with a random sample size. While this randomness introduces some technical overhead, conditioning on the class counts immediately permits the use of standard decoupling arguments. After conditioning, the concentration of the empirical class probabilities can be handled separately and does not interact with the complexity of the function class. Concretely, upon conditioning on $\set{N_r}_{r\in[R]}$, each class-wise U-Statistic $U_{\Theta_r}$ concentrates to the correct class-wise risk $\mathrm{L}^{r}_\phi(f)$, thereby enabling symmetrization and subsequent U-statistics decoupling. In contrast, in our setting, conditioning on the sample sizes alone does not guarantee that $\bar U_\Omega$ concentrates around its target risk $\Lun$. As a result, our analysis requires a more delicate argument based on a sequence of auxiliary risks to bridge this gap.

    \item Finally, since we combine $\bar U_\Omega$ and $\bar U_\Lambda$ via a de-biasing formula that involves the empirical collision probability $\widehat\tau$, we had to analyze the concentration (both absolute and multiplicative) of this quantity around the population counterpart $\tau$ (cf. Propositions \ref{prop:concentration_of_collision_probability_estimator}, \ref{prop:first_multiplicative_control_of_tauhat}, \ref{prop:second_multiplicative_control_of_tauhat}, \ref{prop:third_multiplicative_control_of_tauhat}, \ref{prop:final_multiplicative_control_of_tauhat}).
\end{enumerate}

\newpage
\section{Bridging Natural Estimators and Auxiliary Estimators}
\label{appendix:proof_of_prop1}

Recall from the main text that for a representation function $f\in\mathcal{F}$ we proposed the U-Statistic formulation $U_N(f)$ (Eqn.~\eqref{eq:natural_estimator}), which is a combination of two estimators $U_\Omega(f)$ and $U_\Lambda(f)$ that estimate $\Lnorm_\Omega(f)$ and $\Lnorm_\Lambda(f)$, respectively.

Unfortunately, both $U_\Omega(f)$ and $U_\Lambda(f)$ are biased estimators. Therefore, we proposed $\bar U_\Omega(f)$ and $\bar U_\Lambda(f)$ as two auxiliary estimators and we claim that:
\begin{enumerate}
    \item Both $\bar U_\Omega(f)$ and $\bar U_\Lambda(f)$ are asymptotically unbiased estimators of $\Lnorm_\Omega(f)$ and $\Lnorm_\Lambda(f)$.

    \item They are different from $U_\Omega(f)$ and $U_\Lambda(f)$ by negligible factors.
\end{enumerate}

Prior to presenting the proof for Proposition \ref{prop:diff_between_natural_and_nonatural}, we first start by proving the asymptotic unbiasedness of $\bar U_\Omega(f)$ and $\bar U_\Lambda(f)$. Before that, for completeness of definitions, we define the class-wise risks $\Lnorm^r_\Omega$ and $\Lnorm^r_\Lambda$ as follows:
\begin{align}
	\Lnorm_\Omega^r(f) &= \underset{\substack{X,X^+\sim\mathcal{D}_r^{\otimes 2}\\\set{X_i^-}_{i=1}^k\sim\mathcal{\bar D}^{\otimes k}}}{\E}\left[\ell_{\phi, f}\left(X, X^+, \set{X_i^-}_{i=1}^k\right)\right], \label{eq:lomega_classwise} \\
	\Lnorm_\Lambda^r(f) &= \underset{\substack{X,X^+,\bar X\sim\mathcal{D}_r^{\otimes 3}\\\set{X_i^-}_{i=1}^{k-1}\sim\mathcal{\bar D}^{\otimes k-1}}}{\E}\left[\ell_{\phi, f}\left(X, X^+, \bar X,  \set{X_i^-}_{i=1}^{k-1}\right)\right] \label{eq:llambda_classwise}.
\end{align}

\begin{proposition}
    \label{prop:unbiasedness_aux_estimators}
    For $f\in\mathcal{F}$, let $\bar U_\Omega(f)$ be defined as in Eqn.~\eqref{eq:bar_Uomega_Ulambda}. Then, we have:
    \begin{align}
        \roundbrac{1-\frac{2}{n\rho_{\min}}}\Lnorm_\Omega(f)\le\E[\bar U_\Omega(f)]\le\left(1+\frac{R}{n-R}\right) \Lnorm_\Omega(f).
    \end{align}
    \noindent Where $\rho_{\min}=\min_{r\in[R]}\rho_r$, i.e., the probability of the rarest class.
\end{proposition}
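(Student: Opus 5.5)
The plan is to reduce $\E[\bar U_\Omega(f)]$ to a mixture $\sum_{r=1}^R\E[\omega_r]\,\Lnorm^r_\Omega(f)$ and then bound each expected weight $\E[\omega_r]$ multiplicatively against $\rho_r$. The lower bound factor is then $(1-\frac{2}{n\rho_{\min}})$ and the upper bound factor is $(1+\frac{R}{n-R})$.

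\textbf{Step 1: reduce to the identity permutation.} Since $S\sim\mathcal{\bar D}^{\otimes N}$ is i.i.d., the permuted sample $S_\pi$ has the same joint law as $S$ for every $\pi\in\Pi$. Hence the random pair $(\omega_r^\pi, U_{\Omega_r^\pi}(f))$ has the same law as $(\omega_r,U_{\Omega_r}^{\mathrm{id}}(f))$. By linearity,
\begin{align*}
\E[\bar U_\Omega(f)]=\sum_{r=1}^R\E\big[\omega_r\, U_{\Omega_r^{\mathrm{id}}}(f)\big].
\end{align*}

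\textbf{Step 2: condition on the labels of the first block.} Condition on the labels of the first $n$ points, which fixes the counts $\{n_q\}_{q=1}^R$ and hence all the weights $\omega_r$. Given these labels:
\begin{itemize}
\item the class-$r$ points in $S^{(n)}$ are i.i.d.\ from $\mathcal{D}_r$;
\item the points of $S\setminus S^{(n)}$ are independent of the first block and still i.i.d.\ from the full mixture $\mathcal{\bar D}$, since their labels are unconstrained.
\end{itemize}
Every tuple in $\Omega_r^{\mathrm{id}}$ uses distinct points, so each tuple has expectation exactly $\Lnorm^r_\Omega(f)$. Averaging over the tuples gives $\E[U_{\Omega_r^{\mathrm{id}}}(f)\mid\{n_q\}]=\Lnorm^r_\Omega(f)$ whenever $n_r\ge2$. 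When $n_r<2$ we have $\omega_r=0$, so that term vanishes and the identity holds trivially. Therefore
\begin{align*}
\E[\bar U_\Omega(f)]=\sum_{r=1}^R\E[\omega_r]\,\Lnorm^r_\Omega(f), \qquad \Lnorm_\Omega(f)=\sum_{r=1}^R\rho_r\,\Lnorm^r_\Omega(f),
\end{align*}
with all $\Lnorm^r_\Omega(f)\ge 0$ because $\ell_{\phi,f}\ge0$. It thus suffices to show
\begin{align*}
\Big(1-\tfrac{2}{n\rho_{\min}}\Big)\rho_r\;\le\;\E[\omega_r]\;\le\;\Big(1+\tfrac{R}{n-R}\Big)\rho_r \qquad\text{for every } r\in[R].
\end{align*}

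\textbf{Step 3: bound the weights.} Write $D=\sum_q\lfloor n_q/2\rfloor$ for the denominator of $\omega_r$. Using $(x-1)/2\le\lfloor x/2\rfloor\le x/2$ for nonnegative integers $x$, together with $\sum_q n_q=n$, we get $(n-R)/2\le D\le n/2$. We assume $n>R$; otherwise the upper bound is vacuous, and $n>R$ also guarantees $D>0$.
\begin{itemize}
\item \emph{Upper bound.} We have $\omega_r\le \frac{n_r/2}{(n-R)/2}=\frac{n_r}{n-R}$. Since $n_r\sim\mathrm{Bin}(n,\rho_r)$, taking expectations gives $\E[\omega_r]\le \frac{n\rho_r}{n-R}=\rho_r\big(1+\frac{R}{n-R}\big)$.
\item \emph{Lower bound.} We have $\omega_r\ge \frac{(n_r-1)/2}{n/2}=\frac{n_r-1}{n}$. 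This also holds when $n_r=0$, because then $\omega_r=0$ and the right-hand side is negative. Taking expectations, $\E[\omega_r]\ge \rho_r-\frac1n=\rho_r\big(1-\frac{1}{n\rho_r}\big)\ge\rho_r\big(1-\frac{2}{n\rho_{\min}}\big)$.
\end{itemize}

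\textbf{Step 4: combine.} Multiply these weight bounds by $\Lnorm^r_\Omega(f)\ge0$ and sum over $r$ to obtain both inequalities of the proposition.

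The only delicate point is Step 2: the negatives drawn from $S\setminus S^{(n)}$ must have law $\mathcal{\bar D}$ even after conditioning on the labels in $S^{(n)}$. This holds precisely because the split is at a fixed index after permutation, rather than data-dependent as in $\Omega_r$. This is exactly the within-tuple dependence that the construction of $\bar U_\Omega$ is designed to avoid.
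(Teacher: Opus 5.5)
Your proof is correct and follows the same route as the paper. You reduce to a single permutation by exchangeability, factor $\E[\omega_r U_{\Omega_r^{\mathrm{id}}}(f)]=\E[\omega_r]\,\Lnorm^r_\Omega(f)$ by conditioning, and bound $\E[\omega_r]$ using $(n-R)/2\le\sum_q\lfloor n_q/2\rfloor\le n/2$ together with $n_r\sim\mathrm{Bin}(n,\rho_r)$; conditioning on the full label vector of the first block is the precise form of the paper's looser ``conditional independence given $n_r^\theta$'' step, and your $(n_r-1)/2$ floor bound gives the slightly sharper $\rho_r-1/n$ before relaxing. One nit: for $n\le R$ the factor $1+\frac{R}{n-R}$ is undefined or nonpositive rather than vacuous, so $n>R$ is a genuine standing assumption (implicit in the paper as well).
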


\begin{proof}
    Firstly, we decompose the risk $\Lnorm_\Omega(f)$ to a weighted sum of class-wise risk:
    \begin{align}
        \label{eq:decomposed_Lomega}
        \Lnorm_\Omega(f) &= \sum_{r=1}^R\rho_r\Lnorm_\Omega^r(f).
    \end{align}
    \noindent We observe that when we shuffle the dataset $S$ according to a permutation $\pi\in\Pi$ then select the first $n$ elements, this process is equivalent to selecting $n$ data points from $S$ without replacement. Therefore, for any $\pi_1, \pi_2\in\Pi$ and $\pi_1\ne\pi_2$, $S_{\pi_1}^{(n)} \overset{d}{=} S_{\pi_2}^{(n)}$ and $S\setminus S_{\pi}^{(n)}\overset{d}{=} S\setminus S_\pi^{(n)}$ \footnote{Where $X\overset{d}{=}Y$ means ``$X$ is identically distributed as $Y$".}. Hence, for any distinct permutations $\pi,\bar\pi\in\Pi$, we have:
    \begin{align*}
        \E\squarebrac{\sum_{r=1}^R\omega_r^{\pi}U_{\Omega_r^{\pi}}(f)}=\E\squarebrac{\sum_{r=1}^R\omega_r^{\bar\pi}U_{\Omega_r^{\bar\pi}}(f)}.
    \end{align*}
    \noindent Now, fixing an arbitrary permutation $\theta\in\Pi$, we have:
    \begin{align*}
        \E[\bar U_\Omega(f)] &= \E\squarebrac{\frac{1}{N!}\sum_{\pi\in\Pi}\sum_{r=1}^R\omega_r^\pi U_{\Omega_r^\pi}(f)} = \frac{1}{N!}\sum_{\pi\in\Pi}\E\squarebrac{\sum_{r=1}^R\omega_r^\pi U_{\Omega_r^\pi}(f)} \\
        &= \E\squarebrac{\sum_{r=1}^R\omega_r^{\theta}U_{\Omega_r^{\theta}}(f)}=\sum_{r=1}^R\E\squarebrac{\omega_r^{\theta}U_{\Omega_r^{\theta}}(f)}.
    \end{align*}
    \noindent For each $r\in[R]$, we have:
    \begin{align*}
        \E\squarebrac{\omega_r^\theta U_{\Omega_r^\theta}(f)}&= \underbrace{\E[\omega_r^\theta U_{\Omega_r^\theta}(f)|n_r^\theta\le 1]}_{=0\text{ since }\Omega_r^\theta=\emptyset}\P(n_r^\theta\le 1) + \E[\omega_r^\theta U_{\Omega_r^\theta}(f)|n_r^\theta\ge 2]\P(n_r^\theta\ge 2)\\
        &= \E[\omega_r^\theta U_{\Omega_r^\theta}(f)|n_r^\theta\ge 2]\P(n_r^\theta\ge 2).
    \end{align*}
    \noindent Since $\omega_r^\theta$ and $U_{\Omega_r^\theta}(f)$ are conditionally independent given $n_r^\theta$, we can write:
    \begin{align*}
        \E\squarebrac{\omega_r^\theta U_{\Omega_r^\theta}(f)|n_r^\theta\ge2} &= \E\squarebrac{\omega_r^\theta|n_r^\theta\ge2}\E\squarebrac{U_{\Omega_r^\theta}(f)|n_r^\theta\ge2} \\
        &= \E\squarebrac{\omega_r^\theta|n_r^\theta\ge2}\Lnorm_\Omega^r(f).
    \end{align*}
    \noindent Furthermore, we have:
    \begin{align*}
        \E[\omega_r^\theta] &= \E\squarebrac{\omega_r^\theta|n_r^\theta\ge2}\P(n_r^\theta\ge2) + \underbrace{\E\squarebrac{\omega_r^\theta|n_r^\theta\le 1}}_{=0}\P(n_r^\theta\le 1) \\
        &= \E\squarebrac{\omega_r^\theta|n_r^\theta\ge2}\P(n_r^\theta\ge2).
    \end{align*}
    \noindent Therefore:
    \begin{align*}
        \E\squarebrac{\omega_r^\theta U_{\Omega_r^\theta}(f)} &= \E[\omega_r^\theta U_{\Omega_r^\theta}(f)|n_r^\theta\ge 2]\P(n_r^\theta\ge 2) \\
        &= \P(n_r^\theta\ge 2)\E\squarebrac{\omega_r^\theta|n_r^\theta\ge2}\Lnorm_\Omega^r(f) \\
        &= \P(n_r^\theta\ge 2)\cdot\frac{\E[\omega_r^\theta]}{\P(n_r^\theta\ge2)}\Lnorm_\Omega^r(f) \\
        &= \E[\omega_r^\theta]\Lnorm_\Omega^r(f).
    \end{align*}
    \noindent Using the following facts:
    \begin{alignat*}{3}
        \frac{n_r^\theta-1}{2}&\le \lfloor n_r^\theta/2\rfloor &&\le \frac{n_r^\theta}{2} \\
        \frac{n-R}{2} = \sum_{q=1}^R\squarebrac{\frac{n_r}{2}-\frac{1}{2}}&\le \sum_{q=1}^R\lfloor n_q^\theta/2\rfloor &&\le \sum_{q=1}^R\frac{n_r^\theta}{2}=\frac{n}{2} .
    \end{alignat*}
    \noindent We have:
    \begin{align*}
        \E[\omega_r^\theta] &= \E\squarebrac{\frac{\lfloor n_r^\theta/2\rfloor}{\sum_{q=1}^R\lfloor n_q^\theta/2\rfloor}} \le \E\squarebrac{\frac{n_r^\theta/2}{n/2-R/2}} = \E\squarebrac{\frac{n_r^\theta}{n-R}} \\
        &= \E\squarebrac{\frac{\frac{n_r^\theta}{n}(n-R)+\frac{Rn_r^\theta}{n}}{n-R}} \\
        &= \E\squarebrac{\frac{n_r^\theta}{n}\roundbrac{1+\frac{R}{n-R}}} \\
        &= \rho_r\roundbrac{1+\frac{R}{n-R}} \qquad (n_r^\theta\sim\mathrm{Binom}(n, \rho_r)).
    \end{align*}
    \noindent And:
    \begin{align*}
        \E[\omega_r^\theta] = \E\squarebrac{\frac{\lfloor n_r^\theta/2\rfloor}{\sum_{q=1}^R\lfloor n_q^\theta/2\rfloor}} &\ge \E\squarebrac{\frac{(n_r^\theta-2)/2}{n/2}} = \rho_r - \frac{2}{n} \\
        &\ge \rho_r\roundbrac{1-\frac{2}{n\rho_{\min}}}.
    \end{align*}
    \noindent Combining the upper and lower bounds of $\E[\omega_r^\theta]$, we have:
    \begin{align*}
        \rho_r\roundbrac{1-\frac{2}{n\rho_{\min}}}\Lnorm_\Omega^r(f)\le \E\squarebrac{\omega_r^\theta U_{\Omega_r^\theta}(f)} \le \rho_r\roundbrac{1+\frac{R}{n-R}}\Lnorm_\Omega^r(f).
    \end{align*}
    \noindent Finally, we have:
    \begin{align*}
        \roundbrac{1-\frac{2}{n\rho_{\min}}}\sum_{r=1}^R\rho_r\Lnorm_\Omega^r(f) &\le \sum_{r=1}^R\E\squarebrac{\omega_r^\theta U_{\Omega_r^\theta}(f)} \le \roundbrac{1+\frac{R}{n-R}}\sum_{r=1}^R\rho_r\Lnorm_\Omega^r(f).
   \end{align*} 
    \noindent In other words, we have:
    \begin{align*}
        \roundbrac{1-\frac{2}{n\rho_{\min}}}\Lnorm_\Omega(f) &\le \E[\bar U_\Omega(f)] \le \roundbrac{1+\frac{R}{n-R}}\Lnorm_\Omega(f),
    \end{align*}
    \noindent as desired. As $N\to\infty$, we have $n\to\infty$. Hence, $\lim_{N\to\infty}\E[\bar U_\Omega(f)]=\lim_{n\to\infty}\E[\bar U_\Omega(f)]$ and:
    \begin{align*}
        \Lnorm_\Omega(f)=\lim_{n\to\infty}\roundbrac{1-\frac{2}{n\rho_{\min}}}\Lnorm_\Omega(f) \le \lim_{n\to\infty}\E[\bar U_\Omega(f)] \le \lim_{n\to\infty}\roundbrac{1+\frac{R}{n-R}}\Lnorm_\Omega(f)=\Lnorm_\Omega(f).
    \end{align*}
    \noindent Therefore, $\lim_{n\to\infty}\E[\bar U_\Omega(f)]=\Lnorm_\Omega(f)$, making $\bar U_\Omega(f)$ asymptotically unbiased.
\end{proof}

\begin{lemma}
    \label{lem:supporting_lemma}
    For $f\in\mathcal{F}$, let $U_\Omega(f)$ be defined as in Eqn.~\eqref{eq:natural_estimator} and $\bar U_\Omega(f)$ be defined as in Eqn.~\eqref{eq:bar_Uomega_Ulambda}. Suppose that $|\ell_{\phi, f}|\le \mathcal{B}$ for all $f\in\mathcal{F}$. Then, we have:
    \begin{align}
        \sup_{f\in\mathcal{F}}\left|U_\Omega(f)-\bar U_\Omega(f)\right|\le \frac{2\mathcal{B}R}{n-R} + \sum_{r=1}^R\widehat\rho_r\sup_{f\in\mathcal{F}}\left|U_{\Omega_r}(f)-\widehat{\E}_\Pi\left[U_{\Omega_r^\pi}(f)\right]\right|,
    \end{align}
    \noindent where for all $r\in[R]$, we define $\widehat\rho_r:=\frac{N_r}{N}$ and $\widehat{\E}_\Pi[\cdot]$ denotes the average over permutations $\pi\in\Pi$.
\end{lemma}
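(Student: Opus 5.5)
We split the difference with a triangle inequality: first the class-wise tuple averages are compared, then the class weights. Write $U_\Omega(f)=\sum_r \widehat\rho_r U_{\Omega_r}(f)$ and $\bar U_\Omega(f)=\widehat\E_\Pi\big[\sum_r \omega_r^\pi U_{\Omega_r^\pi}(f)\big]$, and insert the intermediate quantity $\sum_r \widehat\rho_r\,\widehat\E_\Pi[U_{\Omega_r^\pi}(f)]$. This gives
\begin{align*}
U_\Omega(f)-\bar U_\Omega(f) = \sum_{r=1}^R \widehat\rho_r\Big(U_{\Omega_r}(f)-\widehat\E_\Pi\big[U_{\Omega_r^\pi}(f)\big]\Big) + \widehat\E_\Pi\Big[\sum_{r=1}^R (\widehat\rho_r-\omega_r^\pi)\,U_{\Omega_r^\pi}(f)\Big].
\end{align*}
For the first sum, we take absolute values and pass the supremum over $f$ inside each summand. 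This yields exactly the second term of the claimed bound.

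For the second sum, we bound $|U_{\Omega_r^\pi}(f)|\le\mathcal{B}$ uniformly in $f$. Here we use the convention that $U_{\Omega_r^\pi}=0$ whenever $\Omega_r^\pi=\emptyset$, which is harmless because then $n_r^\pi\le 1$ and $\omega_r^\pi=0$. The supremum of the second term is therefore at most $\mathcal{B}\,\widehat\E_\Pi\big[\sum_r|\widehat\rho_r-\omega_r^\pi|\big]$, and it remains to show that $\sum_r|\omega_r^\pi-\widehat\rho_r|\le 2R/(n-R)$ holds for every fixed $\pi$. Averaging this over $\pi$ then gives the first term.

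This deterministic comparison is the main obstacle. The natural comparison is with $n_r^\pi/n$ rather than $\widehat\rho_r=N_r/N$. Since $\sum_r\omega_r^\pi=\sum_r n_r^\pi/n=1$, the floors give the two-sided bound
\begin{align*}
\frac{n_r^\pi-1}{n}\le \omega_r^\pi\le \frac{n_r^\pi}{n-R}.
\end{align*}
The upper side uses $\sum_q\lfloor n_q^\pi/2\rfloor\ge (n-R)/2$, as in the proof of Proposition \ref{prop:unbiasedness_aux_estimators}. Summing positive and negative parts separately then yields $\sum_r|\omega_r^\pi-n_r^\pi/n|\le 2R/(n-R)$.

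For a single $\pi$, however, $n_r^\pi/n$ need not equal $\widehat\rho_r$. We would handle this by noting that the quantity that actually enters is the $\pi$-average, and that $\widehat\E_\Pi[n_r^\pi/n]=N_r/N=\widehat\rho_r$ exactly, since $n_r^\pi$ is hypergeometric under a uniform permutation. The step to pin down is therefore the swap of the order of averaging. The class-wise average $\widehat\E_\Pi[U_{\Omega_r^\pi}]$ must be compared with the weighted version $\widehat\E_\Pi[\omega_r^\pi U_{\Omega_r^\pi}]/\widehat\E_\Pi[\omega_r^\pi]$. We expect this to hold because $U_{\Omega_r^\pi}$ depends on $\pi$ only through which class-$r$ points fall in the first block, and its conditional average given $n_r^\pi$ does not depend on $n_r^\pi$ once $n_r^\pi\ge 2$. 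By symmetry, the conditional average over permutations of the tuple average equals the full average over all valid $(X,X^+)$ pairs in $S_r$ with negatives drawn from the rest.

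Granting this exchange, the second term reduces to $\mathcal{B}\sum_r|\widehat\E_\Pi[\omega_r^\pi]-\widehat\rho_r|\le \mathcal{B}\,\widehat\E_\Pi\big[\sum_r|\omega_r^\pi-n_r^\pi/n|\big]\le 2\mathcal{B}R/(n-R)$. Should the exchange fail exactly, the fallback is the cruder per-$\pi$ bound, with the hypergeometric deviation absorbed into the second term.
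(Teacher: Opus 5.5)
\textbf{There is a genuine gap: the exchange step is not justified, and the reason you give for it is false.} The rest of your argument is correct and follows the paper's own decomposition. The identity splitting $U_\Omega(f)-\bar U_\Omega(f)$ into $\sum_r\widehat\rho_r\big(U_{\Omega_r}(f)-\widehat\E_\Pi[U_{\Omega_r^\pi}(f)]\big)$ plus the weight term $\widehat\E_\Pi\big[\sum_r(\widehat\rho_r-\omega_r^\pi)U_{\Omega_r^\pi}(f)\big]$ holds, and so does the deterministic estimate $\sum_r|\omega_r^\pi-n_r^\pi/n|\le 2R/(n-R)$.

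Why your justification fails:
\begin{itemize}
\item Conditionally on the class counts of the first block, the second block contains $N_r-n_r^\pi$ class-$r$ points out of $N-n$. So the number of collisions of a uniform tuple of $\Omega_r^\pi$ is $\mathrm{Hypergeometric}(N-n,N_r-n_r^\pi,k)$.
\item Hence the conditional mean of $U_{\Omega_r^\pi}(f)$ moves with $n_r^\pi$. It also moves with the other counts, on which $\omega_r^\pi$ depends too.
\item That conditional mean is also not $U_{\Omega_r}(f)$, whose collision law is $\mathrm{Hypergeometric}(N-2,N_r-2,k)$.
\item Consequently, with $\omega_r^*:=\widehat\E_\Pi[\omega_r^\pi]$, in general $\widehat\E_\Pi[\omega_r^\pi U_{\Omega_r^\pi}(f)]\ne\omega_r^*\,\widehat\E_\Pi[U_{\Omega_r^\pi}(f)]$. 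The weight term carries an extra $-\sum_r\mathrm{Cov}_\Pi(\omega_r^\pi,U_{\Omega_r^\pi}(f))$.
\end{itemize}

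A concrete example: take $R=2$, $N_1=N_2=6$, $k=1$, so $n=8$ and $n_r^\pi\ge 2$ always. Use a loss that is constant on class-$2$ tuples, and on class-$1$ tuples equals $a$ if the negative collides and $a+d$ otherwise. Class-swap symmetry gives $\omega_r^*=\widehat\rho_r=1/2$, so the exchange would force the weight term to vanish. In fact the weight term equals $d\,\mathrm{Cov}_\Pi(\omega_1^\pi,c^\pi)$ with $c^\pi=(N_1-n_1^\pi)/(N-n)$, and a direct computation gives absolute value $|d|/36$.

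Your fallback does not recover the stated bound either. The quantity $\widehat\E_\Pi\sum_r|n_r^\pi/n-\widehat\rho_r|$ is typically of order $\sqrt{R/n}$ (e.g.\ for balanced classes), not $R/n$. It also cannot be absorbed into $\sum_r\widehat\rho_r\sup_f|U_{\Omega_r}(f)-\widehat\E_\Pi[U_{\Omega_r^\pi}(f)]|$, which is a fixed quantity.

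The paper's proof does not fill this hole. The first inequality in its bound on the weight term replaces $\sup_f\big|\widehat\E_\Pi[(\omega_r^\pi-\widehat\rho_r)U_{\Omega_r^\pi}(f)]\big|$ by $\mathcal{B}\big|\widehat\E_\Pi[\omega_r^\pi-\widehat\rho_r]\big|$. That is exactly the exchange you isolated, and it fails on the same example. The lemma's conclusion survives there only because $2\mathcal{B}R/(n-R)$ has slack.

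A rigorous version of your argument uses the weighted average you already wrote down as the intermediate quantity. Set $\tilde u_r(f):=\widehat\E_\Pi[\omega_r^\pi U_{\Omega_r^\pi}(f)]/\omega_r^*$, and $\tilde u_r:=0$ when $\omega_r^*=0$. Then:
\begin{itemize}
\item $\bar U_\Omega(f)=\sum_r\omega_r^*\tilde u_r(f)$ exactly, and $|\tilde u_r(f)|\le\mathcal{B}$.
\item $\sum_r|\widehat\rho_r-\omega_r^*|\le\widehat\E_\Pi\sum_r|n_r^\pi/n-\omega_r^\pi|\le 2R/(n-R)$, because $\widehat\E_\Pi[n_r^\pi/n]=\widehat\rho_r$.
\end{itemize}
This proves the lemma with $\tilde u_r$ in place of $\widehat\E_\Pi[U_{\Omega_r^\pi}]$. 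Keeping the unweighted average instead requires a separate bound on $\sum_r|\mathrm{Cov}_\Pi(\omega_r^\pi,U_{\Omega_r^\pi}(f))|$, uniform in $f$. Either way, the subsequent comparison with $U_{\Omega_r}$ via $P_r$ and $Q_r$ has to be adapted.
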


\begin{proof}
We have:
\begin{align*}
    \sup_{f\in\mathcal{F}}\Big|U_\Omega(f)-\bar U_\Omega(f)\Big| 
    &= \sup_{f\in\mathcal{F}}\Bigg|\sum_{r=1}^R\widehat\rho_r U_{\Omega_r}(f) - \widehat\E_\Pi\left[\sum_{r=1}^R\omega_r^\pi U_{\Omega_r^\pi}(f)\right] \Bigg| \\
    &= \sup_{f\in\mathcal{F}}\Bigg|\sum_{r=1}^R\widehat\rho_r U_{\Omega_r}(f)-\frac{1}{N!}\sum_{\pi\in\Pi}\sum_{r=1}^R \omega_r^\pi U_{\Omega_r^\pi}(f)\Bigg| \\
    &= \sup_{f\in\mathcal{F}}\Bigg|\sum_{r=1}^R\widehat\rho_rU_{\Omega_r}(f)-\sum_{r=1}^R\frac{1}{N!}\sum_{\pi\in\Pi}\omega_r^\pi U_{\Omega_r^\pi}(f)\Bigg|.
\end{align*}
\noindent Using triangle inequality, we have:
\begin{align*}
    &\sup_{f\in\mathcal{F}}\Big|U_\Omega(f)-\bar U_\Omega(f)\Big| \\
    &\le \sum_{r=1}^R\sup_{f\in\mathcal{F}}\Bigg|\widehat\rho_rU_{\Omega_r}(f)-\frac{1}{N!}\sum_{\pi\in\Pi}\omega_r^\pi U_{\Omega_r^\pi}(f)\Bigg| \\
    &= \sum_{r=1}^R\sup_{f\in\mathcal{F}}\Bigg|\widehat\rho_r U_{\Omega_r}(f) - \frac{1}{N!}\sum_{\pi\in\Pi}\widehat\rho_r U_{\Omega_r^\pi}(f) + \frac{1}{N!}\sum_{\pi\in\Pi}\left[\omega_r^\pi - \widehat\rho_r\right]U_{\Omega_r^\pi}(f)\Bigg| \\
    &\le \sum_{r=1}^R\widehat\rho_r\sup_{f\in\mathcal{F}}\Bigg|U_{\Omega_r}(f) - \frac{1}{N!}\sum_{\pi\in\Pi}U_{\Omega_r^\pi}(f)\Bigg| + \sum_{r=1}^R\sup_{f\in\mathcal{F}}\Bigg|\frac{1}{N!}\sum_{\pi\in\Pi}\left[\omega_r^\pi - \widehat\rho_r\right]U_{\Omega_r^\pi}(f)\Bigg| \\
    &= \sum_{r=1}^R\widehat\rho_r\sup_{f\in\mathcal{F}}\left|U_{\Omega_r}(f)-\widehat\E_\Pi\left[U_{\Omega_r^\pi}(f)\right]\right|+\sum_{r=1}^R\sup_{f\in\mathcal{F}}\Bigg|\frac{1}{N!}\sum_{\pi\in\Pi}\left[\omega_r^\pi - \widehat\rho_r\right]U_{\Omega_r^\pi}(f)\Bigg|.
\end{align*}
\noindent Now, for each permutation $\pi\in\Pi$, we define $\widehat\rho_r^\pi:=\frac{n_r^\pi}{n}$ where $n_r^\pi := |S_r\cap S_\pi^{(n)}|$, i.e., the number of data points of class $r$ from $S_\pi^{(n)}$. When we select a permutation $\pi\in\Pi$ uniformly, we can think of $n_r^\pi$ as a hypergeometric random variable ($n$ draws from a population of size $N$ with $N_r$ special items). Therefore, we can write:
\begin{align*}
    \widehat\rho_r:=\frac{N_r}{N} = \frac{1}{n}\E[n_r^\pi] = \frac{1}{n}\widehat\E_\Pi[n_r^\pi]= \frac{1}{N!}\sum_{\pi\in\Pi} \frac{n_r^\pi}{n}.
\end{align*}
\noindent Hence, we have:
\begin{align*}
    \sum_{r=1}^R\sup_{f\in\mathcal{F}}\Bigg|\frac{1}{N!}\sum_{\pi\in\Pi}\left[\omega_r^\pi - \widehat\rho_r\right]U_{\Omega_r^\pi}(f)\Bigg| &\le \mathcal{B}\sum_{r=1}^R\Bigg|\frac{1}{N!}\sum_{\pi\in\Pi} \left[\omega_r^\pi - \widehat\rho_r\right]\Bigg| = \mathcal{B}\sum_{r=1}^R \Bigg|\widehat\rho_r-\frac{1}{N!}\sum_{\pi\in\Pi}\omega_r^\pi\Bigg| \\
    &= \mathcal{B}\sum_{r=1}^R \Bigg|\frac{1}{N!}\sum_{\pi\in\Pi}\widehat\rho_r^\pi-\frac{1}{N!}\sum_{\pi\in\Pi}\omega_r^\pi\Bigg| \\ 
    &\le \mathcal{B}\sum_{r=1}^R\frac{1}{N!}\sum_{\pi\in\Pi}\left|\widehat\rho_r^\pi-\omega_r^\pi\right|.
\end{align*}
\noindent For each $r\in[R]$ and $\pi\in\Pi$, we have:
\begin{align*}
    \absbrac{\widehat\rho_r^\pi-\omega_r^\pi} &= \absbrac{\frac{n_r^\pi}{n} - \frac{\lfloor n_r^\pi/2\rfloor}{\sum_{q=1}^R\lfloor n_q^\pi/2\rfloor}} \\
    &\le \absbrac{\frac{n_r^\pi}{n} - \frac{2\lfloor n_r^\pi/2\rfloor}{n}} + \absbrac{\frac{2\lfloor n_r^\pi/2\rfloor}{n}-\frac{\lfloor n_r^\pi/2\rfloor}{\sum_{q=1}^R\lfloor n_q^\pi/2\rfloor}} \\
    &= \frac{1}{n}(n_r^\pi-2\lfloor n_r^\pi/2\rfloor) + \lfloor n_r^\pi/2\rfloor\roundbrac{\frac{1}{\sum_{q=1}^R\lfloor n_q/2\rfloor}-\frac{2}{n}}.
\end{align*}
\noindent Now, use the fact that $n_r^\pi-2\lfloor n_r^\pi/2\rfloor\le 1$ and $\sum_{q=1}^R\lfloor n_r^\pi/2\rfloor\ge\frac{n-R}{2}$ (cf. Proposition \ref{prop:unbiasedness_aux_estimators}), we have:
\begin{align*}
    \absbrac{\widehat\rho_r^\pi-\omega_r^\pi} &\le \frac{1}{n} + 2\lfloor n_r^\pi/2\rfloor \roundbrac{\frac{1}{n-R}-\frac{1}{n}} \le \frac{1}{n}+\frac{2R\lfloor n_r^\pi/2\rfloor}{n(n-R)}.
\end{align*}
\noindent Therefore:
\begin{align*}
    \sum_{r=1}^R|\widehat\rho_r^\pi-\omega_r^\pi| &\le \frac{R}{n}+\frac{2R}{n(n-R)}\underbrace{\sum_{r=1}^R\lfloor n_r^\pi/2\rfloor}_{\le n/2} \le \frac{R}{n}+\frac{R}{n-R}\le\frac{2R}{n-R}.
\end{align*}
\noindent Finally, we have:
\begin{align*}
    \sup_{f\in\mathcal{F}}\absbrac{U_\Omega(f)-\bar U_\Omega(f)} &\le \mathcal{B}\frac{1}{N!}\sum_{\pi\in\Pi}\underbrace{\sum_{r=1}^R\left|\widehat\rho_r^\pi-\omega_r^\pi\right|}_{\le 2R/(n-R)} + \sum_{r=1}^R\widehat\rho_r\sup_{f\in\mathcal{F}}\left|U_{\Omega_r}(f)-\widehat{\E}_\Pi\left[U_{\Omega_r^\pi}(f)\right]\right| \\
    &\le \frac{2\mathcal{B}R}{n-R} + \sum_{r=1}^R\widehat\rho_r\sup_{f\in\mathcal{F}}\left|U_{\Omega_r}(f)-\widehat{\E}_\Pi\left[U_{\Omega_r^\pi}(f)\right]\right|,
\end{align*}
\noindent as desired.
\end{proof}

Before continuing on to the proof of Proposition \ref{prop:diff_between_natural_and_nonatural}, we introduce the formal definition of class collision below.
\begin{definition}[Class-collision]
    Let $T_k$ be the map that returns the negative samples of a given tuple (i.e., the last $k$ elements in the tuple). Specifically:
    \begin{align}
        T_k: \mathcal{X}^{k+2}&\to\mathcal{X}^k, \\
        \left(X, X^+, \set{X_i^-}_{i=1}^k\right) &\mapsto (X_1^-,\dots,X_k^-).
    \end{align}
    \noindent Let the map $\sharp^r_\mathrm{col}:\mathcal{X}^{k+2}\to\{0,1,\dots, k\}$ be defined as follows:
    \begin{align}
        \forall r\in[R]:\quad \sharp_\mathrm{col}^r(t) := |S_r \cap T_k(t)|.
    \end{align}
    \noindent Then, for each $t=\left(X, X^+, \set{X_i^-}_{i=1}^k\right)\in\mathcal{X}^{k+2}$ such that $X, X^+\sim\mathcal{D}_r^{\otimes2}$,  $\sharp_\mathrm{col}^r(t)$ denotes the number of negative samples in $t$ that belong to class $r$, i.e., number of \textbf{class-collisions}. 
\end{definition}

\begin{lemma}
    \label{lem:supporting_lemma_2}
    For a fixed $r\in[R]$, let $P_r, Q_r$ be probability mass functions over $\Omega_r$ such that:
    \begin{align}
        \label{eq:pmf_two_procedures}
        \forall t\in\Omega_r: \quad P_r(t) := \frac{1}{|\Omega_r|}, \quad Q_r(t) := \frac{1}{N!}\sum_{\pi\in\Pi}\mathds{1}_{\Omega_r^\pi}(t)|\Omega_r^\pi|^{-1}.
    \end{align}
    \noindent Then, for any $\barkappa\in\{0,1,\dots, k\}$, we have:
    \begin{align}
        \E_{t\sim P_r}\squarebrac{\ell_{\phi, f}(t)\Big|\sharp_\mathrm{col}^r(t)=\barkappa} = \E_{t\sim Q_r}\squarebrac{\ell_{\phi, f}(t)\Big|\sharp_\mathrm{col}^r(t)=\barkappa}.
    \end{align}
\end{lemma}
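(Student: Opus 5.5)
The plan is to show that, conditioned on the number of collisions, both $P_r$ and $Q_r$ are uniform on the same set. Fix $r$ and $\barkappa$, and write $\Omega_r(\barkappa):=\set{t\in\Omega_r:\sharp^r_\mathrm{col}(t)=\barkappa}$. Under $P_r$ this is immediate, since $P_r$ is uniform on $\Omega_r$. So the whole claim reduces to showing that $Q_r(t)$ is constant over $t\in\Omega_r(\barkappa)$. Given that, both conditional expectations equal $\widehat\E_{\Omega_r(\barkappa)}[\ell_{\phi,f}]$.

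First I must make sure that $Q_r$ is a probability mass function supported in $\Omega_r$. Every tuple of $\Omega_r^\pi$ has its anchor and positive in $S_r\cap S^{(n)}_\pi$ and its negatives in $S\setminus S^{(n)}_\pi$, which avoids $\set{X,X^+}$. Hence $\Omega_r^\pi\subseteq\Omega_r$. The weights sum to one over the permutations with $\Omega_r^\pi\neq\emptyset$. If some permutations have $\Omega_r^\pi=\emptyset$, I would restrict $\Pi$ to those permutations (or renormalize); this only rescales $Q_r$ by a constant and does not affect conditional expectations.

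The core step is a symmetry argument. Take $t,t'\in\Omega_r(\barkappa)$, with underlying index sets $I(t),I(t')\subseteq[N]$. Each of these consists of an anchor and a positive in $S_r$, then $\barkappa$ negatives in $S_r$ and $k-\barkappa$ negatives outside $S_r$. Because the class counts agree, there is a bijection $\sigma:[N]\to[N]$ that preserves class membership ($\sigma(S_r)=S_r$, $\sigma(S\setminus S_r)=S\setminus S_r$) and maps $t$ to $t'$ position by position. Then $\pi\mapsto\sigma\circ\pi$ is a bijection of $\Pi$. We have $S^{(n)}_{\sigma\circ\pi}=\sigma(S^{(n)}_\pi)$, so $t\in\Omega_r^\pi$ if and only if $t'\in\Omega_r^{\sigma\circ\pi}$. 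Moreover $n_q^{\sigma\circ\pi}=n_q^\pi$ for every $q$, since $\sigma$ preserves classes. It follows that $|\Omega_r^{\sigma\circ\pi}|=|\Omega_r^\pi|$, because $|\Omega_r^\pi|$ depends only on $n_r^\pi$ and $N-n$ (as ordered or unordered selections, consistently with the definition). Summing over $\pi$ gives $Q_r(t)=Q_r(t')$.

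The main obstacle is bookkeeping rather than depth. I need to verify that $|\Omega_r^\pi|$ is indeed a function of $n^\pi_r$ alone. I also need to handle the convention for the negatives: if $\set{X_i^-}$ is an ordered tuple, $\sigma$ must map position to position; if it is a set, any class-preserving matching suffices. With that settled, the conditional-expectation identity follows immediately from both conditional laws being uniform on $\Omega_r(\barkappa)$.
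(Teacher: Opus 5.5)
Your proposal is correct and follows essentially the same route as the paper: both reduce the claim to showing that $Q_r$ is constant on each level set $\set{t\in\Omega_r:\sharp^r_\mathrm{col}(t)=\barkappa}$ via a class-preserving relabeling of indices, after which both conditional expectations are the uniform average over that set. The differences are minor. The paper builds an explicit role-preserving swap $\pi\mapsto\pi_{u,w}$ and checks $|S^{(n)}_{\pi_{u,w}}\cap S_r|=|S^{(n)}_\pi\cap S_r|$ by an indicator count, whereas your composition $\pi\mapsto\sigma\circ\pi$ makes the invariance of every $n^\pi_q$ immediate, and it also handles overlapping index sets and the normalization of $Q_r$ more cleanly.
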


\begin{proof}
    For each $0\le \barkappa \le k$, we define the set of tuples $\Omega_{r,\barkappa}\subset\Omega_r$ as follows:
    \begin{align}
        \Omega_{r,\barkappa} := \Big\{t\in\Omega_r: \sharp_\mathrm{col}^r(t)=\barkappa\Big\},
    \end{align}
    \noindent i.e, the set of tuples with exactly $\barkappa$ class-collisions. Then, we have:
    \begin{align*}
        \E_{t\sim P_r}\squarebrac{\ell_{\phi, f}(t)\Big|\sharp_\mathrm{col}^r(t)=\barkappa} &= \frac{1}{P_r(\Omega_{r,\barkappa})}\sum_{t\in\Omega_{r,\barkappa}}P_r(t)\ell_{\phi, f}(t) \\
        &= \frac{1}{|\Omega_{r,\barkappa}|/|\Omega_r|}\sum_{t\in\Omega_{r,\barkappa}}\frac{\ell_{\phi, f}(t)}{|\Omega_r|} \\
        &= \frac{1}{|\Omega_{r, \barkappa}|}\sum_{t\in\Omega_{r,\barkappa}}\ell_{\phi,f}(t).
    \end{align*}
    \noindent Now, we claim that for all $t_1,t_2\in\Omega_{r, \barkappa}$ for $0\le \barkappa\le k$, we have $Q_r(t_1)=Q_r(t_2)$. This can be proven using a simple symmetry argument. Suppose that we have two tuples:
    \begin{align*}
        t_1 &= \left(X_{u_1}, X_{u_2}, \set{X_{u_\ell}}_{\ell=3}^{\barkappa+2}, \set{X_{u_j}}_{j=\barkappa+3}^{k+2}\right), \\
        t_2 &= \left(X_{w_1}, X_{w_2}, \set{X_{w_\ell}}_{\ell=3}^{\barkappa+2}, \set{X_{w_j}}_{j=\barkappa+3}^{k+2}\right).
    \end{align*}
    \noindent Where $\set{u_j}_{j=1}^{k+2}$, $\set{w_j}_{j=1}^{k+2}$ are sets of indices in $[N]$ such that $\sharp_\mathrm{col}^r(t_1)=\sharp_\mathrm{col}^r(t_2)=\barkappa$ and:
    \begin{itemize}
        \item $X_{u_1}, X_{u_2}, X_{w_1}, X_{w_2} \in S_r$.
        \item $X_{u_\ell}, X_{w_\ell}\in S_r$ for all $3\le \ell \le \barkappa+2$.
        \item $X_{u_j}, X_{w_j}\notin S_r$ for all $\barkappa+3\le j\le k+2$.
    \end{itemize}    
    \noindent Then, for any $\pi\in\Pi$, $t_1\in\Omega_r^\pi$ if the following are satisfied:
    \begin{align*}
        \pi(u_1) \le n \quad &\textup{and}\quad \pi(u_2)\le n, \\
        \pi(u_{j+2})\ge n+1 \quad &\textup{for all } 1\le j\le k.
    \end{align*}
    \noindent For all $\pi\in\Pi$, we define $\pi_{u, w}$ as the permutation where the indices of $\set{u_j}_{j=1}^{k+2}$ and $\set{w_j}_{j=1}^{k+2}$ swap places. Specifically:    
    \[
        \left\{\begin{array}{lll}
            \pi_{u, w}(u_j) &= \pi(w_j), &\forall 1\le j\le k+2, \\
            \pi_{u, w}(w_j) &= \pi(u_j), &\forall 1\le j\le k+2, \\
            \pi_{u, w}(i)   &= \pi(i),   &\forall i\in[N]\setminus\left[\set{u_j}_{j=1}^{k+2}\cup \set{w_j}_{j=1}^{k+2}\right].
        \end{array}\right.\footnote{To be completely clear: the swap is role-preserving. The anchor of $t_1$ is swapped with the anchor of $t_2$; the positive of $t_1$ is swapped with the positive of $t_2$; and each negative sample of $t_1$ is swapped with the corresponding negative sample of $t_2$. In particular, if a negative in $t_1$ collides with class $r$, then it is swapped with a negative in $t_2$ that also collides with class $r$. Thus, there is no situation where a collided sample is swapped with a true negative. This ensures that the collision pattern is preserved exactly under the mapping $\pi\mapsto\pi_{u,w}$.}
    \]
    \noindent Then, for any $\pi\in\Pi$ such that $t_1\in\Omega_r^\pi$, we have $t_2\in\Omega_r^{\pi_{u, w}}$\footnote{This means that we can generate all $\bar\pi\in\Pi$ where $t_2\in\Omega_r^{\bar\pi}$ by swapping indices of the permutations $\pi\in\Pi$ that satisfy $t_1\in\Omega_r^\pi$. However, note that this is true if and only if $\sharp_\mathrm{col}^r(t_1)=\sharp_\mathrm{col}^r(t_2)$.}. Furthermore, we claim that since $|S_r\cap t_1|=|S_r\cap t_2|=\barkappa+2$, we always have $|S_\pi^{(n)}\cap S_r|=|S_{\pi_{u, w}}^{(n)}\cap S_r|$. To prove this, we define:
    \begin{align}
        \mathcal{I}_r :=\Big\{j\in[N]: X_j\in S_r\Big\}.
    \end{align}

    \noindent Then, we have:
    \begin{align*}
        |S_\pi^{(n)}\cap S_r| &= \sum_{i\in\mathcal{I}_r} \1{\pi(i)\le n}, \quad |S_{\pi_{u, w}}^{(n)}\cap S_r| = \sum_{i\in\mathcal{I}_r} \1{\pi_{u, w}(i)\le n}.
    \end{align*}
    \noindent Let $U=\set{u_j}_{j=1}^{k+2}$ and $W=\set{w_j}_{j=1}^{k+2}$. Furthermore, let $\bar U=U\setminus(U\cap W)$ and $\bar W=W\setminus (U\cap W)$. Noting that $\set{u_j}_{j=3}^{\barkappa+2}, \set{w_j}_{j=3}^{\barkappa+2}\subseteq\mathcal{I}_r$, we have:
    \begin{align*}
        &|S_{\pi_{u, w}}^{(n)}\cap S_r| \\
        &= \sum_{i\in \mathcal{I}_r\setminus(U\cup W)}\1{\pi_{u, w}(i)\le n} + \sum_{i\in \mathcal{I}_r\cap (U\cap W)}\1{\pi_{u, w}(i)\le n} + \sum_{i\in\mathcal{I}_r\cap \bar U}\1{\pi_{u, w}(i)\le n}+\sum_{i\in\mathcal{I}_r\cap \bar W}\1{\pi_{u, w}(i)\le n} \\
        &= \sum_{i\in \mathcal{I}_r\setminus(U\cup W)}\1{\pi(i)\le n} + \sum_{i\in \mathcal{I}_r\cap (U\cap W)}\1{\pi(i)\le n} + \sum_{i\in\mathcal{I}_r\cap \bar U}\1{\pi_{u, w}(i)\le n}+\sum_{i\in\mathcal{I}_r\cap \bar W}\1{\pi_{u, w}(i)\le n} \\
        &= \sum_{i\in \mathcal{I}_r\setminus(U\Delta W)}\1{\pi(i)\le n} + \sum_{\substack{j: u_j\in \mathcal{I}_r\\u_j\notin W}}\1{\pi_{u, w}(u_j)\le n}+\sum_{\substack{j: w_j\in\mathcal{I}_r \\ w_j\notin U}}\1{\pi_{u, w}(w_j)\le n} \\
        &= \sum_{i\in \mathcal{I}_r\setminus(U\Delta W)}\1{\pi(i)\le n} + \sum_{\substack{3\le j\le \barkappa+2 \\ u_j\notin W}}\1{\pi(w_j)\le n} + \sum_{\substack{3\le j\le \barkappa+2 \\ w_j\notin U}}\1{\pi(u_j)\le n} \\
        &=  \sum_{i\in \mathcal{I}_r\setminus(U\Delta W)}\1{\pi(i)\le n} + \sum_{i\in W\setminus(W\cap U)}\1{\pi(i)\le n} + \sum_{i\in U\setminus(W\cap U)}\1{\pi(i)\le n} \\
        &= \sum_{i\in \mathcal{I}_r\setminus(U\Delta W)}\1{\pi(i)\le n} + \sum_{i\in(U\Delta W)}\1{\pi(i)\le n} \\
        &= \sum_{i\in \mathcal{I}_r}\1{\pi(i)\le n} = |S_\pi^{(n)}\cap S_r|.
    \end{align*}
    \noindent In the above, we used $\Delta$ to denote symmetric difference. Since $|S_\pi^{(n)}\cap S_r|=|S_{\pi_{u, w}}^{(n)}\cap S_r|$ for all $\pi\in\Pi$ satisfying $t_1\in\Omega_r^\pi$, we have:
    \begin{align*}
        \forall \pi\in\Pi \textup{ s.t. } t_1\in\Omega_r^\pi: |\Omega_r^\pi| = |\Omega_r^{\pi_{u,w}}|.
    \end{align*}
    \noindent As a result:
    \begin{align*}
        Q_r(t_2) &= \sum_{\substack{\bar\pi\in\Pi\\t_2\in\Omega_r^{\bar\pi}}} |\Omega_r^{\bar\pi}|^{-1} = \sum_{\substack{\pi\in\Pi\\t_1\in\Omega_r^\pi}} |\Omega_r^{\pi_{u, w}}|^{-1} = \sum_{\substack{\pi\in\Pi\\t_1\in\Omega_r^\pi}} |\Omega_r^{\pi}|^{-1} = Q_r(t_1).
    \end{align*}
    \noindent Therefore, we have:
    \begin{align*}
        \E_{t\sim Q_r}\squarebrac{\ell_{\phi, f}(t)\Big|\sharp_\mathrm{col}^r(t)=\barkappa} &= \frac{1}{Q_r(\Omega_{r, \barkappa})}\sum_{t\in\Omega_{r,\barkappa}}Q_r(t)\ell_{\phi, f}(t) = \frac{1}{|\Omega_{r,\barkappa}|}\sum_{t\in\Omega_{r,\barkappa}}\ell_{\phi, f}(t)\\
        &= \E_{t\sim P_r}\squarebrac{\ell_{\phi, f}(t)\Big|\sharp_\mathrm{col}^r(t)=\barkappa}.
    \end{align*}
\end{proof}

\begin{proposition}
    \label{prop:diff_between_natural_and_nonatural_extra_assumption}
    Let $\mathcal{F}$ denote a class of representation functions and $U_\Omega(f)$, $\bar U_\Omega(f)$ be U-Statistics formulations in Eqn.~\eqref{eq:natural_estimator} and Eqn.~\eqref{eq:bar_Uomega_Ulambda}, respectively. Suppose that $|\ell_{\phi, f}|\le \mathcal{B}$ for all $f\in\mathcal{F}$ and $N_r\ge 2(k+1)$ for all $r\in[R]$. Then, we have:
    \begin{align}
        \sup_{f\in\mathcal{F}}\left|U_\Omega(f)-\bar U_\Omega(f)\right| \le \frac{2\mathcal{B}R}{n-R} +\frac{6\mathcal{B}Rk}{N} + \frac{2\mathcal{B}k}{N-k-1}.
    \end{align}
\end{proposition}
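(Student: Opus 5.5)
The plan is to start from Lemma~\ref{lem:supporting_lemma}, which already supplies the first term $\frac{2\mathcal{B}R}{n-R}$. It then suffices to show that
\begin{align*}
\sum_{r=1}^R\widehat\rho_r\sup_{f\in\mathcal{F}}\left|U_{\Omega_r}(f)-\widehat\E_\Pi\left[U_{\Omega_r^\pi}(f)\right]\right|\le \frac{6\mathcal{B}Rk}{N}+\frac{2\mathcal{B}k}{N-k-1}.
\end{align*}
Fix $r$. I would write $U_{\Omega_r}(f)=\E_{t\sim P_r}[\ell_{\phi,f}(t)]$ and $\widehat\E_\Pi[U_{\Omega_r^\pi}(f)]=\E_{t\sim Q_r}[\ell_{\phi,f}(t)]$, with $P_r,Q_r$ as in Lemma~\ref{lem:supporting_lemma_2}. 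For $Q_r$ to be a genuine probability distribution, I would restrict to permutations with $n_r^\pi\ge2$ and renormalise, and check that this matches the definition of $\bar U_\Omega$, where empty $\Omega_r^\pi$ carry weight $\omega_r^\pi=0$. Next I condition on the collision count $\sharp^r_{\mathrm{col}}(t)=\barkappa$. By Lemma~\ref{lem:supporting_lemma_2} the conditional means of $\ell_{\phi,f}$ under $P_r$ and $Q_r$ coincide for every $\barkappa$, so the difference becomes
\begin{align*}
\sum_{\barkappa=0}^k\big(p_r(\barkappa)-q_r(\barkappa)\big)\,\E\big[\ell_{\phi,f}\,\big|\,\sharp^r_{\mathrm{col}}=\barkappa\big].
\end{align*}
Here $p_r,q_r$ are the laws of the collision count under $P_r,Q_r$. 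This is at most $2\mathcal{B}\,\mathrm{TV}(p_r,q_r)$ uniformly in $f$. The problem is thus reduced to a purely combinatorial comparison of two distributions on $\{0,\dots,k\}$ that do not depend on $f$.

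Identifying the two laws is the next step. Under $P_r$ the anchor and positive are removed from $S_r$, and the negatives form a uniform $k$-subset of the remaining $N-2$ points, of which $N_r-2$ are in class $r$. Hence $p_r=\mathrm{Hypergeometric}(N-2,N_r-2,k)$. Under $Q_r$ I would sample a uniform permutation $\pi$ conditioned on $n_r^\pi\ge2$, then a uniform tuple in $\Omega_r^\pi$. Given $\pi$, the count is $\mathrm{Hypergeometric}(N-n,N_r-n_r^\pi,k)$, so $q_r$ is a mixture over $n_r^\pi$. By exchangeability of positions, the $k$ negatives drawn from the complement are distributed like $k$ points drawn without replacement from $S$. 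The only coupling is through the conditioning on two class-$r$ points appearing among the first $n$. My claim is that $q_r$ lies within total variation $O(k/N)$ of $\mathrm{Hypergeometric}(N,N_r,k)$, with a correction of order $k/N_r$ coming from the event $n_r^\pi\ge2$ and from the size weighting. I would establish this by a direct coupling: reveal the anchor and positive first, then the negatives, and bound the probability that any revealed negative's class status differs between the two schemes.

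The remaining step, and the one I expect to be the main obstacle, is a clean bound on $\mathrm{TV}\big(\mathrm{Hyp}(N-2,N_r-2,k),\mathrm{Hyp}(N,N_r,k)\big)$ together with the extra mismatch for $q_r$. For the hypergeometric comparison I would take the pmf ratio
\begin{align*}
\frac{\binom{N_r-2}{\barkappa}\binom{N-N_r}{k-\barkappa}\big/\binom{N-2}{k}}{\binom{N_r}{\barkappa}\binom{N-N_r}{k-\barkappa}\big/\binom{N}{k}}
=\frac{(N_r-\barkappa)(N_r-\barkappa-1)}{N_r(N_r-1)}\cdot\frac{N(N-1)}{(N-k)(N-k-1)}.
\end{align*}
The assumption $N_r\ge2(k+1)$ keeps the first factor at $1-O(k/N_r)$. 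The second factor is $1+O(k/(N-k-1))$, which is where the term $\frac{2\mathcal{B}k}{N-k-1}$ comes from. Bounding the TV distance by $\sum_{\barkappa}p(\barkappa)\,|1-\text{ratio}|$ gives per-class distance at most roughly $\frac{3k}{N_r}+\frac{k}{N-k-1}$. Multiplying by $2\mathcal{B}\widehat\rho_r=2\mathcal{B}N_r/N$ turns $\frac{3k}{N_r}$ into $\frac{6\mathcal{B}k}{N}$ per class, which sums to $\frac{6\mathcal{B}Rk}{N}$. Since $\sum_r\widehat\rho_r=1$, the $\frac{k}{N-k-1}$ part sums to $\frac{2\mathcal{B}k}{N-k-1}$.

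If the mixture form of $q_r$ does not reduce exactly to $\mathrm{Hyp}(N,N_r,k)$, I would bound its distance to $p_r$ directly. The route would be conditioning on $n_r^\pi$ and comparing $\frac{N_r-n_r^\pi}{N-n}$ with $\frac{N_r-2}{N-2}$ through the same ratio argument, absorbing the resulting constants into the stated bound.
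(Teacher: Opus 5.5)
Your main line is the paper's proof: Lemma~\ref{lem:supporting_lemma}, the representation through $P_r,Q_r$, Lemma~\ref{lem:supporting_lemma_2} to pass to the laws of $\sharp^r_{\mathrm{col}}$, the pmf ratio under $N_r\ge2(k+1)$, and the weighting by $\widehat\rho_r=N_r/N$. The paper closes the argument by asserting (Remark~\ref{remark:hypergeometric_dist_arguments}) that the $Q_r$-law of the collision count is exactly $\mathrm{Hypergeometric}(N,N_r,k)$. With that identification your bookkeeping gives exactly $\frac{6\mathcal{B}Rk}{N}+\frac{2\mathcal{B}k}{N-k-1}$.

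Your doubt about this identification is justified. Negatives drawn from $S\setminus S_\pi^{(n)}$ form a uniform $k$-subset of $S$ only when you average over \emph{all} $\pi\in\Pi$, but $\Omega_r^\pi=\emptyset$ whenever $n_r^\pi\le1$. So $Q_r$ has total mass $\P(n_r^\pi\ge2)<1$. After your renormalisation (harmless, since $\omega_r^\pi=0$ for those $\pi$), its collision law is $\mathrm{Law}(X\mid E)$, where $X\sim\mathrm{Hypergeometric}(N,N_r,k)$ is the collision count under a uniform $\pi$ and $E=\{n_r^\pi\ge2\}$.

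This is exactly where your proposal stops being a proof. The bound $\mathrm{TV}(q_r,\mathrm{Hypergeometric}(N,N_r,k))=O(k/N)$ is only asserted. Your bookkeeping has also already spent the whole explicit budget on $\mathrm{TV}(p_r,\mathrm{Hypergeometric}(N,N_r,k))$. An extra correction therefore cannot simply be ``absorbed'' into an inequality whose constants $6$ and $2$ are fixed.

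Your fallback would fail. Suppose that for each value $j$ of $n_r^\pi$ you compare $\mathrm{Hypergeometric}(N-n,N_r-j,k)$ with $p_r$ and then average the distances. This destroys the cancellation that makes the unconditional mixture exactly hypergeometric. Each such distance is at least the rate gap $\bigl|\frac{N_r-j}{N-n}-\frac{N_r-2}{N-2}\bigr|$, because the means differ by $k$ times this gap on a support of length $k$. Now $n_r^\pi$ has standard deviation of order $\sqrt{n\widehat\rho_r(1-\widehat\rho_r)}$. For $\widehat\rho_r=\frac12$ and $n\asymp N/k$ the weighted average distance is then of order $(kN)^{-1/2}$, which exceeds $Rk/N$ once $N\gg R^2k^3$.

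What works is to keep the mixture intact, using
\[
\mathrm{Law}(X\mid E)-\mathrm{Law}(X)=\P(E^c)\,[\mathrm{Law}(X\mid E)-\mathrm{Law}(X\mid E^c)].
\]
Relabelling $d$ class-$r$ points of the size-$(N-n)$ pool as non-class-$r$ moves the collision law by at most $dk/(N-n)$ in total variation, since a drawn negative changes status only if it is one of the relabelled points. Hence
\[
\mathrm{TV}(q_r,\mathrm{Hypergeometric}(N,N_r,k))\le\P(n_r^\pi\le1)\,\E[n_r^\pi\mid n_r^\pi\ge2]\,\frac{k}{N-n}.
\]
This is $O(k/N)$ by a lower-tail Chernoff bound for $n_r^\pi$, whose mean $nN_r/N$ is at least $\frac43$ when $N_r\ge2(k+1)$.

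You would then still have to find room for this term within the stated constants, and the paper's estimate leaves slack you can use:
\begin{itemize}
\item it replaces $\barkappa$ by $k$;
\item it adds, rather than maximises, the two one-sided bounds on the pmf ratio;
\item it ignores $\ell_{\phi,f}\ge0$. For probability vectors this gives $\bigl|\sum_{\barkappa}(p_r(\barkappa)-q_r(\barkappa))\,\E[\ell_{\phi,f}\mid\sharp^r_{\mathrm{col}}=\barkappa]\bigr|\le\mathcal{B}\,\mathrm{TV}(p_r,q_r)$, half of what you use.
\end{itemize}
That arithmetic has to be carried out explicitly.
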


\begin{proof}
    From Lemma \ref{lem:supporting_lemma}, our main task now is to bound the following weighted sum of uniform differences:
    \begin{align*}
        \sum_{r=1}^R\widehat\rho_r\sup_{f\in\mathcal{F}}\left|U_{\Omega_r}(f)-\widehat{\E}_\Pi\left[U_{\Omega_r^\pi}(f)\right]\right|.
    \end{align*}
    \noindent We can write $U_{\Omega_r}(f)=\E_{t\sim P_r}\left[\ell_{\phi, f}(t)\right]$ and $\widehat{\E}_\Pi\left[U_{\Omega_r^\pi}(f)\right]=\E_{t\sim Q_r}\squarebrac{\ell_{\phi, f}(t)}$ where $P_r, Q_r$ are probability mass functions defined in Eqn.~\eqref{eq:pmf_two_procedures}. For each $0\le\barkappa\le k$, by Lemma \ref{lem:supporting_lemma_2}, we can write:
    \begin{align}
        \E_{t\sim P_r}\squarebrac{\ell_{\phi, f}(t)\Big|\sharp_\mathrm{col}^r(t)=\barkappa} = \E_{t\sim Q_r}\squarebrac{\ell_{\phi, f}(t)\Big|\sharp_\mathrm{col}^r(t)=\barkappa}:=E_{r,\barkappa}(f).
    \end{align}
    \noindent Then, we have:
    \begin{align*}
        &\sum_{r=1}^R\widehat\rho_r\sup_{f\in\mathcal{F}}\left|U_{\Omega_r}(f)-\widehat{\E}_\Pi\left[U_{\Omega_r^\pi}(f)\right]\right| = \sum_{r=1}^R\widehat\rho_r\sup_{f\in\mathcal{F}}\left|\E_{t\sim P_r}\left[\ell_{\phi, f}(t)\right]- \E_{t\sim Q_r}\left[\ell_{\phi, f}(t)\right]\right| \\
        &=\sum_{r=1}^R\widehat\rho_r\sup_{f\in\mathcal{F}}\left|\sum_{\barkappa=0}^k \P_{t\sim P_r}(\sharp_\mathrm{col}^r(t)=\barkappa)E_{r,\barkappa}(f) - \sum_{\barkappa=0}^k \P_{t\sim Q_r}(\sharp_\mathrm{col}^r(t)=\barkappa)E_{r,\barkappa}(f)\right| \\
        &= \sum_{r=1}^R\widehat\rho_r  \sup_{f\in\mathcal{F}}\left|\sum_{\barkappa=0}^k E_{r,\barkappa}(f)\Big[\P_{t\sim P_r}(\sharp_\mathrm{col}^r(t)=\barkappa)-\P_{t\sim Q_r}(\sharp_\mathrm{col}^r(t)=\barkappa)\Big]\right| \\
        &\le \sum_{r=1}^R\widehat\rho_r\sum_{\barkappa=0}^k \sup_{f\in\mathcal{F}}\left|E_{r,\barkappa}(f)\Big[\P_{t\sim P_r}(\sharp_\mathrm{col}^r(t)=\barkappa)-\P_{t\sim Q_r}(\sharp_\mathrm{col}^r(t)=\barkappa)\Big]\right| \\
        &\le \mathcal{B}\sum_{r=1}^R\widehat\rho_r\sum_{\barkappa=0}^k\Big|\P_{t\sim P_r}(\sharp_\mathrm{col}^r(t)=\barkappa)-\P_{t\sim Q_r}(\sharp_\mathrm{col}^r(t)=\barkappa)\Big|.
    \end{align*}
    \noindent Now, the problem reduces to comparing class-collision probabilities under two probability mass functions $P_r, Q_r$ for each $r\in[R]$. Firstly, we claim that for each $r\in[R]$:
    \begin{itemize}
        \item For $t\sim P_r$, $\sharp_\mathrm{col}^r(t)\sim\mathrm{Hypergeometric}(N-2, N_r-2, k)$.
        \item For $t\sim Q_r$, $\sharp_\mathrm{col}^r(t)\sim\mathrm{Hypergeometric}(N, N_r, k)$.
    \end{itemize}

    \noindent Both of these claims will be elaborated further in Remark \ref{remark:hypergeometric_dist_arguments}. As a result, for each $r\in[R]$ and $0\le\barkappa\le k$, we have:
    \begin{align*}
        &\Big|\P_{t\sim P_r}(\sharp_\mathrm{col}^r(t)=\barkappa)-\P_{t\sim Q_r}(\sharp_\mathrm{col}^r(t)=\barkappa)\Big| \\&= \left|\frac{\binom{N_r-2}{\barkappa}\binom{N-N_r}{k-\barkappa}}{\binom{N-2}{k}} - \frac{\binom{N_r}{\barkappa}\binom{N-N_r}{k-\bar\kappa}}{\binom{N}{k}}\right| \\
        &= \P_{t\sim P_r}(\sharp_\mathrm{col}^r(t)=\barkappa)\left|1-\frac{\binom{N_r}{\barkappa}}{\binom{N_r-2}{\barkappa}}\left[\frac{\binom{N}{k}}{\binom{N-2}{k}}\right]^{-1}\right| \\
        &= \P_{t\sim P_r}(\sharp_\mathrm{col}^r(t)=\barkappa)\left|1 - \frac{N_r(N_r-1)}{(N_r-\barkappa)(N_r-\barkappa-1)}\left[\frac{N(N-1)}{(N-k)(N-k-1)}\right]^{-1}\right|.
    \end{align*}
    \noindent Now, for notational brevity, we denote $\alpha, \beta$ as the following terms:
    \begin{align}
        \alpha &= \frac{N_r(N_r-1)}{(N_r-\barkappa)(N_r-\barkappa-1)},\quad\textup{and}\quad  \beta=\frac{N(N-1)}{(N-k)(N-k-1)}. 
    \end{align}
    \noindent Note that:
    \begin{align*}
        \frac{N_r}{N_r-\barkappa} \le \frac{N_r-1}{N_r-\barkappa-1}\le \frac{N_r-1}{N_r-k-1}.
    \end{align*}
    \noindent Therefore:
    \begin{align*}
        \alpha\beta^{-1} &= \beta^{-1}\left[\frac{N_r}{N_r-\barkappa}\cdot\frac{N_r-1}{N_r-\barkappa-1}\right] \\
        &\le \beta^{-1}\left[\frac{N_r-1}{N_r-k-1}\right]^2 = \beta^{-1}\left[1+\frac{k}{N_r-k-1}\right]^2 \\
        &\overset{(*)}{\le} \beta^{-1}\left[1+\frac{3k}{N_r-k-1}\right] \qquad ((1+\delta)^2\le 1+3\delta,\quad\forall \delta\in [0,1]) \\
        &\overset{(a)}{\le} 1 +\frac{3k}{N_r-k-1} \qquad (\beta^{-1}\le 1).
    \end{align*}
    \noindent Note that in step $(*)$, we assumed that $N_r\ge 2(k+1)$. Similarly, we have:
    \begin{align*}
        \alpha\beta^{-1} &\ge \beta^{-1} \ge \left[\frac{N-1}{N-k-1}\right]^{-2} = \left[\frac{1}{1+\frac{k}{N-k-1}}\right]^2 \\ &\ge \left[1-\frac{k}{N-k-1}\right]^2 \\
        &\overset{(b)}{\ge} 1 - \frac{2k}{N-k-1}\qquad((1-\delta)^2\ge 1-2\delta,\quad\forall\delta\in[0,1]).
    \end{align*}   
    \noindent By plugging inequalities $(a), (b)$ back to the collision probabilities difference bound, we have:
    \begin{align*}
        &\Big|\P_{t\sim P_r}(\sharp_\mathrm{col}^r(t)=\barkappa)-\P_{t\sim Q_r}(\sharp_\mathrm{col}^r(t)=\barkappa)\Big|\le \P_{t\sim P_r}(\sharp_\mathrm{col}^r(t)=\barkappa)\left|1 - \alpha\beta^{-1}\right| \\
        &\le \P_{t\sim P_r}(\sharp_\mathrm{col}^r(t)=\barkappa)\left(\frac{3k}{N_r-k-1}+\frac{2k}{N-k-1}\right).
    \end{align*}
    \noindent Therefore, we have:
    \begin{align*}
        &\sum_{r=1}^R\widehat\rho_r\sum_{\barkappa=0}^k\Big|\P_{t\sim P_r}(\sharp_\mathrm{col}^r(t)=\barkappa)-\P_{t\sim Q_r}(\sharp_\mathrm{col}^r(t)=\barkappa)\Big| \\
        &\le \sum_{r=1}^R\widehat\rho_r\sum_{\barkappa=0}^k \P_{t\sim P_r}(\sharp_\mathrm{col}^r(t)=\barkappa)\left(\frac{3k}{N_r-k-1}+\frac{2k}{N-k-1}\right) \\
        &= \sum_{r=1}^R\widehat\rho_r\left(\frac{3k}{N_r-k-1}+\frac{2k}{N-k-1}\right)\underbrace{\left[\sum_{\barkappa=0}^k \P_{t\sim P_r}(\sharp_\mathrm{col}^r(t)=\barkappa)\right]}_{=1} \\
        &\le \sum_{r=1}^R\widehat\rho_r \left(\frac{6k}{N_r}+\frac{2k}{N-k-1}\right)\qquad\left(k+1\le \frac{N_r}{2}\right) \\
        &= \frac{6Rk}{N} + \frac{2k}{N-k-1}.
    \end{align*} 
    \noindent Finally, we have:
    \begin{align*}
        \sup_{f\in\mathcal{F}}\left|U_\Omega(f)-\bar U_\Omega(f)\right| &\le \frac{2\mathcal{B}R}{n-R} + \sum_{r=1}^R\widehat\rho_r\sup_{f\in\mathcal{F}}\left|U_{\Omega_r}(f)-\widehat{\E}_\Pi\left[U_{\Omega_r^\pi}(f)\right]\right|\qquad(\textup{Lemma \ref{lem:supporting_lemma}}) \\
        &\le \frac{2\mathcal{B}R}{n-R} +\frac{6\mathcal{B}Rk}{N} + \frac{2\mathcal{B}k}{N-k-1},
    \end{align*}
    \noindent as desired.
\end{proof}

\begin{proposition}
    \label{prop:diff_between_natural_and_nonatural}
    Let $\mathcal{F}$ denote a class of representation functions. Suppose that $|\ell_{\phi, f}|\le \mathcal{B}$ for all $f\in\mathcal{F}$. Then:
    \begin{align}
        \|U_\Omega-\bar U_\Omega\|_\mathcal{F} &\le \mathcal{O}\left(\mathcal{B}\left[
            \frac{R}{\lfloor\frac{N}{k}\rfloor-R} + \frac{Rk}{N-k}
        \right]\right), \\
        \|U_\Lambda-\bar U_\Lambda\|_\mathcal{F} &\le \mathcal{O}\left(\mathcal{B}\left[
            \frac{R}{\lfloor \frac{N}{k}\rfloor-R} + \frac{Rk}{N-k}
        \right]\right).
    \end{align}
\end{proposition}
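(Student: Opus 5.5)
The plan is to reduce the first inequality to Proposition \ref{prop:diff_between_natural_and_nonatural_extra_assumption}, after removing its assumption $N_r\ge 2(k+1)$, and then to repeat the same chain of arguments for $\Lambda$.

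\textbf{Step 1: unconditional decomposition for $\Omega$.} Lemma \ref{lem:supporting_lemma} holds without any assumption on the class sizes, so
\begin{align*}
\|U_\Omega-\bar U_\Omega\|_\mathcal{F}\le \frac{2\mathcal{B}R}{n-R}+\sum_{r=1}^R\widehat\rho_r\sup_{f\in\mathcal{F}}\left|U_{\Omega_r}(f)-\widehat\E_\Pi[U_{\Omega_r^\pi}(f)]\right|.
\end{align*}
Since $n=2\lfloor N/(k+2)\rfloor$, we have $n\ge c\lfloor N/k\rfloor$ for a constant $c>0$. Hence the first term is $\mathcal{O}\big(\mathcal{B}R/(\lfloor N/k\rfloor-R)\big)$, possibly after adjusting constants in the denominator. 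This is the only place where the precise value of $n$ matters.

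\textbf{Step 2: split classes into small and large.} I split $[R]$ into small classes, $N_r<2(k+1)$, and large classes, $N_r\ge 2(k+1)$.
\begin{itemize}
\item For a small class, each term is bounded trivially by $2\mathcal{B}$, since both averages are bounded by $\mathcal{B}$ (and vanish when the tuple sets are empty). Its weight satisfies $\widehat\rho_r=N_r/N<2(k+1)/N$. Summing over at most $R$ such classes contributes $\mathcal{O}(\mathcal{B}Rk/N)$.
\item For a large class, I reuse verbatim the per-class argument in the proof of Proposition \ref{prop:diff_between_natural_and_nonatural_extra_assumption}. By Lemma \ref{lem:supporting_lemma_2}, the conditional losses given the collision count agree under $P_r$ and $Q_r$. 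This reduces the per-class term to $\mathcal{B}$ times the total variation between $\mathrm{Hypergeometric}(N-2,N_r-2,k)$ and $\mathrm{Hypergeometric}(N,N_r,k)$. That distance is at most $\frac{6k}{N_r}+\frac{2k}{N-k-1}$. Multiplying by $\widehat\rho_r$ and summing gives $\mathcal{O}(\mathcal{B}[Rk/N+k/(N-k)])$.
\end{itemize}
Both contributions are $\mathcal{O}(\mathcal{B}Rk/(N-k))$, which yields the first bound.

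\textbf{Step 3: the $\Lambda$ estimator.} I mirror Steps 1--2 with triplets, $m=3\lfloor N/(k+2)\rfloor$, and weights $\lambda_r^\pi$.
\begin{itemize}
\item \emph{Analogue of Lemma \ref{lem:supporting_lemma}.} Use $\frac{m_r^\pi-2}{3}\le\lfloor m_r^\pi/3\rfloor\le m_r^\pi/3$ and $\sum_q\lfloor m_q^\pi/3\rfloor\ge (m-2R)/3$. Together with $\widehat\rho_r=\widehat\E_\Pi[m_r^\pi/m]$, this gives $\sum_r|m_r^\pi/m-\lambda_r^\pi|\le \mathcal{O}(R/(m-2R))$, which is again $\mathcal{O}(R/(\lfloor N/k\rfloor-R))$.
\item \emph{Analogue of Lemma \ref{lem:supporting_lemma_2}.} Apply the same role-preserving swap $\pi\mapsto\pi_{u,w}$ to tuples with equal numbers of class-$r$ elements among the $k-1$ free negatives. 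This shows $Q_r$ is constant on each collision level.
\item \emph{Collision counts.} These become $\mathrm{Hypergeometric}(N-3,N_r-3,k-1)$ under the natural estimator and $\mathrm{Hypergeometric}(N,N_r,k-1)$ under the auxiliary one.
\item \emph{Ratio bound.} The probability ratio $\alpha\beta^{-1}$ now has three factors each. Bound it above by $(1+\frac{k}{N_r-k-2})^3\le 1+7\delta$ for $N_r\ge 2(k+2)$, and below by $1-3k/(N-k-2)$.
\end{itemize}
Small classes are handled as in Step 2.

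\textbf{Main obstacle.} I expect the hardest part to be justifying the two hypergeometric laws, which the paper defers to Remark \ref{remark:hypergeometric_dist_arguments}, and doing so for the $\Lambda$ case.
\begin{itemize}
\item Under $P_r$, the $k$ (resp. $k-1$) negatives are a uniform subset of the $N-2$ (resp. $N-3$) points left after removing the pair (resp. triplet), of which $N_r-2$ (resp. $N_r-3$) are class $r$.
\item Under $Q_r$, a uniform $\pi$ makes the negatives a uniform $k$-subset of $S$ conditioned to lie outside the first block, containing anchor-positive pairs of class $r$. I would try to argue that, averaging over $\pi$ with weights $|\Omega_r^\pi|^{-1}$, the negative set's class-$r$ count is distributed as drawing from all of $S$.
\end{itemize}
If that exact identification fails, I would instead bound the total variation directly through the explicit form of $Q_r$. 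This only changes constants.
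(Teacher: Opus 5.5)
Your skeleton is the paper's: Lemma \ref{lem:supporting_lemma} for the weight mismatch, Lemma \ref{lem:supporting_lemma_2} to reduce each class to the law of its collision count, and the explicit hypergeometric ratio $\alpha\beta^{-1}$. The real difference is how you handle the regime where that ratio is not close to $1$.

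\begin{itemize}
\item You discard every class with $N_r<2(k+1)$, at cost $\widehat\rho_r\mathcal{B}<2(k+1)\mathcal{B}/N$ each.
\item The paper keeps every class with $N_r\ge 3$. It splits the collision levels into $N_r\ge 2(\barkappa+1)$, handled by the ratio bound, and $N_r\le 2\barkappa+1$, handled by multiplicative Chernoff plus the coupling of Remark \ref{remark:hypergeometric_coupling}.
\end{itemize}
Both routes give $\mathcal{O}(\mathcal{B}Rk/N)$. Yours needs neither the tail bound nor the coupling. The paper's sharper $\mathcal{O}(Rk/(N-3k)^2)$ for that part is swamped by its $6Rk/N$ term anyway.

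Your $\Lambda$ computations are correct and supply what the paper only asserts: the three-factor ratio with $(1+\delta)^3\le 1+7\delta$ for $N_r\ge 2(k+2)$, and the weight bound of Lemma \ref{lem:supporting_lemma_7}. In Step 1, $n-R\gtrsim\lfloor N/k\rfloor-R$ can fail, e.g.\ for $k=1$ and $R$ close to $n$. Split on whether $R\ge c\lfloor N/k\rfloor$. If so, the claimed bound already exceeds the trivial bound $\mathcal{B}$ on the left-hand side; if not, $n-R\gtrsim\lfloor N/k\rfloor$.

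Two points you inherit from the paper need repair, though neither changes the rate.

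\textbf{Lemma \ref{lem:supporting_lemma}.} You cite it for $\Omega$ and mirror it for $\Lambda$ via $\widehat\rho_r=\widehat\E_\Pi[m_r^\pi/m]$. Its proof bounds $\sup_{f\in\mathcal{F}}|\widehat\E_\Pi[(\omega_r^\pi-\widehat\rho_r)U_{\Omega_r^\pi}(f)]|$ by $\mathcal{B}|\widehat\E_\Pi[\omega_r^\pi-\widehat\rho_r]|$. This pulls the $\pi$-dependent factor $U_{\Omega_r^\pi}(f)$ out of the average over $\Pi$, which is not justified. The honest bound $\mathcal{B}\widehat\E_\Pi|\omega_r^\pi-\widehat\rho_r|$ sums to order $\sum_r\sqrt{\widehat\rho_r/n}$, e.g.\ $\sqrt{R/n}$ rather than $R/n$ for balanced classes. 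Split instead as $\omega_r^\pi=\frac{n_r^\pi}{n}+(\omega_r^\pi-\frac{n_r^\pi}{n})$.
\begin{itemize}
\item The second piece is bounded pointwise in $\pi$ and contributes at most $2\mathcal{B}R/(n-R)$ in total.
\item The first piece turns the class-$r$ term into $\widehat\rho_r|U_{\Omega_r}(f)-\E'[U_{\Omega_r^\pi}(f)]|$. Here $\E'$ averages over $\pi$ with the size-biased density $n_r^\pi/(n\widehat\rho_r)$.
\item The swap argument of Lemma \ref{lem:supporting_lemma_2} applies verbatim, because this weight depends on $\pi$ only through $n_r^\pi$.
\end{itemize}

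\textbf{The hypergeometric identification.} Your doubt is justified. These tuple laws only charge permutations with $n_r^\pi\ge 2$; for $N_r=2$, the collision count under $Q_r$ is identically $0$. Your averaging argument is exact only for the unrestricted mixture over $\pi$. The corrected laws are:
\begin{itemize}
\item Under $Q_r$: $\mathrm{Hypergeometric}(N,N_r,k)$ minus a defect of mass $\P_\pi(n_r^\pi\le 1)$.
\item Under the size-biased law, where one class-$r$ point is effectively forced into the first block: $\mathrm{Hypergeometric}(N-1,N_r-1,k)$ minus the size-biased mass of $\{n_r^\pi=1\}$.
\end{itemize}
Since $n/N\asymp 1/k$, $\widehat\rho_r$ times either defect is $\mathcal{O}(k/N)$, so the defects add only $\mathcal{O}(\mathcal{B}Rk/N)$. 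For your large classes, the $\ell_1$ distance between $\mathrm{Hypergeometric}(N-1,N_r-1,k)$ and $\mathrm{Hypergeometric}(N-2,N_r-2,k)$ is at most $\frac{k}{N_r-k-1}+\frac{k}{N-1}$. So your fallback closes the argument at the stated rate. The $\Lambda$ case is identical: use $m_r^\pi$, compare $\mathrm{Hypergeometric}(N-1,N_r-1,k-1)$ with $\mathrm{Hypergeometric}(N-3,N_r-3,k-1)$, and take the defect event $\{m_r^\pi\le 2\}$.
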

\begin{proof}
    Now, the goal is to remove the assumption that $N_r\ge2(k+1)$ for all $r\in[R]$. We first use the following abbreviations for each $r\in[R]$ and $0\le\barkappa\le k$:
    \begin{align*}
        p_{r, \barkappa} &= \P_{t\sim P_r}(\sharp_\mathrm{col}^r(t)=\barkappa), \qquad
        q_{r,\barkappa} = \P_{t\sim Q_r}(\sharp_\mathrm{col}^r(t)=\barkappa).
    \end{align*}
    \noindent Then, we have:
    \begin{align*}
        \sum_{r=1}^R\widehat\rho_r \sum_{\barkappa=0}^k|p_{r,\barkappa} - q_{r,\barkappa}|  &= \sum_{r\in[R]:N_r\ge3} \widehat\rho_r\sum_{\barkappa=0}^k |p_{r,\barkappa}-q_{r,\barkappa}| + \sum_{r\in[R]:N_r\le2} \widehat\rho_r\sum_{\barkappa=0}^k |p_{r,\barkappa}-q_{r,\barkappa}| \\
        &\le \sum_{r\in[R]:N_r\ge3} \widehat\rho_r\sum_{\barkappa=0}^k |p_{r,\barkappa}-q_{r,\barkappa}| + \frac{2}{N}\sum_{r\in[R]:N_r\le 2}\sum_{\barkappa=0}^k|p_{r,\barkappa} - q_{r,\barkappa}| \\
        &= \sum_{r\in[R]:N_r\ge3} \widehat\rho_r\sum_{\barkappa=0}^k |p_{r,\barkappa}-q_{r,\barkappa}| + \frac{2}{N}\sum_{r\in[R]:N_r\le 2}|p_{r,0}-q_{r, 0}| \\
        &\overset{(*)}{\le} \sum_{r\in[R]:N_r\ge3} \widehat\rho_r\sum_{\barkappa=0}^k |p_{r,\barkappa}-q_{r,\barkappa}| + \frac{2R}{N}.
    \end{align*}
    \noindent Where the inequality in $(*)$ is due to the fact that $|p_{r,0}-q_{r,0}|\le 1$ for all minority classes $r\in[R]$ such that $N_r\le 2$. We can decompose the remaining sum as follows:
    \begin{align*}
        \sum_{r\in[R]:N_r\ge3}\widehat\rho_r\sum_{\barkappa=0}^k|p_{r,\barkappa}-q_{r,\barkappa} | &= \underbrace{\sum_{r\in[R]:N_r\ge3}\widehat\rho_r\sum_{\substack{0\le\barkappa\le k\\ N_r\ge 2(\barkappa+1)}}|p_{r,\barkappa} - q_{r,\barkappa}|}_{\text{(A)}} + \underbrace{\sum_{r\in[R]:N_r\ge3}\widehat\rho_r\sum_{\substack{0\le\barkappa\le k\\ N_r\le 2\barkappa+1}}|p_{r,\barkappa}-q_{r,\barkappa}|}_{\text{(B)}}.
    \end{align*}
    \noindent \textbf{Bounding (A)}: By a more refined approach than the proof of Proposition \ref{prop:diff_between_natural_and_nonatural_extra_assumption}, for each $r\in[R]$ and $0\le\barkappa\le k$ such that $N_r\ge 2(\barkappa+1)$, we have
    \begin{align*}
        \alpha\beta^{-1} &= \beta^{-1}\left[\frac{N_r}{N_r-\barkappa}\cdot\frac{N_r-1}{N_r-\barkappa-1}\right] \\
        &\le \beta^{-1}\left[\frac{N_r-1}{N_r-\barkappa -1}\right]^2 = \beta^{-1}\left[1+\frac{\barkappa}{N_r-\barkappa-1}\right]^2 \\
        &\le 1 + \frac{3\barkappa}{N_r-\barkappa-1}.
    \end{align*}
    \noindent As a result, for $r\in[R]$ and $0\le\barkappa\le k$ such that $N_r\ge 2(\barkappa+1)$:
    \begin{align*}
        |p_{r,\barkappa} - q_{r,\barkappa}| &\le p_{r,\barkappa}|1-\alpha\beta^{-1}| \le p_{r,\barkappa}\left(\frac{3\barkappa}{N_r-\barkappa-1} + \frac{2k}{N-k-1}\right).
    \end{align*}
    \noindent Therefore, we have:
    \begin{align*}
        &\sum_{r\in[R]:N_r\ge3}\widehat\rho_r\sum_{\substack{0\le\barkappa\le k\\ N_r\ge 2(\barkappa+1)}}|p_{r,\barkappa} - q_{r,\barkappa}| \\
        &\le \sum_{r\in[R]:N_r\ge3}\widehat\rho_r\sum_{\substack{0\le\barkappa\le k\\ N_r\ge 2(\barkappa+1)}}p_{r,\barkappa}\left(\frac{3\barkappa}{N_r-\barkappa-1} + \frac{2k}{N-k-1}\right) \\
        &\le \sum_{r\in[R]:N_r\ge3}\widehat\rho_r\sum_{\substack{0\le\barkappa\le k\\ N_r\ge 2(\barkappa+1)}}p_{r,\barkappa}\left(\frac{6\barkappa}{N_r} + \frac{2k}{N-k-1}\right) \qquad \left(\textup{Since }\barkappa+1\le \frac{N_r}{2}\right) \\
        &\le \sum_{r\in[R]:N_r\ge3}\widehat\rho_r\sum_{\substack{0\le\barkappa\le k\\ N_r\ge 2(\barkappa+1)}}p_{r,\barkappa}\left(\frac{6k}{N_r} + \frac{2k}{N-k-1}\right) \\
        &= \sum_{r\in[R]:N_r\ge3}\widehat\rho_r \left(\frac{6k}{N_r} + \frac{2k}{N-k-1} \right)\underbrace{\sum_{\substack{0\le\barkappa\le k\\ N_r\ge 2(\barkappa+1)}}p_{r,\barkappa}}_{\le 1}\\
        &\le \frac{2k}{N-k-1} + \sum_{r\in[R]:N_r\ge3}\widehat\rho_r \frac{6k}{N_r} \\
        &\le \frac{2k}{N-k-1} + \frac{6Rk}{N}.
    \end{align*}
    
    \noindent\textbf{Bounding (B)}: Using the triangle inequality, we can write
    \begin{align*}
        &\sum_{r\in[R]:N_r\ge3}\widehat\rho_r\sum_{\substack{0\le\barkappa\le k\\ N_r\le 2\barkappa+1}}|p_{r,\barkappa}-q_{r,\barkappa}| \\
        &\le \sum_{r\in[R]:N_r\ge3}\widehat\rho_r\sum_{\substack{0\le\barkappa\le k\\ N_r\le 2\barkappa+1}}p_{r,\barkappa} + \sum_{r\in[R]:N_r\ge3}\widehat\rho_r\sum_{\substack{0\le\barkappa\le k\\ N_r\le 2\barkappa+1}}q_{r,\barkappa} \\
        &= \sum_{r\in[R]:N_r\ge3}\widehat\rho_r\P_{t\sim P_r}\left(2\sharp_\mathrm{col}^r(t)+1\ge N_r\right) +
        \sum_{r\in[R]:N_r\ge3}\widehat\rho_r\P_{t\sim Q_r}\left(2\sharp_\mathrm{col}^r(t)+1\ge N_r\right).
    \end{align*}
    \noindent Recall that for $t\sim Q_r$, $\sharp_\mathrm{col}^r(t)\sim\mathrm{Hypergeometric}(N, N_r, k)$. Therefore, $\underset{t\sim Q_r}{\mathbb{E}}\left[\sharp_\mathrm{col}^r(t)\right]= k\frac{N_r}{N}=k\widehat\rho_r$. Using the multiplicative Chernoff bound (Proposition \ref{prop:multiplicative_chernoff}), we have:
    \begin{align*}
        \P_{t\sim Q_r}\left(\sharp_\mathrm{col}^r(t)\ge \frac{N_r-1}{2}\right) 
        &= \P_{t\sim Q_r}\left(\sharp_\mathrm{col}^r(t)\ge k\frac{N_r}{N} \cdot\frac{N}{2k}\cdot\frac{N_r-1}{N_r} \right) \\
        &\le \P_{t\sim Q_r}\left(\sharp_\mathrm{col}^r(t)\ge k\frac{N_r}{N} \cdot\frac{N}{3k} \right) \qquad \left(\textup{Since } \frac{N_r-1}{N_r}\ge \frac{2}{3},\forall N_r\ge 3\right) \\
        &= \P_{t\sim Q_r}\left(\sharp_\mathrm{col}^r(t)\ge 
        \left(1+\frac{N-3k}{3k}\right)\underset{t\sim Q_r}{\mathbb{E}}\left[\sharp_\mathrm{col}^r(t)\right]\right).
    \end{align*}
    \noindent By the multiplicative Chernoff bound (Proposition \ref{prop:multiplicative_chernoff}), we have:
    \begin{align*}
        &\P_{t\sim Q_r}\left(\sharp_\mathrm{col}^r(t)\ge 
        \left(1+\frac{N-3k}{3k}\right)\underset{t\sim Q_r}{\mathbb{E}}\left[\sharp_\mathrm{col}^r(t)\right]\right) \\
        &\le \exp\left(-\frac{1}{3}\left[\frac{N-3k}{3k}\right]^2\underset{t\sim Q_r}{\mathbb{E}}\left[\sharp_\mathrm{col}^r(t)\right]\right) = \exp\left(-\frac{1}{3}\left[\frac{N-3k}{3k}\right]^2k\widehat\rho_r\right).
    \end{align*}
    \noindent Now, if $X\sim\mathrm{Hypergeometric}(N, N_r, k)$ and $Y\sim\mathrm{Hypergeometric}(N-2,N_r-2,k)$, we can easily show $X\succcurlyeq_\mathrm{st} Y$\footnote{Where $\succcurlyeq_\mathrm{st}$ denotes stochastic dominance.} by a simple coupling argument (Remark \ref{remark:hypergeometric_coupling}). Therefore, we have:
    \begin{align*}
        \P_{t\sim P_r}\left(\sharp_\mathrm{col}^r(t)\ge \frac{N_r-1}{2}\right) 
        &\le \P_{t\sim Q_r}\left(\sharp_\mathrm{col}^r(t)\ge \frac{N_r-1}{2}\right) \le \exp\left(-\frac{1}{3}\left[\frac{N-3k}{3k}\right]^2k\widehat\rho_r\right).
    \end{align*}
    \noindent As a result:
    \begin{align*}
        &\sum_{r\in[R]:N_r\ge3}\widehat\rho_r\sum_{\substack{0\le\barkappa\le k\\ N_r\le 2\barkappa+1}}|p_{r,\barkappa}-q_{r,\barkappa}| \\
        &\le \sum_{r\in[R]:N_r\ge3}\widehat\rho_r\P_{t\sim P_r}\left(2\sharp_\mathrm{col}^r(t)+1\ge N_r\right) +
        \sum_{r\in[R]:N_r\ge3}\widehat\rho_r\P_{t\sim Q_r}\left(2\sharp_\mathrm{col}^r(t)+1\ge N_r\right) \\
        &\le 2\sum_{r\in[R]:N_r\ge3}\widehat\rho_r\exp\left(-\frac{1}{3}\left[\frac{N-3k}{3k}\right]^2k\widehat\rho_r\right) \\
        &\le \frac{6}{e}\sum_{r\in[R]:N_r\ge3}\widehat\rho_r\cdot\frac{9k^2}{k\widehat\rho_r(N-3k)^2} \qquad \left(e^{-x}\le \frac{1}{ex} \textup{ for all } x\ge0\right) \\
        &= \frac{6}{e}\sum_{r\in[R]:N_r\ge3}\frac{9k}{(N-3k)^2} \\
        &\le \frac{54Rk}{e(N-3k)^2}.
    \end{align*}
    \noindent Combining every inequalities, we have:
    \begin{align*}
        \sum_{r=1}^R\widehat\rho_r \sum_{\barkappa=0}^k|p_{r,\barkappa} - q_{r,\barkappa}| &\le \frac{2R}{N} + \frac{2k}{N-k-1}+\frac{6Rk}{N} + \frac{54Rk}{e(N-3k)^2}.
    \end{align*}
    \noindent Hence:
    \begin{align*}
        \sup_{f\in\mathcal{F}}\left|U_\Omega(f)-\bar U_\Omega(f)\right| &\le \frac{2\mathcal{B}R}{n-R} + \mathcal{B}\sum_{r=1}^R\widehat\rho_r \sum_{\barkappa=0}^k|p_{r,\barkappa} - q_{r,\barkappa}|  \\
        &\le \mathcal{B}\left[\frac{2R}{n-R} + \frac{2R}{N} + \frac{2k}{N-k-1}+\frac{6Rk}{N} + \frac{54Rk}{e(N-3k)^2}\right] \\
        &= \mathcal{O}\left[\mathcal{B}\left(
            \frac{R}{n-R} + Rk\left[\frac{1}{N-k}+\frac{1}{(N-k)^2}\right]
        \right)\right],
    \end{align*}
    \noindent as desired. Of course, the proof is extensible to $\sup_{f\in\mathcal{F}}\left|U_\Lambda(f)-\bar U_\Lambda(f)\right|$.
\end{proof}

\newpage
\begin{remark}
    \label{remark:hypergeometric_dist_arguments}
    For each $r\in[R]$, the probability mass functions $P_r, Q_r$ (Eqn.~\eqref{eq:pmf_two_procedures}) correspond to two selection procedures.
    \begin{enumerate}
        \item \textbf{Procedure $(1)$ (Corresponding to $P_r$)}: From the population of $N$ data points
        \begin{itemize}
            \item We first choose two instances from $S_r$ as anchor-positive.
            \item Then, we choose $k$ elements from the remaining $N-2$ data points as negatives.
        \end{itemize}

        \item \textbf{Procedure $(2)$ (Corresponding to $Q_r$)}: From the population of $N$ data points
        \begin{itemize}
            \item We first choose $n$ data points uniformly without replacement, denoted as the set $S^{(n)}$.
            \item Then, we choose $2$ elements from $S^{(n)}\cap S_r$ as anchor-positive.
            \item Then, we choose $k$ elements from the $N-n$ data points, excluding everything in $S^{(n)}$, as negatives.
        \end{itemize}
    \end{enumerate}
    \noindent Obviously, the number of class-collisions in $(1)$ follows the $\mathrm{Hypergeometric}(N-2,N_r-2,k)$ distribution because after the anchor-positive is chosen, the negatives are selected from a population of $N-2$ items with $N_r-2$ ``special items" (excluding the chosen anchor-positive).

    \noindent On the other hand, procedure $(2)$ is analogous to selecting $k$ elements straight from the original population of $N$ items with $N_r$ special items. That is, the ``split-first" step does not affect the distribution of class-collisions.
\end{remark}

\begin{remark}(Stochastic Dominance of Hypergeometric Variables)
    \label{remark:hypergeometric_coupling}
    If we have two hypergeometric variables $X\sim\mathrm{Hypergeometric}(N, M, k)$ and $Y\sim\mathrm{Hypergeometric}(N-2, M-2, k)$ where $M\ge 3$. Then, we can show $X\succcurlyeq_\mathrm{st}Y$ via a simple coupling:
    \begin{enumerate}
        \item We start with a bag of $N$ balls with $M$ special balls and we mark two random special balls as ``redundant". Next, we draw $k+2$ balls without replacement from the $N$ balls.
        \item To sample $X$, we take the first $k$ balls in the previously selected $k+2$ balls.
        \item To sample $Y$, we follow the following procedure:
        \begin{itemize}
            \item If the first $k$ balls did not contain the ``redundant" ones, it is also a valid draw of $Y$.
            \item If one ``redundant" showed up in the first $k$ draws, replace that with one of the last two balls (note that there must be one non-redundant balls among the last two).
            \item If two ``redundant showed up in the first $k$ draws, replace both with the last two balls.
        \end{itemize}
    \end{enumerate}

    \noindent Via this coupling, we will always have $X\ge Y$. Therefore, $X$ stochastically dominates $Y$.
\end{remark}

\newpage
\section{Concentration of the Auxiliary Estimators}
In this section, we will prove that the proposed auxiliary estimators $\bar U_\Omega(f)$ and $\bar U_\Lambda(f)$ concentrates well to their respective targets. To do so, we introduce some definitions of auxiliary risks corresponding to $\Lnorm_\Omega(f)$ and $\Lnorm_\Lambda(f)$.

\subsection{Auxiliary Risks for \texorpdfstring{$\Lnorm_\Omega(f)$}{}}
Firstly, we introduce the following notations:
\begin{align}
    S^{(n)} &= \set{X_j}_{j=1}^n, \textup{ and } \forall r\in[R]: \begin{cases}
    n_r &:= \left|S_r\cap S^{(n)}\right|, \\ 
    \omega_r &:=\lfloor n_r/2\rfloor\left[\sum_{q=1}^R\lfloor n_q/2\rfloor\right]^{-1}, \\ \overline{\omega}_r &:=\E[\omega_r].
    \end{cases}
\end{align}
\noindent Furthermore, for every $\pi\in\Pi$, we denote:
\begin{align}
    S_\pi^{(n)} &= \set{X_{\pi(j)}}_{j=1}^n, \textup{ and }\forall r\in[R]:\begin{cases}
        n_r^\pi &:= |S_r\cap S_\pi^{(n)}|, \\
        \omega_r^\pi &:= \lfloor n_r^\pi/2\rfloor\left[\sum_{q=1}^R\lfloor n_q^\pi/2\rfloor\right]^{-1}, \\
        \omega_r^* &:= \widehat{\E}_\Pi[\omega_r^\pi].
    \end{cases}
\end{align}
\noindent Then, we define the following auxiliary risks:
\begin{align}
    \Lnorm_\Omega^*(f) &:= \sum_{r=1}^R \omega_r^*\Lnorm^r_\Omega(f), \quad \overline{\Lnorm}_\Omega(f):=\sum_{r=1}^R\overline{\omega}_r\Lnorm^r_\Omega(f). \label{eq:Lbar_Omega} \\
    \widehat{\Lnorm}_\Omega(f|\pi) &:= \sum_{r=1}^R\omega_r^\pi\Lnorm_\Omega^r(f), \quad\widehat{\Lnorm}_\Omega(f) := \sum_{r=1}^R\omega_r\Lnorm_\Omega^r(f). \label{eq:Lhat_Omega}
\end{align}
\noindent From the definitions, we can observe that  $\Lnorm^*_\Omega(f) = \widehat{\E}_\Pi\left[\Lnorm^*_\Omega(f|\pi)\right]$, $\overline{\Lnorm}_\Omega(f) = \E[\widehat{\Lnorm}_\Omega(f)]=\E[\widehat{\Lnorm}_\Omega(f|\pi)]$ for any fixed permutation $\pi\in\Pi$ and $\widehat{\Lnorm}_\Omega(f)=\widehat\Lnorm_\Omega(f|\mathrm{id})$ where $\mathrm{id}\in\Pi$ is the identity permutation (i.e., $\mathrm{id}(j)=j,\forall j\in[N]$). Our proof strategy can be outlined in the below diagram:
\begin{align}
    \label{eq:proof_strategy}
    \boxed{\underset{\textup{Eqn.~\eqref{eq:bar_Uomega_Ulambda}}}{\bar U_\Omega(f)} \xrightarrow{\textup{c.t.}} 
    \underset{\textup{Eqn.~\eqref{eq:Lbar_Omega}}}{\Lnorm^*_\Omega(f)}
    \xrightarrow{\textup{c.t.}}
    \underset{\textup{Eqn.~\eqref{eq:Lbar_Omega}}}{\overline{\Lnorm}_\Omega(f)}
    \xrightarrow{\textup{c.t.}}
    \underset{\textup{Eqn.~\eqref{eq:lomega}}}{\Lnorm_\Omega(f)}}.
\end{align}
\noindent Where the arrows $\xrightarrow{\textup{c.t.}}$ depicts the ``concentrates to" relationship between risks/estimators. 

\subsection{Auxiliary Risks for \texorpdfstring{$\Lnorm_\Lambda(f)$}{}}
Similar to the previous section, we introduce the following notations:
\begin{align}
    S^{(m)} &= \set{X_j}_{j=1}^m, \textup{ and } \forall r\in[R]: \begin{cases}
    m_r &:= \left|S_r\cap S^{(m)}\right|, \\ 
    \lambda_r &:=\lfloor m_r/3\rfloor\left[\sum_{q=1}^R\lfloor m_q/3\rfloor\right]^{-1}, \\ 
    \overline{\lambda}_r &:=\E[\lambda_r].
    \end{cases}
\end{align}
\noindent Furthermore, for every $\pi\in\Pi$, we denote:
\begin{align}
    S_\pi^{(m)} &= \set{X_{\pi(j)}}_{j=1}^m, \textup{ and }\forall r\in[R]:\begin{cases}
        m_r^\pi &:= |S_r\cap S_\pi^{(m)}|, \\
        \lambda_r^\pi &:= \lfloor m_r^\pi/3\rfloor\left[\sum_{q=1}^R\lfloor m_q^\pi/3\rfloor\right]^{-1}, \\
        \lambda_r^* &:= \widehat{\E}_\Pi[\lambda_r^\pi].
    \end{cases}
\end{align}
\noindent Then, we define the following auxiliary risks:
\begin{align}
    \Lnorm_\Lambda^*(f) &:= \sum_{r=1}^R \lambda_r^*\Lnorm^r_\Lambda(f), \quad \overline{\Lnorm}_\Lambda(f):=\sum_{r=1}^R\overline{\lambda}_r\Lnorm^r_\Lambda(f). \label{eq:Lbar_Lambda} \\
    \widehat{\Lnorm}_\Lambda(f|\pi) &:= \sum_{r=1}^R\lambda_r^\pi\Lnorm_\Lambda^r(f), \quad\widehat{\Lnorm}_\Lambda(f) := \sum_{r=1}^R\lambda_r\Lnorm_\Lambda^r(f). \label{eq:Lhat_Lambda}
\end{align}
\noindent Similarly, we have $\Lnorm^*_\Lambda(f) = \widehat{\E}_\Pi\left[\Lnorm^*_\Lambda(f|\pi)\right]$, $\overline{\Lnorm}_\Lambda(f) = \E[\widehat{\Lnorm}_\Lambda(f)]=\E[\widehat{\Lnorm}_\Lambda(f|\pi)]$ for any $\pi\in\Pi$ and $\widehat{\Lnorm}_\Lambda(f)=\widehat\Lnorm_\Lambda(f|\mathrm{id})$ where $\mathrm{id}\in\Pi$ is the identity permutation (i.e., $\mathrm{id}(j)=j,\forall j\in[N]$). We will prove that $\bar U_\Lambda(f)$ concentrates to $\Lnorm_\Lambda(f)$ by proving the following chain of concentration:
\begin{align}
    \label{eq:proof_strategy_2}
    \boxed{\underset{\textup{Eqn.~\eqref{eq:bar_Uomega_Ulambda}}}{\bar U_\Lambda(f)} \xrightarrow{\textup{c.t.}} 
    \underset{\textup{Eqn.~\eqref{eq:Lbar_Lambda}}}{\Lnorm^*_\Lambda(f)}
    \xrightarrow{\textup{c.t.}}
    \underset{\textup{Eqn.~\eqref{eq:Lbar_Lambda}}}{\overline{\Lnorm}_\Lambda(f)}
    \xrightarrow{\textup{c.t.}}
    \underset{\textup{Eqn.~\eqref{eq:llambda}}}{\Lnorm_\Lambda(f)}}.
\end{align}
\noindent Where the arrows $\xrightarrow{\textup{c.t.}}$ depicts the ``concentrates to" relationship between risks/estimators. 

\subsection{Concentration of \texorpdfstring{$\Lnorm^*_\Omega(f)$}{} to \texorpdfstring{$\Lnorm_\Omega(f)$}{}}
\begin{lemma}
    \label{lem:supporting_lemma_3}
    Let $\mathcal{F}$ be a class of representation functions. For each $f\in\mathcal{F}$, let $\overline{\Lnorm}_\Omega(f)$ be defined in Eqn.~\eqref{eq:Lbar_Omega} and $\Lnorm_\Omega(f)$ be defined in Eqn.~\eqref{eq:lomega}. Then, we have:
    \begin{align}
        \sup_{f\in\mathcal{F}}\left|\overline{\Lnorm}_\Omega(f)-\Lnorm_\Omega(f)\right| &\le \frac{2\mathcal{B}R}{n-R}.
    \end{align}
\end{lemma}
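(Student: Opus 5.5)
We write the difference as a weighted combination of the class-wise risks and control the weight discrepancy in $\ell_1$. Since $\Lnorm_\Omega(f)=\sum_{r=1}^R\rho_r\Lnorm_\Omega^r(f)$ (Eqn.~\eqref{eq:decomposed_Lomega}) and $\overline{\Lnorm}_\Omega(f)=\sum_{r=1}^R\overline{\omega}_r\Lnorm_\Omega^r(f)$, we have, for every $f\in\mathcal{F}$,
\begin{align*}
\left|\overline{\Lnorm}_\Omega(f)-\Lnorm_\Omega(f)\right|\le \sum_{r=1}^R|\overline{\omega}_r-\rho_r|\,|\Lnorm_\Omega^r(f)|\le \mathcal{B}\sum_{r=1}^R|\overline{\omega}_r-\rho_r|.
\end{align*}
The right-hand side does not depend on $f$, so the supremum over $\mathcal{F}$ is handled trivially. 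It remains to show $\sum_r|\overline{\omega}_r-\rho_r|\le \frac{2R}{n-R}$.

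For this, introduce $\widehat\rho_r^{(n)}:=n_r/n$, where $n_r\sim\mathrm{Binom}(n,\rho_r)$ because $S^{(n)}$ consists of $n$ i.i.d. draws from $\mathcal{\bar D}$. Then $\E[n_r/n]=\rho_r$, and Jensen (moving the absolute value inside the expectation) gives
\begin{align*}
\sum_{r}|\overline{\omega}_r-\rho_r|=\sum_r\left|\E\!\left[\omega_r-\tfrac{n_r}{n}\right]\right|\le \E\Big[\sum_r\left|\omega_r-\tfrac{n_r}{n}\right|\Big].
\end{align*}
The inner sum is a deterministic quantity, and it has already been bounded pointwise in the proof of Lemma~\ref{lem:supporting_lemma}. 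There, with $\pi=\mathrm{id}$, we showed $\sum_r|\widehat\rho_r^\pi-\omega_r^\pi|\le \frac{R}{n}+\frac{R}{n-R}\le\frac{2R}{n-R}$. That argument used only $n_r-2\lfloor n_r/2\rfloor\le1$, $\sum_q\lfloor n_q/2\rfloor\ge (n-R)/2$, and $\sum_r\lfloor n_r/2\rfloor\le n/2$. Since this bound holds for every realization, taking expectations preserves it, and multiplying by $\mathcal{B}$ gives the claim.

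There is no real obstacle here. The only points to check are that the lemma's statement implicitly requires $n>R$, so that the denominator $\sum_q\lfloor n_q/2\rfloor$ is nonzero and the bound is meaningful, and that $\E[n_r]=n\rho_r$ holds even though the full sample $S$ is split. The latter is immediate because $S^{(n)}$ is simply the first $n$ i.i.d. points.
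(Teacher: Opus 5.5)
Your proof is correct and follows the paper's argument: you reduce to $\mathcal{B}\sum_r|\overline\omega_r-\rho_r|$, move the absolute value inside the expectation, and invoke the pointwise bound $\sum_r|n_r/n-\omega_r|\le 2R/(n-R)$ from Lemma~\ref{lem:supporting_lemma}. The only difference is that you compare $\omega_r$ directly with $n_r/n$, whose mean is $\rho_r$, whereas the paper routes through the permutation average $\widehat\E_\Pi[\omega_r^\pi]$ and $\widehat\rho_r=N_r/N$; your version is slightly more direct, while the paper's detour yields the pointwise bound $\sum_r|\widehat\E_\Pi[\omega_r^\pi]-\widehat\rho_r|\le 2R/(n-R)$, which it reuses in Lemma~\ref{lem:supporting_lemma_4}.
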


\begin{proof}
    We have:
    \begin{align*}
        \sup_{f\in\mathcal{F}}\left|\overline{\Lnorm}_\Omega(f)-\Lnorm_\Omega(f)\right| &\le \sup_{f\in\mathcal{F}}\left|\sum_{r=1}^R\overline{\omega}_r\Lnorm_\Omega^r(f)-\sum_{r=1}^R\rho_r\Lnorm_\Omega^r(f)\right| \\
        &= \sup_{f\in\mathcal{F}}\left|\sum_{r=1}^R\Lnorm_\Omega^r(f)(\overline{\omega}_r-\rho_r)\right| \le \sum_{r=1}^R\sup_{f\in\mathcal{F}}\left|\Lnorm_\Omega^r(f)(\overline{\omega}_r-\rho_r)\right| \\
        &\le \sum_{r=1}^R|\overline{\omega}_r-\rho_r|\cdot\sup_{f\in\mathcal{F}}\left|\Lnorm_\Omega^r(f)\right|\\
        &\le \mathcal{B}\sum_{r=1}^R|\overline{\omega}_r-\rho_r|.
    \end{align*}
    \noindent Note that $\overline{\omega}_r = \E[\omega_r^\pi]$ for any $\pi\in\Pi$\footnote{Because for any $\pi\in\Pi$, $n_r^\pi\sim\mathrm{Bin}(n, \rho_r)$. In other words, $n_r^\pi$ has the same distribution no matter which permutation $\pi$ we choose.}. Therefore, we can write $\overline{\omega}_r=\E\left[\widehat\E_\Pi[\omega_r^\pi]\right]$. As a result, for each $r\in[R]$, we have:
    \begin{align*}
        |\overline{\omega}_r-\rho_r| &= \left|\E\left[\widehat{\E}_\Pi[\omega_r^\pi]\right] - \E\left[\widehat\rho_r\right]\right| \le \E\left|\widehat\E_\Pi[\omega_r^\pi] - \widehat\rho_r\right|.
    \end{align*}
    \noindent From Lemma \ref{lem:supporting_lemma}, we showed that $\sum_{r=1}^R |\omega_r^\pi - \widehat\rho_r^\pi|\le \frac{2R}{n-R}$. Therefore:
    \begin{align*}
        \sum_{r=1}^R\left|\widehat\E_\Pi[\omega_r^\pi] - \widehat\rho_r\right| &= \sum_{r=1}^R\left|\widehat\E_\Pi[\omega_r^\pi] - \widehat\E_\Pi[\widehat\rho_r^\pi]\right| \\
        &\le \sum_{r=1}^R\widehat\E_\Pi\left[\left|\omega_r^\pi - \widehat\rho_r^\pi\right|\right] = \widehat\E_\Pi\left[\sum_{r=1}^R|\omega_r^\pi-\widehat\rho_r^\pi|\right] \\
        &\le \frac{2R}{n-R}.
    \end{align*}
    \noindent Finally, we have:
    \begin{align*}
        \sup_{f\in\mathcal{F}}\left|\overline{\Lnorm}_\Omega(f)-\Lnorm_\Omega(f)\right| &\le \mathcal{B}\sum_{r=1}^R|\overline{\omega}_r-\rho_r| \le \mathcal{B}\sum_{r=1}^R\E\left|\widehat\E_\Pi[\omega_r^\pi] -\widehat\rho_r\right| \\
        &= \mathcal{B}\E\left[\sum_{r=1}^R\Big|\widehat\E_\Pi[\omega_r^\pi]-\widehat\rho_r\Big|\right] \\
        &\le \frac{2\mathcal{B}R}{n-R},
    \end{align*}
    \noindent as desired.
\end{proof}

\begin{lemma}
    \label{lem:supporting_lemma_4}
    Let $\mathcal{F}$ be a class of representation functions. For each $f\in\mathcal{F}$, let $\overline{\Lnorm}_\Omega(f), \Lnorm^*_\Omega(f)$ be defined in Eqn.~\eqref{eq:Lbar_Omega}. Then, for any $\Delta\in(0,1)$ we have:
    \begin{align}
        \sup_{f\in\mathcal{F}}\left|\overline{\Lnorm}_\Omega(f)-\Lnorm^*_\Omega(f)\right| &\le \mathcal{B}\left[\frac{4R}{n-R}+\frac{8R}{3N}\ln2R/\Delta + \sqrt{\frac{8(R-1)\ln 2R/\Delta}{3N}}\right],
    \end{align}
    \noindent with probability of at least $1-\Delta$.
\end{lemma}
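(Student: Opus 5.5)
Since both $\overline{\Lnorm}_\Omega$ and $\Lnorm^*_\Omega$ are mixtures of the same class-wise risks $\Lnorm^r_\Omega$, with weights $\overline{\omega}_r$ and $\omega_r^*$ respectively, I would first reduce the claim to a total-variation bound between the two weight vectors. Exactly as in the proof of Lemma \ref{lem:supporting_lemma_3},
\begin{align*}
    \sup_{f\in\mathcal{F}}\left|\overline{\Lnorm}_\Omega(f)-\Lnorm^*_\Omega(f)\right|\le \mathcal{B}\sum_{r=1}^R\left|\overline{\omega}_r-\omega_r^*\right|.
\end{align*}
Since both weight vectors sum to one, the right-hand side has no dependence on $\mathcal{F}$. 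This is why the bound contains no complexity term.

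I would then pass through the empirical proportions $\widehat\rho_r=N_r/N$ using the triangle inequality:
\begin{align*}
    \sum_{r=1}^R\left|\overline{\omega}_r-\omega_r^*\right|\le \sum_{r=1}^R\left|\omega_r^*-\widehat\rho_r\right|+\sum_{r=1}^R\left|\widehat\rho_r-\rho_r\right|+\sum_{r=1}^R\left|\rho_r-\overline{\omega}_r\right|.
\end{align*}
The first term is deterministic. As shown in the proofs of Lemmas \ref{lem:supporting_lemma} and \ref{lem:supporting_lemma_3}, $\widehat\rho_r=\widehat\E_\Pi[\widehat\rho_r^\pi]$ and $\sum_r|\omega_r^\pi-\widehat\rho_r^\pi|\le 2R/(n-R)$ for every $\pi$, so this term is at most $2R/(n-R)$. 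The third term is bounded by $2R/(n-R)$ in the proof of Lemma \ref{lem:supporting_lemma_3}. Together these give the $4R/(n-R)$ term.

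The main obstacle is the middle term, a high-probability $\ell_1$ deviation of the empirical class distribution. Because the target contains $\sqrt{(R-1)/N}$ rather than $R/\sqrt N$, a termwise Hoeffding bound summed over $r$ would be too weak. I would instead apply Bernstein's inequality to each $N_r\sim\mathrm{Bin}(N,\rho_r)$ with variance $N\rho_r(1-\rho_r)$, and take a union bound over the $R$ classes and both tails, which produces the $\ln(2R/\Delta)$ factor. This gives, simultaneously for all $r$ with probability at least $1-\Delta$,
\begin{align*}
    |\widehat\rho_r-\rho_r|\le \sqrt{\frac{2\rho_r(1-\rho_r)\ln(2R/\Delta)}{N}}+\frac{2\ln(2R/\Delta)}{3N}.
\end{align*}
Summing over $r$ and applying Cauchy--Schwarz, $\sum_r\sqrt{\rho_r(1-\rho_r)}\le\sqrt{R\sum_r\rho_r(1-\rho_r)}=\sqrt{R(1-\sum_r\rho_r^2)}\le\sqrt{R-1}$, using $\sum_r\rho_r^2\ge 1/R$. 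The linear terms sum to $\frac{2R}{3N}\ln(2R/\Delta)$. I would then adjust constants in the Bernstein form, or split the bound into tails, to match the stated $\frac{8R}{3N}\ln(2R/\Delta)$ and $\sqrt{8(R-1)\ln(2R/\Delta)/(3N)}$. Multiplying everything by $\mathcal{B}$ completes the proof.
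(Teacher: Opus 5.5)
Your proposal is correct and follows the paper's proof essentially step for step. It has the same reduction to $\mathcal{B}\sum_r|\overline{\omega}_r-\omega_r^*|$, the same triangle inequality through $\widehat\rho_r$ with the two deterministic $2R/(n-R)$ bounds from Lemmas \ref{lem:supporting_lemma} and \ref{lem:supporting_lemma_3}, and the same per-class Bernstein bound with a union bound over the $R$ classes followed by Cauchy--Schwarz. The only differences are cosmetic: your Bernstein constants ($\frac{2}{3N}$ and $2$) are sharper than the paper's $\frac{8}{3N}$ and $\frac{8}{3}$, so the stated bound follows a fortiori, and you reach $\sqrt{R-1}$ via $\sum_r\rho_r^2\ge 1/R$ rather than the paper's $\bigl(\sum_r\rho_r\bigr)^{1/2}\bigl(\sum_r(1-\rho_r)\bigr)^{1/2}$.
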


\begin{proof}
    Similar to the initial analysis of Lemma \ref{lem:supporting_lemma_3}, we have:
    \begin{align*}
        \sup_{f\in\mathcal{F}}\left|\overline{\Lnorm}_\Omega(f)-\Lnorm^*_\Omega(f)\right| &\le \sup_{f\in\mathcal{F}}\left|\sum_{r=1}^R\overline{\omega}_r\Lnorm_\Omega^r(f)-\sum_{r=1}^R\omega_r^*\Lnorm_\Omega^r(f)\right| = \sup_{f\in\mathcal{F}}\left|\sum_{r=1}^R\Lnorm_\Omega^r(f)(\overline{\omega}_r-\omega^*_r)\right| \\
        &\le \sum_{r=1}^R\sup_{f\in\mathcal{F}}\left|\Lnorm_\Omega^r(f)(\overline{\omega}_r-\omega_r^*)\right| \\ &\le \sum_{r=1}^R|\overline{\omega}_r-\omega^*_r|\cdot\sup_{f\in\mathcal{F}}\left|\Lnorm_\Omega^r(f)\right|\\
        &\le \mathcal{B}\sum_{r=1}^R|\overline{\omega}_r-\omega_r^*|.
    \end{align*}
    \noindent Then, for each $r\in[R]$, we have:
    \begin{align*}
        |\overline{\omega}_r-\omega_r^*|  &\le |\omega_r^*-\widehat\rho_r|+|\widehat\rho_r-\rho_r|+|\overline{\omega}_r-\rho_r| \\
        &= \left|\widehat\E_\Pi[\omega_r^\pi] - \widehat\rho_r\right| + \left|\E\left[\widehat{\E}_\Pi[\omega_r^\pi]\right] - \E\left[\widehat\rho_r\right]\right| + |\widehat\rho_r-\rho_r| \\
        &\le \left|\widehat\E_\Pi[\omega_r^\pi] - \widehat\rho_r\right| + \E\left|\widehat\E_\Pi[\omega_r^\pi] - \widehat\rho_r\right| + |\widehat\rho_r-\rho_r|.
    \end{align*}
    \noindent As a result, we have:
    \begin{align*}
      	\sum_{r=1}^R|\overline{\omega}_r-\omega_r^*|&\le \sum_{r=1}^R\left|\widehat\E_\Pi[\omega_r^\pi] - \widehat\rho_r\right| + \sum_{r=1}^R\E\left|\widehat\E_\Pi[\omega_r^\pi] - \widehat\rho_r\right| + \sum_{r=1}^R|\widehat\rho_r-\rho_r| \\ 
      	&\le \frac{2R}{n-R}+\frac{2R}{n-R}+\sum_{r=1}^R|\widehat\rho_r-\rho_r| \\
      	&= \frac{4R}{n-R}+\sum_{r=1}^R|\widehat\rho_r-\rho_r|.
    \end{align*}
    \noindent The semi-last inequality comes from $\sum_{r=1}^R\left|\widehat\E_\Pi[\omega_r^\pi] - \widehat\rho_r\right|\le\frac{2R}{n-R}$, which we proved in Lemma \ref{lem:supporting_lemma} and Lemma \ref{lem:supporting_lemma_3}. For each $r\in[R]$, we can write:
    \begin{align*}
        \widehat\rho_r:=\frac{N_r}{N} = \frac{1}{N}\sum_{j=1}^N B_{r, j},\quad\textup{where } B_{r,j}\sim\mathrm{Bern}(\rho_r),\forall j\in[N].
    \end{align*}
    \noindent Then, by the high-probability Bernstein bound, for all $\delta\in(0,1)$ we have the following inequality with probability of at least $1-\delta$:
    \begin{align*}
        |\widehat\rho_r-\rho_r|&=\left|\frac{1}{N}\sum_{j=1}^N[B_{r,j} - \rho_r]\right| \le \frac{8}{3N}\ln 2/\delta + \sigma_r\sqrt{\frac{8\ln 2/\delta}{3N}},
    \end{align*}
    \noindent where $\sigma_r^2$ is the following mean of second moments:
    \begin{align*}
        \sigma_r^2 &= \frac{1}{N}\sum_{j=1}^N \E\left[(B_{r,j}-\rho_r)^2\right]= \frac{1}{N}\sum_{j=1}^N\mathrm{Var}(B_{r,j}) = \rho_r(1-\rho_r).
    \end{align*}
    \noindent Then, by the union bound, with probability of at least $1-R\delta$, we have:
    \begin{align*}
        \sum_{r=1}^R|\widehat\rho_r-\rho_r| &\le \frac{8R}{3N}\ln 2/\delta + \sqrt{\frac{8\ln 2/\delta}{3N}}\sum_{r=1}^R\sqrt{\rho_r(1-\rho_r)} \\
        &\le \frac{8R}{3N}\ln 2/\delta + \sqrt{\frac{8\ln 2/\delta}{3N}}\sqrt{\left(\sum_{r=1}^R\rho_r\right)\left(\sum_{r=1}^R(1-\rho_r)\right)} \quad\textup{(Cauchy-Schwarz)} \\
        &\le \frac{8R}{3N}\ln 2/\delta + \sqrt{\frac{8(R-1)\ln 2/\delta}{3N}}.
    \end{align*}
    \noindent For a fixed $\Delta\in(0,1)$, by setting $\delta=\Delta/R$, we have:
    \begin{align*}
        \sum_{r=1}^R|\widehat\rho_r-\rho_r| \le \frac{8R}{3N}\ln2R/\Delta + \sqrt{\frac{8(R-1)\ln 2R/\Delta}{3N}},
    \end{align*}
    \noindent with probability of at least $1-\Delta$. As a result:
    \begin{align*}
        \sup_{f\in\mathcal{F}}\left|\overline{\Lnorm}_\Omega(f)-\Lnorm^*_\Omega(f)\right| &\le \mathcal{B}\sum_{r=1}^R|\overline{\omega}_r-\omega_r^*| \le \mathcal{B}\left[\frac{4R}{n-R} + \sum_{r=1}^R|\widehat\rho_r-\rho_r|\right] \\
        &\le \mathcal{B}\left[\frac{4R}{n-R}+\frac{8R}{3N}\ln2R/\Delta + \sqrt{\frac{8(R-1)\ln 2R/\Delta}{3N}}\right],
    \end{align*}
    \noindent with probability of at least $1-\Delta$, as desired.
\end{proof}

\begin{proposition}
    \label{prop:supporting_proposition_1}
    Let $\mathcal{F}$ be a class of representation functions. For each $f\in\mathcal{F}$, let $\Lnorm_\Omega(f)$ be defined in Eqn.~\eqref{eq:lomega} and $\Lnorm^*_\Omega(f)$ be defined in Eqn.~\eqref{eq:Lbar_Omega}. Then, for any $\Delta\in(0,1)$, we have:
    \begin{align}
        \sup_{f\in\mathcal{F}}\left|{\Lnorm}^*_\Omega(f)-\Lnorm_\Omega(f)\right| &\le \mathcal{B}\left[\frac{6R}{n-R}+\frac{8R}{3N}\ln2R/\Delta + \sqrt{\frac{8(R-1)\ln 2R/\Delta}{3N}}\right],
    \end{align}    
    \noindent with probability of at least $1-\Delta$.
\end{proposition}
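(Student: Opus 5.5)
The plan is a simple triangle inequality through the intermediate risk $\overline{\Lnorm}_\Omega(f)$, which sits in the middle of the chain \eqref{eq:proof_strategy}. For every $f\in\mathcal{F}$ we write
\begin{align*}
    \left|\Lnorm^*_\Omega(f)-\Lnorm_\Omega(f)\right| \le \left|\Lnorm^*_\Omega(f)-\overline{\Lnorm}_\Omega(f)\right| + \left|\overline{\Lnorm}_\Omega(f)-\Lnorm_\Omega(f)\right|.
\end{align*}
We then take the supremum over $\mathcal{F}$ on both sides. The supremum of a sum is at most the sum of the suprema, so the uniform deviation splits into two uniform deviations, and both have already been controlled.

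The second term is deterministic. By Lemma \ref{lem:supporting_lemma_3} it is at most $\frac{2\mathcal{B}R}{n-R}$, with no probabilistic event involved.

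The first term is the only random one. Lemma \ref{lem:supporting_lemma_4}, applied with the same $\Delta\in(0,1)$, bounds it by
\begin{align*}
    \mathcal{B}\left[\frac{4R}{n-R}+\frac{8R}{3N}\ln2R/\Delta + \sqrt{\frac{8(R-1)\ln 2R/\Delta}{3N}}\right],
\end{align*}
and this holds with probability at least $1-\Delta$. Because the other term needs no event, no union bound is required and the confidence level $1-\Delta$ carries over unchanged.

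Adding the two bounds, $\frac{4R}{n-R}+\frac{2R}{n-R}=\frac{6R}{n-R}$, and the remaining terms stay as they are. This gives exactly the bound in the statement, with probability at least $1-\Delta$.

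There is no substantial obstacle. The one point to check is that both lemmas refer to the same quantities $\overline{\omega}_r$ and $\omega_r^*$, and that Lemma \ref{lem:supporting_lemma_3} holds surely rather than on some event. Both facts are clear from how those lemmas are stated and proved.
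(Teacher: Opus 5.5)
Your proposal is correct and matches the paper's proof: a triangle inequality through $\overline{\Lnorm}_\Omega$. Lemma~\ref{lem:supporting_lemma_3} supplies the deterministic $\frac{2\mathcal{B}R}{n-R}$ term and Lemma~\ref{lem:supporting_lemma_4} supplies the term holding with probability $1-\Delta$, so $\frac{4R}{n-R}+\frac{2R}{n-R}=\frac{6R}{n-R}$ and no union bound is needed.
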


\begin{proof}
    This is a simple consequence of Lemma \ref{lem:supporting_lemma_3} and Lemma \ref{lem:supporting_lemma_4}. For any $\Delta\in(0,1)$, we have:
    \begin{align*}
        \sup_{f\in\mathcal{F}}\left|{\Lnorm}^*_\Omega(f)-\Lnorm_\Omega(f)\right| 
        &\le \underbrace{\sup_{f\in\mathcal{F}}\left|{\Lnorm}^*_\Omega(f)-\overline{\Lnorm}_\Omega(f)\right|}_{\textup{Lemma \ref{lem:supporting_lemma_4}}} + \underbrace{\sup_{f\in \mathcal{F}}\left|\overline{\Lnorm}_\Omega(f)-\Lnorm_\Omega(f)\right|}_{\textup{Lemma \ref{lem:supporting_lemma_3}}} \\
        &\le \mathcal{B}\left[\frac{4R}{n-R}+\frac{8R}{3N}\ln2R/\Delta + \sqrt{\frac{8(R-1)\ln 2R/\Delta}{3N}}\right] + \frac{2\mathcal{B}R}{n-R} \\
        &= \mathcal{B}\left[\frac{6R}{n-R}+\frac{8R}{3N}\ln2R/\Delta + \sqrt{\frac{8(R-1)\ln 2R/\Delta}{3N}}\right],
    \end{align*}
    \noindent with probability of at least $1-\Delta$, as desired.
\end{proof}

\subsection{Concentration of \texorpdfstring{$\bar U_\Omega(f)$}{} to \texorpdfstring{$\Lnorm_\Omega^*(f)$}{}}
\begin{lemma}
    \label{lem:supporting_lemma_5}
    Let $\mathcal{F}$ be a class of representation functions. For each $f\in\mathcal{F}$, let $\Lnorm^*_\Omega(f)$ be defined in Eqn.~\eqref{eq:Lbar_Omega} and $\bar U_\Omega(f)$ be defined in Eqn.~\eqref{eq:bar_Uomega_Ulambda}. Then, for any convex $\varphi:\R\to\R$, we have:
    \begin{align}
        \E\varphi\left(\sup_{f\in\mathcal{F}}\left|\bar U_\Omega(f)-\Lnorm^*_\Omega(f)\right|\right) \le \E\varphi\left(\sup_{f\in\mathcal{F}}\left|\sum_{r=1}^R\omega_r U_{\Omega_r^\mathrm{id}}(f)-\widehat{\Lnorm}_\Omega(f)\right|\right).
    \end{align}
\end{lemma}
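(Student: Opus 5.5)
The plan is the classical Hoeffding averaging-over-permutations argument combined with Jensen's inequality. First I will rewrite both quantities as averages over $\Pi$. By definition, $\bar U_\Omega(f)=\widehat\E_\Pi\big[\sum_{r}\omega_r^\pi U_{\Omega_r^\pi}(f)\big]$. Since $\omega_r^*=\widehat\E_\Pi[\omega_r^\pi]$ and $\Lnorm^r_\Omega(f)$ is deterministic, we also have $\Lnorm^*_\Omega(f)=\widehat\E_\Pi[\widehat\Lnorm_\Omega(f|\pi)]$. Hence
\begin{align*}
\bar U_\Omega(f)-\Lnorm^*_\Omega(f)=\widehat\E_\Pi\Big[\sum_{r=1}^R\omega_r^\pi U_{\Omega_r^\pi}(f)-\widehat\Lnorm_\Omega(f|\pi)\Big].
\end{align*}

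Next I pull the supremum and absolute value inside the average over $\Pi$. Using $\sup_f|\widehat\E_\Pi[\cdot]|\le\widehat\E_\Pi[\sup_f|\cdot|]$ and assuming $\varphi$ nondecreasing (the standard setting, e.g.\ $\varphi(x)=e^{\lambda x}$ or $x^p$; otherwise I would state this as an implicit assumption), I get
\begin{align*}
\varphi\Big(\sup_{f}|\bar U_\Omega(f)-\Lnorm^*_\Omega(f)|\Big)\le\varphi\Big(\widehat\E_\Pi\Big[\sup_f\Big|\sum_{r}\omega_r^\pi U_{\Omega_r^\pi}(f)-\widehat\Lnorm_\Omega(f|\pi)\Big|\Big]\Big).
\end{align*}
Convexity of $\varphi$ and Jensen's inequality over the uniform measure on $\Pi$ then bound the right-hand side by
\begin{align*}
\widehat\E_\Pi\Big[\varphi\Big(\sup_f\Big|\sum_r\omega_r^\pi U_{\Omega_r^\pi}(f)-\widehat\Lnorm_\Omega(f|\pi)\Big|\Big)\Big].
\end{align*}

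I then take expectations over $S\sim\bar{\mathcal D}^{\otimes N}$ and swap $\E$ with the finite average over $\Pi$. The key observation is exchangeability: for each fixed $\pi$, $S_\pi=(X_{\pi(1)},\dots,X_{\pi(N)})$ has the same joint law as $S$ because the $X_j$ are i.i.d. The random variable $\sup_f|\sum_r\omega_r^\pi U_{\Omega_r^\pi}(f)-\widehat\Lnorm_\Omega(f|\pi)|$ is the same measurable functional applied to $S_\pi$ that the $\pi=\mathrm{id}$ quantity is applied to $S$. This holds because $n_r^\pi$, $\omega_r^\pi$ and the tuple collections $\Omega_r^\pi$ are all determined by the ordered sample $S_\pi$ via the split at index $n$, with negatives drawn from $S\setminus S^{(n)}_\pi$, and because $\Lnorm^r_\Omega$ does not depend on the sample. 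So every summand has the same expectation as the $\mathrm{id}$ term, and the average over $\Pi$ collapses to $\E\varphi(\sup_f|\sum_r\omega_rU_{\Omega_r^{\mathrm{id}}}(f)-\widehat\Lnorm_\Omega(f)|)$, which is the claim.

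The main obstacle is the bookkeeping in this last step. One must check carefully that the map from the ordered sample to the per-permutation quantity is genuinely identical across $\pi$. One must also handle the measurability of the supremum over $\mathcal F$; I will assume it is standard, e.g.\ countable $\mathcal F$ or a separable version.
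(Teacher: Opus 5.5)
Your proposal matches the paper's proof step for step. Both rewrite $\bar U_\Omega(f)-\Lnorm^*_\Omega(f)$ as an average over $\Pi$, move the supremum inside, apply Jensen over the uniform measure on $\Pi$, and collapse the average to the $\pi=\mathrm{id}$ term because $S_\pi$ has the same law as $S$. Your remark that moving the supremum inside needs $\varphi$ to be non-decreasing is correct: the paper uses this implicitly, and it holds for the $\varphi(x)=e^{\lambda x}$ with $\lambda>0$ to which the lemma is later applied.
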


\begin{proof}
    For notational brevity, for every $\pi\in\Pi$, we denote $\widehat U_\Omega(f|\pi)$ as:
    \begin{align*}
        \widehat U_\Omega(f|\pi) &= \sum_{r=1}^R \omega_r^\pi U_{\Omega_r^\pi}(f).
    \end{align*}
    \noindent Essentially, $\bar U_\Omega(f) :=\widehat\E_\Pi\left[\widehat U_\Omega(f|\pi)\right]$. Then, we have:
    \begin{align*}
        \E\varphi\Big(\sup_{f\in\mathcal{F}}\Big|\bar U_\Omega(f) -\Lnorm^*_\Omega(f)\Big|\Big) 
        &= \E\varphi\left(\sup_{f\in\mathcal{F}}\left|\widehat\E_\Pi\left[\widehat U_\Omega(f|\pi)\right]-\widehat\E_\Pi\left[\widehat{\Lnorm}_\Omega(f|\pi)\right]\right|\right) \\
        &= \E\varphi\left(\sup_{f\in\mathcal{F}}\left|\widehat\E_\Pi\left[\widehat U_\Omega(f|\pi)-\widehat{\Lnorm}_\Omega(f|\pi)\right]\right|\right) \\
        &\le \E\varphi\left(\widehat\E_\Pi\left[\sup_{f\in\mathcal{F}}\left|\widehat{U}_\Omega(f|\pi)-\widehat{\Lnorm}_\Omega(f|\pi)\right|\right]\right) \quad \left(\sup\left|\widehat\E_\Pi[ \cdot ]\right| \le \widehat\E_\Pi\left[\sup|  \cdot  |\right]\right) \\
        &\le \widehat\E_\Pi\left[\E\varphi\left(\sup_{f\in\mathcal{F}}\left|\widehat{U}_\Omega(f|\pi)-\widehat{\Lnorm}_\Omega(f|\pi)\right|\right)\right] \quad\textup{(Jensen's Inequality)}.
    \end{align*}
    \noindent For any two distinct permutations $\pi_1,\pi_2\in\Pi$, we always have:
    \begin{align*}
        S_{\pi_1}^{(n)} \overset{d}{=} S_{\pi_2}^{(n)}, \quad\textup{and}\quad S\setminus S_{\pi_1}^{(n)} \overset{d}{=}S\setminus S_{\pi_2}^{(n)}.
    \end{align*}
    \noindent Where $X\overset{d}{=}Y$ means ``$X$ is identically distributed as $Y$". Therefore:
    \begin{align*}
        \forall\pi_1,\pi_2\in\Pi: \widehat U_\Omega(f|\pi_1)\overset{d}{=} \widehat U_\Omega(f|\pi_2)\textup{ and } \widehat\Lnorm_\Omega(f|\pi_1)\overset{d}{=} \widehat \Lnorm_\Omega(f|\pi_2).
    \end{align*}
    \noindent In other words, for any $\pi_1,\pi_2\in\Pi$:
    \begin{align*}
        \E\varphi\left(\sup_{f\in\mathcal{F}}\left|\widehat{U}_\Omega(f|\pi_1)-\widehat{\Lnorm}_\Omega(f|\pi_1)\right|\right)=\E\varphi\left(\sup_{f\in\mathcal{F}}\left|\widehat{U}_\Omega(f|\pi_2)-\widehat{\Lnorm}_\Omega(f|\pi_2)\right|\right).
    \end{align*}
    \noindent Therefore, we have:
    \begin{align*}
        \E\varphi\Big(\sup_{f\in\mathcal{F}}\Big|\bar U_\Omega(f) -\Lnorm^*_\Omega(f)\Big|\Big) &\le \widehat\E_\Pi\left[\E\varphi\left(\sup_{f\in\mathcal{F}}\left|\widehat{U}_\Omega(f|\pi)-\widehat{\Lnorm}_\Omega(f|\pi)\right|\right)\right] \\ &= \E\varphi\left(\sup_{f\in\mathcal{F}}\left|\widehat{U}_\Omega(f|\mathrm{id})-\widehat{\Lnorm}_\Omega(f|\mathrm{id})\right|\right) \\
        &= \E\varphi\left(\sup_{f\in\mathcal{F}}\left|\sum_{r=1}^R\omega_r U_{\Omega_r^\mathrm{id}}(f)-\widehat{\Lnorm}_\Omega(f)\right|\right),
    \end{align*}
    \noindent as desired.
\end{proof}

\noindent From the above lemma, we can reduce the concentration analysis problem of $\bar U_\Omega(f)$ around $\Lnorm^*_\Omega(f)$ to analyzing the concentration of $\sum_{r=1}^R\omega_rU_{\Omega_r^\mathrm{id}}(f)$ around $\widehat\Lnorm_\Omega(f)$. From here on, we use the notation $S^{(n)}=\set{X_j}_{j=1}^n$ and $\bar S^{(n)} = \set{Z_j}_{j=1}^{N-n}$ where $Z_j=X_{n+j}$ for all $1\le j\le N-n$.

\noindent For notational brevity, we first denote:
\begin{align}
    \label{eq:Uhat_Omega}
    \widehat U_\Omega(f) := \sum_{r=1}^R\omega_r U_{\Omega_r^\mathrm{id}}(f).
\end{align}
Then, we note that $\widehat U_\Omega(f)$ can also be re-written as follows:
\begin{align}
    \label{eq:Uhat_Omega_alternative_form} 
    \widehat U_\Omega(f) &= \frac{1}{n!(N-n)!}\sum_{\substack{\xi\in\Pi[n]\\ \zeta\in\Pi[N-n]}}\frac{1}{\bar n}\sum_{r=1}^R\sum_{j=1}^{\bar n_r}\ell_{\phi,f}\left(X_{2j-1}^{r,\xi},X_{2j}^{r, \xi},\set{Z^\zeta_{k(j+ n_r^*)+i}}_{i=1}^k \right),
\end{align}
\noindent where $\bar n_r=\lfloor n_r/2\rfloor$, $\bar n=\sum_{r=1}^R\bar n_r$, $n_r^*=\sum_{q=1}^{r-1}\bar n_r$ and $\Pi[n], \Pi[N-n]$ denotes the sets of all bijections $[n]\to[n]$ and $[N-n]\to[N-n]$, respectively. We also use the following notations for brevity:
\begin{enumerate}
    \item $X^r_u$ denotes the $u^\mathrm{th}$ data-point that belongs to class $r\in[R]$ in $\set{X_{j}}_{j=1}^n$. 
    \item $X^{r, \xi}_u$ denotes the $u^\mathrm{th}$ data-point that belongs to class $r\in[R]$ in $\set{X_{\xi(j)}}_{j=1}^n$. 
    \item $Z^{\zeta}_{u}=Z_{\xi(u)}$, i.e., $\big\{Z_{ku+i}^\zeta\big\}_{i=1}^k$ is the $u^\mathrm{th}$ block of $k$-elements extracted in order from the shuffled dataset $\set{Z_{\zeta(j)}}_{j=1}^{N-n}$.
\end{enumerate}

\noindent Furthermore, for the proofs of subsequent results, we also define the following dataset of tuples:
\begin{align}
    \label{eq:independent_tuples_dataset} 
    S_\mathrm{tup} := \bigcup_{r=1}^R\set{\left(X_{2j-1}^r, X_{2j}^r, \set{Z_{k(j+n_r^*)+i}}_{i=1}^k\right)}_{j=1}^{\bar n_r}.
\end{align}

\begin{lemma}
    \label{lem:supporting_lemma_6}
    Let $\mathcal{F}$ be a class of representation functions. For each $f\in\mathcal{F}$, let  $\widehat U_\Omega(f)$ be the estimator defined in Eqn.~\eqref{eq:Uhat_Omega} (or Eqn.~\eqref{eq:Uhat_Omega_alternative_form}) and $\widehat{\Lnorm}_\Omega(f)$ be defined in Eqn.~\eqref{eq:Lhat_Omega}. Then, for any convex $\varphi:\R\to\R$, we have:
    \begin{align}
        \E_S\varphi\left(\sup_{f\in\mathcal{F}}\left|\widehat U_\Omega(f)-\widehat{\Lnorm}_\Omega(f)\right|\right) &\le \E_{S_\mathrm{tup}, \Rad_{\bar n}}\varphi\left(2\mathcal{R}_\mathcal{H}^{S_\mathrm{tup}, \Rad_{\bar n}}\right),
    \end{align}
    \noindent where $\Rad_{\bar n}=\set{\sigma_j}_{j=1}^{\bar n}$ is a sequence of independent Rademacher variables, $\mathcal{H}$ is defined in Eqn.~\eqref{eq:loss_class} and the random variable $\mathcal{R}_\mathcal{H}^{S_\mathrm{tup}, \Rad_{\bar n}}$ is defined as follows:
    \begin{align}
        \mathcal{R}_\mathcal{H}^{S_\mathrm{tup}, \Rad_{\bar n}} :=\sup_{f\in\mathcal{F}}\left|\frac{1}{\bar n}\sum_{r=1}^R\sum_{j=1}^{\bar n_r}\sigma_{j+n_r^*}\ell_{\phi, f}\left(X_{2j-1}^r, X_{2j}^r,\set{Z_{k(j+n_r^*)+i}}_{i=1}^k\right)\right|.
    \end{align}
\end{lemma}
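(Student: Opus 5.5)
The plan is the standard Hoeffding-type averaging followed by symmetrization, carried out conditionally on the class counts $\{n_r\}_{r=1}^R$ of $S^{(n)}$.

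\textbf{Step 1: conditioning.} I will work conditionally on the labels of all points, or at least on $\{n_r\}$, and hence on $\omega_r$, $\bar n_r$, $\bar n$ and $n_r^*$. Given these counts, the class-$r$ points of $S^{(n)}$ are i.i.d. from $\mathcal{D}_r$. The $Z_j$'s are i.i.d. from $\mathcal{\bar D}$ and independent of $S^{(n)}$, because the split is made at a fixed index. Consequently every tuple in $S_\mathrm{tup}$ has mutually independent entries, with the correct marginals. The tuples are also independent of one another, since they use disjoint data points; the $Z$-blocks fit because $k\bar n\le kn/2\le N-n$ by the choice $n=2\lfloor N/(k+2)\rfloor$. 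Each class-$r$ tuple therefore has conditional mean $\Lnorm_\Omega^r(f)$. Since $\omega_r=\bar n_r/\bar n$, this gives
\[
\widehat\Lnorm_\Omega(f)=\E\Big[\tfrac{1}{\bar n}\textstyle\sum_r\sum_{j\le\bar n_r}\ell_{\phi,f}(t_{r,j})\,\Big|\,\{n_r\}\Big].
\]

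\textbf{Step 2: averaging representation.} By Eqn.~\eqref{eq:Uhat_Omega_alternative_form}, $\widehat U_\Omega(f)$ is the average over $(\xi,\zeta)\in\Pi[n]\times\Pi[N-n]$ of the block statistic $V_{\xi,\zeta}(f)$ built from disjoint tuples. For each fixed $(\xi,\zeta)$, $V_{\xi,\zeta}$ has the same conditional law as $V_{\mathrm{id},\mathrm{id}}$, which is the average over $S_\mathrm{tup}$; this holds because permuting within exchangeable i.i.d. blocks preserves the law. I then bound
\[
\sup_f\big|\widehat U_\Omega-\widehat\Lnorm_\Omega\big|\le \widehat\E_{\xi,\zeta}\sup_f\big|V_{\xi,\zeta}-\widehat\Lnorm_\Omega\big|,
\]
apply Jensen to the convex $\varphi$, and use exchangeability. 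This reduces the problem to
\[
\E\varphi\Big(\sup_f\big|\tfrac1{\bar n}\textstyle\sum_{t\in S_\mathrm{tup}}(\ell_{\phi,f}(t)-\E\ell_{\phi,f}(t))\big|\Big).
\]

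\textbf{Step 3: symmetrization.} This is standard symmetrization for independent but non-identically distributed terms. I introduce a ghost copy $S'_\mathrm{tup}$, drawn with the same class structure. I pull the conditional expectation out of the supremum, which is allowed since $\varphi$ is convex and nondecreasing in the relevant argument (if $\varphi$ is not monotone, I will note that the intended use is $\varphi(x)=e^{\lambda x}$ or the identity and assume nondecreasing). Next I insert Rademacher signs, which is justified because each pair $(t,t')$ is exchangeable. Finally I split the supremum with the triangle inequality and apply convexity: $\varphi\big((a+b)\big)\le\frac12\varphi(2a)+\frac12\varphi(2b)$. This yields $\E\varphi(2\mathcal{R}_\mathcal{H}^{S_\mathrm{tup},\Rad_{\bar n}})$. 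The last step is to take the outer expectation over $\{n_r\}$.

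\textbf{Main obstacle.} I expect Step 2 to be the main difficulty: justifying that each permuted block statistic has the same conditional joint law, together with the centering term. The key point is that $\widehat\Lnorm_\Omega$ depends only on the counts, which are invariant under $\xi$ and $\zeta$. This is exactly why conditioning on $\{n_r\}$ first is essential.
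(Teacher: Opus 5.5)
Your proposal is correct and takes essentially the same route as the paper. You write $\widehat U_\Omega(f)$ as an average over $(\xi,\zeta)\in\Pi[n]\times\Pi[N-n]$ of block statistics and pull the supremum inside the average. You then apply Jensen together with the equality in law of $S_\mathrm{tup}^{\xi,\zeta}$ and $S_\mathrm{tup}$, jointly with the count-dependent centering $\widehat\Lnorm_\Omega(f)$, and symmetrize conditionally on ${\bf n}$. The paper reduces before conditioning while you condition first, which makes no difference.

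Your remark that $\varphi$ must also be nondecreasing is correct; note it is already needed when you pull the supremum inside the average, not only in the symmetrization step. Also condition only on the labels (or counts) of $S^{(n)}$, not on all labels. Otherwise the $Z_j$ are no longer $\mathcal{\bar D}$-distributed and the conditional mean of each tuple is not $\Lnorm_\Omega^r(f)$.
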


\begin{proof}
    For brevity, we denote $C=n!(N-n)!$ and $\Pi_{n, N}=\Pi[n]\times\Pi[N-n]$. Then, we have:    
    \begin{align*}
        &\E_S\varphi\left(\sup_{f\in\mathcal{F}}\left|\widehat U_\Omega(f)-\widehat{\Lnorm}_\Omega(f)\right|\right) \\ 
        &= \E_S\varphi\left(\sup_{f\in\mathcal{F}}\left|\frac{1}{C}\sum_{\xi, \zeta\in\Pi_{n,N}}\frac{1}{\bar n}\sum_{r=1}^R\sum_{j=1}^{\bar n_r}\ell_{\phi,f}\left(X_{2j-1}^{r,\xi},X_{2j}^{r, \xi},\set{Z^\zeta_{k(j+ n_r^*)+i}}_{i=1}^k \right)-\widehat{\Lnorm}_\Omega(f)\right|\right) \\
        &\le \E_S\varphi\left(\frac{1}{C}\sum_{\xi,\zeta\in\Pi_{n,N}}\sup_{f\in\mathcal{F}}\left|\frac{1}{\bar n}\sum_{r=1}^R\sum_{j=1}^{\bar n_r}\ell_{\phi,f}\left(X_{2j-1}^{r,\xi},X_{2j}^{r, \xi},\set{Z^\zeta_{k(j+ n_r^*)+i}}_{i=1}^k \right)-\widehat{\Lnorm}_\Omega(f)\right|\right) \\
        &\le \frac{1}{C}\sum_{\xi,\zeta\in\Pi_{n,N}}\E_S\varphi\left(\sup_{f\in\mathcal{F}}\left|\frac{1}{\bar n}\sum_{r=1}^R\sum_{j=1}^{\bar n_r}\ell_{\phi,f}\left(X_{2j-1}^{r,\xi},X_{2j}^{r, \xi},\set{Z^\zeta_{k(j+ n_r^*)+i}}_{i=1}^k \right) -\widehat{\Lnorm}_\Omega(f)\right|\right)\quad\textup{(Jensen's)}.
    \end{align*}
    \noindent For each $\xi\in\Pi[n]$ and $\zeta\in\Pi[N-n]$, we define $S_\mathrm{tup}^{\xi,\zeta}$ as the set of tuples:
    \begin{align}
        S_\mathrm{tup}^{\xi,\zeta} &:= \bigcup_{r=1}^R\set{\left(X_{2j-1}^{r,\xi}, X_{2j}^{r,\xi}, \set{Z^\zeta_{k(j+n_r^*)+i}}_{i=1}^k\right)}_{j=1}^{\bar n_r}, \\
        \textup{and } \Psi\left(S_\mathrm{tup}^{\xi,\zeta}\right) &= \frac{1}{\bar n}\sum_{r=1}^R\sum_{j=1}^{\bar n_r}\ell_{\phi,f}\left(X_{2j-1}^{r,\xi},X_{2j}^{r, \xi},\set{Z^\zeta_{k(j+ n_r^*)+i}}_{i=1}^k \right).
    \end{align}
    \noindent Then, for any $\xi_1,\xi_2\in\Pi[n]$ and $\zeta_1,\zeta_2\in\Pi[N-n]$, we will always have $S_\mathrm{tup}^{\xi_1,\zeta_1}\overset{d}{=}S_\mathrm{tup}^{\xi_2,\zeta_2}$. Therefore, we have:
    \begin{align*}
        \E_S\left[\Psi\left(S_\mathrm{tup}^{\xi_1,\zeta_1}\right)\right] = \E_S\left[\Psi\left(S_\mathrm{tup}^{\xi_2,\zeta_2}\right)\right].
    \end{align*}
    \noindent Therefore:
    \begin{align*}
        \E_S\varphi\left(\sup_{f\in\mathcal{F}}\left|\widehat U_\Omega(f)-\widehat{\Lnorm}_\Omega(f)\right|\right) &\le \frac{1}{C}\sum_{\xi,\zeta\in\Pi_{n,N}}\E_S\varphi\left(\sup_{f\in\mathcal{F}}\left|\Psi\left(S_\mathrm{tup}^{\xi, \zeta}\right)-\widehat{\Lnorm}_\Omega(f)\right|\right) \\
        &= \E_{S_\mathrm{tup}}\varphi\left(\sup_{f\in\mathcal{F}}\left|\Psi\left(S_\mathrm{tup}\right)-\widehat{\Lnorm}_\Omega(f)\right|\right).
    \end{align*}
    \noindent Now, for each $r\in[R]$, recall that $n_r=|S_r\cap S^{(n)}|$ and $n_r\sim\mathrm{Bin}(n,\rho_r)$. Let ${\bf n}=\set{n_r}_{1\le r\le R}$ be the vector of class-wise sample sizes in the first partition $S^{(n)}$. We observe that:
    \begin{align*}
        \E_{S_\mathrm{tup}|{\bf n}}\left[\Psi\left(S_\mathrm{tup}\right)\right] &= \E_{S_\mathrm{tup}|{\bf n}}\left[\frac{1}{\bar n}\sum_{r=1}^R\sum_{j=1}^{\bar n_r}\ell_{\phi,f}\left(X_{2j-1}^{r,\xi},X_{2j}^{r, \xi},\set{Z^\zeta_{k(j+ n_r^*)+i}}_{i=1}^k \right)\right] \\
        &= \frac{1}{\bar n}\sum_{r=1}^R\sum_{j=1}^{\bar n_r}\E_{S_\mathrm{tup}|{\bf n}}\left[\ell_{\phi,f}\left(X_{2j-1}^{r,\xi},X_{2j}^{r, \xi},\set{Z^\zeta_{k(j+ n_r^*)+i}}_{i=1}^k \right)\right] \\
        &= \sum_{r=1}^R\frac{\bar n_r}{\bar n} \Lnorm_\Omega^r(f) 
        = \sum_{r=1}^R\omega_r\Lnorm_\Omega^r(f) \\
        &= \widehat{\Lnorm}_\Omega(f).
    \end{align*}
    \noindent In other words, given the sample sizes $\bf n$, $\Psi(S_\mathrm{tup})$ has expectation $\widehat\Lnorm_\Omega(f)$. Therefore, we can apply the symmetrization trick (given ${\bf n}$) here. Specifically:
    \begin{align*}
        &\E_S\varphi\left(\sup_{f\in\mathcal{F}}\left|\widehat U_\Omega(f)-\widehat{\Lnorm}_\Omega(f)\right|\right) =\E_{\bf n}\E_{S|\bf n}\varphi\left(\sup_{f\in\mathcal{F}}\left|\widehat U_\Omega(f)-\widehat{\Lnorm}_\Omega(f)\right|\right) \\
        &\le \E_{\bf n}\E_{S_\mathrm{tup}|\bf n}\varphi\left(\sup_{f\in\mathcal{F}}\left|\Psi\left(S_\mathrm{tup}\right)-\widehat{\Lnorm}_\Omega(f)\right|\right) \\
        &\le \E_{\bf n}\E_{S_\mathrm{tup}, \Rad_{\bar n}|\bf n}\varphi\left(\sup_{f\in\mathcal{F}}\left|\frac{1}{\bar n}\sum_{r=1}^R\sum_{j=1}^{\bar n_r}\sigma_{j+n_r^*}\ell_{\phi, f}\left(X_{2j-1}^r, X_{2j}^r,\set{Z_{k(j+n_r^*)+i}}_{i=1}^k\right)\right|\right)\quad\textup{(Lemma \ref{lem:symmetrization_inequality})} \\
        &= \E_{S_\mathrm{tup}, \Rad_{\bar n}}\varphi\left(\sup_{f\in\mathcal{F}}\left|\frac{1}{\bar n}\sum_{r=1}^R\sum_{j=1}^{\bar n_r}\sigma_{j+n_r^*}\ell_{\phi, f}\left(X_{2j-1}^r, X_{2j}^r,\set{Z_{k(j+n_r^*)+i}}_{i=1}^k\right)\right|\right) \\
        &\le \E_{S_\mathrm{tup}, \Rad_{\bar n}}\varphi\left(2\mathcal{R}_\mathcal{H}^{S_\mathrm{tup}, \Rad_{\bar n}}\right),
    \end{align*}
    \noindent as desired.
\end{proof}

\begin{proposition}
    \label{prop:supporting_proposition_2}
    Let $\mathcal{F}$ be a class of representation functions. For each $f\in\mathcal{F}$, let  $\bar U_\Omega(f)$ be defined in Eqn.~\eqref{eq:bar_Uomega_Ulambda} and ${\Lnorm}^*_\Omega(f)$ be defined in Eqn.~\eqref{eq:Lbar_Omega}. Then, for any $\Delta\in(0,1)$, we have:
    \begin{align}
        \sup_{f\in\mathcal{F}}\left|\bar U_\Omega(f)-\Lnorm^*_\Omega(f)\right| \le 2\RC_{\mu_*}(\mathcal{H}) + 8\mathcal{B}\sqrt{\frac{\ln 1/\Delta}{n-R}},
    \end{align}
    \noindent with probability of at least $1-\Delta$ where $\mu_*$ is a distribution of tuples set defined as:
    \begin{align}
        \label{eq:mu_star}
        \mu_* &:= \arg\max_{\mu\in\mathcal{U}}\RC_{\mu}(\mathcal{H}), \\
        \mathcal{U}=\Bigg\{
            \bigotimes_{r=1}^R\Big[\mathcal{D}_r^{\otimes 2}\otimes\mathcal{\bar D}^{\otimes k}\Big]^{\otimes\lfloor q_r/2\rfloor} &: q_r\in \mathbb{Z}_{\ge0}, \forall r\in[R]\textup{ and } \sum_{r=1}^Rq_r = n
        \Bigg\}.
    \end{align}
\end{proposition}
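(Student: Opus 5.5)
The plan is to combine Lemma \ref{lem:supporting_lemma_5} and Lemma \ref{lem:supporting_lemma_6} with a moment-generating-function version of McDiarmid's inequality. Set $Z:=\sup_{f\in\mathcal{F}}|\bar U_\Omega(f)-\Lnorm^*_\Omega(f)|$. Chaining the two lemmas with the convex choice $\varphi(x)=e^{\lambda x}$, $\lambda>0$, gives
\begin{align*}
\E e^{\lambda Z}\le \E_{S_\mathrm{tup},\Rad_{\bar n}}\exp\left(2\lambda\mathcal{R}_\mathcal{H}^{S_\mathrm{tup},\Rad_{\bar n}}\right).
\end{align*}
This reduces the problem to a Chernoff bound for a Rademacher supremum over the tuple dataset $S_\mathrm{tup}$. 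The advantage is that, conditionally on the class counts ${\bf n}=\{n_r\}_{r\in[R]}$, $S_\mathrm{tup}$ consists of $\bar n$ \emph{independent} tuples: each tuple uses disjoint data points, anchor-positive pairs come from $S^{(n)}$, and negatives come from the disjoint block $\bar S^{(n)}$. Moreover, conditionally on ${\bf n}$, the law of $S_\mathrm{tup}$ is exactly an element of $\mathcal{U}$ with $q_r=n_r$.

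The first step is to work conditionally on ${\bf n}$. The map $(S_\mathrm{tup},\Rad_{\bar n})\mapsto \mathcal{R}_\mathcal{H}^{S_\mathrm{tup},\Rad_{\bar n}}$ has bounded differences $2\mathcal{B}/\bar n$ in each of the $\bar n$ (tuple, sign) coordinates. The exponential form of McDiarmid therefore gives
\begin{align*}
\E\left[e^{2\lambda\mathcal{R}}\,\middle|\,{\bf n}\right]\le \exp\left(2\lambda\,\E[\mathcal{R}\mid{\bf n}]+\frac{\lambda^2\cdot 4\cdot 4\mathcal{B}^2}{8\bar n}\right).
\end{align*}
Here $\E[\mathcal{R}\mid{\bf n}]=\RC_{\mu_{\bf n}}(\mathcal{H})$ for some $\mu_{\bf n}\in\mathcal{U}$, and by definition of $\mu_*$ this is at most $\RC_{\mu_*}(\mathcal{H})$. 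For the variance term I will use the deterministic bound $\bar n=\sum_r\lfloor n_r/2\rfloor\ge (n-R)/2$, already used in Proposition \ref{prop:unbiasedness_aux_estimators}. The conditional bound then becomes uniform in ${\bf n}$, and taking expectation over ${\bf n}$ yields
\begin{align*}
\E e^{\lambda Z}\le \exp\left(2\lambda\RC_{\mu_*}(\mathcal{H})+\frac{4\lambda^2\mathcal{B}^2}{n-R}\right).
\end{align*}

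The last step is Markov's inequality, which gives $\P(Z\ge 2\RC_{\mu_*}(\mathcal{H})+t)\le\exp\left(-\lambda t+4\lambda^2\mathcal{B}^2/(n-R)\right)$. Optimizing over $\lambda$ gives the tail $\exp(-t^2(n-R)/(16\mathcal{B}^2))$. Setting this equal to $\Delta$ gives $t=4\mathcal{B}\sqrt{\ln(1/\Delta)/(n-R)}$, which is within the stated $8\mathcal{B}\sqrt{\ln(1/\Delta)/(n-R)}$; the slack absorbs looser constants if the factor-$2$ bookkeeping in the symmetrization step comes out worse.

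The main obstacle is that we cannot apply McDiarmid directly to $Z$ as a function of $S$. The permutation average makes each $X_j$ influence many terms, and the weights $\omega_r^\pi$ depend on the whole sample. This is exactly why I route through the MGF with the convex-$\varphi$ lemmas, then concentrate only the decoupled Rademacher quantity conditionally on ${\bf n}$. The delicate point is checking that the bounded-difference constant stays $\mathcal{O}(\mathcal{B}/\bar n)$ uniformly in ${\bf n}$ (via $\bar n\ge(n-R)/2$), and that the conditional Rademacher expectation is genuinely an $\RC_\mu$ with $\mu\in\mathcal{U}$, so the supremum $\mu_*$ dominates it.
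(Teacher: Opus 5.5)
Your proposal is correct and follows the paper's route: Lemmas \ref{lem:supporting_lemma_5} and \ref{lem:supporting_lemma_6} with $\varphi(x)=e^{\lambda x}$, then conditioning on ${\bf n}$, concentrating the Rademacher supremum, dominating by $\mu_*$, and finishing with Chernoff. The differences are minor but favorable: you apply the exponential form of McDiarmid directly, with (tuple, sign) pairs as coordinates, instead of the tail-to-MGF conversion of Lemma \ref{lem:subgaussianity_of_rad_complexity}, which gives the sharper constant $4\mathcal{B}$; and you replace the random $\bar n$ by the deterministic bound $(n-R)/2$ before averaging over ${\bf n}$, which makes your choice of $\lambda$ legitimate, whereas the paper leaves $\bar n$ inside the supremum and substitutes the bound only at the end.
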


\begin{proof}
    Let $\mathcal{R}_\mathcal{H}^{S_\mathrm{tup},\Rad_{\bar n}}$ be defined in Lemma \ref{lem:supporting_lemma_6} and let ${\bf n}=\set{n_r}_{1\le r\le R}$ be the random vector of class-wise sample sizes in $S^{(n)}$. Applying Lemma \ref{lem:supporting_lemma_5} and Lemma \ref{lem:supporting_lemma_6} with $\varphi(x)=e^{\lambda x}$ (for $\lambda>0$), we have:     
    \begin{align*}
        \E_S\exp\left(\lambda \sup_{f\in\mathcal{F}}\left|\bar U_\Omega(f)-\Lnorm^*_\Omega(f)\right|\right) &\le \E_S\exp\left(\lambda \sup_{f\in\mathcal{F}}\left|\widehat U_\Omega(f)-\widehat\Lnorm_\Omega(f)\right|\right) \qquad \textup{(Lemma \ref{lem:supporting_lemma_5})} \\ 
        &\le \E_{S_\mathrm{tup, \Rad_{\bar n}}}\exp\left(2\lambda\mathcal{R}_\mathcal{H}^{S_\mathrm{tup},\Rad_{\bar n}}\right) \qquad\qquad\textup{(Lemma \ref{lem:supporting_lemma_6})} \\
        &= \E_{\bf n}\E_{S_\mathrm{tup, \Rad_{\bar n}|\bf n}}\exp\left(2\lambda\mathcal{R}_\mathcal{H}^{S_\mathrm{tup},\Rad_{\bar n}}\right) \\
        &\le \E_{\bf n}\exp\left(2\lambda\E_{S_\mathrm{tup}, \Rad_{\bar n}|\bf n}\left[\mathcal{R}_\mathcal{H}^{S_\mathrm{tup},\Rad_{\bar n}}\right]+\frac{32\lambda^2\mathcal{B}^2}{\bar n}\right)\quad\textup{(Lemma \ref{lem:subgaussianity_of_rad_complexity})} \\
        &= \E_{\bf n}\exp\left(2\lambda\RC_{\bar \mu_{\bf n}}(\mathcal{H})+\frac{32\lambda^2\mathcal{B}^2}{\bar n}\right).
    \end{align*}
    \noindent where we define the tuples distribution $\bar\mu_{\bf n}$ as:
    \begin{align*}
        \bar\mu_{\bf n} := \bigotimes_{r=1}^R\Big[\mathcal{D}_r^{\otimes 2}\otimes\mathcal{\bar D}^{\otimes k}\Big]^{\otimes \lfloor n_r/2\rfloor}.
    \end{align*}
    \noindent Which is the distribution of $S_\mathrm{tup}$ given the observations of the sample sizes $\bf n$. Obviously, $\bar\mu_{\bf n}\in\mathcal{U}$ for every observation of ${\bf n}$. Hence, we have:
    \begin{align*}
        \E_S\exp\left(\lambda \sup_{f\in\mathcal{F}}\left|\bar U_\Omega(f)-\Lnorm^*_\Omega(f)\right|\right) &\le \E_{\bf n}\exp\left(2\lambda\RC_{\bar \mu_{\bf n}}(\mathcal{H})+\frac{32\lambda^2\mathcal{B}^2}{\bar n}\right) \\
        &\le \sup_{\mu\in\mathcal{U}}\exp\left(2\lambda\RC_{\mu}(\mathcal{H})+\frac{32\lambda^2\mathcal{B}^2}{\bar n}\right) \\
        &= \exp\left(2\lambda\RC_{\mu_*}(\mathcal{H})+\frac{32\lambda^2\mathcal{B}^2}{\bar n}\right).
    \end{align*}
    \noindent Using the Chernoff bound, for all $\varepsilon, \lambda>0$, we have:
    \begin{align*}
        \P\left(\sup_{f\in\mathcal{F}}\left|\bar U_\Omega(f)-\Lnorm^*_\Omega(f)\right|\ge\varepsilon\right) &\le e^{-\lambda\varepsilon}\E_S\exp\left(\lambda \sup_{f\in\mathcal{F}}\left|\bar U_\Omega(f)-\Lnorm^*_\Omega(f)\right|\right) \\
        &\le \exp\left(2\lambda\RC_{\mu_*}(\mathcal{H})+\frac{32\lambda^2\mathcal{B}^2}{\bar n}-\lambda\varepsilon\right).
    \end{align*}
    \noindent Setting $\Delta=\exp\left(2\lambda\RC_{\mu_*}(\mathcal{H})+\frac{32\lambda^2\mathcal{B}^2}{\bar n} -\lambda\varepsilon\right)$ and solve for $\varepsilon$, we have:
    \begin{align*}
        \varepsilon = 2\lambda\RC_{\mu_*}(\mathcal{H})+\frac{32\lambda \mathcal{B}^2}{\bar n} +\lambda^{-1}\ln 1/\Delta.
    \end{align*}
    \noindent Solving for the optimal value $\lambda$, we have $\lambda^{-1}=4\mathcal{B}\sqrt{\frac{2}{\bar n\ln 1/\Delta}}$. Plugging this value back to $\varepsilon$ yields:
    \begin{align*}
        \sup_{f\in\mathcal{F}}\left|\bar U_\Omega(f)-\Lnorm^*_\Omega(f)\right| &\le 2\RC_{\mu_*}(\mathcal{H})+8\mathcal{B}\sqrt{\frac{\ln 1/\Delta}{2\bar n}} \\
        &\le 2\RC_{\mu_*}(\mathcal{H})+8\mathcal{B}\sqrt{\frac{\ln 1/\Delta}{n-R}},
    \end{align*}
    \noindent with probability of at least $1-\Delta$, as desired.
\end{proof}

\begin{theorem}
    \label{thm:concentration_of_bar_Uomega}
    Let $\mathcal{F}$ be a class of representation functions. For each $f\in\mathcal{F}$, let  $\bar U_\Omega(f)$ be defined in Eqn.~\eqref{eq:bar_Uomega_Ulambda} and ${\Lnorm}_\Omega(f)$ be defined in Eqn.~\eqref{eq:lomega}. Then, for any $\Delta\in(0,1)$, we have:
    \begin{align}
        &\sup_{f\in\mathcal{F}}\left|\bar U_\Omega(f)-\Lnorm_\Omega(f)\right| \\
        &\le 2\RC_{\mu_*}(\mathcal{H}) + \mathcal{B}\left[\frac{6R}{n-R}+\frac{8R}{3N}\ln4R/\Delta + \sqrt{\frac{8(R-1)\ln 4R/\Delta}{3N}} + 8\sqrt{\frac{\ln 2/\Delta}{n-R}}\right], \nonumber
    \end{align}
    \noindent with probability of at least $1-\Delta$.
\end{theorem}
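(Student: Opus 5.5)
This follows from the chain \eqref{eq:proof_strategy}, which has already been split into two pieces, combined by a triangle inequality and a union bound. For every $f\in\mathcal{F}$ we write
\begin{align*}
\left|\bar U_\Omega(f)-\Lnorm_\Omega(f)\right| \le \left|\bar U_\Omega(f)-\Lnorm^*_\Omega(f)\right| + \left|\Lnorm^*_\Omega(f)-\Lnorm_\Omega(f)\right|,
\end{align*}
and then take the supremum over $\mathcal{F}$ on both sides. The supremum of the sum is at most the sum of the suprema, so the two terms can be controlled separately.

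For the first term we invoke Proposition \ref{prop:supporting_proposition_2} with failure probability $\Delta/2$. This gives
\begin{align*}
\sup_{f\in\mathcal{F}}\left|\bar U_\Omega(f)-\Lnorm^*_\Omega(f)\right|\le 2\RC_{\mu_*}(\mathcal{H})+8\mathcal{B}\sqrt{\frac{\ln 2/\Delta}{n-R}}
\end{align*}
on an event $E_1$ with $\P(E_1)\ge 1-\Delta/2$.

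For the second term we invoke Proposition \ref{prop:supporting_proposition_1}, also with failure probability $\Delta/2$. Replacing $\Delta$ by $\Delta/2$ turns $\ln 2R/\Delta$ into $\ln 4R/\Delta$, so on an event $E_2$ with $\P(E_2)\ge1-\Delta/2$,
\begin{align*}
\sup_{f\in\mathcal{F}}\left|\Lnorm^*_\Omega(f)-\Lnorm_\Omega(f)\right|\le \mathcal{B}\left[\frac{6R}{n-R}+\frac{8R}{3N}\ln4R/\Delta + \sqrt{\frac{8(R-1)\ln 4R/\Delta}{3N}}\right].
\end{align*}

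By the union bound, $E_1\cap E_2$ holds with probability at least $1-\Delta$, and adding the two displays gives exactly the stated bound.

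No step here is a genuine obstacle, since all the analytic work sits in the two propositions being invoked. The one point that needs care is that the two events refer to different random quantities. $E_1$ comes from a Chernoff bound on $\sup_f|\bar U_\Omega-\Lnorm^*_\Omega|$, and $E_2$ comes from Bernstein bounds on the $\widehat\rho_r$. Both are nonetheless events on the same probability space generated by $S$. We therefore never need them to be independent; the union bound needs no independence, and it already delivers the claimed $1-\Delta$.
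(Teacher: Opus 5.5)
Your proposal is correct and is essentially the paper's proof: a triangle inequality through $\Lnorm^*_\Omega$, then Proposition~\ref{prop:supporting_proposition_2} and Proposition~\ref{prop:supporting_proposition_1} each at confidence $\Delta/2$, then a union bound. The paper splits into three terms, also passing through $\overline{\Lnorm}_\Omega$ and invoking Lemma~\ref{lem:supporting_lemma_3} separately; since Proposition~\ref{prop:supporting_proposition_1} already contains that term, your two-term bookkeeping is the cleaner one and gives exactly the stated $6R/(n-R)$.
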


\begin{proof}
    We have:
    \begin{align*}
        \sup_{f\in\mathcal{F}}\left|\bar U_\Omega(f)-\Lnorm_\Omega(f)\right| 
        \le \sup_{f\in\mathcal{F}}\left|\bar U_\Omega(f)-\Lnorm^*_\Omega(f)\right|  + \sup_{f\in\mathcal{F}}\left|\Lnorm^*_\Omega(f)-\overline{\Lnorm}_\Omega(f)\right| + \sup_{f\in\mathcal{F}}\left|\overline{\Lnorm}_\Omega(f)-\Lnorm_\Omega(f)\right|.
    \end{align*}
    \noindent For a given $\delta\in(0,1)$, by Lemma \ref{lem:supporting_lemma_3}, Proposition \ref{prop:supporting_proposition_1} and Proposition \ref{prop:supporting_proposition_2}, we have:
    \begin{align*}
        \sup_{f\in\mathcal{F}}\left|\overline{\Lnorm}_\Omega(f)-\Lnorm_\Omega(f)\right| &\le \frac{2\mathcal{B}R}{n-R},\\
        \sup_{f\in\mathcal{F}}\left|\Lnorm^*_\Omega(f)-\overline{\Lnorm}_\Omega(f)\right| &\le \mathcal{B}\left[\frac{6R}{n-R}+\frac{8R}{3N}\ln2R/\delta + \sqrt{\frac{8(R-1)\ln 2R/\delta}{3N}}\right] \textup{ with prob. } 1-\delta,\\
        \sup_{f\in\mathcal{F}}\left|\bar U_\Omega(f)-\Lnorm^*_\Omega(f)\right|&\le 2\RC_{\mu_*}(\mathcal{H})+8\mathcal{B}\sqrt{\frac{\ln 1/\delta}{n-R}} \textup{ with prob. } 1-\delta.
    \end{align*}
    \noindent Combining everything with the union bound, we have:
    \begin{align*}
        \sup_{f\in\mathcal{F}}\left|\bar U_\Omega(f)-\Lnorm_\Omega(f)\right| 
        \le 2\RC_{\mu_*}(\mathcal{H}) + \mathcal{B}\left[\frac{6R}{n-R}+\frac{8R}{3N}\ln2R/\delta + \sqrt{\frac{8(R-1)\ln 2R/\delta}{3N}} + 8\sqrt{\frac{\ln 1/\delta}{n-R}}\right],
    \end{align*}
    \noindent with probability of at least $1-2\delta$. Setting $\delta=\Delta/2$ yields the desired bound.
\end{proof}

\subsection{Concentration of \texorpdfstring{$\Lnorm^*_\Lambda(f)$}{} to \texorpdfstring{$\Lnorm_\Lambda(f)$}{}}
The concentration analysis of $\bar U_\Lambda(f)$ is almost verbatim to the previous sub-sections. Therefore, for readers who have grasped the overall strategy, we recommend skipping this sub-section and the next sub-section.
\begin{lemma}
    \label{lem:supporting_lemma_7}
    Let $\mathcal{F}$ be a class of representation functions. For each $f\in\mathcal{F}$, let $\overline{\Lnorm}_\Lambda(f)$ be defined in Eqn.~\eqref{eq:Lbar_Lambda} and $\Lnorm_\Lambda(f)$ be defined in Eqn.~\eqref{eq:llambda}. Then, we have:
    \begin{align}
        \sup_{f\in\mathcal{F}}\left|\overline{\Lnorm}_\Lambda(f)-\Lnorm_\Lambda(f)\right| &\le \frac{4\mathcal{B}R}{m-2R}.
    \end{align}
\end{lemma}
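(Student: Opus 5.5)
We follow the proof of Lemma \ref{lem:supporting_lemma_3} verbatim, with $2$-blocks replaced by $3$-blocks. First we decompose $\Lnorm_\Lambda(f)=\sum_{r=1}^R\rho_r\Lnorm_\Lambda^r(f)$ and $\overline{\Lnorm}_\Lambda(f)=\sum_{r=1}^R\overline{\lambda}_r\Lnorm_\Lambda^r(f)$. The triangle inequality and $|\Lnorm_\Lambda^r|\le\mathcal{B}$ then give
\begin{align*}
\sup_{f\in\mathcal{F}}\left|\overline{\Lnorm}_\Lambda(f)-\Lnorm_\Lambda(f)\right|\le\mathcal{B}\sum_{r=1}^R|\overline{\lambda}_r-\rho_r|,
\end{align*}
so it suffices to show $\sum_r|\overline{\lambda}_r-\rho_r|\le 4R/(m-2R)$.

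The distribution of $m_r^\pi$ does not depend on $\pi$, so $\overline{\lambda}_r=\E[\widehat\E_\Pi[\lambda_r^\pi]]$. Also, $\E[\widehat\rho_r]=\rho_r$, and $\widehat\rho_r=\widehat\E_\Pi[\widehat\rho_r^{\pi}]$ where $\widehat\rho_r^\pi:=m_r^\pi/m$, because $m_r^\pi$ is hypergeometric with mean $mN_r/N$ over a uniform $\pi$. Jensen's inequality therefore reduces the problem to the deterministic bound
\begin{align*}
\sum_{r=1}^R|\lambda_r^\pi-m_r^\pi/m|\le \frac{4R}{m-2R}\qquad\text{for every }\pi\in\Pi,
\end{align*}
which then survives averaging over $\Pi$ and taking the expectation over $S$.

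For this per-permutation bound we mimic Lemma \ref{lem:supporting_lemma}. Write $a_r=\lfloor m_r^\pi/3\rfloor$ and $A=\sum_q a_q$, so that $(m_r^\pi-2)/3\le a_r\le m_r^\pi/3$ and $(m-2R)/3\le A\le m/3$. Then split
\begin{align*}
\left|\frac{m_r^\pi}{m}-\frac{a_r}{A}\right|\le \frac{m_r^\pi-3a_r}{m}+a_r\left(\frac{1}{A}-\frac{3}{m}\right)\le\frac{2}{m}+3a_r\left(\frac{1}{m-2R}-\frac{1}{m}\right).
\end{align*}
Summing over $r$ and using $\sum_r a_r\le m/3$ gives at most $2R/m+2R/(m-2R)\le 4R/(m-2R)$. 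Multiplying by $\mathcal{B}$ yields the claim.

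The only point requiring care is the constant bookkeeping. The floor now loses up to $2$ per class instead of $1$, which produces the $m-2R$ denominator and the factor $4$ in place of $2$. Everything else is mechanical.
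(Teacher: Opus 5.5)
Your proposal is correct and follows the paper's proof essentially step for step: you reduce to $\mathcal{B}\sum_r|\overline{\lambda}_r-\rho_r|$, rewrite $\widehat\rho_r=\widehat\E_\Pi[m_r^\pi/m]$ and apply Jensen, then split per permutation using $m_r^\pi-3\lfloor m_r^\pi/3\rfloor\le 2$ and $\sum_q\lfloor m_q^\pi/3\rfloor\ge (m-2R)/3$ to reach $2R/m+2R/(m-2R)\le 4R/(m-2R)$. One small precision: the identity $\overline{\lambda}_r=\E[\lambda_r^\pi]$ needs the joint law of $(m_q^\pi)_{q\in[R]}$, not only the law of $m_r^\pi$, to be permutation-invariant, and this holds because $S$ is i.i.d.
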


\begin{proof}
    Similar to Lemma \ref{lem:supporting_lemma_3}, we can easily show that:
    \begin{align*}
        \sup_{f\in\mathcal{F}}\left|\overline{\Lnorm}_\Lambda(f)-\Lnorm_\Lambda(f)\right| &\le \mathcal{B}\sum_{r=1}^R\left|\overline{\lambda}_r - \rho_r\right| \le \mathcal{B}\sum_{r=1}^R\E\left|\widehat{\E}_\Pi[\lambda_r^\pi]-\widehat\rho_r\right|.
    \end{align*}
    \noindent Then, using the fact that we can write $\widehat\rho_r=\widehat\E_\Pi[m_r^\pi/m]$, we have:
    \begin{align*}
        \left|\widehat{\E}_\Pi[\lambda_r^\pi]-\widehat\rho_r\right| &= \left|\widehat\E_\Pi[\lambda_r^\pi] - \widehat\E_\Pi[m_r^\pi/m]\right| \le \widehat\E_\Pi\Big|\lambda_r^\pi - m_r^\pi/m\Big|.
    \end{align*}
    \noindent Then, for each $r\in[R]$ and $\pi\in\Pi$, we have:
    \begin{align*}
        \Big|m_r^\pi/m-\lambda_r^\pi\Big| &= \left|\frac{m_r^\pi}{m} - \frac{\lfloor m_r^\pi /3\rfloor}{\sum_{q=1}^R\lfloor m_q^\pi/3\rfloor}\right| \\
        &\le \left|\frac{m_r^\pi}{m}-\frac{3\lfloor m_r^\pi/3\rfloor}{m}\right| + \left|\frac{3\lfloor m_r^\pi/3\rfloor}{m}-\frac{\lfloor m_r^\pi/3\rfloor}{\sum_{q=1}^R\lfloor m_q^\pi/3\rfloor}\right| \\
        &= \frac{1}{m}(m_r^\pi-3\lfloor m_r^\pi/3\rfloor) + \lfloor m_r^\pi/3\rfloor\left(\frac{1}{\sum_{q=1}^R\lfloor m_q^\pi/3\rfloor}-\frac{3}{m}\right).
    \end{align*}
    \noindent Using the fact that $m_r^\pi-3\lfloor m_r^\pi/3\rfloor\le 2$ and $\sum_{q=1}^R\lfloor m_r^\pi/3\rfloor\ge\frac{m-2R}{3}$, we have:
    \begin{align*}
        \Big|m_r^\pi/m-\lambda_r^\pi\Big| &\le \frac{2}{m}+3\lfloor m_r^\pi/3\rfloor\left(\frac{1}{m-2R}-\frac{1}{m}\right) \le\frac{2}{m}+\frac{6R\lfloor m_r^\pi/3\rfloor}{m(m-2R)}.
    \end{align*}
    \noindent Hence, we have:
    \begin{align*}
        \sup_{f\in\mathcal{F}}\left|\overline{\Lnorm}_\Lambda(f)-\Lnorm_\Lambda(f)\right| &\le \mathcal{B}\sum_{r=1}^R\E\left|\widehat{\E}_\Pi[\lambda_r^\pi]-\widehat\rho_r\right| \le \mathcal{B}\sum_{r=1}^R\E\left[\widehat\E_\Pi\Big|\lambda_r^\pi-m_r^\pi/m\Big|\right] \\
        &\le \frac{2\mathcal{B}R}{m} + \frac{6\mathcal{B}R}{m(m-2R)}\E\Bigg[\widehat\E_\Pi\Bigg[\underbrace{\sum_{r=1}^R\lfloor m_r^\pi/3\rfloor}_{\le m/3}\Bigg]\Bigg] \\
        &\le \frac{2\mathcal{B}R}{m} + \frac{2\mathcal{B}R}{m-2R} 
        \le\frac{4\mathcal{B}R}{m-2R},
    \end{align*}
    \noindent as desired.
\end{proof}

\begin{lemma}
    \label{lem:supporting_lemma_8}
    Let $\mathcal{F}$ be a class of representation functions. For each $f\in\mathcal{F}$, let $\overline{\Lnorm}_\Lambda(f), \Lnorm^*_\Lambda(f)$ be defined in Eqn.~\eqref{eq:Lbar_Lambda}. Then, for any $\Delta\in(0,1)$ we have:
    \begin{align}
        \sup_{f\in\mathcal{F}}\left|\overline{\Lnorm}_\Lambda(f)-\Lnorm^*_\Lambda(f)\right| &\le \mathcal{B}\left[\frac{8R}{m-2R}+\frac{8R}{3N}\ln2R/\Delta + \sqrt{\frac{8(R-1)\ln 2R/\Delta}{3N}}\right],
    \end{align}
    \noindent with probability of at least $1-\Delta$.
\end{lemma}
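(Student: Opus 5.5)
The argument mirrors that of Lemma \ref{lem:supporting_lemma_4}, with the $3$-block weights $\lambda_r^\pi$ in place of the $2$-block weights $\omega_r^\pi$.

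First, since $\overline{\Lnorm}_\Lambda(f)-\Lnorm^*_\Lambda(f)=\sum_{r=1}^R(\overline{\lambda}_r-\lambda_r^*)\Lnorm^r_\Lambda(f)$, the triangle inequality and $|\Lnorm^r_\Lambda(f)|\le\mathcal{B}$ reduce the uniform deviation to
\[
\mathcal{B}\sum_{r=1}^R|\overline{\lambda}_r-\lambda^*_r|.
\]
This quantity no longer depends on $f$.

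Next, for each $r$ insert $\widehat\rho_r$ and $\rho_r$:
\[
|\overline{\lambda}_r-\lambda_r^*|\le|\lambda_r^*-\widehat\rho_r|+|\overline{\lambda}_r-\rho_r|+|\widehat\rho_r-\rho_r|.
\]
\begin{itemize}
\item \textbf{First term.} Write $\widehat\rho_r=\widehat\E_\Pi[m_r^\pi/m]$, which holds because $m_r^\pi$ is hypergeometric with mean $mN_r/N$ over uniform $\pi$. Then $\sum_r|\lambda_r^*-\widehat\rho_r|\le\widehat\E_\Pi\sum_r|\lambda_r^\pi-m_r^\pi/m|$. The per-permutation computation in the proof of Lemma \ref{lem:supporting_lemma_7} uses $m_r^\pi-3\lfloor m_r^\pi/3\rfloor\le2$ and $\sum_q\lfloor m_q^\pi/3\rfloor\ge(m-2R)/3$. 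It bounds this deterministically by $\tfrac{2R}{m}+\tfrac{2R}{m-2R}\le\tfrac{4R}{m-2R}$.
\item \textbf{Second term.} Since $\overline{\lambda}_r=\E[\widehat\E_\Pi\lambda_r^\pi]$ and $\rho_r=\E[\widehat\rho_r]$, Jensen gives $|\overline{\lambda}_r-\rho_r|\le\E|\widehat\E_\Pi[\lambda_r^\pi]-\widehat\rho_r|$. Summing and applying the same deterministic bound inside the expectation gives another $\tfrac{4R}{m-2R}$.
\end{itemize}
Together these account for the $\tfrac{8R}{m-2R}$ in the statement.

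Finally, $\sum_r|\widehat\rho_r-\rho_r|$ is exactly the quantity handled in Lemma \ref{lem:supporting_lemma_4}. There, Bernstein's inequality for each $\widehat\rho_r=\frac1N\sum_j B_{r,j}$ has variance $\rho_r(1-\rho_r)$. A union bound over $r$ with $\delta=\Delta/R$ and Cauchy--Schwarz $\sum_r\sqrt{\rho_r(1-\rho_r)}\le\sqrt{R-1}$ give, with probability at least $1-\Delta$,
\[
\frac{8R}{3N}\ln\frac{2R}{\Delta}+\sqrt{\frac{8(R-1)\ln(2R/\Delta)}{3N}}.
\]
Combining the three pieces yields the claim.

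There is no real obstacle. The only point needing care is checking that the floor-slack constants for $3$-blocks (remainder at most $2$, denominator at least $(m-2R)/3$) indeed give $\tfrac{4R}{m-2R}$ per term. This is already done in the proof of Lemma \ref{lem:supporting_lemma_7}.
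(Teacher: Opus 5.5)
Your proposal is correct and follows the paper's proof step for step: the reduction to $\mathcal{B}\sum_{r=1}^R|\overline{\lambda}_r-\lambda_r^*|$, the triangle inequality through $\widehat\rho_r$ and $\rho_r$, two uses of the deterministic per-permutation bound $\tfrac{2R}{m}+\tfrac{2R}{m-2R}\le\tfrac{4R}{m-2R}$ from Lemma~\ref{lem:supporting_lemma_7}, and the Bernstein, union-bound and Cauchy--Schwarz control of $\sum_{r}|\widehat\rho_r-\rho_r|$. Like the paper, it tacitly needs $m>2R$ so that the normalizer $\sum_q\lfloor m_q^\pi/3\rfloor$ is positive.
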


\begin{proof}
    Similar to Lemma \ref{lem:supporting_lemma_4}, we can easily obtain:
    \begin{align*}
        \sup_{f\in\mathcal{F}}\left|\overline{\Lnorm}_\Lambda(f)-\Lnorm^*_\Lambda(f)\right| &\le \mathcal{B}\sum_{r=1}^R|\overline{\lambda}_r-\lambda_r^*| \\
        &\le \mathcal{B}\left[\sum_{r=1}^R|\lambda_r^*-\widehat\rho_r|+\sum_{r=1}^R|\overline{\lambda}_r-\rho_r|+\sum_{r=1}^R|\widehat\rho_r-\rho_r|\right] \\
        &\le \mathcal{B}\left[\sum_{r=1}^R\left|\widehat\E_\Pi[\lambda_r^\pi] - \widehat\rho_r\right| + \sum_{r=1}^R\E\left|\widehat\E_\Pi[\lambda_r^\pi] - \widehat\rho_r\right| + \sum_{r=1}^R|\widehat\rho_r-\rho_r|\right] \\
        &\le \frac{4\mathcal{B}R}{m-2R}+\frac{4\mathcal{B}R}{m-2R} + \mathcal{B}\sum_{r=1}^R|\widehat\rho_r-\rho_r| \quad\textup{(Lemma \ref{lem:supporting_lemma_7})}\\
        &= \frac{8\mathcal{B}R}{m-2R}+\mathcal{B}\sum_{r=1}^R|\widehat\rho_r-\rho_r|.
    \end{align*}
    \noindent Using Bernstein's bound to analyze the concentration of $\widehat\rho_r$ around $\rho_r$ as usual, for any $\Delta\in(0,1)$, we have:
    \begin{align*}
        \sum_{r=1}^R|\widehat\rho_r-\rho_r|\le \frac{8R}{3N}\ln 2R/\Delta + \sqrt{\frac{8(R-1)\ln 2R/\Delta}{3N}},
    \end{align*}
    \noindent with probability of at least $1-\Delta$. As a result, we have:
    \begin{align*}
        \sup_{f\in\mathcal{F}}\left|\overline{\Lnorm}_\Lambda(f)-\Lnorm^*_\Lambda(f)\right|  &\le \frac{8\mathcal{B}R}{m-2R}+\mathcal{B}\sum_{r=1}^R|\widehat\rho_r-\rho_r| \\
        &\le \mathcal{B}\left[\frac{8R}{m-2R}+\frac{8R}{3N}\ln 2R/\Delta + \sqrt{\frac{8(R-1)\ln 2R/\Delta}{3N}}\right],
    \end{align*}
    \noindent with probability of at least $1-\Delta$, as desired.
\end{proof}

\subsection{Concentration of \texorpdfstring{$\bar U_\Lambda(f)$}{} to \texorpdfstring{$\Lnorm_\Lambda^*(f)$}{}}
Applying the same strategy as Lemma \ref{lem:supporting_lemma_5}, Lemma \ref{lem:supporting_lemma_6} and Proposition \ref{prop:supporting_proposition_2}, we have the following analogous result.
\begin{proposition}
    \label{prop:supporting_proposition_3}
    Let $\mathcal{F}$ be a class of representation functions. For each $f\in\mathcal{F}$, let  $\bar U_\Lambda(f)$ be defined in Eqn.~\eqref{eq:bar_Uomega_Ulambda} and ${\Lnorm}^*_\Lambda(f)$ be defined in Eqn.~\eqref{eq:Lbar_Lambda}. Then, for any $\Delta\in(0,1)$, we have:
    \begin{align}
        \sup_{f\in\mathcal{F}}\left|\bar U_\Lambda(f)-\Lnorm^*_\Lambda(f)\right| \le 2\RC_{\nu_*}(\mathcal{H}) + 8\mathcal{B}\sqrt{\frac{\ln 1/\Delta}{m-2R}},
    \end{align}
    \noindent with probability of at least $1-\Delta$ where $\nu_*$ is a distribution of tuples set defined as:
    \begin{align}
        \label{eq:nu_star}
        \nu_* &:= \arg\max_{\nu\in\mathcal{V}}\RC_{\nu}(\mathcal{H}), \\
        \mathcal{V}=\Bigg\{
            \bigotimes_{r=1}^R\Big[\mathcal{D}_r^{\otimes 3}\otimes\mathcal{\bar D}^{\otimes k-1}\Big]^{\otimes\lfloor q_r/3\rfloor} &: q_r\in \mathbb{Z}_{\ge0}, \forall r\in[R]\textup{ and } \sum_{r=1}^Rq_r = m
        \Bigg\}.
    \end{align}
\end{proposition}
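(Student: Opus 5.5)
The plan is to follow the proof of Proposition \ref{prop:supporting_proposition_2} step for step, replacing the anchor-positive pairs by anchor-positive-collided triplets, $n$ by $m$, and blocks of size $2$ by blocks of size $3$. There are three stages: a reduction from the permutation average to the identity split, a symmetrization conditional on the class counts, and a bounded-difference/Chernoff step.

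\textbf{Reduction to the identity split.} First I would prove the analogue of Lemma \ref{lem:supporting_lemma_5}. Set $\widehat U_\Lambda(f|\pi):=\sum_{r=1}^R\lambda_r^\pi U_{\Lambda_r^\pi}(f)$, so that $\bar U_\Lambda(f)=\widehat\E_\Pi[\widehat U_\Lambda(f|\pi)]$ and $\Lnorm^*_\Lambda(f)=\widehat\E_\Pi[\widehat\Lnorm_\Lambda(f|\pi)]$. For any convex $\varphi$, pulling the supremum inside the permutation average and applying Jensen's inequality gives
\[
\E\varphi\Big(\sup_f|\bar U_\Lambda-\Lnorm^*_\Lambda|\Big)\le\widehat\E_\Pi\,\E\varphi\Big(\sup_f|\widehat U_\Lambda(f|\pi)-\widehat\Lnorm_\Lambda(f|\pi)|\Big).
\]
Since $S^{(m)}_\pi\overset{d}{=}S^{(m)}$ and $S\setminus S^{(m)}_\pi\overset{d}{=}S\setminus S^{(m)}$ for every $\pi$, each summand equals its value at $\pi=\mathrm{id}$.

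\textbf{Symmetrization given the class counts.} Next I would prove the analogue of Lemma \ref{lem:supporting_lemma_6}. Write $\widehat U_\Lambda(f|\mathrm{id})$ as an average over $\Pi[m]\times\Pi[N-m]$ of block estimators. Each block estimator uses $\bar m_r=\lfloor m_r/3\rfloor$ disjoint class-$r$ triplets from $S^{(m)}$, and each triplet is paired with its own disjoint block of $k-1$ negatives from $\bar S^{(m)}$. The negatives fit because $N-m\ge (k-1)\,m/3$, which holds since $m=3\lfloor N/(k+2)\rfloor$. By Jensen's inequality and exchangeability, this reduces to a single tuple set $S_\mathrm{tup}$ of $\bar m=\sum_r\bar m_r$ tuples.

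Conditional on $\mathbf m=\{m_r\}$, the tuples in $S_\mathrm{tup}$ are independent, and the class-$r$ tuples have law $\mathcal D_r^{\otimes3}\otimes\mathcal{\bar D}^{\otimes k-1}$. The negatives do have law $\mathcal{\bar D}$: they come from the second partition, which is independent of the first, so this is exactly where the split is needed. Hence the conditional mean is $\sum_r(\bar m_r/\bar m)\Lnorm^r_\Lambda(f)=\widehat\Lnorm_\Lambda(f)$. Standard symmetrization (Lemma \ref{lem:symmetrization_inequality}), applied conditionally on $\mathbf m$, then bounds the quantity by $\E\varphi(2\mathcal R_\mathcal H)$ with $\bar m$ Rademacher variables.

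\textbf{Chernoff step.} Take $\varphi(x)=e^{\lambda x}$. Conditionally on $\mathbf m$, the bounded-difference subgaussianity of Lemma \ref{lem:subgaussianity_of_rad_complexity} gives
\[
\E\, e^{2\lambda\mathcal R}\le\exp\big(2\lambda\RC_{\bar\nu_{\mathbf m}}(\mathcal H)+32\lambda^2\mathcal B^2/\bar m\big),
\]
where $\bar\nu_{\mathbf m}=\bigotimes_r[\mathcal D_r^{\otimes3}\otimes\mathcal{\bar D}^{\otimes k-1}]^{\otimes\lfloor m_r/3\rfloor}$. This measure lies in $\mathcal V$ because $\sum_r m_r=m$, so its complexity is at most $\RC_{\nu_*}(\mathcal H)$. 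Using $\lfloor m_r/3\rfloor\ge(m_r-2)/3$ gives $\bar m\ge(m-2R)/3$, a deterministic bound, so the exponent is uniformly controlled and the expectation over $\mathbf m$ can be dropped. Markov's inequality with the optimal choice of $\lambda$ then yields deviation $2\RC_{\nu_*}+8\mathcal B\sqrt{\ln(1/\Delta)/(2\bar m)}$.

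\textbf{Main obstacle.} The only step needing care is the final constant. With $\bar m\ge(m-2R)/3$ we get $2\bar m\ge 2(m-2R)/3$, which is weaker than the $m-2R$ in the stated bound. To obtain exactly $8\mathcal B\sqrt{\ln(1/\Delta)/(m-2R)}$, I would bound the effective sample size more carefully. Failing that, I would accept a harmless absolute constant, which is all the Theorem statements need.
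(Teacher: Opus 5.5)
Your three stages follow the paper's route exactly. The paper gives no separate proof of this proposition; it only invokes the strategy of Lemma \ref{lem:supporting_lemma_5}, Lemma \ref{lem:supporting_lemma_6} and Proposition \ref{prop:supporting_proposition_2}. The following parts of your argument are sound:
\begin{itemize}
\item the reduction to the identity split (you need $\varphi$ nondecreasing as well as convex there, which $e^{\lambda x}$ is);
\item the conditional symmetrization;
\item the check $N-m\ge(k-1)m/3$;
\item replacing $\bar m$ by the deterministic bound $\bar m_0:=(m-2R)/3$ before dropping $\E_{\mathbf m}$, which is cleaner than the paper's handling of $\bar n$.
\end{itemize}

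The gap is the final constant, and it is worse than you think, for two reasons.
\begin{itemize}
\item Optimizing $\exp\big(2\lambda\RC_{\nu_*}(\mathcal H)+32\lambda^2\mathcal B^2/\bar m_0-\lambda\varepsilon\big)$ over $\lambda$ gives deviation $2\RC_{\nu_*}(\mathcal H)+8\mathcal B\sqrt{2\ln(1/\Delta)/\bar m_0}$, not $8\mathcal B\sqrt{\ln(1/\Delta)/(2\bar m)}$. This factor-$2$ slip is inherited from the proof of Proposition \ref{prop:supporting_proposition_2}. Your route therefore yields only $8\mathcal B\sqrt{6\ln(1/\Delta)/(m-2R)}$.
\item Bounding the effective sample size more carefully cannot close the gap. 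Since $\bar m\le m/3$ always, $2\bar m<m-2R$ whenever $m>6R$. So no lower bound on $\bar m$, deterministic or high-probability, gives the stated denominator. Carrying the paper's $\Omega$-case computation over verbatim does not produce $m-2R$ either.
\end{itemize}

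The loss has to be recovered in the sub-Gaussian step instead.
\begin{itemize}
\item Conditionally on $\mathbf m$, view $\mathcal R$ as a function of the $\bar m$ independent pairs $(\z_j,\sigma_j)$. Changing one pair moves $\mathcal R$ by at most $2\mathcal B/\bar m$.
\item The moment-generating-function form of McDiarmid's inequality (Hoeffding's lemma on the Doob martingale) then gives $\E e^{t(\mathcal R-\E\mathcal R)}\le e^{t^2\mathcal B^2/(2\bar m)}$ directly. This avoids the factor-$16$ loss of the tail-to-MGF conversion in Lemma \ref{lem:property_of_subgaussian_variable}. Note also that the proof of Lemma \ref{lem:subgaussianity_of_rad_complexity} only delivers $16t^2\mathcal B^2/n$, twice what its statement and your $32\lambda^2$ assume.
\item With $t=2\lambda$ and $X=\sup_{f\in\mathcal F}|\bar U_\Lambda(f)-\Lnorm^*_\Lambda(f)|$, this gives $\E e^{\lambda X}\le\exp\big(2\lambda\RC_{\nu_*}(\mathcal H)+2\lambda^2\mathcal B^2/\bar m_0\big)$.
\item The Chernoff optimization then yields deviation $\mathcal B\sqrt{8\ln(1/\Delta)/\bar m_0}=\mathcal B\sqrt{24\ln(1/\Delta)/(m-2R)}$. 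This is below the stated $8\mathcal B\sqrt{\ln(1/\Delta)/(m-2R)}$.
\end{itemize}
With this substitution your argument proves the proposition as stated. Without it, you only get the bound up to an absolute constant, which is all the downstream theorems need.
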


\noindent Then, we also have the following result.
\begin{theorem}
    \label{thm:concentration_of_bar_Ulambda}
    Let $\mathcal{F}$ be a class of representation functions. For each $f\in\mathcal{F}$, let  $\bar U_\Lambda(f)$ be defined in Eqn.~\eqref{eq:bar_Uomega_Ulambda} and ${\Lnorm}_\Lambda(f)$ be defined in Eqn.~\eqref{eq:llambda}. Then, for any $\Delta\in(0,1)$, we have:
    \begin{align}
        &\sup_{f\in\mathcal{F}}\left|\bar U_\Lambda(f)-\Lnorm_\Lambda(f)\right| \\
        &\le 2\RC_{\nu_*}(\mathcal{H}) + \mathcal{B}\left[\frac{12R}{m-2R}+\frac{8R}{3N}\ln4R/\Delta + \sqrt{\frac{8(R-1)\ln 4R/\Delta}{3N}} + 8\sqrt{\frac{\ln 2/\Delta}{m-2R}}\right], \nonumber
    \end{align}
    \noindent with probability of at least $1-\Delta$.
\end{theorem}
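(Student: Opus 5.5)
The plan mirrors the proof of Theorem \ref{thm:concentration_of_bar_Uomega}. We use the triangle inequality along the chain in Eqn.~\eqref{eq:proof_strategy_2}:
\begin{align*}
\sup_{f\in\mathcal{F}}\left|\bar U_\Lambda(f)-\Lnorm_\Lambda(f)\right| \le \sup_{f\in\mathcal{F}}\left|\bar U_\Lambda(f)-\Lnorm^*_\Lambda(f)\right| + \sup_{f\in\mathcal{F}}\left|\Lnorm^*_\Lambda(f)-\overline{\Lnorm}_\Lambda(f)\right| + \sup_{f\in\mathcal{F}}\left|\overline{\Lnorm}_\Lambda(f)-\Lnorm_\Lambda(f)\right|.
\end{align*}
Each of the three terms is then controlled by a result already established for the $\Lambda$-case.

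The third term is deterministic. Lemma \ref{lem:supporting_lemma_7} bounds it by $\frac{4\mathcal{B}R}{m-2R}$.

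The second term is handled by Lemma \ref{lem:supporting_lemma_8} with failure probability $\delta$. It is at most
\begin{align*}
\mathcal{B}\left[\frac{8R}{m-2R}+\frac{8R}{3N}\ln 2R/\delta+\sqrt{\frac{8(R-1)\ln 2R/\delta}{3N}}\right].
\end{align*}
Adding the third term gives the $\frac{12R}{m-2R}$ coefficient, which is the analogue of Proposition \ref{prop:supporting_proposition_1}.

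The first term is handled by Proposition \ref{prop:supporting_proposition_3}, again with failure probability $\delta$. It gives $2\RC_{\nu_*}(\mathcal{H})+8\mathcal{B}\sqrt{\ln(1/\delta)/(m-2R)}$.

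A union bound over the two probabilistic events holds with probability at least $1-2\delta$. Setting $\delta=\Delta/2$ turns $\ln 2R/\delta$ into $\ln 4R/\Delta$ and $\ln 1/\delta$ into $\ln 2/\Delta$, which is exactly the stated bound.

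The main obstacle is justifying Proposition \ref{prop:supporting_proposition_3}, since the paper states it by analogy. If it had to be verified, I would redo Lemma \ref{lem:supporting_lemma_5} (Jensen over permutations) with $m$-splits and $\lambda_r^\pi$. I would then redo Lemma \ref{lem:supporting_lemma_6}, building $S_\mathrm{tup}$ from disjoint class-$r$ triples in $S^{(m)}$ together with $(k-1)$-blocks from $\bar S^{(m)}$. The sub-Gaussian step then follows as in Proposition \ref{prop:supporting_proposition_2}, using $\bar m := \sum_r\lfloor m_r/3\rfloor \ge (m-2R)/3$.

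The one thing to check is the effective sample size. The exponent from Lemma \ref{lem:subgaussianity_of_rad_complexity} is $32\lambda^2\mathcal{B}^2/\bar m$, and optimizing $\lambda$ gives the term $8\mathcal{B}\sqrt{\ln(1/\Delta)/(2\bar m)}$. This is at most $8\mathcal{B}\sqrt{\ln(1/\Delta)/(m-2R)}$ because $2\bar m\ge \tfrac{2}{3}(m-2R)$, up to a harmless absolute constant that can be absorbed. We also need enough negatives: $N-m \ge (k-1)\bar m$, which holds because $m = 3\lfloor N/(k+2)\rfloor$.
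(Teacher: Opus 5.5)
Your proposal is correct and follows the argument the paper intends. The paper states this theorem without a written proof, as the analogue of Theorem~\ref{thm:concentration_of_bar_Uomega}: a three-term triangle inequality along the chain, Lemmas~\ref{lem:supporting_lemma_7} and~\ref{lem:supporting_lemma_8} together giving the $\frac{12R}{m-2R}$ coefficient, Proposition~\ref{prop:supporting_proposition_3} for the complexity term, and a union bound with $\delta=\Delta/2$.

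Your side remark is also accurate, and it points to a flaw in the paper's constant. If you re-derive Proposition~\ref{prop:supporting_proposition_3} from the template of Proposition~\ref{prop:supporting_proposition_2}, you only get $\bar m\ge (m-2R)/3$, which costs a factor $\sqrt{3/2}$ in front of $8\mathcal{B}\sqrt{\ln(2/\Delta)/(m-2R)}$. So the explicit constant $8$ here is inherited from the proposition as stated; in an explicit-constant bound you cannot ``absorb'' that factor.
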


\section{Concentration of the Collision Probability Estimator}
Our final ingredient for the proposed U-Statistic formulation is an estimator for the collision probability $\tau$ (Eqn.~\eqref{eq:collision_probability}). In this work, we propose the following plug-in estimator:
\begin{align}
    \label{eq:collision_probability_estimator}
    \widehat\tau := 1 - \sum_{r=1}^R\widehat\rho_r\left[1-\widehat\rho_r\right]^k, \quad \forall r\in[R]:\widehat\rho_r=\frac{N_r}{N}.
\end{align}

\begin{proposition}
    \label{prop:concentration_of_collision_probability_estimator}
    Let $\tau$ be the collision probability defined in Eqn.~\eqref{eq:collision_probability} and $\widehat\tau$ be the plug-in estimator in Eqn.~\eqref{eq:collision_probability_estimator}. For any $\Delta\in(0,1)$, we have:
    \begin{align}
        |\tau-\widehat\tau| &\le \frac{|R-(k+1)|}{\sqrt{R}}\left(1-\frac{1}{R}\right)^{k-1}\left[\frac{2\ln (R+1)/\Delta}{3N} + \sqrt{\frac{2(1-\|\rho\|_2^2)\ln (R+1)/\Delta}{N}}\right],
    \end{align}
    \noindent with probability of at least $1-\Delta$ where $\|\rho\|_2^2=\sum_{r=1}^R\rho_r^2$.
\end{proposition}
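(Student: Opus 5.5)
The plan is to treat $\tau$ as a smooth function of the class-probability vector and linearize. Write $g(p):=\sum_{r=1}^R p_r(1-p_r)^k$ on the simplex $\Delta_R$, so that $\tau=1-g(\rho)$ and $\widehat\tau=1-g(\widehat\rho)$, with $\widehat\rho=(\widehat\rho_1,\dots,\widehat\rho_R)$. By the mean value theorem along the segment $[\rho,\widehat\rho]\subset\Delta_R$ and Cauchy--Schwarz,
\begin{align*}
|\tau-\widehat\tau| = |g(\widehat\rho)-g(\rho)| \le \sup_{p\in[\rho,\widehat\rho]}\|\nabla g(p)\|_2\,\|\widehat\rho-\rho\|_2 .
\end{align*}
The proof then splits into a deterministic factor (the gradient) and a stochastic factor (the deviation of the empirical class distribution).

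\textbf{Gradient factor.} Coordinatewise, $\partial_r g(p)=(1-p_r)^{k-1}\bigl(1-(k+1)p_r\bigr)$. At the uniform point $p_r=1/R$ this equals $(1-\tfrac1R)^{k-1}\tfrac{R-(k+1)}{R}$, so the Euclidean norm there is $\sqrt{R}\cdot(1-\tfrac1R)^{k-1}\tfrac{|R-(k+1)|}{R}=\tfrac{|R-(k+1)|}{\sqrt R}(1-\tfrac1R)^{k-1}$. This is exactly the prefactor in the statement. So the step to establish is that this value bounds $\|\nabla g(p)\|_2$ on the relevant region. I would consider $h(x)=(1-x)^{k-1}(1-(k+1)x)$ and try to show, via convexity/concavity of $h^2$ on the appropriate ranges together with the constraint $\sum_r p_r=1$, that the symmetric point is extremal for $\sum_r h(p_r)^2$.

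\textbf{Stochastic factor.} Write $\widehat\rho-\rho=\frac1N\sum_{j=1}^N(e_{Y_j}-\rho)$, where $Y_j$ is the label of $X_j$ and $e_{Y_j}$ is the corresponding standard basis vector. This is a sum of i.i.d.\ centered vectors in $\R^R$ with:
\begin{itemize}
\item almost-sure norm bounded by a constant, since $\|e_Y-\rho\|_2\le\sqrt2$;
\item variance $\E\|e_Y-\rho\|_2^2=1-\|\rho\|_2^2$.
\end{itemize}
Apply the matrix Bernstein inequality to the $(R+1)\times(R+1)$ Hermitian dilation of these vectors; the dilation dimension is what produces the $\ln\frac{R+1}{\Delta}$ factor. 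This gives, with probability at least $1-\Delta$,
\begin{align*}
\|\widehat\rho-\rho\|_2 \le \frac{2\ln\frac{R+1}{\Delta}}{3N}+\sqrt{\frac{2(1-\|\rho\|_2^2)\ln\frac{R+1}{\Delta}}{N}}.
\end{align*}
The constants will need adjusting so the norm bound and the variance proxy of the dilation match the stated ones. Multiplying the two factors yields the claim.

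\textbf{Main obstacle.} The hard step is the gradient bound. The maximum of $\|\nabla g\|_2$ over the whole simplex is not in general attained at the uniform point: at a vertex, $\|\nabla g\|_2$ is of order $\sqrt R$. So either the extremal argument must be restricted to the segment $[\rho,\widehat\rho]$ under some regularity of $\rho$, or I would need an exact expansion of $g(\widehat\rho)-g(\rho)$ whose leading term involves only the uniform-point gradient. I would first try to exploit $\sum_r(\widehat\rho_r-\rho_r)=0$: this lets me subtract any multiple of $\mathbf 1$ from $\nabla g$ and work with the projection onto the tangent space of the simplex, and then compare that projected norm with its symmetric value. If this fails, I would fall back to a cruder Lipschitz constant, using $|h(x)|\le1$ on $[0,1]$.
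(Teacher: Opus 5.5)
Your plan follows the paper's proof. The paper also writes $|\tau-\widehat\tau|\le\max_{q\in\Delta_R}\|\nabla\Phi(q)\|_2\,\|\widehat\rho-\rho\|_2$. It then claims, via a Lagrange-multiplier computation, that the maximum is attained at the uniform point; that is where the prefactor $\Psi(R,k):=\frac{|R-(k+1)|}{\sqrt R}(1-\frac1R)^{k-1}$ comes from. It finishes with matrix Bernstein with dimension factor $R+1$.

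You are right that the gradient step is the weak point. The paper's justification does not hold up: its Lagrangian uses a single gradient coordinate instead of $\|\nabla\Phi\|_2^2$, equal coordinates do not follow from its stationarity equation, and the boundary of $\Delta_R$ is ignored.

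The gap cannot be closed, however, because the statement is false. Take $R=2$ and $k=1$, so that $R=k+1$ and the right-hand side is identically $0$, and take $\rho=(\tfrac12,\tfrac12)$. Then $\tau=\tfrac12$ and
\[
\widehat\tau-\tau=2\bigl(\widehat\rho_1-\tfrac12\bigr)^2,
\]
which is strictly positive with probability one whenever $N$ is odd. This rules out every repair you list. Here $\rho$ is as regular as possible. The gradient at the uniform point is exactly zero while the second-order remainder is not. Any valid bound, projected or not, must therefore be positive in this example.

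What your plan does establish is a corrected inequality.
\begin{itemize}
\item Your fallback $|h|\le 1$ gives the prefactor $\sqrt R$. That is all the paper uses later: the proof of Theorem~\ref{thm:main_result_appendix} only invokes $\Psi(R,k)\in\mathcal O(\sqrt R)$.
\item Your tangent-space idea does better if you subtract $\mathbf 1$ rather than compare with the symmetric point. Since $\sum_r(\widehat\rho_r-\rho_r)=0$, the mean value theorem gives $g(\widehat\rho)-g(\rho)=\langle\nabla g(\xi)-\mathbf 1,\widehat\rho-\rho\rangle$ for some $\xi\in\Delta_R$.
\item Because $h$ is convex on $[0,\tfrac{2}{k+1}]$ and $h\ge-1$ on $[0,1]$, you have $0\le 1-h(x)\le\min(2,2kx)$. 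Hence $\|\nabla g(\xi)-\mathbf 1\|_2^2\le 2\sum_r(1-h(\xi_r))\le 4k$, and so $|\tau-\widehat\tau|\le\min(\sqrt R,2\sqrt k)\,\|\widehat\rho-\rho\|_2$.
\item Since $\Psi(R,k)\ge 2\sqrt k$ once $R\ge 9k$, the stated prefactor is valid in that regime, but not near $R=k+1$.
\end{itemize}

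The stated linear-term constant $\frac{2}{3N}$ does not come out of your argument either. Your bound $\|e_Y-\rho\|_2\le\sqrt2$ is essentially attained (e.g.\ $\rho=(\epsilon,1-\epsilon)$). A direct application of Bernstein therefore gives $\frac{2\sqrt2}{3N}\ln\frac{R+1}{\Delta}$. The stated constant reflects the paper's tacit use of the norm bound $1$.
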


\begin{proof}
    Let $\Phi:\R^R \to [0, 1]$ be defined as follows:
    \begin{align*}
        \Phi(q_1, \dots,q_R) &= 1 - \sum_{r=1}^Rq_r\left[1-q_r\right]^k.
    \end{align*}
    \noindent Then, we have:
    \begin{align*}
        |\tau-\widehat\tau| &= \left|\Phi(\rho_1,\dots,\rho_R)-\Phi(\widehat\rho_1,\dots,\widehat\rho_R)\right| \\
        &\le \max_{\set{q_r}_{r=1}^R\in\Delta_R}\left\|\nabla\Phi(q_1,\dots,q_R)\right\|_2\cdot\|\u-\E[\u]\|_2,
    \end{align*}
    \noindent where $\u=\set{\widehat\rho_1,\dots,\widehat\rho_R}\in\Delta_R$ is the vector of probabilities and $\Delta_R$ is defined as the probability simplex with $R$ probabilities:
    \begin{align}
        \Delta_R:=\left\{\set{q_r}_{r=1}^R: q_r\ge0,\forall r\in[R] \textup{ and } \sum_{r=1}^Rq_r=1\right\}.
    \end{align}
    \noindent First, we bound the $\ell^2$-norm of the Jacobian $\left\|\nabla\Phi(q_1,\dots,q_R)\right\|_2$. For each $r\in[R]$, we have:
    \begin{align*}
        \frac{\partial\Phi}{\partial q_r} &= kq_r(1-q_r)^{k-1}-(1-q_r)^k.
    \end{align*}
    \noindent Now, let us solve the following constrained maximization problem:
    \begin{align*}
        \max_{\set{q_r}_{r=1}^R\in \Delta_R} \left\|\nabla\Phi(q_1,\dots,q_R)\right\|_2^2 = \max_{\set{q_r}_{r=1}^R\in\Delta_R} \sum_{r=1}^R(1-q_r)^{2(k-1)}\left[q_r(k+1)-1\right]^2.
    \end{align*}
    \noindent Then, for a Lagrange multiplier $\alpha>0$, we have the following Lagrangian:
    \begin{align*}
        \mathcal{L}\left(q_1,\dots,q_R;\alpha\right) &= kq_r(1-q_r)^{k-1}-(1-q_r)^k + \alpha\left[\sum_{r=1}^Rq_r-1\right].
    \end{align*}
    \noindent Therefore, for each $r\in[R]$, we have:
    \begin{align*}
         \frac{\partial\mathcal{L}(q_1,\dots,q_R;\alpha)}{\partial q_r} &= \frac{\partial}{\partial q_r}\left[kq_r(1-q_r)^{k-1}-(1-q_r)^k\right] + \alpha\\
         &= 2k(1-q_r)^{k-1}-k(k-1)q_r(1-q_r)^{k-2} + \alpha.
    \end{align*}
    \noindent Setting $\frac{\partial\mathcal{L}(q_1,\dots,q_R;\alpha)}{\partial q_r} = 0$, we have $\alpha=k(k-1)q_r(1-q_r)^{k-2}-2k(1-q_r)^{k-1}$ for all $r\in[R]$. Therefore, we have the optimal value of $\left\|\nabla\Phi(q_1,\dots,q_R)\right\|_2^2$ when $q_1=q_2=\dots=q_R=\frac{1}{R}$. As a result, for all set of probabilities $\set{q_r}_{r=1}^R\in\Delta_R$, we have:
    \begin{align*}
        \left\|\nabla\Phi(q_1,\dots,q_R)\right\|_2^2 &\le \sqrt{R\left(1-\frac{1}{R}\right)^{2(k-1)}\left(1-\frac{k+1}{R}\right)^2} \\
        &=\frac{|R-(k+1)|}{\sqrt{R}}\left(1-\frac{1}{R}\right)^{k-1}.
    \end{align*}
    \noindent Now, we analyze the concentration of $\u=\{\widehat\rho_1,\dots,\widehat\rho_R\}$ around its mean. Let $\mathcal{B}_R=\set{e_1,\dots,e_R}$ be the canonical basis of $\R^R$. Then, we can think of $\rho$ as a distribution over $\mathcal{B}_R$ such that for a random vector $X\sim\rho$, we have:
    \begin{align*}
        \forall r\in[R]: \P(X=e_r) = \rho_r.
    \end{align*}
    \noindent Then, we can write $\u-\E[\u]$ as:
    \begin{align*}
        \u-\E[\u] = \frac{1}{N}\sum_{j=1}^N (X_j-\E[\u])\quad\textup{where}\quad X_j\overset{\mathrm{i.i.d.}}{\sim} \rho,\forall j\in[R].
    \end{align*}
    \noindent Let $Z_j=X_j-\E[\u]$ for all $j\in[N]$. Using the matrix Bernstein inequality (Proposition \ref{prop:bernstein_matrix}), for all $\lambda>0$, we have:
    \begin{align*}
        \P\left(\|\u-\E[\u]\|_2\ge \lambda\right) &= \P\left(\left\|\frac{1}{N}\sum_{j=1}^N Z_j\right\|_2 \ge \lambda\right) \le (R+1)\exp\left(-\frac{N^2\lambda^2/2}{\sum_{j=1}^N\sigma_j^2+N\lambda/3}\right),
    \end{align*}
    \noindent where $\sigma_j^2=\max\left(\|\E[Z_jZ_j^\top]\|_\sigma, \E[Z_j^\top Z_j]\right)$ for all $j\in[N]$. From here, we use the notation $\rho=\E[\u]$ for brevity. Now, for all $j\in[N]$ we have:
    \begin{align*}
        \E[Z_j^\top Z_j] &= \E[\|Z_j\|_2^2] = \sum_{r=1}^R\rho_r \|e_r-\rho\|_2^2 \\
        &= \sum_{r=1}^R\rho_r\left[(1-\rho_r)^2 + \sum_{q\ne r}^R \rho_q^2\right] = \sum_{r=1}^R\rho_r\left[(1-\rho_r)^2 + \|\rho\|_2^2-\rho_r^2\right] \\
        &= \sum_{r=1}^R\rho_r\left[\|\rho\|_2^2+1-2\rho_r\right] = \|\rho\|_2^2 + 1 - 2\|\rho\|_2^2\\
        &= 1 - \|\rho\|_2^2.
    \end{align*}
    \noindent Since the matrix $\E[Z_jZ_j^\top]$ is positive semi-definite, for all $j\in[N]$, we have:
    \begin{align*}
        \|\E[Z_jZ_j^\top]\|_\sigma &\le \mathrm{trace}(\E[Z_jZ_j^\top]) = \mathrm{trace}\left(\sum_{r=1}^R\rho_r(e_r-\rho)(e_r-\rho)^\top\right) \\
        &= \mathrm{trace}\left(\mathrm{diag}(\rho) - \rho\rho^\top\right) = 1 - \|\rho\|_2^2.
    \end{align*}
    \noindent From the above, we have $\sigma_j^2=1-\|\rho\|_2^2$ for all $j\in[N]$. As a result, we have:
    \begin{align*}
        \P\left(\left\|\frac{1}{N}\sum_{j=1}^N Z_j\right\|_2 \ge \lambda\right) &\le (R+1)\exp\left(-\frac{N^2\lambda^2/2}{N-N\|\rho\|_2^2+N\lambda/3}\right) \\
        &= (R+1)\exp\left(-\frac{N\lambda^2/2}{1-\|\rho\|_2^2+\lambda/3}\right).
    \end{align*}
    \noindent Setting the right-hand-side of the above inequality to $\Delta\in(0,1)$, we have:
    \begin{align*}
        \frac{N\lambda^2}{2(1-\|\rho\|_2^2+\lambda/3)} = \ln\frac{R+1}{\Delta} \implies N\lambda^2-\frac{2\ln (R+1)/\Delta}{3}\lambda - 2(1-\|\rho\|_2^2)\ln\frac{R+1}{\Delta} = 0.
    \end{align*}
    \noindent Solving the above quadratic equation yields:
    \begin{align*}
        \lambda &= \frac{\frac{2\ln (R+1)/\Delta}{3} +\sqrt{\frac{4\ln^2(R+1)/\Delta}{9} +8N(1-\|\rho\|_2^2)\ln\frac{R+1}{\Delta}}}{2N} \\
        &= \frac{\ln(R+1)/\Delta}{3N} + \frac{1}{N}\sqrt{\frac{\ln^2(R+1)/\Delta}{9} + 2N(1-\|\rho\|_2^2)\ln\frac{R+1}{\Delta}}.
    \end{align*}
    \noindent In other words, we have:
    \begin{align*}
        \|\u-\rho\|_2 &\le \frac{\ln(R+1)/\Delta}{3N} + \frac{1}{N}\sqrt{\frac{\ln^2(R+1)/\Delta}{9} + 2N(1-\|\rho\|_2^2)\ln\frac{R+1}{\Delta}} \\
        &\le \frac{\ln(R+1)/\Delta}{3N} + \frac{1}{N}\left[\frac{\ln (R+1)/\Delta}{3} + \sqrt{2N(1-\|\rho\|_2^2)\ln\frac{R+1}{\Delta}}\right] \\
        &= \frac{2\ln (R+1)/\Delta}{3N} + \sqrt{\frac{2(1-\|\rho\|_2^2)\ln (R+1)/\Delta}{N}},
    \end{align*}
    \noindent with probability of $1-\Delta$. Combine this with the bound on $\|\nabla\Phi(q_1,\dots,q_R)\|_2$, we have:
    \begin{align*}
        |\tau-\widehat\tau| &\le \frac{|R-(k+1)|}{\sqrt{R}}\left(1-\frac{1}{R}\right)^{k-1}\cdot\|\u-\rho\|_2\\
        &\le \frac{|R-(k+1)|}{\sqrt{R}}\left(1-\frac{1}{R}\right)^{k-1}\left[\frac{2\ln (R+1)/\Delta}{3N} + \sqrt{\frac{2(1-\|\rho\|_2^2)\ln (R+1)/\Delta}{N}}\right],
    \end{align*}
    \noindent with probability of at least $1-\Delta$, as desired.
\end{proof}

\begin{proposition}[Multiplicative Concentration of $1-\widehat\tau$]
    \label{prop:first_multiplicative_control_of_tauhat}
    Let $\tau$ be the collision probability defined in Eqn.~\eqref{eq:collision_probability} and $\widehat\tau$ be the plug-in estimator in Eqn.~\eqref{eq:collision_probability_estimator}. For any $\Delta\in(0,1)$, with probability of at least $1-\Delta$:
    \begin{align}
        1-\widehat\tau &\ge \frac{C}{2}\left[1+\frac{1-\gamma_{4k}}{\gamma_{4k}}\bar C\right]^{-1}(1-\tau) &&\textup{as long as }N\ge 8\gamma_{4k}^{-1}\ln\left(\frac{2}{\Delta}\right)+12k\ln\left(\frac{2R}{\Delta}\right), \\
        1-\tau&\ge\frac{C}{2}\left[1+\frac{1-\widehat\gamma_{4k}}{\widehat\gamma_{4k}}\bar C\right]^{-1}(1-\widehat\tau)&&\textup{as long as }N\ge 3\gamma_{2k}^{-1}\ln\left(\frac{2}{\Delta}\right)+ 16k\ln\left(\frac{2R}{\Delta}\right).
    \end{align}
    \noindent with probability of at least $1-\Delta$ where $\gamma_\alpha, \widehat\gamma_\alpha, C,\bar C$ (where $\alpha\ge1$) are defined as follows:
    \begin{align*}
        \gamma_\alpha &:= \sum_{r\in[R]:\rho_r\le\frac{1}{\alpha}}\rho_r, \qquad\widehat\gamma_\alpha:=\sum_{r\in[R]:\widehat\rho_r\le\frac{1}{\alpha}}\widehat\rho_r,\qquad C:=\inf_{k\ge1}\left(1-\frac{1}{2k}\right)^k,\qquad \bar C:=\sup_{k\ge1}\left(1-\frac{1}{4k}\right)^k.
    \end{align*}
\end{proposition}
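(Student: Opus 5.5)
The plan is to sandwich both $1-\tau$ and $1-\widehat\tau$ between constant multiples of the mass of \emph{small} classes. On large classes the factor $(1-q)^k$ is uniformly bounded away from $1$. On small classes it is bounded below by $C$.

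\textbf{Deterministic upper bound.} For any probability vector $q$ and $\alpha=4k$, split $\sum_r q_r(1-q_r)^k$ over $\{q_r\le 1/(4k)\}$ and its complement. On the first set we bound $(1-q_r)^k\le 1$. On the complement, $(1-q_r)^k\le (1-\tfrac{1}{4k})^k\le\bar C$. Writing $g$ for the mass of the small set, this gives
\begin{align*}
\sum_r q_r(1-q_r)^k\le g+(1-g)\bar C = g\Big[1+\tfrac{1-g}{g}\bar C\Big].
\end{align*}
Applied with $q=\rho$ we get $1-\tau\le \gamma_{4k}[1+\frac{1-\gamma_{4k}}{\gamma_{4k}}\bar C]$. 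Applied with $q=\widehat\rho$ we get $1-\widehat\tau\le\widehat\gamma_{4k}[1+\frac{1-\widehat\gamma_{4k}}{\widehat\gamma_{4k}}\bar C]$. It then suffices to show, with high probability, that $1-\widehat\tau\ge \frac{C}{2}\gamma_{4k}$ for the first inequality and $1-\tau\ge\frac{C}{2}\widehat\gamma_{4k}$ for the second.

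\textbf{First inequality.} Let $\mathcal{S}=\{r:\rho_r\le 1/(4k)\}$, which is a deterministic set. We lower bound $1-\widehat\tau\ge\sum_{r\in\mathcal{S}}\widehat\rho_r(1-\widehat\rho_r)^k$. Two events are needed.
\begin{enumerate}[label=(\roman*)]
\item The count $\sum_{r\in\mathcal S}N_r\sim\mathrm{Bin}(N,\gamma_{4k})$ is at least $N\gamma_{4k}/2$. By the lower-tail multiplicative Chernoff bound (Proposition \ref{prop:multiplicative_chernoff}), this fails with probability at most $e^{-N\gamma_{4k}/8}\le\Delta/2$ once $N\ge 8\gamma_{4k}^{-1}\ln(2/\Delta)$.
\item For every $r\in\mathcal S$, $\widehat\rho_r\le 1/(2k)$. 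Each $N_r$ is binomial with mean at most $N/(4k)$, so exceeding twice that bound costs $\exp(-N/(12k))$ by the upper-tail Chernoff bound. This uses monotonicity in $\rho_r$, i.e.\ stochastic domination by $\mathrm{Bin}(N,1/(4k))$. A union bound over at most $R$ classes then needs $N\ge 12k\ln(2R/\Delta)$.
\end{enumerate}
On both events, $(1-\widehat\rho_r)^k\ge(1-\frac{1}{2k})^k\ge C$ for every $r\in\mathcal S$. Hence $1-\widehat\tau\ge C\gamma_{4k}/2$, and combining with the upper bound on $1-\tau$ gives the claim.

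\textbf{Second inequality.} The roles are swapped, but the set of empirically small classes is random, so we pass through a deterministic superset. Let $\mathcal T=\{r:\rho_r\le 1/(2k)\}$.
\begin{enumerate}[label=(\roman*)]
\item Show that w.h.p.\ $\{r:\widehat\rho_r\le 1/(4k)\}\subseteq\mathcal T$, i.e.\ every class with $\rho_r>1/(2k)$ has $\widehat\rho_r>1/(4k)$. This is a lower-tail Chernoff bound with deviation factor $1/2$, giving failure probability $e^{-N/(16k)}$ per class. A union bound over $R$ classes needs $N\ge16k\ln(2R/\Delta)$.
\item On this event, $\widehat\gamma_{4k}\le\sum_{r\in\mathcal T}\widehat\rho_r$. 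Here $\sum_{r\in\mathcal T}N_r\sim\mathrm{Bin}(N,\gamma_{2k})$, and the upper-tail Chernoff bound with deviation factor $1$ gives $\sum_{r\in\mathcal T}\widehat\rho_r\le 2\gamma_{2k}$, failing with probability $e^{-N\gamma_{2k}/3}\le\Delta/2$ once $N\ge 3\gamma_{2k}^{-1}\ln(2/\Delta)$.
\item Deterministically, $1-\tau\ge\sum_{r\in\mathcal T}\rho_r(1-\rho_r)^k\ge C\gamma_{2k}\ge\frac{C}{2}\widehat\gamma_{4k}$.
\end{enumerate}
Combining with the deterministic upper bound on $1-\widehat\tau$ finishes the proof.

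The main obstacle is the data-dependence of $\widehat\gamma_{4k}$ in the second inequality. It is handled by the containment step (i), which reduces everything to Chernoff bounds on deterministic sets of classes.
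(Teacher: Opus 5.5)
Your proposal is correct and follows essentially the same argument as the paper: the deterministic bound $\sum_r q_r(1-q_r)^k\le g\bigl[1+\frac{1-g}{g}\bar C\bigr]$ on the small-class mass, combined with a Chernoff-plus-union-bound containment between the true and empirical small-class sets, and a Chernoff bound on the aggregated small-class count. Your choice $\mathcal T=\{r:\rho_r\le 1/(2k)\}$ also matches the non-strict inequalities in the definitions of $\gamma_{2k}$ and $\widehat\gamma_{4k}$ slightly more cleanly than the paper's sets $\Gamma_c$ and $\widehat\Gamma_c$.
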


\begin{proof}(Prove $1-\tau\lesssim 1-\widehat\tau$ whp.)
    We define the indices set $\Gamma$ and $\widehat\Gamma$ as follows:
    \begin{align*}
        \Gamma := \Big\{r\in[R]: \rho_r \le\frac{1}{4k}\Big\}, \qquad \widehat\Gamma:=\Big\{r\in[R]: \widehat\rho_r\le \frac{1}{2k}\Big\}.
    \end{align*}
    \noindent Then, by definition, we have $\gamma_{4k}:=\sum_{r\in\Gamma}\rho_r$. 

    \noindent\textbf{Claim (i)}: For $\delta\in(0,1)$, we have $\P(\Gamma\subseteq\widehat\Gamma)\ge1-\delta$ as long as $N\ge 12k\ln\left(\frac{R}{\delta}\right)$.

    Fix one class $r\in\Gamma$. We can write $\widehat\rho_r=\frac{1}{N}\sum_{j=1}^N X_j$ where $X_j\sim\mathrm{Bern}(\rho_r)$. Let $\{U_j\}_{j=1}^N$ be a sequence of uniform variables ($U_j\sim\mathrm{Uniform}(0,1),\forall j\in[N]$). Then, for each $j\in[N]$, we have $X_j\overset{d}{=}\1{U_j\le \rho_r}$ by coupling. Therefore, we can write $\widehat\rho_r$ as follows:
    \begin{align*}
        \widehat\rho_r &= \frac{1}{N}\sum_{j=1}^N X_j \overset{d}{=} \frac{1}{N}\sum_{j=1}^N\1{U_j\le \rho_r}.
    \end{align*}
    \noindent Then, we have:
    \begin{align*}
        \P\left(\widehat\rho_r\ge\frac{1}{2k}\right) &= \P\left(\frac{1}{N}\sum_{j=1}^NX_j\ge\frac{1}{2k}\right) =\P\left(\frac{1}{N}\sum_{j=1}^N \1{U_j\le\rho_r}\ge\frac{1}{2k}\right) \\
        &=\P\left(\frac{\sharp\{j\in[N]:U_j\le\rho_r\}}{N}\ge\frac{1}{2k}\right) \\
        &\le \P\left(\frac{\sharp\{j\in[N]: U_j\le \frac{1}{4k}\}}{N}\ge\frac{1}{2k}\right) \qquad \Big(\textup{Since }\rho_r\le\frac{1}{4k}\Big) \\
        & = \P\left(\frac{1}{N}\sum_{j=1}^N \1{U_j\le\frac{1}{4k}} \ge\frac{1}{2k}\right).
    \end{align*}
    \noindent For each $j\in[N]$, we have $\1{U_j\le\frac{1}{4k}}\sim\mathrm{Bern}(\frac{1}{4k})$ (by coupling). Therefore, we can use the multiplicative Chernoff (Proposition \ref{prop:multiplicative_chernoff}) bound as follows:
    \begin{align*}
        \P\left(\widehat\rho_r\ge\frac{1}{2k}\right) &\le \P\left(\frac{1}{N}\sum_{j=1}^N \1{U_j\le\frac{1}{4k}} \ge\frac{1}{2k}\right) \le \exp\left(-\frac{N}{12k}\right).
    \end{align*}
    \noindent Then, using the union bound, we have:
    \begin{align*}
        \P\left(\exists r\in\Gamma: \widehat\rho_r\ge\frac{1}{2k}\right) &\le |\Gamma|\exp\left(-\frac{N}{12k}\right) \le R\exp\left(-\frac{N}{12k}\right).
    \end{align*}
    \noindent Hence, if we let $Re^{-N/12k}\le \delta$, then as long as $N\ge 12k\ln\left(\frac{R}{\delta}\right)$, we have $\widehat\rho_r\ge\frac{1}{2k}$ for all $r\in\Gamma$ wp. of at least $1-\delta$. In other words, as long as $N\ge12k\ln\left(\frac{R}{\delta}\right)$, $\Gamma\subseteq\widehat\Gamma$ wp. of at least $1-\delta$.

    \noindent\textbf{Claim (ii)}: Conditionally given that $\Gamma\subseteq\widehat\Gamma$ and for any constant $\delta\in(0,1)$, we have $1-\widehat\tau\ge\frac{C}{2}\gamma_{4k}$ and $1-\tau\le\alpha\left[1+\frac{1-\gamma_{4k}}{\gamma_{4k}}\bar C\right]$ with probability of at least $1-\delta$ as long as $N\ge8\gamma_{4k}^{-1}\ln\left(\frac{1}{\delta}\right)$.
    
    Firstly, we note that given $\Gamma\subseteq\widehat\Gamma$, then for all $r\in\Gamma$, $\widehat\rho_r\le\frac{1}{2k}$. Therefore:
    \begin{align*}
        1-\widehat\tau &= \sum_{r=1}^R\widehat\rho_r(1-\widehat\rho_r)^k \ge \sum_{r\in\Gamma}\widehat\rho_r(1-\widehat\rho_r)^k \\
        &\ge \left(1-\frac{1}{2k}\right)^k\sum_{r\in\Gamma}\widehat\rho_r\ge \inf_{k\ge1}\left(1-\frac{1}{2k}\right)^k\sum_{r\in\Gamma}\widehat\rho_r \qquad (\textup{Since }\Gamma\subseteq\widehat\Gamma) \\
        &= C\sum_{r\in\Gamma}\widehat\rho_r.
    \end{align*}
    \noindent Furthermore, we have:
    \begin{align*}
        \P\left(\sum_{r\in\Gamma}\widehat\rho_r \le\frac{\gamma_{4k}}{2}\right) &= \P\left(\sum_{r\in\Gamma}\widehat\rho_r\le\frac{1}{2}\E\left[\sum_{r\in\Gamma}\widehat\rho_r\right]\right) \le \exp\left(-\frac{N\gamma_{4k}}{8}\right)\qquad\textup{(Multiplicative Chernoff)}.
    \end{align*}
    \noindent Then, if we let $e^{-N\gamma_{4k}/8}\le \delta$, then we have $N\ge8\gamma_{4k}^{-1}\ln\left(\frac{1}{\delta}\right)$. Therefore, as long as $N\ge8\gamma_{4k}^{-1}\ln\left(\frac{1}{\delta}\right)$, we have $\sum_{r\in\Gamma}\widehat\rho_r\ge\frac{\gamma_{4k}}{2}$, making $1-\widehat\tau\ge\frac{C}{2}\gamma_{4k}$, wp. of at least $1-\delta$. Furthermore:
    \begin{align*}
        1-\tau &= \sum_{r\in\Gamma} \rho_r(1-\rho_r)^k + \sum_{\bar r\notin\Gamma}\rho_{\bar r}(1-\rho_{\bar r})^k \le \gamma_{4k} + \sum_{\bar r:\rho_{\bar r}>\frac{1}{4k}}\rho_{\bar r}(1-\rho_{\bar r})^k \qquad ((1-\rho_r)^k\le 1,\forall r\in[R])  \\
        &\le \gamma_{4k} + \left(1-\frac{1}{4k}\right)^k\sum_{\bar r:\rho_{\bar r}>\frac{1}{4k}}\rho_{\bar r} = \gamma_{4k}+(1-\gamma_{4k})\left(1-\frac{1}{4k}\right)^k \\
        &\le \gamma_{4k} +(1-\gamma_{4k})\sup_{k\ge 1}\left(1-\frac{1}{4k}\right)^k = \gamma_{4k}+(1-\gamma_{4k})\bar C \\
        &= \gamma_{4k}\left[1+\frac{1-\gamma_{4k}}{\gamma_{4k}}\bar C\right].
    \end{align*}
    \noindent Combining claims \textbf{(i)} and \textbf{(ii)}: As long as we have $N\ge 8\gamma_{4k}^{-1}\ln\left(\frac{1}{\delta}\right)+12k\ln\left(\frac{R}{\delta}\right)$, then $1-\widehat\tau\ge\frac{C}{2}\gamma_{4k}$ and $\gamma_{4k}\ge(1-\tau)\left[1+\frac{1-\gamma_{4k}}{\gamma_{4k}}\bar C\right]^{-1}$ hold simultaneously with probability of at least $1-2\delta$. Therefore:
    \begin{align*}
        1-\widehat\tau &\ge\frac{C\gamma_{4k}}{2}\ge\frac{C}{2}\left[1+\frac{1-\gamma_{4k}}{\gamma_{4k}}\bar C\right]^{-1}(1-\tau),
    \end{align*}
    \noindent with probability of at least $1-2\delta$. Setting $\delta=\Delta/2$ yields the desired sample complexity.
\end{proof}

\begin{proof}(Prove $1-\widehat\tau\lesssim 1-\tau$ whp.)
    Now, we define the following indices sets:
    \begin{align*}
        \Gamma_c :=\Big\{r\in[R]:\rho_r\ge\frac{1}{2k}\Big\}, \qquad\widehat\Gamma_c:=\Big\{r\in[R]: \widehat\rho_r\ge\frac{1}{4k}\Big\}.
    \end{align*}
    \noindent Then, by definition, $\widehat\gamma_{4k} := \sum_{r\notin\widehat\Gamma_c}\widehat\rho_r$ and $\gamma_{2k}=\sum_{r\notin\Gamma_c}\rho_r$.
    
    \noindent\textbf{Claim (i)}: For $\delta\in(0,1)$, $\P(\Gamma_c\subseteq\widehat\Gamma_c)\ge1-\delta$ as long as $N\ge16k\ln(\frac{R}{\delta})$. Fix $r\in\Gamma_c$, then we have $\rho_r\ge\frac{1}{2k}$. Then, we can write $\widehat\rho_r=\frac{1}{N}\sum_{j=1}^NX_j$ where $X_j\sim\mathrm{Bern}(\rho_r)$ for all $j\in[N]$. Using the same coupling strategy as Proposition \ref{prop:first_multiplicative_control_of_tauhat}, i.e., writing $\widehat\rho_r\overset{d}{=} \frac{1}{N}\sum_{j=1}^N\1{U_j\le \rho_r}$, we have:
    \begin{align*}
        \P\left(\widehat\rho_r \le \frac{1}{4k}\right) &= \P\left(\frac{1}{N}\sum_{j=1}^NX_j\le\frac{1}{4k}\right) = \P\left(\frac{1}{N}\sum_{j=1}^N\1{U_j\le \rho_r}\le\frac{1}{4k}\right) \\
        &\le \P\left(\frac{1}{N}\sum_{j=1}^N\1{U_j\le\frac{1}{2k}}\le \frac{1}{4k}\right) \qquad\textup{(By coupling)} \\
        &\le \exp\left(-\frac{N}{16k}\right) \qquad\textup{(Multiplicative Chernoff)}.
    \end{align*}
    \noindent Therefore, by the union bound:
    \begin{align*}
        \P\left(\exists r\in\Gamma_c:\widehat\rho_r\le \frac{1}{4k}\right)\le |\Gamma_c|\exp\left(-\frac{N}{16k}\right)\le R\exp\left(-\frac{N}{16k}\right).
    \end{align*}
    \noindent Therefore, if $Re^{-N/16k}\le\delta$ then as long as $N\ge 16k\ln\left(\frac{R}{\delta}\right)$, we have $\Gamma_c\subseteq\widehat\Gamma_c$ wp. of at least $1-\delta$.
    \noindent\textbf{Claim (ii)}: Conditionally given $\Gamma_c\subseteq\widehat\Gamma_c$, for any $\delta\in(0,1)$, we have  $1-\widehat\tau\le \widehat\gamma_{4k}\left[1+\frac{1-\widehat\gamma_{4k}}{\widehat\gamma_{4k}}\bar C\right]$ and $1-\tau\ge\frac{C}{2}\widehat\gamma_{4k}$ wp. of at least $1-\delta$ as long as $N\ge3\gamma_{2k}^{-1}\ln\left(\frac{1}{\delta}\right)$. We note that:
    \begin{align*}
        1-\tau &= \sum_{r=1}^R\rho_r(1-\rho_r)^k \ge \sum_{r\notin\Gamma_c}\rho_r(1-\rho_r)^k \\
        &\ge\left(1-\frac{1}{2k}\right)^k \sum_{r\notin\Gamma_c}\rho_r \ge \inf_{k\ge1}\left(1-\frac{1}{2k}\right)^k \sum_{r\notin\Gamma_c}\rho_r \\
        &= C\sum_{r\notin\Gamma_c}\rho_r=C\gamma_{2k}.
    \end{align*}
    \noindent By the multiplicative Chernoff bound (Proposition \ref{prop:multiplicative_chernoff}), we have:
    \begin{align*}
        \P\left(\gamma_{2k}\le\frac{1}{2}\sum_{r\notin\Gamma_c}\widehat\rho_r\right) = \P\left(\sum_{r\notin\Gamma_c}\widehat\rho_r \ge 2\E\left[\sum_{r\notin\Gamma_c}\widehat\rho_r\right]\right) \le \exp\left(-\frac{N\gamma_{2k}}{3}\right).
    \end{align*}
    \noindent Therefore, as long as $N\ge3\gamma^{-1}_{2k}\ln\left(\frac{1}{\delta}\right)$, we have $\gamma_{2k}\ge\frac{1}{2}\sum_{r\notin\Gamma_c}\widehat\rho_r$ and:
    \begin{align*}
        1 - \tau \ge \frac{C}{2}\sum_{r\notin\Gamma_c}\widehat\rho_r \ge \frac{C}{2}\sum_{r\notin\widehat\Gamma_c}\widehat\rho_r = \frac{C}{2}\widehat\gamma_{4k} \qquad ([R]\setminus \Gamma_c\supseteq[R]\setminus\widehat\Gamma_c).
    \end{align*}
    \noindent Furthermore, we have:
    \begin{align*}
        1-\widehat\tau &= \sum_{r\notin\widehat\Gamma_c}\widehat\rho_r(1-\widehat\rho_r)^k + \sum_{\bar r\in\widehat\Gamma_c}\widehat\rho_{\bar r}(1-\widehat\rho_{\bar r})^k \\
        &\le \sum_{r\notin\widehat\Gamma_c}\widehat\rho_r + \left(1-\frac{1}{4k}\right)^k\sum_{\bar r\in\widehat\Gamma_c}\widehat\rho_{\bar r} \\
        &= \widehat\gamma_{4k} + \left(1-\frac{1}{4k}\right)^k(1-\widehat\gamma_{4k}) \qquad\qquad \Big(\widehat\gamma_{4k}:=\sum_{r\notin\widehat\Gamma_c}\widehat\rho_r\Big)\\
        &\le \widehat\gamma_{4k} + (1-\widehat\gamma_{4k})\sup_{k\ge1}\left(1-\frac{1}{4k}\right)^k \qquad \Big(\bar C := \sup_{k\ge1}\Big(1-\frac{1}{4k}\Big)^k\Big) \\
        &= \widehat\gamma_{4k}\left[1+\frac{1-\widehat\gamma_{4k}}{\widehat\gamma_{4k}}\bar C\right].
    \end{align*}
    \noindent Combining claims \textbf{(i)} and \textbf{(ii)} via the union bound: As long as we have $N\ge 3\gamma_{2k}^{-1}\ln\left(\frac{1}{\delta}\right) + 16k\ln\left(\frac{R}{\delta}\right)$, then $1-\tau\ge\frac{C}{2}\widehat\gamma_{4k}$ and $1-\widehat\tau\le \widehat\gamma_{4k}\left[1+\frac{1-\widehat\gamma_{4k}}{\widehat\gamma_{4k}}\bar C\right]$ holds simultaneously with probability of at least $1-2\delta$. Therefore, we have:
    \begin{align*}
        1-\tau \ge\frac{C}{2}\widehat\gamma_{4k}\ge\frac{C}{2}\left[1+\frac{1-\widehat\gamma_{4k}}{\widehat\gamma_{4k}}\bar C\right]^{-1}(1-\widehat\tau),
    \end{align*}
    \noindent with probability of at least $1-2\delta$. Setting $\delta=\Delta/2$ yields the desired sample complexity.
\end{proof}

\begin{remark}
    From the above result, if we have $\gamma_{4k}\in\mathcal{O}(1)$, then as long as $N\ge\mathcal{O}\left(k\ln\left(\frac{R}{\delta}\right)\right)$, we have a ``well-behaved" multiplicative control for $1-\widehat\tau$. This means that if the probability $\gamma_{4k}$ is not vanishingly small, meaning the majority of the classes have small probabilities, the above result is sufficient. In practical extreme multi-class settings, this is a reasonable assumption especially when $k\ll R$. However, if we want to get rid of the prior assumption that the majority of the classes are small, we can rely on the result that follows.
\end{remark}

\begin{proposition}[Multiplicative Concentration of $1-\widehat\tau$]
    \label{prop:second_multiplicative_control_of_tauhat}
    Let $\tau$ be the collision probability defined in Eqn.~\eqref{eq:collision_probability} and $\widehat\tau$ be the plug-in estimator in Eqn.~\eqref{eq:collision_probability_estimator}. Suppose $\sum_{r:\rho_r\le\frac{1}{4k^2}}\rho_r(1-\rho_r)^k\le \frac{1-\tau}{2}$. Then, for any $\Delta\in(0,1)$, with probability of at least $1-\Delta$, we have:
    \begin{align}
        \textup{When }\rho_{\max}\le \frac{1}{2}:1-\widehat\tau &\ge \frac{1-\tau}{4}\quad\textup{ as long as } N\ge 132k^2\ln\left(\frac{2R}{\Delta}\right). \\
        \textup{When }\rho_{\max}>\frac{1}{2}:1-\widehat\tau &\ge \frac{1-\tau}{8}\quad\textup{ as long as } N\ge 164k^2\ln\left(\frac{4R}{\Delta}\right)+\frac{25\ln\left(\frac{2}{\Delta}\right)}{1-\rho_{\max}}.
    \end{align}
\end{proposition}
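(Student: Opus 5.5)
The plan is to reduce everything to classes that are \emph{not} tiny, and then control each such class multiplicatively with Chernoff bounds. Let $\Gamma_t:=\{r:\rho_r\le \frac{1}{4k^2}\}$. By the assumption $\sum_{r\in\Gamma_t}\rho_r(1-\rho_r)^k\le\frac{1-\tau}{2}$, the complementary classes satisfy
\begin{align*}
\sum_{r\notin\Gamma_t}\rho_r(1-\rho_r)^k\ge \frac{1-\tau}{2}.
\end{align*}
Since $1-\widehat\tau\ge\sum_{r\notin\Gamma_t}\widehat\rho_r(1-\widehat\rho_r)^k$, it suffices to show $\widehat\rho_r(1-\widehat\rho_r)^k\ge c\,\rho_r(1-\rho_r)^k$ simultaneously for all $r\notin\Gamma_t$. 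The target constant is $c=\frac12$ when $\rho_{\max}\le\frac12$, and $c=\frac14$ (after an extra loss) when $\rho_{\max}>\frac12$.

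\textbf{Case $\rho_{\max}\le\frac12$.} Fix $r\notin\Gamma_t$, so $N\rho_r\ge 33\ln(2R/\Delta)$ under $N\ge 132k^2\ln(2R/\Delta)$. I would apply the two-sided multiplicative Chernoff bound (Proposition \ref{prop:multiplicative_chernoff}) to $N_r\sim\mathrm{Bin}(N,\rho_r)$. This gives, with probability $1-\Delta/R$,
\begin{align*}
|\widehat\rho_r-\rho_r|\le\sqrt{3\rho_r\ln(2R/\Delta)/N}=:\delta_r.
\end{align*}
Two consequences follow. First, $\delta_r\le\rho_r/\sqrt{11}$, hence $\widehat\rho_r\ge(1-11^{-1/2})\rho_r$. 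Second, since $\rho_r\le 1$, $k\delta_r\le k\sqrt{3\ln(2R/\Delta)/N}\le 1/\sqrt{44}$; this is where the $k^2$ in the sample size enters. Because $1-\rho_r\ge\frac12$, I then bound
\begin{align*}
\left(\frac{1-\widehat\rho_r}{1-\rho_r}\right)^k\ge(1-2\delta_r)^k\ge 1-2k\delta_r\ge 1-\frac{2}{\sqrt{44}}.
\end{align*}
Multiplying the two factors gives a constant above $\frac12$. A union bound over the at most $R$ classes, summed against $\frac{1-\tau}{2}$, yields $1-\widehat\tau\ge\frac{1-\tau}{4}$. The numerical constants $132$ and $33$ are to be tuned so that the product of the two factors is at least $\frac12$.

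\textbf{Case $\rho_{\max}>\frac12$.} Here only the dominant class $r^*$ violates $1-\rho_r\ge\frac12$, and every other class is handled exactly as above. For $r^*$ I would run a Chernoff bound on the complementary count $N-N_{r^*}\sim\mathrm{Bin}(N,1-\rho_{\max})$. Under $N\ge 25\ln(2/\Delta)/(1-\rho_{\max})$, this gives $1-\widehat\rho_{r^*}\in[\frac12,\frac32](1-\rho_{\max})$ and $\widehat\rho_{r^*}\ge\rho_{\max}/2$.

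\textbf{Main obstacle.} The difficulty is the $k$-th power: the ratio $((1-\widehat\rho_{r^*})/(1-\rho_{r^*}))^k$ is only controlled when $k\,|\widehat\rho_{r^*}-\rho_{r^*}|/(1-\rho_{r^*})=O(1)$. The additive deviation is of order $\sqrt{(1-\rho_{\max})\ln/N}$, so this requires $k\sqrt{\ln/(N(1-\rho_{\max}))}=O(1)$, which the stated sample size does not obviously guarantee. I would therefore split on the size of $r^*$'s term. If $\rho_{r^*}(1-\rho_{r^*})^k$ is at most half of $\sum_{r\notin\Gamma_t}\rho_r(1-\rho_r)^k$, drop it; the remaining classes then carry at least $\frac{1-\tau}{4}$ and are handled as in the first case, giving $\frac{1-\tau}{8}$. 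Otherwise, I would try to show that $r^*$ carrying that much mass forces $(1-\rho_{\max})^k$ to be non-negligible relative to the other classes. This would bound $k(1-\rho_{\max})$ and make the additive deviation small enough, using both sample-size conditions together with the $4R/\Delta$ union bound. This second sub-case is where I expect the real work, and where the argument might instead need to fall back on the gap between $164k^2$ and $132k^2$.
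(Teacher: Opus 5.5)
The genuine gap is in the case $\rho_{\max}>\frac12$, in the sub-case you leave open. Write $A:=\rho_{\max}(1-\rho_{\max})^k$, $B:=\sum_{r\notin\Gamma_t\cup\{r^*\}}\rho_r(1-\rho_r)^k$ and $T:=\sum_{r\in\Gamma_t}\rho_r(1-\rho_r)^k$. Then $1-\tau=A+B+T$, and the hypothesis reads $T\le A+B$.

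Your sub-case $A\le B$ is fine. For $A>B$ you aim at multiplicative control of $(1-\widehat\rho_{r^*})^k$, which, as you note, the stated sample size does not give in general. The missing idea is that this power never has to be controlled, because the \emph{other} classes pay for the dominant one. For every $r\neq r^*$, tiny classes included, we have $1-\rho_r\ge\rho_{\max}>\frac12$. The lower-tail Chernoff bound for $N-N_r\sim\mathrm{Bin}(N,1-\rho_r)$, which needs no lower bound on $\rho_r$, then gives $(1-\widehat\rho_r)^k\ge(1-\rho_r)^k/\sqrt2\ge\rho_{\max}^k/\sqrt2$ simultaneously for all these $R-1$ classes once $N\gtrsim k^2\ln(R/\Delta)$. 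This extra union bound is the $32k^2$ in $164=132+32$. Therefore
\[
1-\widehat\tau\;\ge\;\sum_{r\neq r^*}\widehat\rho_r(1-\widehat\rho_r)^k\;\ge\;\frac{\rho_{\max}^k}{\sqrt2}\,(1-\widehat\rho_{r^*})\;\ge\;\frac12\,\rho_{\max}^k(1-\rho_{\max})\;\ge\;\frac12\,A .
\]
Here the condition $N\gtrsim\ln(1/\Delta)/(1-\rho_{\max})$ is used only for the un-powered quantity $1-\widehat\rho_{r^*}\ge(1-\rho_{\max})/\sqrt2$, an ingredient you already have. The last step is $\rho_{\max}^{k-1}\ge(1-\rho_{\max})^{k-1}$. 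Your first-case argument (with the constant fix below) gives $1-\widehat\tau\ge\frac12 B$ from overlapping summands, so you average the two bounds instead of adding them: $1-\widehat\tau\ge\frac14(A+B)\ge\frac18(1-\tau)$. No split on $A$ versus $B$ is needed.

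Your own split can also be closed, but not by bounding $k(1-\rho_{\max})$. What is needed is a constant lower bound on $1-\rho_{\max}$, and it comes from invoking the small-class hypothesis again, which your sketch does not do. If $A>B$ then $T<2A$. Together with $T\ge\frac34\sum_{r\in\Gamma_t}\rho_r$ and $B\ge\rho_{\max}^k\sum_{r\notin\Gamma_t\cup\{r^*\}}\rho_r$, this gives $1-\rho_{\max}<\bigl(\frac83+\rho_{\max}^{-k}\bigr)A$, which forces $1-\rho_{\max}>\frac{3}{14}$ when $k\ge2$. Then $N\ge164k^2\ln(4R/\Delta)$ does control $(1-\widehat\rho_{r^*})^k$, and for $k=1$ the $1/(1-\rho_{\max})$ condition suffices directly.

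Separately, your first case has a numerical slip. With your parameters both factors equal $1-11^{-1/2}$, and $(1-11^{-1/2})^2\approx0.488<\frac12$, so the product is not above $\frac12$ at the fixed constant $132$. Using $\rho_r\le\frac12$ rather than $\rho_r\le1$ when bounding $k\delta_r$ gives $2k\delta_r\le2\sqrt{1.5/132}\approx0.21$ and a product of about $0.55$, which suffices.
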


\begin{proof}(Prove $1-\tau\lesssim 1-\widehat\tau$)
    We define the set $\Gamma_*,\Gamma\subseteq[R]$ as follows:
    \begin{align*}
        \bar\Gamma_* :=\Big\{r\in[R]:\frac{1}{4k^2}<\rho_r\le\frac{1}{2}\Big\},\qquad\Gamma:=\Big\{r\in[R]:\rho_r\le\frac{1}{4k^2}\Big\}.
    \end{align*}
    \noindent We first make the observation that $\Gamma\cup\bar\Gamma_*=[R]$ if $\rho_{\max}=\max_{r\in[R]}\rho_r\le \frac{1}{2}$. Otherwise, we have $\Gamma\cup\bar\Gamma_*=[R]\setminus\{r_*\}$ where $r_*=\arg\max_{r\in[R]}\rho_r$.

    \noindent\textbf{Claim (i)}: For any $\delta\in(0,1)$, as long as we have $N\ge 32k^2\ln\left(\frac{R}{\delta}\right)$, then $(1-\widehat\rho_r)^k\ge\frac{1}{\sqrt{2}}(1-\rho_r)^k$ with probability of at least $1-\delta$ for all $r\in\bar\Gamma_*$. For all $r\in[R]$ such that $\rho_r\le \frac{1}{2}$, we have:
    \begin{align*}
        \P\left([1-\widehat\rho_r]^k\le\frac{[1-\rho_r]^k}{\sqrt{2}}\right) &= \P\left([1-\widehat\rho_r]^k \le \left[2^{-\frac{1}{2k}}(1-\rho_r)\right]^k\right)= \P\left(1-\widehat\rho_r \le 2^{-\frac{1}{2k}}(1-\rho_r)\right) \\
        &\le \exp\left(-\frac{1}{2}\left(1-\frac{1}{2^\frac{1}{2k}}\right)^2N(1-\rho_r)\right) \qquad\textup{(Multiplicative Chernoff)}\\
        &\le \exp\left(-\frac{N}{4}\left(1-\frac{1}{2^\frac{1}{2k}}\right)^2\right)\qquad\Big(\textup{Since }\rho_r\le\frac{1}{2}\Big) \\
        &\le \exp\left(-\frac{N}{4}\left[\frac{1}{2}\left(\frac{1}{2k}\right)^2\right]\right) = \exp\left(-\frac{N}{32k^2}\right).
    \end{align*}
    \noindent Therefore, by the union bound:
    \begin{align*}
        \P\left(\exists r\in\bar\Gamma_*:[1-\widehat\rho_r]^k\le\frac{[1-\rho_r]^k}{\sqrt{2}}\right) \le |\bar\Gamma_*|\exp\left(-\frac{N}{32k^2}\right)\le R\exp\left(-\frac{N}{32k^2}\right).
    \end{align*}
    \noindent Hence, if $\delta\ge Re^{-N/32k^2}$ then as long as $N\ge 32k^2\ln\left(\frac{R}{\delta}\right)$, we have $[1-\widehat\rho_r]^k\ge\frac{[1-\rho_r]^k}{\sqrt{2}}$ for all $r\in\bar\Gamma_*$.

    \noindent\textbf{Claim (ii)}: For any $\delta\in(0,1)$, as long as $N\ge 100k^2\ln\left(\frac{R}{\delta}\right)$, then $\widehat\rho_r\ge\frac{\rho_r}{\sqrt{2}}$ with probability of at least $1-\delta$ for all $r\in\bar\Gamma_*$. By multiplicative Chernoff bound, we have:
    \begin{align*}
        \P\left(\widehat\rho_r\le \frac{1}{\sqrt{2}}\rho_r\right) &\le \exp\left(-\frac{1}{2}\left(1-\frac{1}{\sqrt{2}}\right)^2N\rho_r\right) \\
        &\le \exp\left(-\frac{1}{8}\left(1-\frac{1}{\sqrt{2}}\right)^2\frac{N}{k^2}\right) \qquad \Big(\textup{Since }\rho_r>\frac{1}{4k^2}\Big) \\
        &\le \exp\left(-\frac{N}{100k^2}\right).
    \end{align*}
    \noindent Then, by the union bound:
    \begin{align*}
        \P\left(\exists r\in\bar\Gamma_*:\widehat\rho_r\le\frac{\rho_r}{\sqrt{2}}\right) \le |\bar\Gamma_*|\exp\left(-\frac{N}{100k^2}\right)\le R\exp\left(-\frac{N}{100k^2}\right).
    \end{align*}
    \noindent As a result, as long as $N\ge 100k^2\ln\left(\frac{R}{\delta}\right)$, then $\widehat\rho_r\ge\frac{\rho_r}{\sqrt{2}}$ for all $r\in\bar\Gamma_*$.

    \noindent\newline \textbf{Case 1 ($\rho_{\max}\le\frac{1}{2}$)}: In this case, we have $1-\tau=\sum_{r\in\bar\Gamma_*}\rho_r(1-\rho_r)^k+\sum_{r\in\Gamma}\rho_r(1-\rho_r)^k$. Combining claims \textbf{(i)} and \textbf{(ii)}: as long as we have $N\ge 132k^2\ln\left(\frac{R}{\delta}\right)$, with probability of at least $1-2\delta$, the following event holds:
    \begin{align*}
        1-\widehat\tau &\ge \sum_{r\in\bar\Gamma_*}\widehat\rho_r(1-\widehat\rho_r)^k \ge \frac{1}{2}\sum_{r\in\bar\Gamma_*}\rho_r(1-\rho_r)^k \qquad \textup{(By claims \textbf{(i)}, \textbf{(ii)})} \\
        &\ge\frac{1}{2}\left[(1-\tau) - \sum_{r\in\Gamma}\rho_r(1-\rho_r)^k\right]\\
        &\ge \frac{1}{4}(1-\tau)\qquad\Big(\sum_{r\in\Gamma}\rho_r(1-\rho_r)^k\le\frac{1-\tau}{2}\Big).
    \end{align*}
    \noindent Finally, setting $\delta=\Delta/2$ yields the desired sample complexity for $\rho_{\max}\le\frac{1}{2}$ case.

    \noindent \textbf{Case 2 ($\rho_{\max}>\frac{1}{2}$)}: Let us denote $r_*=\arg\max_{r\in[R]}\rho_r$. Using the strategy in claim \textbf{(ii)}, for all $r\in[R]$ such that $r\ne r_*$, we have:
    \begin{align*}
        \P\left([1-\widehat\rho_{r}]^k\le\frac{\rho_{\max}^k}{\sqrt{2}}\right) &\le \P\left([1-\widehat\rho_{r}]^k \le \frac{[1-\rho_{r}]^k}{\sqrt{2}}\right) \qquad (1-\rho_r\ge\rho_{\max},\forall r\ne r_*) \\
        &\le \exp\left(-\frac{N}{32k^2}\right)\qquad\textup{(Multiplicative Chernoff)}.
    \end{align*}
    \noindent Therefore, by the union bound, we have:
    \begin{align*}
        \P\left(\exists r\ne r_*:[1-\widehat\rho_{r}]^k\le\frac{\rho_{\max}^k}{\sqrt{2}}\right) \le (R-1)\exp\left(-\frac{N}{32k^2}\right).
    \end{align*}
    \noindent Hence, as long as $N\ge 32k^2\ln\left(\frac{R-1}{\delta}\right)$, with probability of at least $1-\delta$, we have:
    \begin{align*}
        \sum_{r\ne r_*}\widehat\rho_r(1-\widehat\rho_r)^k \ge \frac{1}{\sqrt{2}}\rho_{\max}^k\sum_{r\ne r_*}\widehat\rho_r = \frac{1}{\sqrt{2}}\rho_{\max}^k(1-\widehat\rho_{r_*}).
    \end{align*}
    \noindent Using multiplicative Chernoff's bound again on $1-\widehat\rho_{r_*}$, we have:
    \begin{align*}
        \P\left(1-\widehat\rho_{r_*}\le\frac{1-\rho_{\max}}{\sqrt{2}}\right) &\le \exp\left(-\frac{N}{2}\left(1-\frac{1}{\sqrt 2}\right)^2(1-\rho_{\max})\right) \\
        &\le \exp\left(-\frac{N(1-\rho_{\max})}{25}\right).
    \end{align*}
    \noindent Therefore, as long as $N\ge 25(1-\rho_{\max})^{-1}\ln\left(\frac{1}{\delta}\right)$, with probability of at least $1-\delta$, we have $1-\widehat\rho_{r_*}\ge\frac{1-\rho_{\max}}{\sqrt{2}}$. Combine this with the above, as long as $N\ge 32k^2\ln\left(\frac{R-1}{\delta}\right)+25(1-\rho_{\max})^{-1}\ln\left(\frac{1}{\delta}\right)$, with probability of at least $1-2\delta$, we have:
    \begin{align*}
        \sum_{r\ne r_*}\widehat\rho_r(1-\widehat\rho_r)^k\ge\frac{1}{2}\rho_{\max}^k(1-\rho_{\max})\ge \frac{1}{2}\rho_{\max}(1-\rho_{\max})^k \qquad(1-\rho_{\max}\le \rho_{\max}).
    \end{align*}
    \noindent Combine with claims \textbf{(i)}, \textbf{(ii)}, as long as we have $N\ge 132k^2\ln\left(\frac{R}{\delta}\right)+32k^2\ln\left(\frac{R-1}{\delta}\right)+\frac{25\ln\left(\frac{1}{\delta}\right)}{1-\rho_{\max}}$, with probability of at least $1-4\delta$, we have:
    \begin{align*}
        1-\widehat\tau &\ge\frac{1}{2}\left[\frac{1}{2}\rho_{\max}(1-\rho_{\max})^k+\frac{1}{2}\sum_{r\in\bar\Gamma_*}\rho_r(1-\rho_r)^k\right] \\
        &= \frac{1}{4}\left[\frac{1}{2}\rho_{\max}(1-\rho_{\max})^k+\frac{1}{2}\sum_{r\in\bar\Gamma_*}\rho_r(1-\rho_r)^k\right] \\
        &\ge \frac{1}{8}\left(1-\tau\right).
    \end{align*}
    \noindent Finally, setting $\delta=\Delta/4$ yields the desired sample complexity for case $\rho_{\max}>\frac{1}{2}$.
\end{proof}

\begin{proposition}
    \label{prop:third_multiplicative_control_of_tauhat}
    Let $\tau$ be the collision probability defined in Eqn.~\eqref{eq:collision_probability} and $\widehat\tau$ be the plug-in estimator in Eqn.~\eqref{eq:collision_probability_estimator}. Suppose $\sum_{r:\rho_r\le\frac{1}{4k^2}}\rho_r(1-\rho_r)^k\ge \frac{1-\tau}{2}$. Then, for any $\Delta\in(0,1)$, with probability of at least $1-\Delta$:
    \begin{align}
        1-\widehat\tau &\ge \frac{1-\tau}{8}\quad\textup{ as long as } N\ge 12k^2\ln\left(\frac{2R}{\Delta}\right) + \frac{16\ln\left(\frac{2}{\Delta}\right)}{1-\tau}. 
    \end{align}
\end{proposition}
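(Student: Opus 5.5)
The plan is to mirror the first half of the proof of Proposition \ref{prop:first_multiplicative_control_of_tauhat}, with the set of small classes now being $\Gamma:=\{r\in[R]:\rho_r\le\frac{1}{4k^2}\}$ and $\gamma:=\sum_{r\in\Gamma}\rho_r$.

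\textbf{Step 1 (the assumption forces $\gamma$ to be large).} Since $(1-\rho_r)^k\le 1$, the hypothesis gives
\begin{align*}
\frac{1-\tau}{2}\le\sum_{r\in\Gamma}\rho_r(1-\rho_r)^k\le \gamma,
\end{align*}
so $\gamma\ge\frac{1-\tau}{2}$. This is the only place the hypothesis is used. It turns the problem into showing that the empirical mass of $\Gamma$ survives and that each empirical small class still has non-collision factor bounded below.

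\textbf{Step 2 (small classes stay small).} Fix $r\in\Gamma$, so $\rho_r\le\frac{1}{4k^2}\le\frac{1}{4k}$. I use the same uniform-variable coupling as in Claim (i) of Proposition \ref{prop:first_multiplicative_control_of_tauhat}: write $\widehat\rho_r\overset{d}{=}\frac1N\sum_j\1{U_j\le\rho_r}$ and dominate it by $\frac1N\sum_j\1{U_j\le 1/(4k)}$. The multiplicative Chernoff bound (Proposition \ref{prop:multiplicative_chernoff}) then gives
\begin{align*}
\P\left(\widehat\rho_r\ge\tfrac{1}{2k}\right)\le e^{-N/(12k)}.
\end{align*}
A union bound over at most $R$ classes gives $\widehat\rho_r\le\frac{1}{2k}$ for all $r\in\Gamma$ with probability at least $1-\Delta/2$, as soon as $N\ge 12k\ln(2R/\Delta)$. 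This is implied by the stated $12k^2\ln(2R/\Delta)$. On this event, Bernoulli's inequality gives $(1-\widehat\rho_r)^k\ge 1-k\cdot\frac{1}{2k}=\frac12$ for every $r\in\Gamma$.

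\textbf{Step 3 (the empirical mass of $\Gamma$ concentrates).} The quantity $\sum_{r\in\Gamma}\widehat\rho_r$ is an average of $N$ i.i.d.\ Bernoulli$(\gamma)$ indicators. The lower-tail multiplicative Chernoff bound gives
\begin{align*}
\P\left(\sum_{r\in\Gamma}\widehat\rho_r\le\tfrac{\gamma}{2}\right)\le e^{-N\gamma/8}\le e^{-N(1-\tau)/16},
\end{align*}
where the last inequality uses Step 1. This is at most $\Delta/2$ once $N\ge\frac{16\ln(2/\Delta)}{1-\tau}$.

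\textbf{Step 4 (combine).} On the intersection of the events of Steps 2 and 3, which has probability at least $1-\Delta$ by a union bound,
\begin{align*}
1-\widehat\tau\ge\sum_{r\in\Gamma}\widehat\rho_r(1-\widehat\rho_r)^k\ge\frac12\sum_{r\in\Gamma}\widehat\rho_r\ge\frac{\gamma}{4}\ge\frac{1-\tau}{8}.
\end{align*}
Adding the two sample-size requirements gives the stated condition on $N$.

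I do not expect a real obstacle. The only delicate point is making sure the Chernoff threshold in Step 2 yields $(1-\widehat\rho_r)^k\ge\frac12$ uniformly in $k$. Bernoulli's inequality handles this directly, so no constant $C$ as in Proposition \ref{prop:first_multiplicative_control_of_tauhat} is needed. The slack between $\frac{1}{4k^2}$ and $\frac{1}{4k}$ only makes the argument easier.
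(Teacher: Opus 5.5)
Your proof is correct and follows the paper's argument: the same set $\Gamma=\{r:\rho_r\le\frac{1}{4k^2}\}$, the same coupling-plus-Chernoff step showing these classes stay small empirically, the same lower-tail Chernoff bound on $\sum_{r\in\Gamma}\widehat\rho_r$ using $\sum_{r\in\Gamma}\rho_r\ge\frac{1-\tau}{2}$, and the same final combination. The only difference is the empirical cutoff: the paper uses $\widehat\rho_r\le\frac{1}{2k^2}$, which gives its $12k^2\ln(2R/\Delta)$ requirement and the factor $(1-\frac{1}{2k^2})^k\ge\frac12$, whereas your cutoff $\frac{1}{2k}$ with Bernoulli's inequality needs only $N\ge 12k\ln(2R/\Delta)$, which the stated condition implies.
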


\begin{proof}
    Let us define the following indices sets:
    \begin{align*}
        \Gamma := \Big\{r\in[R]:\rho_r \le \frac{1}{4k^2}\Big\}, \qquad\widehat\Gamma:=\Big\{r\in[R]:\widehat\rho_r\le \frac{1}{2k^2}\Big\}.
    \end{align*}
    \noindent Then, by the initial assumption, we have $\sum_{r\in\Gamma}\rho_r(1-\rho_r)^k\ge\frac{1-\tau}{2}$.

    \textbf{Claim}: For any $\delta\in(0,1)$, $\P(\Gamma\subseteq\widehat\Gamma)\ge1-\delta$ as long as $N\ge 12k^2\ln\left(\frac{R}{\delta}\right)$.

    Let us fix any $r\in\Gamma$. Then, we can write $\widehat\rho_r=\frac{1}{N}\sum_{j=1}^NX_j$ where $X_j\sim\mathrm{Bern}(\rho_r)$. Then, by coupling, we have $X_j\overset{d}{=}\1{U_j\le \rho_r}$ where $U_j\sim\mathrm{Uniform}(0,1)$ for all $j\in[N]$. Therefore:
    \begin{align*}
        \P\left(\widehat\rho_r\ge\frac{1}{2k^2}\right) &= \P\left(\frac{1}{N}\sum_{j=1}^NX_j\ge\frac{1}{2k^2}\right) = \P\left(\frac{1}{N}\sum_{j=1}^N\1{U_j\le\rho_r}\ge\frac{1}{2k^2}\right) \\
        &\le \P\left(\frac{1}{N}\sum_{j=1}^N\1{U_j\le\frac{1}{4k^2}}\ge\frac{1}{2k^2}\right)\qquad\textup{(By coupling)} \\
        &\le \exp\left(-\frac{N}{12k^2}\right)\qquad\textup{(Multiplicative Chernoff)}.
    \end{align*}
    \noindent By the union bound:
    \begin{align*}
        \P\left(\exists r\in\Gamma:\widehat\rho_r\ge\frac{1}{2k^2}\right) &\le |\Gamma|\exp\left(-\frac{N}{12k^2}\right)\le R\exp\left(-\frac{N}{12k^2}\right).
    \end{align*}
    \noindent As a result, if we let $\delta\ge Re^{-N/12k^2}$ then as long as $N\ge12k^2\ln\left(\frac{R}{\delta}\right)$, we have $\widehat\rho_r\le\frac{1}{2k^2},\forall r\in\Gamma$ with probability of at least $1-\delta$. In other words, $\Gamma\subseteq\widehat\Gamma$ with probability of at least $1-\delta$.

    \noindent From the above claim, under the high probability event that $\Gamma\subseteq\widehat\Gamma$, we have:
    \begin{align*}
        1-\widehat\tau &\ge \sum_{r\in\widehat\Gamma}\widehat\rho_r(1-\widehat\rho_r)^k \ge \left(1-\frac{1}{2k^2}\right)^k\sum_{r\in\widehat\Gamma}\widehat\rho_r \\
        &\ge\left(1-\frac{1}{2k^2}\right)^k\sum_{r\in\Gamma}\widehat\rho_r.
    \end{align*}
    \noindent Now, by the multiplicative Chernoff bound:
    \begin{align*}
        \P\left(\sum_{r\in\Gamma}\widehat\rho_r\le\frac{1}{2}\sum_{r\in\Gamma}\rho_r\right) &\le \exp\left(-\frac{N}{8}\sum_{r\in\Gamma}\rho_r\right) \le \exp\left(-\frac{N}{8}\sum_{r\in\Gamma}\rho_r(1-\rho_r)^k\right) \\
        &\le \exp\left(-\frac{N(1-\tau)}{16}\right).
    \end{align*}
    \noindent Hence, if we let $\delta\ge e^{-N(1-\tau)/16}$ then as long as $N\ge 16(1-\tau)^{-1}\ln\left(\frac{1}{\delta}\right)$ we have $\sum_{r\in\Gamma}\widehat\rho_r\ge\frac{1}{2}\sum_{r\in\Gamma}\rho_r$ with probability of at least $1-\delta$. Combining the above with \textbf{claim (i)}, as long as we have $N\ge 12k^2\ln\left(\frac{R}{\delta}\right)+16(1-\tau)^{-1}\ln\left(\frac{1}{\delta}\right)$, we have:
    \begin{align*}
    	1-\widehat\tau 
        &\ge \left(1-\frac{1}{2k^2}\right)^k \sum_{r\in\Gamma}\widehat\rho_r \ge \frac{1}{2}\left(1-\frac{1}{2k^2}\right)^k\sum_{r\in\Gamma}\rho_r \qquad \Big(\sum_{r\in\Gamma}\widehat\rho_r\ge\frac{1}{2}\sum_{r\in\Gamma}\rho_r \textup{ wp. }\ge 1-\delta\Big) \\
	    &\ge \frac{1}{2}\left(1-\frac{1}{2k^2}\right)^k\sum_{r\in\Gamma}\rho_r(1-\rho_r)^k \\
	    &\ge \frac{1-\tau}{4}\left(1-\frac{1}{2k^2}\right)^k \\
        &\ge \frac{1-\tau}{8},
    \end{align*}
    \noindent with probability of at least $1-2\delta$. Setting $\delta=\Delta/2$ yields the desired sample complexity.
\end{proof}

\begin{remark}
    In Table \ref{tab:multiplicative_concentration_summary}, we summarize the multiplicative concentration results for the plug-in estimator $\widehat\tau$. We note that under the assumptions of Proposition \ref{prop:second_multiplicative_control_of_tauhat}, we can easily show that $1-\tau\lesssim 1-\rho_{\max}$. Specifically, suppose $k\ge2$, we have:
    \begin{align*}
        \frac{1-\tau}{2} &\le \sum_{r:\rho_r>\frac{1}{4k^2}}\rho_r(1-\rho_r)^k = \rho_{\max}(1-\rho_{\max})^k+\sum_{\substack{r:\rho_r>\frac{1}{4k^2},\rho_r\ne\rho_{\max}}}\rho_r(1-\rho_r)^k \\
        &\le \frac{1}{4}(1-\rho_{\max})^{k-1} + \sum_{\substack{r:\rho_r>\frac{1}{4k^2},\rho_r\ne\rho_{\max}}}\rho_r(1-\rho_r)^k \qquad\Big(x(1-x)\le\frac{1}{4},\forall x\in[0,1]\Big) \\
        &\le \frac{1}{4}\cdot\frac{1-\rho_{\max}}{2^{k-2}} + \left(1-\frac{1}{4k^2}\right)^k\sum_{\substack{r:\rho_r>\frac{1}{4k^2},\rho_r\ne\rho_{\max}}}\rho_r  \qquad\Big(\text{Since } \rho_{\max}\ge\frac{1}{2}\Big) \\
        &\le \frac{1}{2^k}(1-\rho_{\max}) + \left(1-\frac{1}{4k^2}\right)^k(1-\rho_{\max}) \\
        &= (1-\rho_{\max})\left[\frac{1}{2^k}+\left(1-\frac{1}{4k^2}\right)^k\right].
    \end{align*}
    \noindent When $k=1$ then we have $\rho_{\max}(1-\rho_{\max})^k=\rho_{\max}(1-\rho_{\max})\le 1-\rho_{\max}$. Therefore, for any value of $k\ge 1$, we have the following general upper bound:
    \begin{align*}
        \frac{1-\tau}{2} &\le (1-\rho_{\max})\left[\frac{1}{2^{k-1}}+\left(1-\frac{1}{4k^2}\right)^k\right] \le 2(1-\rho_{\max}).
    \end{align*}
    \noindent Hence, \textbf{in all cases}, $1-\tau\le 4(1-\rho_{\max})$ or $\frac{1}{1-\tau}\gtrsim\frac{1}{1-\rho_{\max}}$. Therefore, in both Proposition \ref{prop:second_multiplicative_control_of_tauhat} and Proposition \ref{prop:third_multiplicative_control_of_tauhat}, the sample complexity can admit the following worst-case order:
    \begin{align*}
        N\ge\mathcal{O}\left(k^2\ln\left(\frac{R}{\Delta}\right) + \frac{\ln\left(\frac{1}{\Delta}\right)}{1-\tau}\right).
    \end{align*}
    \noindent Therefore, we have the following final general result for multiplicative concentration of $1-\widehat\tau$.
\end{remark} 

\begin{proposition}
	\label{prop:final_multiplicative_control_of_tauhat}
	Let $\tau$ be the collision probability defined in Eqn.~\eqref{eq:collision_probability} and $\widehat\tau$ be the plug-in estimator in Eqn.~\eqref{eq:collision_probability_estimator}. Then, for any $\Delta\in(0,1)$, with probability of at least $1-\Delta$:
	\begin{align}
		1-\widehat\tau &\ge \frac{1-\tau}{8}\quad\textup{ as long as } N \ge 164k^2\ln\left(\frac{4R}{\Delta}\right) + \frac{100\ln\left(\frac{2}{\Delta}\right)}{1-\tau}.
	\end{align}
\end{proposition}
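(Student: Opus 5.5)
The plan is to split into the two regimes already handled by Proposition \ref{prop:second_multiplicative_control_of_tauhat} and Proposition \ref{prop:third_multiplicative_control_of_tauhat}. These are distinguished by whether the small-class contribution $\sum_{r:\rho_r\le\frac{1}{4k^2}}\rho_r(1-\rho_r)^k$ is at most or at least $\frac{1-\tau}{2}$. Exactly one of the two hypotheses holds (both, if equality), so it suffices to check that the single sample-size condition $N\ge 164k^2\ln(4R/\Delta)+100\ln(2/\Delta)/(1-\tau)$ implies the requirement of whichever proposition applies. Both propositions deliver a conclusion at least as strong as $1-\widehat\tau\ge\frac{1-\tau}{8}$, since $\frac{1-\tau}{4}\ge\frac{1-\tau}{8}$.

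\textbf{Case A} (the sum is at most $\frac{1-\tau}{2}$). If $\rho_{\max}\le\frac12$, Proposition \ref{prop:second_multiplicative_control_of_tauhat} needs $N\ge132k^2\ln(2R/\Delta)$, which is dominated by $164k^2\ln(4R/\Delta)$. If $\rho_{\max}>\frac12$, it needs $N\ge164k^2\ln(4R/\Delta)+25\ln(2/\Delta)/(1-\rho_{\max})$. Here I invoke the preceding remark, which shows that under this hypothesis $1-\tau\le4(1-\rho_{\max})$. That gives $\frac{25}{1-\rho_{\max}}\le\frac{100}{1-\tau}$, so the required condition is implied by the stated one.

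\textbf{Case B} (the sum is at least $\frac{1-\tau}{2}$). Proposition \ref{prop:third_multiplicative_control_of_tauhat} requires $N\ge12k^2\ln(2R/\Delta)+16\ln(2/\Delta)/(1-\tau)$. Both terms are termwise dominated by the stated bound, using $\ln(2R/\Delta)\le\ln(4R/\Delta)$ and $16\le100$.

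The only delicate point is the use of the remark in Case A with $\rho_{\max}>\frac12$. I would double-check that the bound $1-\tau\le 4(1-\rho_{\max})$ is derived under exactly the hypothesis of Proposition \ref{prop:second_multiplicative_control_of_tauhat}. The remark's first line uses $\frac{1-\tau}{2}\le\sum_{\rho_r>1/(4k^2)}\rho_r(1-\rho_r)^k$, which is the complement of that hypothesis, so it matches. I would also check that its $k=1$ and $k\ge2$ branches both give the constant $4$. Beyond that step, the proof is bookkeeping of constants. No union bound across cases is needed, because the case is determined by the deterministic quantities $\rho$ and $\tau$, and each proposition already holds with probability at least $1-\Delta$.
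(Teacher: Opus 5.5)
Your proposal is correct and is essentially the paper's argument: a deterministic case split on which of Propositions~\ref{prop:second_multiplicative_control_of_tauhat} and~\ref{prop:third_multiplicative_control_of_tauhat} applies, then checking that the stated sample size dominates each requirement, with the $1/(1-\rho_{\max})$ term absorbed via $1-\tau\le 4(1-\rho_{\max})$. The only cosmetic difference is that the paper takes the maximum of the two requirements instead of treating the $\rho_{\max}\le\frac12$ subcase separately; also, the step you flag as delicate needs no hypothesis check, since for every class distribution $1-\tau\le\rho_{\max}(1-\rho_{\max})^k+\sum_{r\ne r_*}\rho_r\le 2(1-\rho_{\max})$, where $r_*$ is the most probable class.
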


\begin{proof}
	Combining Proposition \ref{prop:second_multiplicative_control_of_tauhat} and Proposition \ref{prop:third_multiplicative_control_of_tauhat}, we have $1-\widehat\tau\ge\frac{1-\tau}{8}$ with probability of at least $1-\Delta$ as long as $N$ is large enough to cover both cases when $\sum_{r:\rho_r\le\frac{1}{4k^2}}\rho_r(1-\rho_r)^k\ge\frac{1-\tau}{2}$ and $\sum_{r:\rho_r\le\frac{1}{4k^2}}\rho_r(1-\rho_r)^k\le\frac{1-\tau}{2}$. Taking the max of minimum required sample complexity for both cases, we have:
	\begin{align*}
		&\max\left\{164k^2\ln\left(\frac{4R}{\Delta}\right) + \frac{25\ln\left(\frac{2}{\Delta}\right)}{1-\rho_{\max}}, 12k^2\ln\left(\frac{2R}{\Delta}\right) + \frac{16\ln\left(\frac{2}{\Delta}\right)}{1-\tau}\right\} \\
		&\le 164k^2\ln\left(\frac{4R}{\Delta}\right) +  \max\left\{\frac{25\ln\left(\frac{2}{\Delta}\right)}{1-\rho_{\max}},\frac{16\ln\left(\frac{2}{\Delta}\right)}{1-\tau} \right\} \\
		&\le 164k^2\ln\left(\frac{4R}{\Delta}\right) + 25\ln\left(\frac{2}{\Delta}\right) \max\left\{\frac{1}{1-\rho_{\max}}, \frac{1}{1-\tau}\right\} \\
		&\le 164k^2\ln\left(\frac{4R}{\Delta}\right) + \frac{100\ln\left(\frac{2}{\Delta}\right)}{1-\tau}\qquad (1-\tau\le 4(1-\rho_{\max})).
	\end{align*}
	\noindent Therefore, as long as $N\ge164k^2\ln\left(\frac{4R}{\Delta}\right) + \frac{100\ln\left(\frac{2}{\Delta}\right)}{1-\tau}$, we have $1-\widehat\tau\ge\frac{1-\tau}{8}$ with probability of at least $1-\Delta$.
\end{proof}

\begin{table}[t]
\centering
\caption{Multiplicative concentration results of the form $\P\Big(1-\widehat\tau\ge c(1-\tau)\Big)\ge 1-\Delta$.}
\label{tab:multiplicative_concentration_summary}
\begin{tabular}{lccc}
\toprule
\textbf{Prop.} & \textbf{Constant ($c$)} & \textbf{Assumption} & \textbf{Sample Complexity} \\
\midrule
\ref{prop:first_multiplicative_control_of_tauhat} & $\frac{C}{2}\left[1+\frac{1-\gamma_{4k}}{\gamma_{4k}}\bar C\right]^{-1}$ & N/A & $\mathcal{O}\left(\gamma_{4k}^{-1}\ln\left(\frac{1}{\Delta}\right)+k\ln\left(\frac{R}{\Delta}\right)\right)$ \\
\midrule
\ref{prop:final_multiplicative_control_of_tauhat} & $\frac{1}{8}$ & N/A & $\mathcal{O}\left(k^2\ln\left(\frac{R}{\Delta}\right)+\frac{\ln\left(\frac{1}{\Delta}\right)}{1-\tau}\right)$ \\
\midrule
\ref{prop:second_multiplicative_control_of_tauhat} & $\frac{1}{8}$ & $\sum_{r:\rho_r\le\frac{1}{4k^2}}\rho_r(1-\rho_r)^k\le\frac{1-\tau}{2}$ & $\mathcal{O}\left(k^2\ln\left(\frac{R}{\Delta}\right)+\frac{\ln\left(\frac{1}{\Delta}\right)}{1-\rho_{\max}}\right)$ \\
\midrule
\ref{prop:third_multiplicative_control_of_tauhat} & $\frac{1}{8}$ & $\sum_{r:\rho_r\le\frac{1}{4k^2}}\rho_r(1-\rho_r)^k\ge\frac{1-\tau}{2}$ & $\mathcal{O}\left(k^2\ln\left(\frac{R}{\Delta}\right)+\frac{\ln\left(\frac{1}{\Delta}\right)}{1-\tau}\right)$ \\
\bottomrule
\end{tabular}
\end{table}

\section{Proof of the Main Results}
\begin{remark}[On the Boundedness of worst-case Rademacher complexities]
    \label{remark:bound_on_worst_case_rc}
    Before presenting the proofs, we would like to make some remarks about the boundedness of $\RC_{\mu_*}(\mathcal{H})$ and $\RC_{\nu_*}(\mathcal{H})$, which are both important objects to the main results. Recall that $\mathfrak{R}_{\mu_*}(\mathcal{H})$ denotes the worst-case expected Rademacher complexity over the distribution set $\mathcal{U}$ defined as follows:
    \begin{align*}
        \mathcal{U}=\Bigg\{
            \bigotimes_{r=1}^R\Big[\mathcal{D}_r^{\otimes 2}\otimes\mathcal{\bar D}^{\otimes k}\Big]^{\otimes\lfloor q_r/2\rfloor} &: q_r\in \mathbb{Z}_{\ge0}, \forall r\in[R]\textup{ and } \sum_{r=1}^Rq_r = n
        \Bigg\}.
    \end{align*}
    \noindent Essentially, $\mathcal{U}$ denotes the space of distributions over $(k+2)$-tuples collections with at least $n-R$ and at most $n$ tuples. For any $\mu\in\mathcal{U}$ and $S_\mathrm{tup}\sim\mu$. Suppose that $|S_\mathrm{tup}|=\widetilde n$, we have $n-R\le\widetilde n\le n$. By Dudley entropy integral (Theorem \ref{thm:dudley}):
    \begin{align*}
        \ERC_{S_\mathrm{tup}}(\mathcal{H}) &\le \frac{4}{n} + \frac{12}{\sqrt{\widetilde n}}\int_{\frac{1}{n}}^\mathcal{B}\sqrt{\ln 2\mathcal{N}(\mathcal{H}, \epsilon, \Lnorm_2(S_\mathrm{tup}))}d\epsilon \le \frac{4}{n}+\frac{12}{\sqrt{n-R}}\int_\frac{1}{n}^\mathcal{B}\sqrt{\ln 2\mathcal{N}(\mathcal{H}, \epsilon, \Lnorm_2(S_\mathrm{tup}))}d\epsilon \\
        &\le \frac{4}{n} + \frac{12}{\sqrt{n-R}}\sup_{S^*_\mathrm{tup}\in(\X^{k+2})^{\widetilde n}}\int_\frac{1}{n}^\mathcal{B}\sqrt{\ln 2\mathcal{N}(\mathcal{H}, \epsilon, \Lnorm_2(S^*_\mathrm{tup}))}d\epsilon \\
        &\le \frac{4}{n} + \frac{12}{\sqrt{n-R}}\sup_{S^*_\mathrm{tup}\in(\X^{k+2})^{n}}\int_\frac{1}{n}^\mathcal{B}\sqrt{\ln 2\mathcal{N}(\mathcal{H}, \epsilon, \Lnorm_2(S^*_\mathrm{tup}))}d\epsilon \\
        &= \frac{4}{n} + \frac{12\mathfrak{C}_n(\mathcal{H})}{\sqrt{n-R}} \le \frac{4}{n} + \frac{12\mathfrak{C}_N(\mathcal{H})}{\sqrt{n-R}}.
    \end{align*}
    \noindent Therefore, we have the data-independent bound as follows:
    \begin{align}
        \forall \mu\in\mathcal{U}:\mathfrak{R}_\mu(\mathcal{H}) &= \E_{S_\mathrm{tup}\sim\mu}\left[\ERC_{S_\mathrm{tup}}(\mathcal{H}) \right]\le \frac{4}{n} + \frac{12\mathfrak{C}_N(\mathcal{H})}{\sqrt{n-R}}, \\
        \forall \nu\in\mathcal{V}:\mathfrak{R}_\nu(\mathcal{H}) &= \E_{S_\mathrm{tup}\sim\nu}\left[\ERC_{S_\mathrm{tup}}(\mathcal{H}) \right]\le \frac{4}{m} + \frac{12\mathfrak{C}_N(\mathcal{H})}{\sqrt{m-2R}} \qquad \text{(Similar arguments)}.
    \end{align}
    \noindent We can use the tighter complexity terms $\mathfrak{C}_n(\mathcal{H}), \mathfrak{C}_m(\mathcal{H})$ to bound $\RC_{\mu_*}(\mathcal{H})$ and $\RC_{\nu_*}(\mathcal{H})$. However, the improvement is at most logarithmic in $k$. Hence, we use $\mathfrak{C}_N(\mathcal{H})$ for both terms so that the subsequent results simplify elegantly.
\end{remark}
\begin{remark}
    We note that all main results of this work can be made tighter via \textbf{worst-case Rademacher complexity}. Specifically, given a class $\mathcal{G}$ of functions $g:\mathcal{Z}\to\R$, the worst-case Rademacher complexity $\mathfrak{R}_m^\mathrm{wc}(\mathcal{G})$ is defined as:
    \begin{align}
        \mathfrak{R}^\mathrm{wc}_m(\mathcal{G}) := \sup_{S\in\mathcal{Z}^m}\mathfrak{\widehat R}_S(\mathcal{G}).
    \end{align}
    \noindent Then, for any $\mu\in\mathcal{U}$ and $S_\mathrm{tup}\sim\mu$, $|S_\mathrm{tup}|=\widetilde n$, we have:
    \begin{align}
        \forall \mu\in\mathcal{U}:\mathfrak{R}_\mu(\mathcal{H}) &= \E_{S_\mathrm{tup}\sim\mu}\left[\ERC_{S_\mathrm{tup}}(\mathcal{H}) \right]\le \mathfrak{R}_{\widetilde n}^\mathrm{wc}(\mathcal{H}) \le \mathfrak{R}_{n-R}^\mathrm{wc}(\mathcal{H}) \qquad\text{(By monotonicity)}.
    \end{align}
    \noindent Then, the worst-case complexity $\mathfrak{R}_{n-R}^\mathrm{wc}(\mathcal{H})$ can be bounded by metric entropy (as we do in this work) or using the contraction principle \citep{article:maurer2016,foster2019ellinftyvectorcontractionrademacher, article:lei2023, article:lei2026, arxiv:lei2026}.
\end{remark}
\begin{theorem}
    \label{thm:main_result_appendix}
    Let $\mathcal{F}$ be a class of representation functions and let $\bar U_N(f)$ be defined for each $f\in\mathcal{F}$ in Eqn.~\eqref{eq:combined_aux_estimators}. Then, for any $\Delta\in(0,1)$, we have:
    \begin{align}
        &\sup_{f\in\mathcal{F}}\left|\bar U_N(f)-\Lun(f)\right| \le \frac{2\RC_{\mu_*}(\mathcal{H})+2\widehat\tau\RC_{\nu_*}(\mathcal{H})}{1-\widehat\tau} +  \frac{8\mathcal{B}}{1-\widehat\tau}\left[\sqrt{\frac{\ln 12/\Delta}{n-R}}+\sqrt{\frac{\ln 12/\Delta}{m-2R}}\right] \nonumber \\
        &\qquad\qquad + \frac{2\mathcal{B}}{1-\widehat\tau}\left[\frac{3R}{n-R}+\frac{6R}{m-2R}+\frac{8R}{3N}\ln\left(\frac{24R}{\Delta}\right) + \sqrt{\frac{8(R-1)\ln \left(\frac{24R}{\Delta}\right)}{3N}}\right] \\
        &\qquad\qquad + \frac{4\mathcal{B}\Psi(R, k)}{C(1-\widehat\tau)^2}\left[1+\frac{1-\widehat\gamma_{4k}}{\widehat\gamma_{4k}}\bar C\right]\left[\frac{2\ln \left(\frac{6(R+1)}{\Delta}\right)}{3N} + \sqrt{\frac{2(1-\|\rho\|_2^2)\ln \left(\frac{6(R+1)}{\Delta}\right)}{N}}\right], \nonumber
    \end{align}
    \noindent with probability of at least $1-\Delta$ as long as $N\ge 3\gamma_{2k}^{-1}\ln\left(\frac{12}{\Delta}\right)+16k\ln\left(\frac{12R}{\Delta}\right)$  where the constants $C, \bar C,\gamma_{2k}, \widehat\gamma_{4k}$ are defined in Proposition \ref{prop:first_multiplicative_control_of_tauhat} and $\Psi(R,k)=\frac{|R-(k+1)|}{\sqrt{R}}\left(1-\frac{1}{R}\right)^{k-1}$. Furthermore, when $R\ge k$, as long as we have $N\ge k\times\max\left\{\frac{3\gamma_{2k}^{-1}\ln(12/\Delta)}{k}+16\ln(12R/\Delta), R\right\}$, we have:
    \begin{align}
        \sup_{f\in\mathcal{F}}\left|\bar U_N(f)-\Lun(f)\right| \le\mathcal{O}\Bigg[\frac{\mathfrak{C}_N(\mathcal{H})}{1-\widehat\tau}\sqrt{\frac{k}{N}}+\frac{\mathcal{B}\sqrt{R}}{(1-\widehat\tau)^2}\left[1+\frac{1-\widehat\gamma_{4k}}{\widehat\gamma_{4k}}\right]\sqrt{\frac{(1-\|\rho\|_2^2)\ln R/\Delta}{N}}\Bigg],
    \end{align}
    \noindent with probability of at least $1-\Delta$.
\end{theorem}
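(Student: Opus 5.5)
The plan is an algebraic decomposition of $\bar U_N-\Lun$ followed by a union bound over the already established concentration results (six events, giving the $\Delta/6$ and $\Delta/12$ arguments in the logarithms). From Eqn.~\eqref{eq:decomposed_population_contrastive_risk} and Eqn.~\eqref{eq:combined_aux_estimators_appendix}, I would write
\begin{align*}
\bar U_N-\Lun &= \frac{\bar U_\Omega-\Lnorm_\Omega}{1-\widehat\tau}-\frac{\widehat\tau(\bar U_\Lambda-\Lnorm_\Lambda)}{1-\widehat\tau} + \Big(\frac{1}{1-\widehat\tau}-\frac{1}{1-\tau}\Big)\Lnorm_\Omega-\Big(\frac{\widehat\tau}{1-\widehat\tau}-\frac{\tau}{1-\tau}\Big)\Lnorm_\Lambda .
\end{align*}
Both differences of coefficients equal $(\widehat\tau-\tau)/[(1-\widehat\tau)(1-\tau)]$. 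The last two terms are therefore bounded in sup-norm by $2\mathcal{B}|\tau-\widehat\tau|/[(1-\widehat\tau)(1-\tau)]$.

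\textbf{First two terms.} I would apply Theorem~\ref{thm:concentration_of_bar_Uomega} and Theorem~\ref{thm:concentration_of_bar_Ulambda}, each at confidence level $\Delta/6$, and use $\widehat\tau\le1$ on the $\Lambda$ term except in front of $\RC_{\nu_*}$. This yields the Rademacher terms $2\RC_{\mu_*}+2\widehat\tau\RC_{\nu_*}$, the $8\mathcal{B}\sqrt{\ln(12/\Delta)/(\cdot)}$ terms, and the bracket with $3R/(n-R)$, $6R/(m-2R)$ and the two Bernstein terms, after pulling out a factor $2$. The Bernstein event for $\sum_r|\widehat\rho_r-\rho_r|$ is shared by both theorems, so it only needs to be counted once.

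\textbf{Cross term.} This is where the only real obstacle lies: the bound must be expressed through $1-\widehat\tau$ rather than the unknown $1-\tau$. I would use the second inequality of Proposition~\ref{prop:first_multiplicative_control_of_tauhat}, namely $1-\tau\ge\frac{C}{2}[1+\frac{1-\widehat\gamma_{4k}}{\widehat\gamma_{4k}}\bar C]^{-1}(1-\widehat\tau)$. This holds at level $\Delta/6$ under exactly the stated condition $N\ge3\gamma_{2k}^{-1}\ln(12/\Delta)+16k\ln(12R/\Delta)$. It gives
\[
\frac{1}{1-\tau}\le\frac{2}{C(1-\widehat\tau)}\Big[1+\tfrac{1-\widehat\gamma_{4k}}{\widehat\gamma_{4k}}\bar C\Big].
\]
I then bound $|\tau-\widehat\tau|$ by Proposition~\ref{prop:concentration_of_collision_probability_estimator} at level $\Delta/6$, which supplies $\Psi(R,k)$ and the matrix-Bernstein bracket. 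The combined term carries the factor $\frac{4\mathcal{B}\Psi}{C(1-\widehat\tau)^2}[\cdots]$. A union bound over the four events then gives the first display.

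\textbf{Simplified order bound.} For the second display I would use Remark~\ref{remark:bound_on_worst_case_rc}, which gives $\RC_{\mu_*}\lesssim 1/n+\mathfrak{C}_N/\sqrt{n-R}$ and the analogue for $\nu_*$. Since $n,m\asymp N/(k+2)$ and $N\ge kR$, we have $n-R,\ m-2R\gtrsim N/k$, so these terms become $\mathfrak{C}_N\sqrt{k/N}/(1-\widehat\tau)$. The terms $R/(n-R)\lesssim Rk/N$ and the Bernstein terms are of lower order, being dominated by $\sqrt{R\ln(R/\Delta)/N}$ times the prefactors under the sample-size assumption. For the cross term I would use $\Psi(R,k)\le\sqrt R$ when $R\ge k$, and absorb the $1/N$ part into the square-root part. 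The constants $C$ and $\bar C$ are absolute, which completes the argument.
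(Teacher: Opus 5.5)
Your derivation of the explicit bound (the first display) is the paper's proof. Your four-term split is the paper's $\alpha+\beta$ decomposition, and you invoke the same four results at level $\Delta/6$: Theorems~\ref{thm:concentration_of_bar_Uomega} and~\ref{thm:concentration_of_bar_Ulambda}, Proposition~\ref{prop:concentration_of_collision_probability_estimator}, and the second inequality of Proposition~\ref{prop:first_multiplicative_control_of_tauhat}. The constants match. Your union bound over four events is even tighter than the paper's six, which counts the two $\widehat\tau$ events once in $\alpha$ and again in $\beta$; your opening remark about ``six events'' contradicts this, but harmlessly.

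The gap is in the passage to the $\mathcal{O}$-bound. For large $R$ the stated hypothesis only gives $N\ge kR$, and that does not give $n-R,\ m-2R\gtrsim N/k$ when $k\le 4$. For $k=1$, $N=R$ one has $n=2\lfloor R/3\rfloor<R$; for $k=4$, $N=4R$ one has $m=3\lfloor 2R/3\rfloor\le 2R$. You need $N\ge c\,kR$ with an absolute $c>2$.

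More seriously, your claim that the terms $R/(n-R)\lesssim Rk/N$ are dominated by $\sqrt{R\ln(R/\Delta)/N}$ times the prefactors is false. The ratio is $\sqrt{Rk^2/(N\ln(R/\Delta))}$, so domination would need $N\gtrsim Rk^2/\ln(R/\Delta)$. Concretely, take $N=kR$ with $k=\sqrt R$ and $R\to\infty$. Then $n\approx 2R$, so the first display contains $\frac{6\mathcal{B}R}{(1-\widehat\tau)(n-R)}\approx 6\mathcal{B}/(1-\widehat\tau)$. For a singleton $\mathcal{F}$ (so $\mathfrak{C}_N(\mathcal{H})\lesssim\mathcal{B}$) and a class distribution keeping $1-\widehat\tau$ and $\widehat\gamma_{4k}$ bounded below, the claimed right-hand side is of order $\mathcal{B}/\sqrt R+\mathcal{B}\sqrt{\ln(R/\Delta)/k}\to 0$.

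This is not mere slack in the first display. When $n\asymp R$ and $\rho$ is non-uniform (e.g.\ one class of mass $1/2$, the rest spread evenly), a constant fraction of the mass sits, with probability bounded below, on classes with $\lfloor n_r/2\rfloor=0$. Hence $\sum_r|\overline{\omega}_r-\rho_r|$ can be of order one. A correct order bound must either keep a term $\mathcal{B}Rk/((1-\widehat\tau)N)$ or strengthen the sample-size condition.

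The paper's own proof has the same defect, so you are not missing anything relative to it. It writes $n=2\lfloor N/k\rfloor$ instead of $2\lfloor N/(k+2)\rfloor$, invokes $N\ge 2Rk$ (not the stated hypothesis), and silently drops the bracket containing $3R/(n-R)$ and $6R/(m-2R)$. Still, your explicit domination claim should not be made.

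The Bernstein terms can be absorbed, but only via a step you omit, since the target carries a factor $\sqrt{1-\|\rho\|_2^2}$ that they lack:
\begin{itemize}
\item On the event of Proposition~\ref{prop:first_multiplicative_control_of_tauhat}, $1-\widehat\tau\le\frac{2}{C}\big[1+\frac{1-\widehat\gamma_{4k}}{\widehat\gamma_{4k}}\bar C\big](1-\tau)$.
\item Always $1-\tau\le 1-\|\rho\|_2^2\le\sqrt{1-\|\rho\|_2^2}$.
\item Hence $\frac{1}{1-\widehat\tau}\lesssim\big[1+\frac{1-\widehat\gamma_{4k}}{\widehat\gamma_{4k}}\big]\frac{\sqrt{1-\|\rho\|_2^2}}{(1-\widehat\tau)^2}$.
\end{itemize}
This converts $\frac{\mathcal{B}}{1-\widehat\tau}\sqrt{R\ln(R/\Delta)/N}$ into the second target term. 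The same step, using $k\le R$, also handles the $\sqrt{\ln(12/\Delta)/(n-R)}$ terms.
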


\begin{proof}
    We have:
    \begin{align*}
        \sup_{f\in\mathcal{F}}\left|\bar U_N(f)-\Lun(f)\right| &= \sup_{f\in\mathcal{F}}\left|\frac{1}{1-\widehat\tau}\bar U_\Omega(f)-\frac{\widehat\tau}{1-\widehat\tau}\bar U_\Lambda(f) - \frac{1}{1-\tau} \Lnorm_\Omega(f)+\frac{\tau}{1-\tau}\Lnorm_\Lambda(f)\right| \\
        &\le \underbrace{\sup_{f\in\mathcal{F}}\left|\frac{1}{1-\widehat\tau}\bar U_\Omega(f)-\frac{1}{1-\tau}\Lnorm_\Omega(f)\right|}_{\alpha} + \underbrace{\sup_{f\in\mathcal{F}}\left|\frac{\widehat\tau}{1-\widehat\tau}\bar U_\Lambda(f)-\frac{\tau}{1-\tau}\Lnorm_\Lambda(f)\right|}_{\beta}.
    \end{align*}
    \noindent Now, we proceed to bound the $\alpha$ and $\beta$ terms. Firstly, we have:
    \begin{align*}
        \alpha &\le \frac{1}{1-\widehat\tau} \sup_{f\in\mathcal{F}}\left|\bar U_\Omega(f) - \Lnorm_\Omega(f)\right| + \sup_{f\in\mathcal{F}}|\Lnorm_\Omega(f)|\cdot\left|\frac{1}{1-\widehat\tau} - \frac{1}{1-\tau}\right| 
        \\
        &\le \frac{1}{1-\widehat\tau}\sup_{f\in\mathcal{F}}\left|\bar U_\Omega(f) - \Lnorm_\Omega(f)\right| + \frac{\mathcal{B}|\widehat\tau-\tau|}{(1-\tau)(1-\widehat\tau)}.
    \end{align*}
    \noindent By Proposition \ref{prop:concentration_of_collision_probability_estimator} (absolute error of $\widehat\tau$) and Theorem \ref{thm:concentration_of_bar_Uomega} (absolute error of $\bar U_\Omega(f)$), we have: 
    \begin{align*}
        \alpha &\le \frac{1}{1-\widehat\tau}\left\{2\RC_{\mu_*}(\mathcal{H}) + \mathcal{B}\left[\frac{6R}{n-R}+\frac{8R}{3N}\ln4R/\delta + \sqrt{\frac{8(R-1)\ln 4R/\delta}{3N}} + 8\sqrt{\frac{\ln 2/\delta}{n-R}}\right]\right\} \\
        &\qquad + \frac{\mathcal{B}\Psi(R, k)}{(1-\tau)(1-\widehat\tau)}\left[\frac{2\ln (R+1)/\delta}{3N} + \sqrt{\frac{2(1-\|\rho\|_2^2)\ln (R+1)/\delta}{N}}\right],
    \end{align*}
    \noindent with probability of at least $1-2\delta$ (union bound) and $\Psi(R, k)=\frac{|R-(k+1)|}{\sqrt{R}}\left(1-\frac{1}{R}\right)^{k-1}$. 
    \noindent Furthermore, by Proposition \ref{prop:first_multiplicative_control_of_tauhat} (relative error of $\widehat\tau$), we have $1-\tau\ge\frac{C}{2}\left[1+\frac{1-\widehat\gamma_{4k}}{\widehat\gamma_{4k}}\bar C\right]^{-1}(1-\widehat\tau),$ with probability of at least $1-\delta$ as long as $N\ge 3\gamma_{2k}^{-1}\ln\left(\frac{2}{\delta}\right)+16k\ln\left(\frac{2R}{\Delta}\right)$. As a result:
    \begin{align*}
        \alpha &\le \frac{1}{1-\widehat\tau}\left\{2\RC_{\mu_*}(\mathcal{H}) + \mathcal{B}\left[\frac{6R}{n-R}+\frac{8R}{3N}\ln4R/\delta + \sqrt{\frac{8(R-1)\ln 4R/\delta}{3N}} + 8\sqrt{\frac{\ln 2/\delta}{n-R}}\right]\right\} \\
        &\qquad + \frac{2\mathcal{B}\Psi(R, k)}{C(1-\widehat\tau)^2}\left[1+\frac{1-\widehat\gamma_{4k}}{\widehat\gamma_{4k}}\bar C\right]\left[\frac{2\ln (R+1)/\delta}{3N} + \sqrt{\frac{2(1-\|\rho\|_2^2)\ln (R+1)/\delta}{N}}\right],
    \end{align*}
    \noindent with probability of at least $1-3\delta$ (by union bound) as long as $N\ge3\gamma_{2k}^{-1}\ln\left(\frac{2}{\delta}\right)+16k\ln\left(\frac{2R}{\delta}\right)$. Similarly, we can bound $\beta$ using the concentration result of $\bar U_\Lambda(f)$ in Theorem \ref{thm:concentration_of_bar_Ulambda} and Proposition \ref{prop:concentration_of_collision_probability_estimator}, Proposition \ref{prop:first_multiplicative_control_of_tauhat}. Specifically, we have:
    \begin{align*}
        \beta &\le \frac{\widehat\tau}{1-\widehat\tau}\sup_{f\in\mathcal{F}}\left|\bar U_\Lambda(f) - \Lnorm_\Lambda(f)\right| + \frac{\mathcal{B}|\widehat\tau-\tau|}{(1-\tau)(1-\widehat\tau)} \\
        &\le \frac{\widehat\tau}{1-\widehat\tau}\left\{2\RC_{\nu_*}(\mathcal{H}) + \mathcal{B}\left[\frac{12R}{m-2R}+\frac{8R}{3N}\ln4R/\delta + \sqrt{\frac{8(R-1)\ln 4R/\delta}{3N}} + 8\sqrt{\frac{\ln 2/\delta}{m-2R}}\right]\right\} \\
        & \qquad + \frac{2\mathcal{B}\Psi(R, k)}{C(1-\widehat\tau)^2}\left[1+\frac{1-\widehat\gamma_{4k}}{\widehat\gamma_{4k}}\bar C\right]\left[\frac{2\ln (R+1)/\delta}{3N} + \sqrt{\frac{2(1-\|\rho\|_2^2)\ln (R+1)/\delta}{N}}\right] \\
		&\le \frac{2\widehat\tau\RC_{\nu_*}(\mathcal{H})}{1-\widehat\tau} + \frac{\mathcal{B}}{1-\widehat\tau}\left[\frac{12R}{m-2R}+\frac{8R}{3N}\ln4R/\delta + \sqrt{\frac{8(R-1)\ln 4R/\delta}{3N}} + 8\sqrt{\frac{\ln 2/\delta}{m-2R}}\right] \quad (\widehat\tau\le 1) \\
		& \qquad + \frac{2\mathcal{B}\Psi(R, k)}{C(1-\widehat\tau)^2}\left[1+\frac{1-\widehat\gamma_{4k}}{\widehat\gamma_{4k}}\bar C\right]\left[\frac{2\ln (R+1)/\delta}{3N} + \sqrt{\frac{2(1-\|\rho\|_2^2)\ln (R+1)/\delta}{N}}\right],
    \end{align*}
    \noindent with probability of at least $1-3\delta$ as long as $N\ge 3\gamma_{2k}^{-1}\ln\left(\frac{2}{\delta}\right)+16k\ln\left(\frac{2R}{\delta}\right)$. Combining the bounds of $\alpha$ and $\beta$, we have:
    \begin{align*}
        \sup_{f\in\mathcal{F}}\left|\bar U_N(f)-\Lun(f)\right| &\le \frac{2\RC_{\mu_*}(\mathcal{H})+2\widehat\tau\RC_{\nu_*}(\mathcal{H})}{1-\widehat\tau} +  \frac{8\mathcal{B}}{1-\widehat\tau}\left[\sqrt{\frac{\ln 2/\delta}{n-R}}+\sqrt{\frac{\ln 2/\delta}{m-2R}}\right] \\
        &+ \frac{2\mathcal{B}}{1-\widehat\tau}\left[\frac{3R}{n-R}+\frac{6R}{m-2R}+\frac{8R}{3N}\ln4R/\delta + \sqrt{\frac{8(R-1)\ln 4R/\delta}{3N}}\right] \\
        &+ \frac{4\mathcal{B}\Psi(R, k)}{C(1-\widehat\tau)^2}\left[1+\frac{1-\widehat\gamma_{4k}}{\widehat\gamma_{4k}}\bar C\right]\left[\frac{2\ln (R+1)/\delta}{3N} + \sqrt{\frac{2(1-\|\rho\|_2^2)\ln (R+1)/\delta}{N}}\right] \\
        &\le \mathcal{O}\Bigg[\frac{\RC_{\mu_*}(\mathcal{H})+\widehat\tau\RC_{\nu_*}(\mathcal{H})}{1-\widehat\tau}+\frac{\mathcal{B}}{1-\widehat\tau}\left[\sqrt{\frac{\ln 1/\delta}{n-R}}+\sqrt{\frac{\ln 1/\delta}{m-R}}\right]\\
        &\qquad\qquad\qquad\qquad +\frac{\mathcal{B}\Psi(R, k)}{(1-\widehat\tau)^2}\left[1+\frac{1-\widehat\gamma_{4k}}{\widehat\gamma_{4k}}\right]\sqrt{\frac{(1-\|\rho\|_2^2)\ln R/\delta}{N}}\Bigg],
    \end{align*}
    \noindent with probability of at least $1-6\delta$ as long as $N\ge 3\gamma_{2k}^{-1}\ln\left(\frac{2}{\delta}\right)+16k\ln\left(\frac{2R}{\delta}\right)$. Setting $\delta=\Delta/6$ yields the desired complete bound. Furthermore, as long as $N\ge 2Rk$, we have:
    \begin{align*}
        \sqrt{\frac{1}{{n-R}}} &= \sqrt{\frac{1}{{2\lfloor \frac{N}{k}\rfloor - R}}} \le \sqrt{\frac{1}{{{N}/{k}-R}}} \le \sqrt{\frac{k}{N-Rk}} \in\mathcal{O}\left(\sqrt{\frac{k}{N}}\right).
    \end{align*}
    \noindent Similarly, we also have $\frac{1}{\sqrt{m-2R}}\le\mathcal{O}\Big(\sqrt{\frac{k}{N}}\Big)$. Therefore, as long as $N\ge k\times\max\left\{\frac{3\gamma_{2k}^{-1}\ln(12/\Delta)}{k}+16\ln(12R/\Delta), R\right\}$ (so that both minimum requirements $N\ge 2Rk$ and $N\ge3\gamma_{2k}^{-1}\ln\left(\frac{12}{\Delta}\right)+16k\ln\left(\frac{12R}{\Delta}\right)$ are satisfied), we have:
    \begin{align*}
        \RC_{\mu_*}(\mathcal{H}) &\le \mathcal{O}\left[\mathfrak{C}_N(\mathcal{H})\sqrt{\frac{k}{N}}\right],\qquad \RC_{\nu_*}(\mathcal{H}) \le \mathcal{O}\left[\mathfrak{C}_N(\mathcal{H})\sqrt{\frac{k}{N}}\right].
    \end{align*}
    \noindent As a result, as long as $N\ge k\times\max\left\{\frac{3\gamma_{2k}^{-1}\ln(12/\Delta)}{k}+16\ln(12R/\Delta), R\right\}$ and given that $R\ge k$, we have:
    \begin{align*}
        \sup_{f\in\mathcal{F}}\left|\bar U_N(f)-\Lun(f)\right|&\le \mathcal{O}\Bigg[\frac{1+\widehat\tau}{1-\widehat\tau}\mathfrak{C}_N(\mathcal{H})\sqrt{\frac{k}{N}}+\frac{\mathcal{B}\sqrt{k}}{1-\widehat\tau}\left[\sqrt{\frac{\ln 1/\Delta}{N}}+\sqrt{\frac{\ln 1/\Delta}{N}}\right]\\
        &\qquad\qquad\qquad\qquad +\frac{\mathcal{B}\Psi(R, k)}{(1-\widehat\tau)^2}\left[1+\frac{1-\widehat\gamma_{4k}}{\widehat\gamma_{4k}}\right]\sqrt{\frac{(1-\|\rho\|_2^2)\ln R/\Delta}{N}}\Bigg] \\
        &\le \mathcal{O}\Bigg[\frac{\mathfrak{C}_N(\mathcal{H})}{1-\widehat\tau}\sqrt{\frac{k}{N}}+\frac{\mathcal{B}\sqrt{R}}{(1-\widehat\tau)^2}\left[1+\frac{1-\widehat\gamma_{4k}}{\widehat\gamma_{4k}}\right]\sqrt{\frac{(1-\|\rho\|_2^2)\ln R/\Delta}{N}}\Bigg],
    \end{align*}
    \noindent with probability of at least $1-\Delta$, as desired. In the last inequality, we have $\Psi(R,k)\in\mathcal{O}(\sqrt{R})$ given that $R\ge k$. 
\end{proof}

\begin{theorem}[cf. Table \ref{tab:results_comparison}]
    \label{thm:second_main_result_appendix}
    Let $\mathcal{F}$ be a class of representation functions and let $\bar U_N(f)$ be defined for each $f\in\mathcal{F}$ in Eqn.~\eqref{eq:combined_aux_estimators}. Then, for any $\Delta\in(0,1)$, we have:
    \begin{align}
        &\sup_{f\in\mathcal{F}}\left|\bar U_N(f)-\Lun(f)\right| \le \frac{2\RC_{\mu_*}(\mathcal{H})+2\tau\RC_{\nu_*}(\mathcal{H})}{1-\tau} +  \frac{8\mathcal{B}}{1-\tau}\left[\sqrt{\frac{\ln 12/\Delta}{n-R}}+\sqrt{\frac{\ln 12/\Delta}{m-2R}}\right] \nonumber \\
        &\qquad\qquad + \frac{2\mathcal{B}}{1-\tau}\left[\frac{3R}{n-R}+\frac{6R}{m-2R}+\frac{8R}{3N}\ln\left(\frac{24R}{\Delta}\right) + \sqrt{\frac{8(R-1)\ln \left(\frac{24R}{\Delta}\right)}{3N}}\right] \\
        &\qquad\qquad + \frac{4\mathcal{B}\Psi(R, k)}{C(1-\tau)^2}\left[1+\frac{1-\gamma_{4k}}{\gamma_{4k}}\bar C\right]\left[\frac{2\ln \left(\frac{6(R+1)}{\Delta}\right)}{3N} + \sqrt{\frac{2(1-\|\rho\|_2^2)\ln \left(\frac{6(R+1)}{\Delta}\right)}{N}}\right], \nonumber
    \end{align}
    \noindent with probability of at least $1-\Delta$ as long as $N\ge 8\gamma_{4k}^{-1}\ln\left(\frac{12}{\Delta}\right)+12k\ln\left(\frac{12R}{\Delta}\right)$  where the constants $C, \bar C,\gamma_{2k}, \gamma_{4k}$ are defined in Proposition \ref{prop:first_multiplicative_control_of_tauhat} and $\Psi(R,k)=\frac{|R-(k+1)|}{\sqrt{R}}\left(1-\frac{1}{R}\right)^{k-1}$. Furthermore, when $R\ge k$, as long as we have $N\ge k\times\max\left\{\frac{8\gamma_{4k}^{-1}\ln(12/\Delta)}{k}+12\ln(12R/\Delta), R\right\}$, we have:
    \begin{align}
        \sup_{f\in\mathcal{F}}|\bar U_N(f)-\Lun(f)|\le \mathcal{O}\Bigg[\frac{\mathfrak{C}_N(\mathcal{H})}{1-\tau}\sqrt{\frac{k}{N}}+\frac{\mathcal{B}\sqrt{R}}{(1-\tau)^2}\left[1+\frac{1-\gamma_{4k}}{\gamma_{4k}}\right]\sqrt{\frac{(1-\|\rho\|_2^2)\ln R/\Delta}{N}}\Bigg],
    \end{align}
    \noindent with probability of at least $1-\Delta$.
\end{theorem}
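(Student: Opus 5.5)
The proof will mirror that of Theorem \ref{thm:main_result_appendix}. The only change is which direction of the multiplicative control of $1-\widehat\tau$ we invoke: we now want a bound stated in terms of the population quantities $\tau$ and $\gamma_{4k}$, so we use the first inequality of Proposition \ref{prop:first_multiplicative_control_of_tauhat},
\[
1-\widehat\tau\ge\tfrac{C}{2}\bigl[1+\tfrac{1-\gamma_{4k}}{\gamma_{4k}}\bar C\bigr]^{-1}(1-\tau),
\]
which requires $N\ge 8\gamma_{4k}^{-1}\ln(2/\delta)+12k\ln(2R/\delta)$. This requirement is exactly the sample-size condition in the statement once $\delta=\Delta/6$.

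\textbf{Step 1 (decomposition).} Using Eqn.~\eqref{eq:decomposed_population_contrastive_risk} and \eqref{eq:combined_aux_estimators}, we write $\bar U_N-\Lun$ and center each coefficient at the population $\tau$ rather than at $\widehat\tau$:
\[
\frac{\bar U_\Omega}{1-\widehat\tau}-\frac{\Lnorm_\Omega}{1-\tau}=\frac{\bar U_\Omega-\Lnorm_\Omega}{1-\tau}+\bar U_\Omega\Bigl(\frac{1}{1-\widehat\tau}-\frac{1}{1-\tau}\Bigr),
\]
and we handle the $\Lambda$ term analogously. We then bound $|\bar U_\Omega|,|\bar U_\Lambda|\le\mathcal{B}$ and use
\[
\Bigl|\tfrac{1}{1-\widehat\tau}-\tfrac{1}{1-\tau}\Bigr|=\Bigl|\tfrac{\widehat\tau}{1-\widehat\tau}-\tfrac{\tau}{1-\tau}\Bigr|=\tfrac{|\widehat\tau-\tau|}{(1-\tau)(1-\widehat\tau)}.
\]
The leading terms now carry the factors $1/(1-\tau)$ and $\tau/(1-\tau)$, which matches the first line of the statement.

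\textbf{Step 2 (plug in the concentration bounds).} Apply Theorem \ref{thm:concentration_of_bar_Uomega} and Theorem \ref{thm:concentration_of_bar_Ulambda} to the centered terms, each at confidence $\delta$. Apply Proposition \ref{prop:concentration_of_collision_probability_estimator} to $|\widehat\tau-\tau|$, which produces the $\Psi(R,k)$ factor. To turn $(1-\tau)(1-\widehat\tau)$ into $(1-\tau)^2$, replace $1/(1-\widehat\tau)$ by $\frac{2}{C}[1+\frac{1-\gamma_{4k}}{\gamma_{4k}}\bar C](1-\tau)^{-1}$ using Step 0. Each of the two cross terms contributes $\frac{2\mathcal{B}\Psi}{C(1-\tau)^2}[\cdot]$, giving the coefficient $4$ in the statement. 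A union bound over six events and $\delta=\Delta/6$ then gives the stated logarithmic arguments $12/\Delta$, $24R/\Delta$ and $6(R+1)/\Delta$.

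\textbf{Step 3 (asymptotic form).} When $N\ge 2Rk$ we have $n-R,\ m-2R\gtrsim N/k$. Remark \ref{remark:bound_on_worst_case_rc} then gives $\RC_{\mu_*},\RC_{\nu_*}\in\mathcal{O}(\mathfrak{C}_N(\mathcal{H})\sqrt{k/N})$, and all the $R/(n-R)$ and $\sqrt{\ln(1/\Delta)/(n-R)}$ terms are absorbed. Since $R\ge k$ we have $\Psi(R,k)\le\sqrt R$, and the term $\sqrt{R/N}$ from the $\widehat\rho$ Bernstein bound is dominated by the $\Psi$ term, because $(1-\tau)^{-2}\ge(1-\tau)^{-1}$. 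The constants $C,\bar C$ are absolute. Combined with the condition from Step 0, this yields the requirement $N\ge k\max\{\cdot,R\}$.

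The main obstacle is bookkeeping rather than ideas. One must check that the multiplicative control is applied only where $1/(1-\widehat\tau)$ actually appears. One must also ensure the event of Proposition \ref{prop:first_multiplicative_control_of_tauhat} is counted once in the union bound, not once for each of $\alpha$ and $\beta$, since it is the same event in both. Treating it as shared keeps the total at six failure events.
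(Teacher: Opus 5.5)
Your derivation of the explicit bound is correct and follows the paper's proof. You use the same split into $\alpha$ and $\beta$ centered at the population $\tau$, and the same identity $\bigl|\frac{1}{1-\widehat\tau}-\frac{1}{1-\tau}\bigr|=\frac{|\widehat\tau-\tau|}{(1-\tau)(1-\widehat\tau)}$ with $|\bar U_\Omega|,|\bar U_\Lambda|\le\mathcal{B}$. You also invoke the first inequality of Proposition~\ref{prop:first_multiplicative_control_of_tauhat} to convert $1/(1-\widehat\tau)$, and set $\delta=\Delta/6$. The paper gets its six events by charging the $|\widehat\tau-\tau|$ event and the multiplicative event separately to $\alpha$ and to $\beta$. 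Sharing them leaves fewer than six, so your count is off, but harmlessly.

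The step that fails is the claim in Step 3 that the $\frac{R}{n-R}$ and $\frac{R}{m-2R}$ terms are absorbed.
\begin{itemize}
\item Since $n-R,\,m-2R\asymp N/k$, these terms contribute $\Theta\bigl(\frac{\mathcal{B}Rk}{(1-\tau)N}\bigr)$. This is of constant order at the admissible size $N\asymp Rk$.
\item At that size the target bound is at most of order $\frac{\mathfrak{C}_N(\mathcal{H})}{(1-\tau)\sqrt{R}}+\frac{\mathcal{B}}{(1-\tau)^2}\bigl[1+\frac{1-\gamma_{4k}}{\gamma_{4k}}\bigr]\sqrt{\ln(R/\Delta)/k}$.
\item Absorbing the terms requires roughly $N\gtrsim Rk^2/\ln(R/\Delta)$, or, at $N\asymp Rk$, $\mathfrak{C}_N(\mathcal{H})^2\gtrsim R$.
\end{itemize}
This is not just slack in the bound. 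Take a finite class with $\rho_1=\frac12$ and $\rho_r=\frac{1}{2(R-1)}$ for $r\ge2$, so that $\gamma_{4k}=\frac12$ and $1-\tau$ is bounded below. Set $R=4k$ and $N=2Rk$. As $k\to\infty$ the target bound tends to $0$. However, the small classes have $n\rho_r\approx2$, so $\lfloor n_r/2\rfloor$ under-weights them by a constant factor, giving $\overline{\omega}_1\approx0.57$. Hence the bias $\sum_r(\overline{\omega}_r-\rho_r)\Lnorm^r_\Omega(f)$ controlled in Lemma~\ref{lem:supporting_lemma_3} is of constant order for generic losses.

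The paper's own proof drops these terms without comment. So the defect lies in the stated $\mathcal{O}$-form itself, not in a departure on your part. A correct version either keeps an additive $\mathcal{O}\bigl(\frac{\mathcal{B}Rk}{(1-\tau)N}\bigr)$ term or assumes $N\gtrsim Rk^2$.

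Relatedly, $N\ge 2Rk$ (and hence the stated $N\ge kR$) does not make $m-2R$ positive: for $k=1$ and $N=2R$ one has $m=3\lfloor 2R/3\rfloor\le 2R$. The constant in front of $Rk$ must be enlarged before you can use $n-R,\,m-2R\gtrsim N/k$.

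Finally, your reason for absorbing the Bernstein term $\frac{\mathcal{B}}{1-\tau}\sqrt{R\ln(R/\Delta)/N}$ into the $\Psi$-term is incomplete. The $\Psi$-term carries the factor $\sqrt{1-\|\rho\|_2^2}$, so what you need is $1-\tau\le\sqrt{1-\|\rho\|_2^2}$. This holds because $1-\tau=\sum_r\rho_r(1-\rho_r)^k\le\sum_r\rho_r(1-\rho_r)=1-\|\rho\|_2^2\le1$. The same inequality, together with $k\le R$, is what absorbs the $\sqrt{\ln(1/\Delta)/(n-R)}$ and $\sqrt{\ln(1/\Delta)/(m-2R)}$ terms.
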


\begin{proof}
    Similar to Theorem \ref{thm:main_result_appendix}, we have:
    \begin{align*}
        \sup_{f\in\mathcal{F}}\left|\bar U_N(f)-\Lun(f)\right| 
        &\le \underbrace{\sup_{f\in\mathcal{F}}\left|\frac{1}{1-\widehat\tau}\bar U_\Omega(f)-\frac{1}{1-\tau}\Lnorm_\Omega(f)\right|}_{\alpha} + \underbrace{\sup_{f\in\mathcal{F}}\left|\frac{\widehat\tau}{1-\widehat\tau}\bar U_\Lambda(f)-\frac{\tau}{1-\tau}\Lnorm_\Lambda(f)\right|}_{\beta}.
    \end{align*}
    \noindent Bounding the $\alpha$ term, we have:
    \begin{align*}
        \alpha &\le \frac{1}{1-\tau} \sup_{f\in\mathcal{F}}\left|\bar U_\Omega(f) - \Lnorm_\Omega(f)\right| + \sup_{f\in\mathcal{F}}|\bar U_\Omega(f)|\cdot\left|\frac{1}{1-\widehat\tau} - \frac{1}{1-\tau}\right| 
        \\ &\le \frac{1}{1-\tau}\sup_{f\in\mathcal{F}}\left|\bar U_\Omega(f) - \Lnorm_\Omega(f)\right| + \frac{\mathcal{B}|\widehat\tau-\tau|}{(1-\tau)(1-\widehat\tau)}.
    \end{align*}
    \noindent Then, By Proposition \ref{prop:concentration_of_collision_probability_estimator} and Theorem \ref{thm:concentration_of_bar_Uomega}, we have: 
    \begin{align*}
        \alpha &\le \frac{1}{1-\tau}\left\{2\RC_{\mu_*}(\mathcal{H}) + \mathcal{B}\left[\frac{6R}{n-R}+\frac{8R}{3N}\ln4R/\delta + \sqrt{\frac{8(R-1)\ln 4R/\delta}{3N}} + 8\sqrt{\frac{\ln 2/\delta}{n-R}}\right]\right\} \\
        &\qquad + \frac{\mathcal{B}\Psi(R, k)}{(1-\tau)(1-\widehat\tau)}\left[\frac{2\ln (R+1)/\delta}{3N} + \sqrt{\frac{2(1-\|\rho\|_2^2)\ln (R+1)/\delta}{N}}\right],
    \end{align*}
    \noindent with probability of at least $1-2\delta$ (union bound) and $\Psi(R, k)=\frac{|R-(k+1)|}{\sqrt{R}}\left(1-\frac{1}{R}\right)^{k-1}$. 
    \noindent Furthermore, by Proposition \ref{prop:first_multiplicative_control_of_tauhat} (relative error of $\widehat\tau$), we have $1-\widehat\tau \ge \frac{C}{2}\left[1+\frac{1-\gamma_{4k}}{\gamma_{4k}}\bar C\right]^{-1}(1-\tau)$ with probability of at least $1-\delta$ as long as $N\ge 8\gamma_{4k}^{-1}\ln\left(\frac{2}{\delta}\right)+12k\ln\left(\frac{2R}{\delta}\right)$. Therefore:
    \begin{align*}
        \alpha &\le \frac{1}{1-\tau}\left\{2\RC_{\mu_*}(\mathcal{H}) + \mathcal{B}\left[\frac{6R}{n-R}+\frac{8R}{3N}\ln4R/\delta + \sqrt{\frac{8(R-1)\ln 4R/\delta}{3N}} + 8\sqrt{\frac{\ln 2/\delta}{n-R}}\right]\right\} \\
        &\qquad + \frac{2\mathcal{B}\Psi(R, k)}{C(1-\tau)^2}\left[1+\frac{1-\gamma_{4k}}{\gamma_{4k}}\bar C\right]\left[\frac{2\ln (R+1)/\delta}{3N} + \sqrt{\frac{2(1-\|\rho\|_2^2)\ln (R+1)/\delta}{N}}\right],
    \end{align*}
    \noindent with probability of at least $1-3\delta$ (by union bound) as long as $N\ge 8\gamma_{4k}^{-1}\ln\left(\frac{2}{\delta}\right)+12k\ln\left(\frac{2R}{\delta}\right)$. Similarly, we have:
    \begin{align*}
        \beta &\le \frac{\tau}{1-\tau}\left\{2\RC_{\nu_*}(\mathcal{H}) + \mathcal{B}\left[\frac{12R}{m-2R}+\frac{8R}{3N}\ln4R/\delta + \sqrt{\frac{8(R-1)\ln 4R/\delta}{3N}} + 8\sqrt{\frac{\ln 2/\delta}{m-2R}}\right]\right\} \\
        & \qquad + \frac{2\mathcal{B}\Psi(R, k)}{C(1-\tau)^2}\left[1+\frac{1-\gamma_{4k}}{\gamma_{4k}}\bar C\right]\left[\frac{2\ln (R+1)/\delta}{3N} + \sqrt{\frac{2(1-\|\rho\|_2^2)\ln (R+1)/\delta}{N}}\right] \\
        &\le \frac{2\tau\RC_{\nu_*}(\mathcal{H})}{1-\tau} + \frac{\mathcal{B}}{1-\tau}\left[\frac{12R}{m-2R}+\frac{8R}{3N}\ln4R/\delta + \sqrt{\frac{8(R-1)\ln 4R/\delta}{3N}} + 8\sqrt{\frac{\ln 2/\delta}{m-2R}}\right] \quad (\tau \le 1) \\
        & \qquad + \frac{2\mathcal{B}\Psi(R, k)}{C(1-\tau)^2}\left[1+\frac{1-\gamma_{4k}}{\gamma_{4k}}\bar C\right]\left[\frac{2\ln (R+1)/\delta}{3N} + \sqrt{\frac{2(1-\|\rho\|_2^2)\ln (R+1)/\delta}{N}}\right],
    \end{align*}
    \noindent with probability of at least $1-3\delta$ as long as $N\ge 8\gamma_{4k}^{-1}\ln\left(\frac{2}{\delta}\right)+12k\ln\left(\frac{2R}{\delta}\right)$. Combining the bounds on $\alpha,\beta$ via the union bound, we have:
    \begin{align*}
        \sup_{f\in\mathcal{F}}\left|\bar U_N(f)-\Lun(f)\right|  &\le \frac{2\RC_{\mu_*}(\mathcal{H})+2\tau\RC_{\nu_*}(\mathcal{H})}{1-\tau} +  \frac{8\mathcal{B}}{1-\tau}\left[\sqrt{\frac{\ln 2/\delta}{n-R}}+\sqrt{\frac{\ln 2/\delta}{m-2R}}\right] \\
        &+ \frac{2\mathcal{B}}{1-\tau}\left[\frac{3R}{n-R}+\frac{6R}{m-2R}+\frac{8R}{3N}\ln4R/\delta + \sqrt{\frac{8(R-1)\ln 4R/\delta}{3N}}\right] \\
        &+ \frac{4\mathcal{B}\Psi(R, k)}{C(1-\tau)^2}\left[1+\frac{1-\gamma_{4k}}{\gamma_{4k}}\bar C\right]\left[\frac{2\ln (R+1)/\delta}{3N} + \sqrt{\frac{2(1-\|\rho\|_2^2)\ln (R+1)/\delta}{N}}\right] \\
        &\le \mathcal{O}\Bigg[\frac{\RC_{\mu_*}(\mathcal{H})+\tau\RC_{\nu_*}(\mathcal{H})}{1-\tau}+\frac{\mathcal{B}}{1-\tau}\left[\sqrt{\frac{\ln 1/\delta}{n-R}}+\sqrt{\frac{\ln 1/\delta}{m-R}}\right]\\
        &\qquad\qquad\qquad\qquad +\frac{\mathcal{B}\Psi(R, k)}{(1-\tau)^2}\left[1+\frac{1-\gamma_{4k}}{\gamma_{4k}}\right]\sqrt{\frac{(1-\|\rho\|_2^2)\ln R/\delta}{N}}\Bigg],
    \end{align*}
    \noindent with probability of at least $1-6\delta$ as long as $N\ge {8}{\gamma_{4k}^{-1}}\ln\left(\frac{2}{\delta}\right)+12k\ln\left(\frac{2R}{\delta}\right)$. Setting $\delta=\Delta/6$ yields the desired complete bound. Applying the same arguments to obtain $\mathcal{O}$-notation bound in Theorem \ref{thm:main_result_appendix}: When $R\ge k$, as long as $N\ge k\times\max\left\{\frac{8\gamma_{4k}^{-1}\ln(12/\Delta)}{k}+12\ln(12R/\Delta),R\right\}$, we have:
    \begin{align*}
        \sup_{f\in\mathcal{F}}|\bar U_N(f)-\Lun(f)|\le \mathcal{O}\Bigg[\frac{\mathfrak{C}_N(\mathcal{H})}{1-\tau}\sqrt{\frac{k}{N}}+\frac{\mathcal{B}\sqrt{R}}{(1-\tau)^2}\left[1+\frac{1-\gamma_{4k}}{\gamma_{4k}}\right]\sqrt{\frac{(1-\|\rho\|_2^2)\ln R/\Delta}{N}}\Bigg],
    \end{align*}
    \noindent with probability of at least $1-\Delta$.
\end{proof}

\begin{theorem}[cf. Theorem \ref{thm:main_result_more_general}, Table \ref{tab:results_comparison}]
	\label{thm:main_result_more_general_appendix}
	Let $\mathcal{F}$ be a class of representation functions and let $\bar U_N(f)$ be defined for each $f\in\mathcal{F}$ in Eqn.~\eqref{eq:combined_aux_estimators}. Then, for any $\Delta\in(0,1)$, we have:
	\begin{align}
		&\sup_{f\in\mathcal{F}}\left|\bar U_N(f)-\Lun(f)\right| \le \frac{2\RC_{\mu_*}(\mathcal{H})+2\tau\RC_{\nu_*}(\mathcal{H})}{1-\tau} +  \frac{8\mathcal{B}}{1-\tau}\left[\sqrt{\frac{\ln 12/\Delta}{n-R}}+\sqrt{\frac{\ln 12/\Delta}{m-2R}}\right] \nonumber \\
		&\qquad\qquad + \frac{2\mathcal{B}}{1-\tau}\left[\frac{3R}{n-R}+\frac{6R}{m-2R}+\frac{8R}{3N}\ln\left(\frac{24R}{\Delta}\right) + \sqrt{\frac{8(R-1)\ln \left(\frac{24R}{\Delta}\right)}{3N}}\right] \\
		&\qquad\qquad + \frac{16\mathcal{B}\Psi(R, k)}{(1-\tau)^2}\left[\frac{2\ln \left(\frac{6(R+1)}{\Delta}\right)}{3N} + \sqrt{\frac{2(1-\|\rho\|_2^2)\ln \left(\frac{6(R+1)}{\Delta}\right)}{N}}\right], \nonumber
	\end{align}
	\noindent with probability of at least $1-\Delta$ as long as $N\ge 164k^2\ln\left(\frac{24R}{\Delta}\right)+ \frac{100\ln\left(\frac{12}{\Delta}\right)}{1-\tau}$  where $\Psi(R,k)=\frac{|R-(k+1)|}{\sqrt{R}}\left(1-\frac{1}{R}\right)^{k-1}$. Furthermore, when $R\ge k$, as long as $N\ge k\times \max\left\{164k\ln\left(\frac{24R}{\Delta}\right)+\frac{100\ln\left(\frac{12}{\Delta}\right)}{k(1-\tau)},R\right\}$, we have:
    \begin{align*}
        \sup_{f\in\mathcal{F}}|\bar U_N(f)-\Lun(f)|\le \mathcal{O}\Bigg[\frac{\mathfrak{C}_N(\mathcal{H})}{1-\tau}\sqrt{\frac{k}{N}}+\frac{\mathcal{B}\sqrt{R}}{(1-\tau)^2}\sqrt{\frac{(1-\|\rho\|_2^2)\ln R/\Delta}{N}}\Bigg],
    \end{align*}
    \noindent with probability of at least $1-\Delta$.
\end{theorem}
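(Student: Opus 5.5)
The argument follows the proof of Theorem~\ref{thm:second_main_result_appendix}, with only the multiplicative control of $1-\widehat\tau$ changed: Proposition~\ref{prop:final_multiplicative_control_of_tauhat} replaces Proposition~\ref{prop:first_multiplicative_control_of_tauhat}. We split
\[
\sup_{f}|\bar U_N(f)-\Lun(f)|\le\alpha+\beta
\]
exactly as before, with $\alpha$ the $\Omega$-part and $\beta$ the $\Lambda$-part. For $\alpha$ we write
\[
\alpha\le \frac{1}{1-\tau}\sup_f|\bar U_\Omega-\Lnorm_\Omega| + \frac{\mathcal{B}|\widehat\tau-\tau|}{(1-\tau)(1-\widehat\tau)},
\]
using $|\bar U_\Omega|\le\mathcal{B}$. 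The first term is handled by Theorem~\ref{thm:concentration_of_bar_Uomega} and $|\widehat\tau-\tau|$ by Proposition~\ref{prop:concentration_of_collision_probability_estimator}. The term $\beta$ is treated in the same way, using Theorem~\ref{thm:concentration_of_bar_Ulambda} and $\tau\le1$.

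The only new ingredient is how $1/(1-\widehat\tau)$ is controlled. Proposition~\ref{prop:final_multiplicative_control_of_tauhat}, applied at confidence $\delta$, gives $1-\widehat\tau\ge(1-\tau)/8$ as long as $N\ge164k^2\ln(4R/\delta)+100\ln(2/\delta)/(1-\tau)$. Then $\frac{1}{(1-\tau)(1-\widehat\tau)}\le \frac{8}{(1-\tau)^2}$. Each of $\alpha$ and $\beta$ contributes $8\mathcal{B}\Psi(R,k)/(1-\tau)^2$ times the bracket from Proposition~\ref{prop:concentration_of_collision_probability_estimator}, so together they give the $16\mathcal{B}\Psi/(1-\tau)^2$ coefficient in the statement. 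We then allot $\delta=\Delta/6$ to each of the six events: the two $\bar U$ concentrations (each already split internally), the two uses of $|\widehat\tau-\tau|$, and the multiplicative control. This produces the logarithmic factors $\ln(12/\Delta)$, $\ln(24R/\Delta)$ and $\ln(6(R+1)/\Delta)$, and turns the sample-size condition into $N\ge164k^2\ln(24R/\Delta)+100\ln(12/\Delta)/(1-\tau)$. The $R/(n-R)$ and $R/(m-2R)$ terms from the two theorems combine into the displayed $\frac{2\mathcal{B}}{1-\tau}[\cdots]$ bracket.

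For the $\mathcal{O}$-form, assume $R\ge k$ and $N\ge kR$, up to a constant, which the stated $\max\{\cdot,R\}$ condition provides. Then $n-R$ and $m-2R$ are of order $N/k$, so $1/\sqrt{n-R}$ and $1/\sqrt{m-2R}$ are $\mathcal{O}(\sqrt{k/N})$. Remark~\ref{remark:bound_on_worst_case_rc} bounds $\RC_{\mu_*}$ and $\RC_{\nu_*}$ by $\mathcal{O}(\mathfrak{C}_N(\mathcal{H})\sqrt{k/N})$. The term $R/(n-R)\lesssim Rk/N$ is at most $\sqrt{R/N}$ once $N\ge Rk^2$, which follows from the $k^2$ and $R$ conditions. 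Failing that, it is absorbed by the $\sqrt{(R-1)/N}$ term, since $Rk/N\le\sqrt{R/N}\cdot\sqrt{Rk^2/N}$. Finally $\Psi(R,k)\le\sqrt{R}$ when $R\ge k$. Collecting terms and dividing by $1-\tau$ or $(1-\tau)^2$ gives the displayed rate.

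The main obstacle is bookkeeping rather than any new estimate. The key point is that the multiplicative event and the absolute-error event for $\widehat\tau$ are both placed under a single union bound, so that substituting $1-\widehat\tau\ge(1-\tau)/8$ into the bound is legitimate on the joint good event. The constants and logarithms must then be tracked so that they match the statement.
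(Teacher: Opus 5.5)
Your derivation of the explicit bound is the paper's proof. You reuse the $\alpha/\beta$ split from Theorem~\ref{thm:second_main_result_appendix}, handle the two parts with Theorems~\ref{thm:concentration_of_bar_Uomega}/\ref{thm:concentration_of_bar_Ulambda} and Proposition~\ref{prop:concentration_of_collision_probability_estimator}, use Proposition~\ref{prop:final_multiplicative_control_of_tauhat} to get $\frac{1}{(1-\tau)(1-\widehat\tau)}\le\frac{8}{(1-\tau)^2}$, and set $\delta=\Delta/6$. The constants and logarithms you obtain match the statement.

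The step that fails is in your $\mathcal{O}$-form paragraph.

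\textbf{The claimed absorption of $R/(n-R)$.} The stated condition $N\ge k\max\{164k\ln(24R/\Delta)+\cdots,R\}$ only gives $N\ge kR$ and $N\ge 164k^2\ln(24R/\Delta)$. It does \emph{not} imply $N\ge Rk^2$: take $R\gg k\ln R$ and $N=kR$. Your fallback $Rk/N\le\sqrt{R/N}\cdot\sqrt{Rk^2/N}$ is an identity, so it absorbs nothing unless $Rk^2/N=\mathcal{O}(1)$. At $N\asymp kR$ the term $\frac{2\mathcal{B}}{1-\tau}\cdot\frac{3R}{n-R}$ is of constant order. By contrast, $\sqrt{R/N}\asymp 1/\sqrt{k}$ and $\mathfrak{C}_N(\mathcal{H})\sqrt{k/N}\asymp\mathfrak{C}_N(\mathcal{H})/\sqrt{R}$, so neither displayed term can dominate it.

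\textbf{The claim that $n-R$ and $m-2R$ are of order $N/k$.} $N\ge kR$ with constant $1$ does not give this. Since $n=2\lfloor N/(k+2)\rfloor$ and $m=3\lfloor N/(k+2)\rfloor$, at $N=kR$ one has $n-R\le 0$ for $k\le 2$ and $m-2R\le 0$ for $k\le 4$. You need something like $N\ge 2(k+2)R$.

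\textbf{How the paper handles this, and how to fix it.} The paper is no more careful. It invokes $N\ge 2Rk$, which its stated condition does not imply, writes $n$ as $2\lfloor N/k\rfloor$, and silently omits the $R/(n-R)$, $R/(m-2R)$ and $R\ln(\cdot)/N$ terms from the $\mathcal{O}$-expression. You should not claim these terms are absorbed. Instead, do one of the following:
\begin{itemize}
\item say explicitly that the $\mathcal{O}(\mathcal{B}Rk/((1-\tau)N))$ terms are discarded as lower order ($1/N$ versus $1/\sqrt{N}$);
\item keep them in the bound; or
\item strengthen the condition to $N\gtrsim Rk^2$ (together with $N\ge 2(k+2)R$).
\end{itemize}
The $\sqrt{(R-1)\ln(24R/\Delta)/N}$ term, by contrast, does fold legitimately into the second displayed term, because $1-\tau\le 1-\|\rho\|_2^2\le 1$.
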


\begin{proof}
	Let $\delta\in(0,1)$. Suppose that the minimum sample complexity $N\ge 164k^2\ln\left(\frac{4R}{\delta}\right)+\frac{100\ln\left(\frac{2}{\delta}\right)}{1-\tau}$ is satisfied. Let us reuse the quantities $\alpha,\beta$ as defined in Theorem \ref{thm:second_main_result_appendix}:
	\begin{align*}
		\alpha &= \sup_{f\in\mathcal{F}}\left|\frac{1}{1-\widehat\tau}\bar U_\Omega(f)-\frac{1}{1-\tau}\Lnorm_\Omega(f)\right|, 
        \qquad \beta = \sup_{f\in\mathcal{F}}\left|\frac{\widehat\tau}{1-\widehat\tau}\bar U_\Lambda(f) - \frac{\tau}{1-\tau}\Lnorm_\Lambda(f)\right|.
	\end{align*}
	\noindent Combine Proposition \ref{prop:concentration_of_collision_probability_estimator} (absolute error of $\widehat\tau$), Theorem \ref{thm:concentration_of_bar_Uomega} (absolute error of $\bar U_\Omega(f)$) and Proposition \ref{prop:final_multiplicative_control_of_tauhat} (generic multiplicative error of $\widehat\tau$) via the union bound, we have:
	\begin{align*}
		\alpha &\le \frac{1}{1-\tau}\left\{2\RC_{\mu_*}(\mathcal{H}) + \mathcal{B}\left[\frac{6R}{n-R}+\frac{8R}{3N}\ln4R/\delta + \sqrt{\frac{8(R-1)\ln 4R/\delta}{3N}} + 8\sqrt{\frac{\ln 2/\delta}{n-R}}\right]\right\} \\
		&\qquad + \frac{\mathcal{B}\Psi(R, k)}{(1-\tau)(1-\widehat\tau)}\left[\frac{2\ln (R+1)/\delta}{3N} + \sqrt{\frac{2(1-\|\rho\|_2^2)\ln (R+1)/\delta}{N}}\right] \\
		&\le \frac{1}{1-\tau}\left\{2\RC_{\mu_*}(\mathcal{H}) + \mathcal{B}\left[\frac{6R}{n-R}+\frac{8R}{3N}\ln4R/\delta + \sqrt{\frac{8(R-1)\ln 4R/\delta}{3N}} + 8\sqrt{\frac{\ln 2/\delta}{n-R}}\right]\right\} \\
		&\qquad + \frac{8\mathcal{B}\Psi(R, k)}{(1-\tau)^2}\left[\frac{2\ln (R+1)/\delta}{3N} + \sqrt{\frac{2(1-\|\rho\|_2^2)\ln (R+1)/\delta}{N}}\right],
	\end{align*}
	\noindent with probability of at least $1-3\delta$. Similarly, combining Proposition \ref{prop:concentration_of_collision_probability_estimator} (absolute error of $\widehat\tau$), Proposition \ref{prop:final_multiplicative_control_of_tauhat} (generic multiplicative error of $\widehat\tau$) and Theorem \ref{thm:concentration_of_bar_Ulambda} (absolute error of $\bar U_\Lambda(f)$) via union bound, we have:
	\begin{align*}
		\beta &\le  \frac{\tau}{1-\tau}\left\{2\RC_{\nu_*}(\mathcal{H}) + \mathcal{B}\left[\frac{12R}{m-2R}+\frac{8R}{3N}\ln4R/\delta + \sqrt{\frac{8(R-1)\ln 4R/\delta}{3N}} + 8\sqrt{\frac{\ln 2/\delta}{m-2R}}\right]\right\} \\
		& \qquad + \frac{8\mathcal{B}\Psi(R, k)}{(1-\tau)^2}\left[\frac{2\ln (R+1)/\delta}{3N} + \sqrt{\frac{2(1-\|\rho\|_2^2)\ln (R+1)/\delta}{N}}\right]\\
		&\le  \frac{2\tau\RC_{\nu_*}(\mathcal{H})}{1-\tau}+ \frac{\mathcal{B}}{1-\tau}\left[\frac{12R}{m-2R}+\frac{8R}{3N}\ln4R/\delta + \sqrt{\frac{8(R-1)\ln 4R/\delta}{3N}} + 8\sqrt{\frac{\ln 2/\delta}{m-2R}}\right] \\
		& \qquad + \frac{8\mathcal{B}\Psi(R, k)}{(1-\tau)^2}\left[\frac{2\ln (R+1)/\delta}{3N} + \sqrt{\frac{2(1-\|\rho\|_2^2)\ln (R+1)/\delta}{N}}\right],
	\end{align*}
	\noindent with probability of at least $1-3\delta$. Finally, combining the bounds of $\alpha,\beta$ via union bound, we have:
	\begin{align*}
		\sup_{f\in\mathcal{F}}\left|\bar U_N(f)-\Lun(f)\right|  &\le \frac{2\RC_{\mu_*}(\mathcal{H})+2\tau\RC_{\nu_*}(\mathcal{H})}{1-\tau} +  \frac{8\mathcal{B}}{1-\tau}\left[\sqrt{\frac{\ln 2/\delta}{n-R}}+\sqrt{\frac{\ln 2/\delta}{m-2R}}\right] \\
		&+ \frac{2\mathcal{B}}{1-\tau}\left[\frac{3R}{n-R}+\frac{6R}{m-2R}+\frac{8R}{3N}\ln4R/\delta + \sqrt{\frac{8(R-1)\ln 4R/\delta}{3N}}\right] \\
		&+ \frac{16\mathcal{B}\Psi(R, k)}{(1-\tau)^2}\left[\frac{2\ln (R+1)/\delta}{3N} + \sqrt{\frac{2(1-\|\rho\|_2^2)\ln (R+1)/\delta}{N}}\right], \\
		&\le \mathcal{O}\Bigg[\frac{\RC_{\mu_*}(\mathcal{H})+\tau\RC_{\nu_*}(\mathcal{H})}{1-\tau}+\frac{\mathcal{B}}{1-\tau}\left[\sqrt{\frac{\ln 1/\delta}{n-R}}+\sqrt{\frac{\ln 1/\delta}{m-R}}\right]\\
		&\qquad\qquad\qquad\qquad\qquad +\frac{\mathcal{B}\Psi(R, k)}{(1-\tau)^2}\sqrt{\frac{(1-\|\rho\|_2^2)\ln R/\delta}{N}}\Bigg],
	\end{align*}
	\noindent with probability of at least $1-6\delta$ as long as $N\ge 164k^2\ln\left(\frac{4R}{\delta}\right)+\frac{100\ln\left(\frac{2}{\delta}\right)}{1-\tau}$. Setting $\delta=\Delta/6$ yields the desired complete bound. Applying the same arguments to obtain $\mathcal{O}$-notation bound in Theorem \ref{thm:main_result_appendix}: When $R\ge k$, as long as $N\ge k\times \max\left\{164k\ln\left(\frac{24R}{\Delta}\right)+\frac{100\ln\left(\frac{12}{\Delta}\right)}{k(1-\tau)},R\right\}$, we have:
    \begin{align*}
        \sup_{f\in\mathcal{F}}|\bar U_N(f)-\Lun(f)|\le \mathcal{O}\Bigg[\frac{\mathfrak{C}_N(\mathcal{H})}{1-\tau}\sqrt{\frac{k}{N}}+\frac{\mathcal{B}\sqrt{R}}{(1-\tau)^2}\sqrt{\frac{(1-\|\rho\|_2^2)\ln R/\Delta}{N}}\Bigg],
    \end{align*}
    \noindent with probability of at least $1-\Delta$.
\end{proof}

\newpage
\section{Refined Analysis for the Estimator $U^\mathrm{hl}_N(f)$ from \texorpdfstring{\citet{article:hieu2025}}{Hieu-Ledent}}
To re-iterate from the main text, in \citet{article:hieu2025}, the estimator of the population contrastive risk is constructed for each $f\in\mathcal{F}$ as follows:
\begin{align}
    \label{eq:hieu_ledent_estimator_appendix}
    U^\mathrm{hl}_N(f) :=\sum_{r=1}^R\widehat\rho_r U_\Theta^r(f) \quad\forall r\in[R]:
        U_\Theta^r(f) =\widehat\E_{\Theta_r}\left[\ell_{\phi, f}\right],
\end{align}
\noindent where $\Theta_r$ is the set of collision-free tuples corresponding to class $r\in[R]$:
\begin{align}
    \Theta_r &:= \set{\left(X, X^+, \set{X_i^-}_{i=1}^k\right): X, X^+\in S_r \textup{ \& } \set{X_i^-}_{i=1}^k\subseteq S\setminus S_r}. \label{eq:no_collision_tuples_appendix}
\end{align}
\noindent Basically, for every class $r\in[R]$, the class-wise estimator $U_\Theta^r(f)$ is the (asymptotically) unbiased estimator for the class-wise collision-free risk defined as follows:
\begin{align}
	\label{eq:collision_free_classwise_risk}
	\mathrm{L}^r_\phi(f) := \underset{\substack{X, X^+\sim\mathcal{D}_r^{\otimes 2}\\ \set{X_i^-}_{i=1}^k\sim\mathcal{\bar D}_r^{\otimes k}}}{\E}\left[\ell_{\phi, f}\left(X, X^+, \set{X_i^-}_{i=1}^k\right)\right].
\end{align}
\noindent In this section, we will refine the analysis from \citet{article:hieu2025} and mitigate the $\mathcal{O}\left(\frac{1}{\sqrt{\rho_{\min}}}\right)$ dependency of the generalization gap. Firstly, we re-state their following result:

\begin{proposition}{(\citet[Proposition D.5]{article:hieu2025})}
    \label{prop:hieu_ledent_supporting_prop}
    Let $U_\Theta^r(f)$ be the class-wise risk estimate defined in Eqn.~\eqref{eq:no_collision_tuples_appendix}. Then, for any $\Delta\in(0,1)$, we have:
    \begin{align}
        \P\left(\sup_{f\in\mathcal{F}}\Big|U_\Theta^r(f)-\mathrm{L}^r_\phi(f)\Big|\ge 2\mathfrak{\widehat R}_{T^\mathrm{iid}_r}(\mathcal{H})+10\mathcal{B}\sqrt{\frac{\ln 4/\Delta}{2(1\vee \bar N_r)}}\Bigg|N_r\right)\le \Delta,
    \end{align}
    \noindent where $\bar N_r = \left\lfloor\min\left(\frac{N_r}{2}, \frac{N-N_r}{k}\right)\right\rfloor$ and $T^\mathrm{iid}_r$ is the set of $\bar N_r$ independent tuples selected ``greedily" from the labeled dataset $S$. Specifically:
    \begin{align}
        T_r^\mathrm{iid} &:= \set{\left(X^r_{2j-1}, X^r_{2j}, \set{\bar X_{jk-k+i}^r}_{i=1}^k\right)}_{j=1}^{\bar N_r},
    \end{align}
    \noindent where:
    \begin{enumerate}
        \item $X_u^r$ denotes the $u^\mathrm{th}$ data-point in $S_r$.
        \item $\bar X_u^r$ denotes the $u^\mathrm{th}$ data-point in $\bar S_r=S\setminus S_r$.
    \end{enumerate}
\end{proposition}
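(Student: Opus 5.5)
This is a restatement of \citet[Proposition D.5]{article:hieu2025}. The plan is to re-derive it conditionally on $N_r$, using the same decoupling-plus-symmetrization template as Lemmas \ref{lem:supporting_lemma_5}--\ref{lem:supporting_lemma_6} and Proposition \ref{prop:supporting_proposition_2}, but for a single class.

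\textbf{Step 1 (conditioning).} Fix $r$ and condition on $N_r$, and on the labels more generally. Then $S_r$ consists of $N_r$ i.i.d. draws from $\mathcal{D}_r$, and $\bar S_r = S\setminus S_r$ consists of $N-N_r$ i.i.d. draws from $\mathcal{\bar D}_r$, independent of $S_r$. Hence every tuple in $\Theta_r$ has independent entries with exactly the law used in $\mathrm{L}^r_\phi(f)$. So $U_\Theta^r(f)$ is a generalized (two-sample) U-statistic whose kernel has degree $2$ in $S_r$ and degree $k$ in $\bar S_r$, and it is exactly unbiased for $\mathrm{L}^r_\phi(f)$ given $N_r$. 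If $\bar N_r = 0$, the claim is trivial because $|U_\Theta^r - \mathrm{L}^r_\phi| \le 2\mathcal{B} \le 10\mathcal{B}\sqrt{\ln(4/\Delta)/2}$. We therefore assume $\bar N_r \ge 1$.

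\textbf{Step 2 (Hoeffding decomposition into greedy blocks).} Write $U_\Theta^r(f)$ as an average, over independent permutations $\xi$ of $S_r$ and $\zeta$ of $\bar S_r$, of the block average
\[
\frac{1}{\bar N_r}\sum_{j=1}^{\bar N_r}\ell_{\phi,f}\bigl(X^{r,\xi}_{2j-1},X^{r,\xi}_{2j},\{\bar X^{r,\zeta}_{jk-k+i}\}_{i=1}^k\bigr).
\]
This is the same re-writing as in Eqn.~\eqref{eq:Uhat_Omega_alternative_form}. Each block average is an i.i.d. sample of $\bar N_r$ tuples distributed as $T_r^\mathrm{iid}$. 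By Jensen's inequality applied to a convex $\varphi$ of the supremum, as in Lemma \ref{lem:supporting_lemma_6}, we get
\[
\E\,\varphi\Bigl(\sup_f |U^r_\Theta - \mathrm{L}^r_\phi|\Bigr) \le \E\,\varphi\Bigl(\sup_f |\widehat\E_{T_r^\mathrm{iid}}\ell_{\phi,f} - \mathrm{L}^r_\phi|\Bigr).
\]

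\textbf{Step 3 (symmetrization and tail).} Apply Lemma \ref{lem:symmetrization_inequality} with $\varphi(x) = e^{\lambda x}$. This bounds the moment generating function by that of $2\sup_f\bigl|\frac{1}{\bar N_r}\sum_j \sigma_j \ell_{\phi,f}\bigr|$. Control this quantity by a bounded-differences argument around $2\widehat{\RC}_{T_r^\mathrm{iid}}(\mathcal{H})$: each of the $\bar N_r$ tuples, and each Rademacher sign, changes it by at most $4\mathcal{B}/\bar N_r$. This is the content of Lemma \ref{lem:subgaussianity_of_rad_complexity}, and it gives a sub-Gaussian factor of order $\exp(c\lambda^2\mathcal{B}^2/\bar N_r)$.

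The Chernoff optimization in $\lambda$ then matches Proposition \ref{prop:supporting_proposition_2}. Splitting $\Delta$ between the two-sided deviation and the passage from expected to empirical Rademacher complexity produces the $\ln(4/\Delta)$ term and the constant $10\mathcal{B}/\sqrt{2}$.

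\textbf{Main obstacle.} The delicate point is Step 3. The MGF bound naturally yields the \emph{expected} Rademacher complexity. Converting it to the empirical $\widehat{\RC}_{T_r^\mathrm{iid}}$, which still sits inside the conditional probability, needs a second McDiarmid step on $T_r^\mathrm{iid}$. Constants have to be tracked so that the two tails combine to exactly $10\mathcal{B}\sqrt{\ln(4/\Delta)/(2\bar N_r)}$.
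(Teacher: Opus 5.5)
The paper does not prove this statement; it is imported from Hieu--Ledent. Your architecture is the standard one: condition on class-$r$ membership, write $U^r_\Theta$ as a permutation average of $\bar N_r$-block averages, apply Jensen, symmetrization and Chernoff, then use McDiarmid to pass from $\RC$ to $\ERC_{T^{\mathrm{iid}}_r}$. It is the same template the paper uses for $\bar U_\Omega$ in Lemmas~\ref{lem:supporting_lemma_5}--\ref{lem:supporting_lemma_6} and Proposition~\ref{prop:supporting_proposition_2}.

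One small correction in Step~1: condition only on which indices lie in $S_r$ (equivalently on $N_r$, by exchangeability), not on the full label vector. Given all labels, $\bar S_r$ is independent but not i.i.d.\ $\mathcal{\bar D}_r$. The conditional mean of $U^r_\Theta(f)$ then depends on the realized counts $\{N_q\}_{q\neq r}$ and is no longer $\mathrm{L}^r_\phi(f)$.

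The real gap is your claim that Step~3 delivers the constant $10$. Take Lemma~\ref{lem:subgaussianity_of_rad_complexity} with $t=2\lambda$, even in the favorable form $16t^2\mathcal{B}^2/(2n)$ of its statement. It gives $\E e^{\lambda\sup_f|U^r_\Theta-\mathrm{L}^r_\phi|}\le\exp(2\lambda\RC+32\lambda^2\mathcal{B}^2/\bar N_r)$. Minimizing $-\lambda s+32\lambda^2\mathcal{B}^2/\bar N_r$ yields the tail $\exp(-s^2\bar N_r/(128\mathcal{B}^2))$, i.e.\ $s=8\sqrt{2}\,\mathcal{B}\sqrt{\ln(1/\delta)/\bar N_r}=16\mathcal{B}\sqrt{\ln(1/\delta)/(2\bar N_r)}$. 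This already exceeds $10\mathcal{B}\sqrt{\ln(4/\Delta)/(2\bar N_r)}$ for small $\Delta$, before you add the $2\mathcal{B}\sqrt{\ln(1/\delta)/(2\bar N_r)}$ from Lemma~\ref{lem:rc_to_erc}.

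The value $8\mathcal{B}\sqrt{\ln(1/\Delta)/(2\bar n)}$ displayed in Proposition~\ref{prop:supporting_proposition_2} loses a factor $2$ in exactly this optimization. So ``matching'' it is what makes $8+2=10$ appear.

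The slack comes from the tail-to-MGF conversion (Lemma~\ref{lem:property_of_subgaussian_variable}), and you can avoid it.
\begin{itemize}
\item Apply the bounded-differences MGF bound directly to $Z=\sup_f|\widehat\E_{T}\ell_{\phi,f}-\mathrm{L}^r_\phi(f)|$ on the $\bar N_r$ independent block tuples. Each tuple moves $Z$ by at most $\mathcal{B}/\bar N_r$, so $\E e^{\lambda Z}\le\exp(\lambda\E Z+\lambda^2\mathcal{B}^2/(8\bar N_r))$.
\item Bound $\E Z\le 2\RC$ by Lemma~\ref{lem:symmetrization_inequality} with $\varphi$ the identity, i.e.\ symmetrize only in expectation.
\item Combined with your Jensen reduction, Chernoff gives $s=\mathcal{B}\sqrt{\ln(1/\delta)/(2\bar N_r)}$.
\end{itemize}
With $\delta=\Delta/2$ the total is $2\ERC_{T^{\mathrm{iid}}_r}(\mathcal{H})+3\mathcal{B}\sqrt{\ln(2/\Delta)/(2\bar N_r)}$. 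This is within the stated bound, and it remains so (at $6\mathcal{B}$) if the loss range is taken to be $2\mathcal{B}$.
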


\begin{remark}
    For the sake of definition completeness, we define the empirical Rademacher complexity $\mathfrak{\widehat R}_{T^\mathrm{iid}_r}(\mathcal{H})=0$ when $T^\mathrm{iid}_r=\emptyset$. In other words, the complexity is $0$ when evaluated on an empty dataset.
\end{remark}

\begin{theorem}[cf. Table \ref{tab:results_comparison}]
    \label{thm:refined_hieu_ledent_small_probabilities}
    Let $\mathcal{F}$ be a class of representation functions and $U_N^\mathrm{hl}(f)$ be defined for each $f\in\mathcal{F}$ in Eqn.~\eqref{eq:hieu_ledent_estimator_appendix}. Suppose that we have $\rho_r\le\frac{1}{k+2}$ for all $r\in[R]$. Then, for any $\Delta\in(0,1)$, as long as $N\ge 3(k+2)^2\ln\frac{3R}{\Delta}$, we have:
    \begin{align}
        &\sup_{f\in\mathcal{F}}\Big|U_N^\mathrm{hl}(f)-\Lun(f)\Big| \\
        &\le 24\mathfrak{C}_N(\mathcal{H})\sqrt{\frac{R}{N}} + \frac{\mathcal{B}R}{N} +\frac{8}{N} + \frac{8\mathcal{B}R}{3N}\ln \left(\frac{6R^2}{\Delta}\right) + \mathcal{B}\sqrt{\frac{8(R-1)\ln 6R^2/\Delta}{3N}} + 10\mathcal{B}\sqrt{\frac{2R\ln 12R/\Delta}{N}}\nonumber \\
        &\le \mathcal{O}\left[\mathfrak{C}_N(\mathcal{H})\sqrt{\frac{R}{N}}+\mathcal{B}\sqrt{\frac{R\ln(R/\Delta)}{N}}\right],\nonumber
    \end{align}
    \noindent with probability of at least $1-\Delta$ where $\mathfrak{C}_N(\mathcal{H})$ is a complexity measure of $\mathcal{H}$ defined as:
    \begin{align}
        \label{eq:worst_case_dudley_integral}\mathfrak{C}_N(\mathcal{H}):=\sup_{S_\mathrm{tup}\in(\mathcal{X}^{k+2})^N}\int_{\frac{1}{N}}^\mathcal{B}\sqrt{\ln 2\mathcal{N}(\mathcal{H},\epsilon, \Lnorm_2(S_\mathrm{tup}))}d\epsilon,
    \end{align}
    \noindent i.e., the worst-case Dudley integral over all possible $(k+2)$-tuple datasets of size $N$.
\end{theorem}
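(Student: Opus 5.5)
The plan is to decompose the excess risk class by class, using $\Lun(f)=\sum_r\rho_r\mathrm{L}^r_\phi(f)$, and to weight each class's concentration error by its empirical mass $\widehat\rho_r$, not to demand uniform accuracy. By the triangle inequality,
\begin{align*}
\sup_{f\in\mathcal{F}}\Big|U_N^\mathrm{hl}(f)-\Lun(f)\Big|\le \sum_{r=1}^R\widehat\rho_r\sup_{f\in\mathcal{F}}\Big|U_\Theta^r(f)-\mathrm{L}^r_\phi(f)\Big| + \mathcal{B}\sum_{r=1}^R|\widehat\rho_r-\rho_r|.
\end{align*}
The second term is exactly the one handled in Lemma \ref{lem:supporting_lemma_4} via Bernstein and Cauchy--Schwarz. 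It gives $\frac{8\mathcal{B}R}{3N}\ln(\cdot)+\mathcal{B}\sqrt{8(R-1)\ln(\cdot)/(3N)}$ with probability $1-\delta$.

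For the first term I will apply Proposition \ref{prop:hieu_ledent_supporting_prop} conditionally on $N_r$ for each $r$, with failure probability $\delta/R$, and take a union bound. The key consequence of $\rho_r\le\frac1{k+2}$ is that the collision-free pool is never the bottleneck. If $\widehat\rho_r\le \frac{2}{k+2}$, then $N-N_r\ge N\frac{k}{k+2}$, so $\frac{N-N_r}{k}\ge\frac{N}{k+2}\ge\frac{N_r}{2}$. Hence $\bar N_r=\lfloor N_r/2\rfloor$. To ensure $\widehat\rho_r\le\frac{2}{k+2}$ simultaneously for all $r$, I will use the coupling and multiplicative-Chernoff trick of Proposition \ref{prop:first_multiplicative_control_of_tauhat}: $\P(\widehat\rho_r\ge 2/(k+2))\le e^{-N/(3(k+2))}$. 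This is where the requirement $N\ge 3(k+2)^2\ln(3R/\Delta)$ enters, and it is comfortably more than needed.

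On that event, the weighted sum of the deviation terms is
\begin{align*}
\sum_r\widehat\rho_r\cdot 10\mathcal{B}\sqrt{\frac{\ln(4R/\delta)}{2(1\vee\lfloor N_r/2\rfloor)}}\lesssim \mathcal{B}\sum_r\frac{N_r}{N}\sqrt{\frac{\ln}{N_r}}=\frac{\mathcal{B}}{N}\sum_r\sqrt{N_r\ln}\le\mathcal{B}\sqrt{\frac{R\ln}{N}}
\end{align*}
by Cauchy--Schwarz. The classes with $N_r\le1$ contribute at most $\widehat\rho_r\mathcal{B}$ each, which accounts for the $\mathcal{B}R/N$ term. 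For the Rademacher terms, I bound $\mathfrak{\widehat R}_{T^\mathrm{iid}_r}(\mathcal{H})$ via the Dudley integral, as in Remark \ref{remark:bound_on_worst_case_rc}, by $\frac{4}{N}+\frac{12\,\mathfrak{C}_N(\mathcal{H})}{\sqrt{\bar N_r}}$. The same Cauchy--Schwarz step then gives $\sum_r\widehat\rho_r\cdot 2\cdot\frac{12\mathfrak{C}_N}{\sqrt{N_r/2}}\lesssim 24\mathfrak{C}_N\sqrt{R/N}$, plus $8/N$ from the additive term.

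The main obstacle is the per-class Rademacher bound. It must be expressed through $\mathfrak{C}_N(\mathcal{H})$ even though the evaluation set has only $\bar N_r\le N$ tuples and the integral's lower limit is $1/N$, not $1/\bar N_r$. I would handle this by starting the chaining at scale $1/N$ and accepting the additive $4/N$, and by noting that monotonicity in the sample size makes the sup over sets of size $\bar N_r$ dominated by the one over size $N$. This requires checking the floor constants so that $1/\sqrt{\lfloor N_r/2\rfloor}\le 2/\sqrt{N_r}$ for $N_r\ge2$. Finally, I union-bound the three events (Bernstein, the small-$\widehat\rho_r$ event, and the $R$ conditional events) with $\delta=\Delta/3$ and its variants, which yields the logarithms $\ln(6R^2/\Delta)$ and $\ln(12R/\Delta)$ in the statement.
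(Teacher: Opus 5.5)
Your proposal is correct and follows essentially the same route as the paper's proof. The shared steps are the triangle-inequality split, Bernstein plus Cauchy--Schwarz for $\sum_r|\widehat\rho_r-\rho_r|$, a coupling/multiplicative-Chernoff event forcing $\bar N_r=\lfloor N_r/2\rfloor$, Dudley with $\alpha=1/N$ bounded by $\mathfrak{C}_N(\mathcal{H})$, and Cauchy--Schwarz on $\sum_r\widehat\rho_r/\sqrt{N_r}\le\sqrt{R/N}$. Your exponent $N/(3(k+2))$ is actually sharper than the paper's $N/(3(k+2)^2)$.

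One caveat on constants: from $2\widehat{\mathfrak{R}}_{T^{\mathrm{iid}}_r}(\mathcal{H})\le 8/N+24\,\mathfrak{C}_N(\mathcal{H})/\sqrt{\lfloor N_r/2\rfloor}$ and $\lfloor N_r/2\rfloor\ge N_r/4$, you get $48\,\mathfrak{C}_N(\mathcal{H})\sqrt{R/N}$ rather than the stated $24$. The paper reaches $24$ only by dropping the factor $2$ in front of the Dudley integral; this does not affect the $\mathcal{O}$-bound.
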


\begin{proof}
    Firstly, we have:
    \begin{align*}
        \sup_{f\in\mathcal{F}}\Big|U_N^\mathrm{hl}(f)-\Lun(f)\Big| &= \sup_{f\in\mathcal{F}}\Bigg|\sum_{r=1}^R\widehat\rho_r U_\Theta^r(f)-\sum_{r=1}^R\rho_r\mathrm{L}^r_\phi(f)\Bigg| \\
        &= \sup_{f\in\mathcal{F}}\left|\sum_{r=1}^R\widehat\rho_r\left[U_\Theta^r(f)-\mathrm{L}^r_\phi(f)\right] + \sum_{r=1}^R\mathrm{L}^r_\phi(f)\left[\widehat\rho_r-\rho_r\right]\right| \\
        &\le \sum_{r=1}^R\widehat\rho_r\sup_{f\in\mathcal{F}}\left|U_\Theta^r(f)-\mathrm{L}^r_\phi(f)\right| + \sum_{r=1}^R\sup_{f\in\mathcal{F}}|\mathrm{L}^r_\phi(f)|\cdot\left|\widehat\rho_r-\rho_r\right| \\
        &\le \sum_{r=1}^R\widehat\rho_r\sup_{f\in\mathcal{F}}\left|U_\Theta^r(f)-\mathrm{L}^r_\phi(f)\right| + \mathcal{B}\sum_{r=1}^R|\widehat\rho_r-\rho_r|.
    \end{align*}
    \noindent From the proof of Lemma \ref{lem:supporting_lemma_4}, we already know that for all $\delta\in(0,1)$, we have:
    \begin{align*}
        \sum_{r=1}^R|\widehat\rho_r-\rho_r| &\le \frac{8R}{3N}\ln 2R/\delta + \sqrt{\frac{8(R-1)\ln 2R/\delta}{3N}},
    \end{align*}
    \noindent with probability of at least $1-\delta$. Now, we proceed to analyze the weighted sum of absolute deviation. From Proposition \ref{prop:hieu_ledent_supporting_prop}, we have:
    \begin{align*}
        \sum_{r=1}^R\widehat\rho_r\sup_{f\in\mathcal{F}}|U_\Theta^r(f)-\mathrm{L}^r_\phi(f)| &= \sum_{\bar r:N_{\bar r}=1}\frac{\sup_{f\in\mathcal{F}}|\mathrm{L}^{\bar r}_\phi(f)|}{N} + \sum_{r:N_r\ge2}\widehat\rho_r\sup_{f\in\mathcal{F}}|U_\Theta^r(f)-\mathrm{L}^r_\phi(f)| \\
        &\le \frac{\mathcal{B}R}{N} + \sum_{r:N_r\ge2}\widehat\rho_r\sup_{f\in\mathcal{F}}|U_\Theta^r(f)-\mathrm{L}^r_\phi(f)|.    
    \end{align*}

    \noindent\textbf{Claim}: As long as $N\ge 3(k+2)^2\ln\frac{1}{\delta}$ then $\widehat\rho_r\le \frac{2}{k+2}$ for all $r\in[R]$ with probability of at least $1-R\delta$. Firstly, we define $\set{U_j}_{j=1}^N$ as the sequence of i.i.d. random variables where $U_j\sim\mathrm{Uniform}(0,1)$ for all $j\in[N]$. For all $r\in[R]$, we have:
    \begin{align*}
        \P\left(\widehat\rho_r\ge\frac{2}{k+2}\right) &= \P\left(\frac{N_r}{N}\ge\frac{2}{k+2}\right) = \P\left(\frac{1}{N}\sum_{j=1}^N\1{U_j\le \rho_r}\ge\frac{2}{k+2}\right) \\
        &\le \P\left(\frac{1}{N}\sum_{j=1}^N\1{U_j\le \frac{1}{k+2}}\ge\frac{2}{k+2}\right) \qquad\textup{(By coupling)} \\
        &\le \exp\left(-\frac{N}{3(k+2)^2}\right)\qquad\text{(Multiplicative Chernoff)}.
    \end{align*}
    \noindent Therefore, by the union bound:
    \begin{align*}
        \P\left(\exists r\in[R]: \widehat\rho_r\ge\frac{2}{k+2}\right) &\le R\exp\left(-\frac{N}{3(k+2)^2}\right).
    \end{align*}
    \noindent Therefore, if we let $\delta\ge \exp\left(-\frac{N}{3(k+2)^2}\right)$, then we have $N\ge 3(k+2)^2\ln\frac{1}{\delta}$, as desired. Combine the above claim with Proposition \ref{prop:hieu_ledent_supporting_prop}, the following events:
    \begin{align*}
        \begin{cases}
            \widehat\rho_r \le \frac{2}{k+2}, &\qquad\forall r\in[R], \\
            \sup_{f\in\mathcal{F}}\Big|U_\Theta^r(f)-\mathrm{L}^r_\phi(f)\Big|\le 2\mathfrak{\widehat R}_{T^\mathrm{iid}_r}(\mathcal{H})+10\mathcal{B}\sqrt{\frac{\ln 4/\delta}{2\bar N_r}},&\qquad\forall r\in[R].
        \end{cases}
    \end{align*}
    \noindent hold simultaneously with probability of at least $1-2R\delta$. We also note that $\widehat\rho_r\le\frac{2}{k+2}$ is equivalent to $\frac{N_r}{2}\le \frac{N-N_r}{k}$. Therefore, given that $\widehat\rho_r\le\frac{2}{k+2}$ for all $r\in[R]$, we have $|T_r^\mathrm{iid}|=\bar N_r=\lfloor N_r/2\rfloor$. Therefore, with probability of at least $1-2R\delta$, we have:
    \begin{align*}
        &\sum_{r=1}^R\widehat\rho_r\sup_{f\in\mathcal{F}}|U_\Theta^r(f)-\mathrm{L}^r_\phi(f)| \\
        &\le \frac{\mathcal{B}R}{N} + \sum_{r:N_r\ge2}\widehat\rho_r\sup_{f\in\mathcal{F}}|U_\Theta^r(f)-\mathrm{L}^r_\phi(f)| \\
        &\le \frac{\mathcal{B}R}{N} + \sum_{r:N_r\ge 2}\widehat\rho_r\left[2\mathfrak{\widehat R}_{T^\mathrm{iid}_r}(\mathcal{H})+10\mathcal{B}\sqrt{\frac{\ln 4/\delta}{2\lfloor N_r/2\rfloor}}\right] \\
        &\overset{(*)}{\le} \frac{\mathcal{B}R}{N} + \sum_{r:N_r\ge2}\widehat\rho_r\left[\frac{8}{N} + \frac{12}{\lfloor N_r/2\rfloor^\frac{1}{2}}\int_{\frac{1}{N}}^\mathcal{B}\sqrt{\ln 2\mathcal{N}(\mathcal{H},\epsilon, \Lnorm_2(T_r^\mathrm{iid}))}d\epsilon + 10\mathcal{B}\sqrt{\frac{\ln 4/\delta}{2\lfloor N_r/2\rfloor}}\right] \\
        &\le \frac{\mathcal{B}R}{N}+\frac{8}{N}+\sum_{r:N_r\ge2}\widehat\rho_r\left[\frac{12}{\sqrt{N_r/4}}\int_{\frac{1}{N}}^\mathcal{B}\sqrt{\ln 2\mathcal{N}(\mathcal{H},\epsilon, \Lnorm_2(T_r^\mathrm{iid}))}d\epsilon+10\mathcal{B}\sqrt{\frac{2\ln 4/\delta}{N_r}}\right] \\
        &= \frac{\mathcal{B}R}{N}+\frac{8}{N}+\sum_{r:N_r\ge2}\widehat\rho_r\left[\frac{24}{\sqrt{N_r}}\int_{\frac{1}{N}}^\mathcal{B}\sqrt{\ln 2\mathcal{N}(\mathcal{H},\epsilon, \Lnorm_2(T_r^\mathrm{iid}))}d\epsilon+10\mathcal{B}\sqrt{\frac{2\ln4/\delta}{N_r}}\right] \\
        &\le \frac{\mathcal{B}R}{N}+\frac{8}{N}+\sum_{r:N_r\ge2}\widehat\rho_r\left[\frac{24\mathfrak{C}_N(\mathcal{H})}{\sqrt{N_r}}+10\mathcal{B}\sqrt{\frac{2\ln4/\delta}{N_r}}\right] .
    \end{align*}
    \noindent In $(*)$, we used the Dudley entropy bound (Theorem \ref{thm:dudley}) with $\alpha=\frac{1}{N}$. Combine the above bound with the high probability bound on $\sum_{r=1}^R|\widehat\rho_r-\rho_r|$, as long as we have $N\ge 3(k+2)^2\ln\frac{1}{\delta}$, we have:
    \begin{align*}
        \sup_{f\in\mathcal{F}}\Big|U_N^\mathrm{hl}(f)-\mathrm{L}^r_\phi(f)\Big| &\le \left[24\mathfrak{C}_N(\mathcal{H})+10\mathcal{B}\sqrt{2\ln\frac{4}{\delta}}\right]\sum_{r:N_r\ge2}\frac{\widehat\rho_r}{\sqrt{N_r}}+\frac{\mathcal{B}R}{N}+\frac{8}{N} \\
        &\qquad\qquad + \frac{8\mathcal{B}R}{3N}\ln \frac{2R}{\delta} + \mathcal{B}\sqrt{\frac{8(R-1)\ln 2R/\delta}{3N}},
    \end{align*}
    \noindent with probability of at least $1-(2R+1)\delta$. Furthermore, we have:
    \begin{align*}
        \sum_{r:N_r\ge2}\frac{\widehat\rho_r}{\sqrt{N_r}} &= \sum_{r:N_r\ge2}\sqrt{\frac{\widehat\rho_r}{N}} \le \left(\sum_{r:N_r\ge2}\widehat\rho_r\right)^\frac{1}{2}\left(\sum_{r:N_r\ge2}\frac{1}{N}\right)^\frac{1}{2}\le\sqrt{\frac{R}{N}}.
    \end{align*}
    \noindent Therefore, as long as $N\ge3(k+2)^2\ln\frac{1}{\delta}$, we have:
    \begin{align*}
        &\sup_{f\in\mathcal{F}}\Big|U_N^\mathrm{hl}(f)-\mathrm{L}_\phi(f)\Big| \\&\le \sum_{r=1}^R\widehat\rho_r\sup_{f\in\mathcal{F}}\left|U_\Theta^r(f)-\mathrm{L}^r_\phi(f)\right| + \mathcal{B}\sum_{r=1}^R|\widehat\rho_r-\rho_r| \\
        &\le\left[24\mathfrak{C}_N(\mathcal{H})+10\mathcal{B}\sqrt{2\ln\frac{4}{\delta}}\right]\sqrt{\frac{R}{N}} +\frac{\mathcal{B}R}{N}+\frac{8}{N} + \frac{8\mathcal{B}R}{3N}\ln \frac{2R}{\delta} + \mathcal{B}\sqrt{\frac{8(R-1)\ln 2R/\delta}{3N}} \\
        &= 24\mathfrak{C}_N(\mathcal{H})\sqrt{\frac{R}{N}} + \frac{\mathcal{B}R}{N} +\frac{8}{N} + \frac{8\mathcal{B}R}{3N}\ln \frac{2R}{\delta} + \mathcal{B}\sqrt{\frac{8(R-1)\ln 2R/\delta}{3N}} + 10\mathcal{B}\sqrt{\frac{2R\ln 4/\delta}{N}},
    \end{align*}
    \noindent with probability of at least $1-(2R+1)\delta$. Crudely setting $\delta=\Delta/3R$ yields the desired bound.
\end{proof}

\begin{remark}
    The complexity measure $\mathfrak{C}_N(\mathcal{H})$ is easily bounded for linear classes or neural networks with bounded spectral norms and bounded activation Lipschitz constants. For example, if $\mathcal{F}$ is a linear class defined as follows:
    \begin{align}
        \mathcal{F}=\Big\{\x\mapsto A\x: A\in\R^{d\times m}, \|A^\top\|_{2, 1}\le a, \|A\|_\sigma\le s, \|\x\|_2\le b\Big\}.
    \end{align}
    \noindent Then, from \citet{article:hieu2025}, we have:
    \begin{align}
        \mathfrak{C}_N(\mathcal{H}) &\lesssim \eta sab^2\ln^\frac{1}{2}\left(\eta sab^2N(k+2)d\right)\ln(N\mathcal{B})\le \mathcal{\widetilde O}\left(\eta sab^2\right),
    \end{align}
    \noindent where $\eta>0$ is the $\ell_\infty$-Lipschitz constant of the contrastive loss $\phi$. On the other hand, if $\mathcal{F}$ is denoted as a class of deep neural networks of weights with bounded spectral norms:
    \begin{align}
        \mathcal{F} &= \Big\{\x\mapsto A_L\varphi_{L-1}\left(A_{L-1}\varphi_{L-2}(\dots\sigma_1(A_1\x)\dots)\right): A_\ell\in\R^{d_\ell\times d_{\ell-1}}, \|A_\ell\|_\sigma\le s_\ell,\forall \ell\in[L]\Big\},
    \end{align}
    \noindent Then, we have the following metric entropy bound \citep{article:longsedghi2020,article:graf2023,article:hieu2024}:
    \begin{align}
        \mathfrak{C}_N(\mathcal{H}) \lesssim {W}^\frac{1}{2}\ln^\frac{1}{2}\left(1+\eta LNb\prod_{\ell=1}^Ls_\ell^2\right) \le \mathcal{\widetilde O}([L{W}]^\frac{1}{2}),
    \end{align}
    \noindent where ${W}:=\sum_{\ell=1}^Ld_\ell\times d_{\ell-1}$, i.e., total number of specified parameters.
\end{remark}

\begin{theorem}[Theorem \ref{thm:refined_hieu_ledent_general}, Table \ref{tab:results_comparison}]
    \label{thm:refined_hieu_ledent_general_appendix}
    Let $\mathcal{F}$ be a class of representation functions and let $U_N^\mathrm{hl}(f)$ be defined for each $f\in\mathcal{F}$ as in Eqn.~\eqref{eq:hieu_ledent_estimator_appendix}. Suppose that $\rho_r\le\frac{1}{2}$ for all $r\in[R]$. Then, for any $\Delta\in(0,1)$, as long as $N\ge 6(k+1)^2\ln\frac{3R}{\Delta}$:
    \begin{align}
        \sup_{f\in\mathcal{F}}\Big|U_N^\mathrm{hl}(f)-\Lun(f)\Big| &\le \left[24\mathfrak{C}_N(\mathcal{H})+10\mathcal{B}\sqrt{2\ln\left(\frac{12R}{\Delta}\right)}\right]\left[ \widehat\theta_{k+2}^\frac{1}{2}\sqrt{\frac{2R}{N}} + (1-\widehat\theta_{k+2})\sqrt{\frac{2(k+1)}{N}}\right] \nonumber \\
        &\qquad\qquad +\frac{\mathcal{B}R}{N}+\frac{8}{N}+\frac{8\mathcal{B}R}{3N}\ln \left(\frac{6R^2}{\Delta}\right) + \mathcal{B}\sqrt{\frac{8(R-1)\ln 6R^2/\Delta}{3N}} \\
        &\le \mathcal{O}\left[\mathfrak{C}_N(\mathcal{H})\left[ \widehat\theta_{k+2}^\frac{1}{2}\sqrt{\frac{R}{N}} + (1-\widehat\theta_{k+2})\sqrt{\frac{k}{N}}\right] + \mathcal{B}\sqrt{\frac{R\ln(R/\Delta)}{N}}\right].
    \end{align}
    \noindent with probability of at least $1-\Delta$, where $\widehat\theta_{k+2}:=\sum_{r:\widehat\rho_r\le\frac{2}{k+2}}\widehat\rho_r$.
\end{theorem}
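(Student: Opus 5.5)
The proof will reuse the skeleton of Theorem \ref{thm:refined_hieu_ledent_small_probabilities}, but without assuming that every class is small. As there, I first decompose
\begin{align*}
\sup_{f\in\mathcal{F}}\big|U_N^\mathrm{hl}(f)-\Lun(f)\big|\le \sum_{r=1}^R\widehat\rho_r\sup_{f\in\mathcal{F}}\big|U_\Theta^r(f)-\mathrm{L}^r_\phi(f)\big| + \mathcal{B}\sum_{r=1}^R|\widehat\rho_r-\rho_r|.
\end{align*}
The second term is handled by the Bernstein plus Cauchy--Schwarz bound from Lemma \ref{lem:supporting_lemma_4}, which produces the $\frac{8\mathcal{B}R}{3N}\ln$ and $\sqrt{(R-1)\ln/N}$ terms. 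In the first sum, classes with $N_r\le 1$ contribute at most $\mathcal{B}R/N$. For every remaining class I apply Proposition \ref{prop:hieu_ledent_supporting_prop} conditionally on $N_r$, together with a union bound over $r$. I then use the Dudley bound (Theorem \ref{thm:dudley}) with $\alpha=1/N$ to get $\widehat{\mathfrak{R}}_{T_r^\mathrm{iid}}(\mathcal{H})\le \frac{4}{N}+12\,\mathfrak{C}_N(\mathcal{H})/\sqrt{\bar N_r}$. The resulting quantity to control is
\begin{align*}
\big[24\mathfrak{C}_N(\mathcal{H})+10\mathcal{B}\sqrt{2\ln(4/\delta)}\big]\sum_{r:N_r\ge2}\frac{\widehat\rho_r}{\sqrt{2\bar N_r}}.
\end{align*}

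The new ingredient is splitting this sum by the empirical threshold $\widehat\rho_r\le\frac{2}{k+2}$, which is equivalent to $\frac{N_r}{2}\le\frac{N-N_r}{k}$.

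\textbf{Small classes.} For $r$ with $\widehat\rho_r\le\frac{2}{k+2}$ we have $\bar N_r=\lfloor N_r/2\rfloor\ge N_r/4$. Hence $\widehat\rho_r/\sqrt{\bar N_r}\le 2\sqrt{\widehat\rho_r/N}$, and Cauchy--Schwarz restricted to these classes gives a bound of order $\widehat\theta_{k+2}^{1/2}\sqrt{R/N}$.

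\textbf{Large classes.} For $r$ with $\widehat\rho_r>\frac{2}{k+2}$ we have $\bar N_r=\lfloor (N-N_r)/k\rfloor$. Here I need $1-\widehat\rho_r$ bounded below, and this is where the hypothesis $\rho_r\le\frac12$ enters. I will show by a multiplicative Chernoff bound plus a coupling (as in the claims of Proposition \ref{prop:second_multiplicative_control_of_tauhat}) that, with probability $1-\delta$, $\widehat\rho_r\le \frac{k+2}{2(k+1)}$ for all $r$ once $N\gtrsim (k+1)^2\ln(R/\delta)$. This is where the requirement $N\ge 6(k+1)^2\ln\frac{3R}{\Delta}$ comes from. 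On that event $N-N_r\ge \frac{kN}{2(k+1)}$, so $\bar N_r\gtrsim N/(2(k+1))$ after absorbing the floor using $N\gtrsim k^2$. Each large class then contributes at most $\widehat\rho_r\sqrt{2(k+1)/N}$, and summing over these classes gives $(1-\widehat\theta_{k+2})\sqrt{2(k+1)/N}$.

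Finally I collect the terms and take a union bound over the $2R+1$ or so events. Setting $\delta\asymp\Delta/R$ explains the $\ln(12R/\Delta)$ and $\ln(6R^2/\Delta)$ factors. For the $\mathcal{O}$ form, $\widehat\theta^{1/2}\sqrt{R/N}+(1-\widehat\theta)\sqrt{k/N}\lesssim\sqrt{\max(k,R)/N}$, which recovers Theorem \ref{thm:refined_hieu_ledent_general}.

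I expect the main obstacle to be the large-class step. The lower bound on $N-N_r$ must hold simultaneously for all $r$ using only $\rho_r\le\frac12$ and the stated $k^2$ sample size. The Chernoff deviation needed to keep $\widehat\rho_r$ bounded away from $1$ by a margin of order $1/k$ is exactly what forces the $(k+1)^2$ scaling. If that margin argument turns out to be too loose, my fallback is the cruder requirement $\widehat\rho_r\le 3/4$, which still gives $\bar N_r\gtrsim N/k$ at a constant-factor cost.
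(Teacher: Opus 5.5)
Your proposal is correct and follows the paper's proof essentially step for step. It uses the same decomposition into weighted class-wise deviations plus $\mathcal{B}\sum_r|\widehat\rho_r-\rho_r|$, the same split at $\widehat\rho_r\le\frac{2}{k+2}$ with Cauchy--Schwarz on the small classes, and the same coupling-plus-Chernoff event $\widehat\rho_r\le\frac{1}{2}(1+\frac{1}{\alpha})$ for all $r$ with $\alpha=k+1$ (your $\frac{k+2}{2(k+1)}$), which is exactly where $N\ge 6(k+1)^2\ln\frac{1}{\delta}$ and the factor $\sqrt{2(k+1)/N}$ come from; the paper simply drops the floor in $\bar N_r$, so your more careful treatment only changes constants.

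One side remark: your fallback $\widehat\rho_r\le\frac{3}{4}$ is a constant-size deviation above $\rho_r\le\frac12$, so it holds for all $r$ once $N\gtrsim\ln(R/\delta)$ and still gives $(1-\widehat\theta_{k+2})\,\mathcal{O}(\sqrt{k/N})$ under only $N\gtrsim k+\ln(R/\delta)$. The $(k+1)^2$ condition comes from allowing a deviation of only order $1/k$ above $\frac12$, not from needing a margin of order $1/k$ from $1$, so it is needed only to hit the stated constant, not the rate.
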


\begin{proof}
    Let $\delta\in(0,1)$. From the proof of Theorem \ref{thm:refined_hieu_ledent_small_probabilities}, we have:
    \begin{align*}
        &\sum_{r=1}^R|\widehat\rho_r-\rho_r| \le \frac{8R}{3N}\ln 2R/\delta + \sqrt{\frac{8(R-1)\ln 2R/\delta}{3N}},&&\text{wp. }\ge 1-\delta, \\
        &\sup_{f\in\mathcal{F}}\Big|U_\Theta^r(f)-\mathrm{L}^r_\phi(f)\Big| \le 2\mathfrak{\widehat R}_{T^\mathrm{iid}_r}(\mathcal{H})+10\mathcal{B}\sqrt{\frac{\ln 4/\delta}{2\bar N_r}},\quad\forall r\in[R]&& \text{wp. }\ge 1-R\delta.
    \end{align*}
    \noindent Therefore, by the union bound, with probability of at least $1-(R+1)\delta$, we have:
    \begin{align*}
        &\sup_{f\in\mathcal{F}}\Big|U_N^\mathrm{hl}(f)-\Lun(f)\Big| \\ 
        &\le \sum_{r=1}^R\widehat\rho_r\sup_{f\in\mathcal{F}}\left|U_\Theta^r(f)-\mathrm{L}^r_\phi(f)\right| + \mathcal{B}\sum_{r=1}^R|\widehat\rho_r-\rho_r| \\
        &\le \frac{\mathcal{B}R}{N}+\sum_{r:N_r\ge2}\widehat\rho_r\left[2\mathfrak{\widehat R}_{T_r^\mathrm{iid}}(\mathcal{H})+10\mathcal{B}\sqrt{\frac{\ln 4/\delta}{2\bar N_r}}\right]+\frac{8\mathcal{B}R}{3N}\ln 2R/\delta + \mathcal{B}\sqrt{\frac{8(R-1)\ln 2R/\delta}{3N}} \\
        &\le \frac{\mathcal{B}R}{N} + \frac{8}{N}+\left[24\mathfrak{C}_N(\mathcal{H})+10\mathcal{B}\sqrt{2\ln\frac{4}{\delta}}\right]\sum_{r:N_r\ge2}\frac{\widehat\rho_r}{\sqrt{\bar N_r}}+\frac{8\mathcal{B}R}{3N}\ln 2R/\delta + \mathcal{B}\sqrt{\frac{8(R-1)\ln 2R/\delta}{3N}}.
    \end{align*}
    \noindent Then, we have:
    \begin{align*}
        \sum_{r:N_r\ge2}\frac{\widehat\rho_r}{\sqrt{\bar N_r}} &= \sum_{r:\frac{2}{N}\le\widehat\rho_r\le\frac{2}{k+2}}\frac{\widehat\rho_r}{\sqrt{N_r/2}} + \sum_{\bar r:\frac{2}{k+2}<\widehat\rho_{\bar r}\le1}\frac{\widehat\rho_{\bar r}}{\sqrt{(N-N_{\bar r})/k}}.
    \end{align*}
    \noindent Now, let $\alpha>1$ be a real number and $\set{U_j}_{j=1}^N$ be a sequence of independent random variables such that $U_j\sim\mathrm{Uniform}(0,1)$ for all $j\in[N]$. Then, by coupling:
    \begin{align*}
    	\widehat\rho_r = \frac{N_r}{N} \overset{d}{=} \frac{1}{N}\sum_{j=1}^N \mathds{1}_{\{U_j\le\rho_r\}}.
    \end{align*}
    \noindent Hence, for all $r\in[R]$, we have:
    \begin{align*}
        \P\left(\widehat\rho_r \ge \frac{1}{2}\left(1+\frac{1}{\alpha}\right)\right) &= \P\left(\frac{N_r}{N}\ge \frac{1}{2}\left(1+\frac{1}{\alpha}\right)\right) \\
        &= \P\left(\frac{1}{N}\sum_{j=1}^N\1{U_j\le \rho_r}\ge \frac{1}{2}\left(1+\frac{1}{\alpha}\right)\right) \\
        &\le \P\left(\frac{1}{N}\sum_{j=1}^N\1{U_j\le \frac{1}{2}}\ge \frac{1}{2}\left(1+\frac{1}{\alpha}\right)\right) \qquad\text{(By Coupling)}\\
        &\le\exp\left(-\frac{N}{6\alpha^2}\right) \qquad\text{(Multiplicative Chernoff)}.
    \end{align*}
    \noindent Therefore, by the union bound:
    \begin{align*}
        \P\left(\exists r\in[R]:\widehat\rho_r\ge\frac{1}{2}\left(1+\frac{1}{\alpha}\right)\right) &\le R\exp\left(-\frac{N}{6\alpha^2}\right).
    \end{align*}
    \noindent Hence, as long as $N\ge 6\alpha^2\ln\frac{1}{\delta}$, we have $\widehat\rho_r\le\frac{1}{2}\left(1+\frac{1}{\alpha}\right)$ for all $r\in[R]$ with probability of at least $1-R\delta$. Therefore, as long as $N\ge6\alpha^2\ln\frac{1}{\delta}$, we have:
    \begin{align*}
        \sum_{r:N_r\ge2}\frac{\widehat\rho_r}{\sqrt{\bar N_r}} &= \sum_{r:\frac{2}{N}\le\widehat\rho_r\le\frac{2}{k+2}}\frac{\widehat\rho_r}{\sqrt{N_r/2}} + \sum_{\bar r:\frac{2}{k+2}<\widehat\rho_{\bar r}\le1}\frac{\widehat\rho_{\bar r}}{\sqrt{(N-N_{\bar r})/k}} \\
        &\le \sum_{r:\frac{2}{N}\le\widehat\rho_r\le\frac{2}{k+2}}\sqrt{\frac{2\widehat\rho_r}{N}} + \sum_{\bar r:\frac{2}{k+2}<\widehat\rho_{\bar r}\le1}\widehat\rho_{\bar r}\sqrt{\frac{k}{N-\left(1+\frac{1}{\alpha}\right)\frac{N}{2}}}\\
        &= \sum_{r:\frac{2}{N}\le\widehat\rho_r\le\frac{2}{k+2}}\sqrt{\frac{2\widehat\rho_r}{N}} + \sum_{\bar r:\frac{2}{k+2}<\widehat\rho_{\bar r}\le1}\widehat\rho_{\bar r}\sqrt{\frac{k}{\frac{N}{2}(1-\frac{1}{\alpha})}} \\
        &= \sum_{r:\frac{2}{N}\le\widehat\rho_r\le\frac{2}{k+2}}\sqrt{\frac{2\widehat\rho_r}{N}} + \sum_{\bar r:\frac{2}{k+2}<\widehat\rho_{\bar r}\le1}\widehat\rho_{\bar r}\sqrt{\frac{k}{N}\cdot\frac{2\alpha}{\alpha-1}}\\
        &= \sum_{r:\frac{2}{N}\le\widehat\rho_r\le\frac{2}{k+2}}\sqrt{\frac{2\widehat\rho_r}{N}} + (1-\widehat\theta_{k+2})\sqrt{\frac{2\alpha k}{N(\alpha-1)}} \\
        &\le \left(\sum_{r:\frac{2}{N}\le\widehat\rho_r\le\frac{2}{k+2}}\widehat\rho_r\right)^\frac{1}{2}\left(\sum_{r:\frac{2}{N}\le\widehat\rho_r\le\frac{2}{k+2}}\frac{2}{N}\right)^\frac{1}{2} + (1-\widehat\theta_{k+2})\sqrt{\frac{2\alpha k}{N(\alpha-1)}} \ \ (\text{Cauchy-Schwarz}) \\
        &\le \sqrt{\widehat\theta_{k+2}}\cdot\sqrt{\frac{2R}{N}} + (1-\widehat\theta_{k+2})\sqrt{\frac{2\alpha k}{N(\alpha-1)}},
    \end{align*}
    \noindent with probability of at least $1-R\delta$. Choosing $\alpha=k+1$, then as long as $N\ge 6(k+1)^2\ln\frac{1}{\delta}$, with probability of at least $1-R\delta$, we have:
    \begin{align*}
        \sum_{r:N_r\ge2}\frac{\widehat\rho_r}{\sqrt{\bar N_r}} &\le \widehat\theta_{k+2}^\frac{1}{2}\sqrt{\frac{2R}{N}} + (1-\widehat\theta_{k+2})\sqrt{\frac{2(k+1)}{N}}.
    \end{align*}
    \noindent Combine with the bound on $\sup_{f\in\mathcal{F}}|U_N^\mathrm{hl}(f)-\Lun(f)|$ via union bound, as long as $N\ge6(k+1)^2\ln\frac{1}{\delta}$, with probability of at least $1-(2R+1)\delta$, we have:
    \begin{align*}
        &\sup_{f\in\mathcal{F}}\Big|U_N^\mathrm{hl}(f)-\Lun(f)\Big| \\
        &\le \sum_{r=1}^R\widehat\rho_r\sup_{f\in\mathcal{F}}\left|U_\Theta^r(f)-\mathrm{L}^r_\phi(f)\right| + \mathcal{B}\sum_{r=1}^R|\widehat\rho_r-\rho_r| \\
        &\le \frac{\mathcal{B}R}{N} + \frac{8}{N}+\left[24\mathfrak{C}_N(\mathcal{H})+10\mathcal{B}\sqrt{2\ln\frac{4}{\delta}}\right]\sum_{r:N_r\ge2}\frac{\widehat\rho_r}{\sqrt{\bar N_r}}+\frac{8\mathcal{B}R}{3N}\ln 2R/\delta + \mathcal{B}\sqrt{\frac{8(R-1)\ln 2R/\delta}{3N}} \\
        &\le \frac{\mathcal{B}R}{N}+\frac{8}{N}+\left[24\mathfrak{C}_N(\mathcal{H})+10\mathcal{B}\sqrt{2\ln\frac{4}{\delta}}\right]\left[ \widehat\theta_{k+2}^\frac{1}{2}\sqrt{\frac{2R}{N}} + (1-\widehat\theta_{k+2})\sqrt{\frac{2(k+1)}{N}}\right] \\
        &\qquad\qquad +\frac{8\mathcal{B}R}{3N}\ln 2R/\delta + \mathcal{B}\sqrt{\frac{8(R-1)\ln 2R/\delta}{3N}}.
    \end{align*}
    \noindent Crudely setting $\delta=\Delta/3R$ yields the desired bound. It is straightforward to get rid of the assumption that $\rho_r\le\frac{1}{2}$. We have the following result.
\end{proof}

\begin{theorem}[Theorem \ref{thm:refined_hieu_ledent_general_appendix} without $\rho_r\le\frac{1}{2}$ assumption]
	\label{thm:refined_hieu_ledent_more_general_appendix}
	Let $\mathcal{F}$ be a class of representation functions and let $U_N^\mathrm{hl}(f)$ be defined for each $f\in\mathcal{F}$ as in Eqn.~\eqref{eq:hieu_ledent_estimator_appendix}. Then, for any $\Delta\in(0,1)$, as long as $N\ge \frac{2(k+1)^2}{1-\rho_{\max}}\ln\frac{3R}{\Delta}$, we have:
	\begin{align}
		\sup_{f\in\mathcal{F}}\Big|U_N^\mathrm{hl}(f)-\Lun(f)\Big| &\le \left[24\mathfrak{C}_N(\mathcal{H})+10\mathcal{B}\sqrt{2\ln\left(\frac{12R}{\Delta}\right)}\right]\left[ \widehat\theta_{k+2}^\frac{1}{2}\sqrt{\frac{2R}{N}} + (1-\widehat\theta_{k+2})\sqrt{\frac{k+1}{N(1-\rho_{\max})}}\right] \nonumber \\
		&\qquad\qquad +\frac{\mathcal{B}R}{N}+\frac{8}{N}+\frac{8\mathcal{B}R}{3N}\ln \left(\frac{6R^2}{\Delta}\right) + \mathcal{B}\sqrt{\frac{8(R-1)\ln 6R^2/\Delta}{3N}} \\
        &\le \mathcal{O}\left[\mathfrak{C}_N(\mathcal{H})\left[ \widehat\theta_{k+2}^\frac{1}{2}\sqrt{\frac{R}{N}} + (1-\widehat\theta_{k+2})\sqrt{\frac{k}{N(1-\rho_{\max})}}\right] + \mathcal{B}\sqrt{\frac{R\ln(R/\Delta)}{N}}\right].
	\end{align}
	\noindent with probability of at least $1-\Delta$, where $\widehat\theta_{k+2}:=\sum_{r:\widehat\rho_r\le\frac{2}{k+2}}\widehat\rho_r$.
\end{theorem}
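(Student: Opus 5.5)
The plan is to rerun the proof of Theorem~\ref{thm:refined_hieu_ledent_general_appendix} essentially verbatim. The hypothesis $\rho_r\le\frac12$ was used only once there, in the coupling step that controls $\widehat\rho_{\bar r}$ for the large classes. I would replace that step with a bound on $1-\widehat\rho_r$ that depends on $1-\rho_{\max}$.

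\textbf{Step 1: unchanged decomposition.} As before, we have
\[
\sup_{f}|U_N^\mathrm{hl}(f)-\Lun(f)|\le \sum_r\widehat\rho_r\sup_f|U_\Theta^r(f)-\mathrm{L}^r_\phi(f)|+\mathcal{B}\sum_r|\widehat\rho_r-\rho_r|.
\]
\begin{itemize}
\item The second term is handled by the Bernstein bound from Lemma~\ref{lem:supporting_lemma_4}.
\item Classes with $N_r\le1$ contribute at most $\mathcal{B}R/N$.
\item For every other class, Proposition~\ref{prop:hieu_ledent_supporting_prop} together with the Dudley bound (Theorem~\ref{thm:dudley}) gives
\[
\sum_{r:N_r\ge2}\widehat\rho_r\sup_f|U_\Theta^r-\mathrm{L}^r_\phi|\le \frac{8}{N}+\Big[24\mathfrak{C}_N(\mathcal{H})+10\mathcal{B}\sqrt{2\ln(4/\delta)}\Big]\sum_{r:N_r\ge2}\frac{\widehat\rho_r}{\sqrt{\bar N_r}}
\]
with probability at least $1-(R+1)\delta$.
\end{itemize}
I would then split the last sum according to whether $\widehat\rho_r\le\frac{2}{k+2}$, in which case $\bar N_r=\lfloor N_r/2\rfloor$, or not, in which case $\bar N_r=\lfloor (N-N_r)/k\rfloor$. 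The small classes are bounded by Cauchy--Schwarz exactly as before, giving $\widehat\theta_{k+2}^{1/2}\sqrt{2R/N}$.

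\textbf{Step 2: the new ingredient.} For the large classes we need a lower bound on $N-N_{\bar r}=N(1-\widehat\rho_{\bar r})$. Here $1-\widehat\rho_r$ is an empirical mean of $\mathrm{Bern}(1-\rho_r)$ variables with mean $1-\rho_r\ge 1-\rho_{\max}$. By coupling with $\mathrm{Bern}(1-\rho_{\max})$, as in the earlier claims, and applying the lower-tail multiplicative Chernoff bound (Proposition~\ref{prop:multiplicative_chernoff}) with deviation factor $\frac{1}{k+1}$, we get
\[
\P\Big(1-\widehat\rho_r\le \big(1-\tfrac{1}{k+1}\big)(1-\rho_{\max})\Big)\le \exp\Big(-\frac{N(1-\rho_{\max})}{2(k+1)^2}\Big).
\]
A union bound over $r\in[R]$ then requires $N\ge \frac{2(k+1)^2}{1-\rho_{\max}}\ln\frac{R}{\delta}$, which matches the sample-size condition in the statement.

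\textbf{Step 3: bounding the large-class term.} On this event, every large class satisfies
\[
\bar N_{\bar r}\gtrsim \frac{N(1-\rho_{\max})}{k}\cdot\frac{k}{k+1}.
\]
Hence each large class contributes $\widehat\rho_{\bar r}\sqrt{(k+1)/(N(1-\rho_{\max}))}$, up to the floor. Summing over these classes gives the factor $(1-\widehat\theta_{k+2})$.

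\textbf{Step 4: main obstacle and final union bound.} The main obstacle is the floor in $\bar N_r$ and making the constants come out exactly as stated. The Chernoff factor $k/(k+1)$ exactly cancels the floor loss only if one uses $\lfloor x\rfloor\ge x/2$ for $x\ge1$, and $x\ge1$ holds under the stated sample-size condition. I would absorb the leftover constant factor of $2$ into the coefficient $\sqrt{k+1}$ in the statement. Finally, union bound the four events: the Bernstein event for $\widehat\rho$, the per-class events from Proposition~\ref{prop:hieu_ledent_supporting_prop}, and the Chernoff event of Step~2. Their total failure probability is at most $(2R+1)\delta$, and setting $\delta=\Delta/3R$ yields the stated logarithmic factors.
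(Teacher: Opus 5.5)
Your proposal follows the paper's proof step for step. It uses the same decomposition and the same split at $\widehat\rho_r=\frac{2}{k+2}$. It uses the same coupling plus lower-tail Chernoff bound with $\alpha=k+1$, giving $1-\widehat\rho_r\ge\frac{k}{k+1}(1-\rho_{\max})$ for all $r$. And it uses the same choice $\delta=\Delta/(3R)$ with total failure $(2R+1)\delta$.

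The only place you diverge is the floor in $\bar N_r$, and there your remark is muddled:
\begin{itemize}
\item The paper simply replaces $\bar N_r$ by $N_r/2$ or $(N-N_{\bar r})/k$, for the small classes as well as the large ones. That is the only reason the exact constants $\sqrt{2R/N}$ and $\sqrt{(k+1)/(N(1-\rho_{\max}))}$ appear.
\item The Chernoff factor $\frac{k}{k+1}$ does not offset any floor loss; it is exactly what turns $k$ into $k+1$.
\item So with $\lfloor x\rfloor\ge x/2$ you obtain $\sqrt{2(k+1)/(N(1-\rho_{\max}))}$. That is a weaker explicit inequality than the one stated, and it cannot be absorbed into it, since the Dudley step leaves no slack in the coefficient $24$ on $\mathfrak{C}_N(\mathcal{H})$. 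It is harmless only for the $\mathcal{O}$ form.
\item If you insist on handling floors, do it for the small classes too. For example, $\lfloor N_r/2\rfloor\ge N_r/3$ for $N_r\ge 2$ gives $\sqrt{3R/N}$ in place of $\sqrt{2R/N}$.
\end{itemize}

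Finally, your union-bound bookkeeping is slightly inconsistent. To land on the stated condition $N\ge\frac{2(k+1)^2}{1-\rho_{\max}}\ln\frac{3R}{\Delta}$, use per-class failure $\delta$ with exponent $\ln\frac{1}{\delta}$, for a total of $R\delta$ as in your final $(2R+1)\delta$ count. Using $\ln\frac{R}{\delta}$, as you wrote in Step 2, would give a different condition.
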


\begin{proof}
	Using similar arguments in Theorem \ref{thm:refined_hieu_ledent_general_appendix}, for all $\delta\in(0,1)$, with probability of at least $1-(R+1)\delta$, we have:
	\begin{align*}
		&\sup_{f\in\mathcal{F}}\left|U_N^\mathrm{hl}(f)-\Lun(f)\right| \\
        &\le \sum_{r=1}^R\widehat\rho_r\sup_{f\in\mathcal{F}}\left|U_\Theta^r(f)-\mathrm{L}^r_\phi(f)\right| + \mathcal{B}\sum_{r=1}^R|\widehat\rho_r-\rho_r| \\
		&\le \frac{\mathcal{B}R}{N} + \frac{8}{N}+\left[24\mathfrak{C}_N(\mathcal{H})+10\mathcal{B}\sqrt{2\ln\frac{4}{\delta}}\right]\sum_{r:N_r\ge2}\frac{\widehat\rho_r}{\sqrt{\bar N_r}}+\frac{8\mathcal{B}R}{3N}\ln 2R/\delta + \mathcal{B}\sqrt{\frac{8(R-1)\ln 2R/\delta}{3N}}.
	\end{align*}
	\noindent Then, let $\rho_{\max}=\max_{r\in[R]}\rho_r$, we split the sum $\sum_{r:N_r\ge2}\frac{\widehat\rho_r}{\sqrt{\bar N_r}}$ as follows:
	\begin{align*}
		\sum_{r:N_r\ge2}\frac{\widehat\rho_r}{\sqrt{\bar N_r}} &= \sum_{r:\frac{2}{N}\le\widehat\rho_r\le\frac{2}{k+2}}\frac{\widehat\rho_r}{\sqrt{N_r/2}} + \sum_{\bar r: \frac{2}{k+2}<\widehat\rho_{\bar r}\le 1}\frac{\widehat\rho_{\bar r}}{\sqrt{(N-N_{\bar r})/k}}
	\end{align*}
	\noindent Then, using the same coupling trick. Let $\set{U_j}_{j=1}^N$ be a sequence of independent uniform random variables where $U_j\sim\mathrm{Uniform}(0,1)$ for all $j\in[N]$. Then, for $\alpha>1$, we have:
	\begin{align*}
		\P\left(1-\widehat\rho_r \le (1-\rho_{\max})\left(1-\frac{1}{\alpha}\right)\right) &\le \P\left(1-\widehat\rho_r \le (1-\rho_r)\left(1-\frac{1}{\alpha}\right)\right) \qquad (1-\rho_r\ge 1-\rho_{\max})\\
		&\le \P\left(\frac{1}{N}\sum_{j=1}^N\mathds{1}_{\{U_j \le 1 - \rho_r\}} \le(1-\rho_r)\left(1-\frac{1}{\alpha}\right)\right) \quad\text{(By Coupling)} \\
		&\le \exp\left(-\frac{N(1-\rho_r)}{2\alpha^2}\right) \le \exp\left(-\frac{N(1-\rho_{\max})}{2\alpha^2}\right) \\
		&= \exp\left(-\frac{N}{2\alpha^2(1-\rho_{\max})^{-1}}\right).
	\end{align*}
	\noindent Therefore, by the union bound:
	\begin{align*}
		\P\left(\exists r\in[R]: \widehat1-\rho_r \le (1-\rho_{\max})\left(1-\frac{1}{\alpha}\right)\right) &\le R\exp\left(-\frac{N}{2\alpha^2(1-\rho_{\max})^{-1}}\right).
	\end{align*}
	\noindent Therefore, as long as $N\ge\frac{2\alpha^2}{1-\rho_{\max}}\ln\frac{1}{\delta}$, we have $\widehat{\rho}_r\ge (1-\rho_{\max})\left(1-\frac{1}{\alpha}\right)$ for all $r\in[R]$ with probability of at least $1-R\delta$. As a result, we have:
	\begin{align*}
		\sum_{r:N_r\ge2}\frac{\widehat\rho_r}{\sqrt{\bar N_R}} &=  \sum_{r:\frac{2}{N}\le\widehat\rho_r\le\frac{2}{k+2}}\frac{\widehat\rho_r}{\sqrt{N_r/2}} + \sum_{\bar r: \frac{2}{k+2}<\widehat\rho_{\bar r}\le 1}\frac{\widehat\rho_{\bar r}}{\sqrt{(N-N_{\bar r})/k}} \\
		&\le  \sum_{r:\frac{2}{N}\le\widehat\rho_r\le\frac{2}{k+2}}\sqrt{\frac{2\widehat\rho_r}{N}} +  \sum_{\bar r: \frac{2}{k+2}<\widehat\rho_{\bar r}\le 1}\widehat\rho_{\bar r}\sqrt{\frac{k}{N(1-\widehat\rho_{\bar r})}} \\
		&\le \widehat\theta_{k+2}^\frac{1}{2}\sqrt{\frac{2R}{N}} +  \sum_{\bar r: \frac{2}{k+2}<\widehat\rho_{\bar r}\le 1}\widehat\rho_{\bar r}\sqrt{\frac{k}{N(1-\widehat\rho_{\bar r})}} \qquad\textup{(Cauchy-Schwarz)} \\
		&\le \widehat\theta_{k+2}^\frac{1}{2}\sqrt{\frac{2R}{N}} + \sum_{\bar r: \frac{2}{k+2}<\widehat\rho_{\bar r}\le 1}\widehat\rho_{\bar r}\sqrt{\frac{k}{N(1-\rho_{\max})\left(1-\frac{1}{\alpha}\right)}} \\
		&=  \widehat\theta_{k+2}^\frac{1}{2}\sqrt{\frac{2R}{N}} + (1 - \widehat\theta_{k+2})\sqrt{\frac{\alpha k}{N(1-\rho_{\max})(\alpha - 1)}},
	\end{align*}
	\noindent with probability of at least $1-R\delta$ as long as $N\ge\frac{2\alpha^2}{1-\rho_{\max}}\ln\frac{1}{\delta}$. Choosing $\alpha=k+1$ yields:
	\begin{align*}
		\sum_{r:N_r\ge2}\frac{\widehat\rho_r}{\sqrt{\bar N_R}} &\le  \widehat\theta_{k+2}^\frac{1}{2}\sqrt{\frac{2R}{N}} + (1 - \widehat\theta_{k+2})\sqrt{\frac{k+1}{N(1-\rho_{\max})}},
	\end{align*}
	\noindent with probability of at least $1-R\delta$ as long as $N\ge\frac{2(k+1)^2}{1-\rho_{\max}}\ln\frac{1}{\delta}$. Combining this with the very first high-probability event, we have:
	\begin{align*}
		&\sup_{f\in\mathcal{F}}\left|U_N^\mathrm{hl}(f)-\Lun(f)\right| \\
		&\le \frac{\mathcal{B}R}{N} + \frac{8}{N}+\left[24\mathfrak{C}_N(\mathcal{H})+10\mathcal{B}\sqrt{2\ln\frac{4}{\delta}}\right]\sum_{r:N_r\ge2}\frac{\widehat\rho_r}{\sqrt{\bar N_r}}+\frac{8\mathcal{B}R}{3N}\ln 2R/\delta + \mathcal{B}\sqrt{\frac{8(R-1)\ln 2R/\delta}{3N}} \\
		&\le  \frac{\mathcal{B}R}{N} + \frac{8}{N}+\left[24\mathfrak{C}_N(\mathcal{H})+10\mathcal{B}\sqrt{2\ln\frac{4}{\delta}}\right]\left[\widehat\theta_{k+2}^\frac{1}{2}\sqrt{\frac{2R}{N}} + (1 - \widehat\theta_{k+2})\sqrt{\frac{k+1}{N(1-\rho_{\max})}}\right] \\
		&\qquad\qquad+\frac{8\mathcal{B}R}{3N}\ln 2R/\delta + \mathcal{B}\sqrt{\frac{8(R-1)\ln 2R/\delta}{3N}}.
	\end{align*}
	\noindent Setting $\delta=\Delta/3R$ yields the desired bound.
\end{proof}

\newpage
\section{Experiment Details}
\label{sec:experiment_details}
\subsection{Sub-sampled Estimation of U-Statistics}
In the main text, we have established that the proposed U-Statistic of this work ($U_N$) and that of prior work ($U_N^\mathrm{hl}$) require massive numbers of evaluations, making it infeasible to calculate either of them once let alone repeating such computations over many training iterations. Fortunately, we can construct sub-sampled estimators that converge to the full U-Statistic with probability one.

\noindent Let us recall the general construction. Given a finite population $\{\z_j\}_{j=1}^K\subset\mathcal{Z}$, a function $h:\mathcal{Z}\to\R$, a weighting function $w:\mathcal{Z}\to\R$ and $A^w_K(h)$ be defined as follows:
\begin{align}
    \label{eq:full_average}
    A^w_K(h) := \sum_{j=1}^Kw(\z_j)h(\z_j),\qquad \sum_{j=1}^Kw(\z_j)=1.
\end{align}
\noindent Then, given an i.i.d. draw $\set{\z^*_\ell}_{\ell=1}^M$ from a distribution $q$ over $\set{\z_j}_{j=1}^K$, the sub-sampled average:
\begin{align}
    \label{eq:sub-sampled_average}
    \widehat A^q_M(h) = \frac{1}{M}\sum_{\ell=1}^M \frac{w(\z^*_\ell)}{q(\z^*_\ell)}h(\z^*_\ell),
\end{align}
\noindent is an unbiased estimator of $A^w_K(h)$, which means $\widehat A_M^q(h)\to A_K^w(h)$ with probability one. 

\textbf{Sub-sampled Estimator of $U_N^\mathrm{hl}$}: Let $\Theta:=\bigcup_{r=1}^R\Theta_r$, i.e., the total collection of collision-free tuples and a representation function $f\in\mathcal{F}$. The U-Statistic $U_N^\mathrm{hl}(f)$ can be written in the form of Eqn.~\eqref{eq:full_average} as follows
\begin{align}
    U_N^\mathrm{hl}(f) := \sum_{t\in\Theta}w(t)\ell_{\phi, f}(t),\qquad \forall t\in\Theta_r:w(t)=\frac{\widehat\rho_r}{|\Theta_r|}.
\end{align}
\noindent Since $w$ itself is a distribution over $\Theta$, we can take $q=w$ and this is also the sampling distribution proposed in \citet{article:hieu2025}. Furthermore, because we choose $q=w$, the weightage of all tuples (in the empirical risk) drawn from the distribution $q$ are identical and all equal to $1$. The sub-sampling procedure of tuples using the distribution $q$ is given by Algorithm \ref{alg:subsampling_hieuledent}, which is also described in \citet[Section 3.3]{article:hieu2025}.

\begin{algorithm}
\caption{Sub-sampled Estimation for $U_N^\mathrm{hl}(f)$}
\begin{algorithmic}[1]
\STATE Given a representation function $f\in\mathcal{F}$
\FOR{$j$ in $1\dots M$}
    \STATE Select $r\in[R]$ with probability $\widehat\rho_r$
    \STATE Select $t_j\in\Theta_r$ with equal probabilities
\ENDFOR
\STATE $\mathrm{ERM}(f)\gets\frac{1}{M}\sum_{j=1}^M\ell_{\phi,f}(t_j)$.
\STATE \textbf{return} $\mathrm{ERM}(f)$
\end{algorithmic}
\label{alg:subsampling_hieuledent}
\end{algorithm}

\textbf{Sub-sampled Estimator of $U_N$}: Let $\Omega:=\bigcup_{r=1}^R\Omega_r$, i.e., the total collection of collision-allowed tuples. We can write $U_N(f)$ as follows
\begin{align}
    U_N(f) := \sum_{t\in\Omega}w(t)\ell_{\phi, f}(t), \qquad\forall t\in\Omega_r:w(t)=\frac{\widehat\rho_r}{|\Omega_r|}\left(\frac{1}{1-\widehat\tau}-\mathds{1}_{\Lambda_r}(t)\frac{ |\Omega_r|}{|\Lambda_r|}\frac{\widehat\tau}{1-\widehat\tau}\right).
\end{align}
\noindent Let us calculate the ratio $|\Omega_r|/|\Lambda_r|$ explicitly:
\begin{align*}
    \frac{|\Omega_r|}{|\Lambda_r|} &= \frac{\binom{N_r}{2}\times\binom{N-2}{k}}{\binom{N_r}{3}\times\binom{N-3}{k-1}} = \frac{\frac{1}{2}N_r(N_r-1)}{\frac{1}{6}N_r(N_r-1)(N_r-2)}\times\frac{(N-2)!}{k!(N-k-2)!}\times\frac{(k-1)!(N-k-2)!}{(N-3)!} \\
    &= \frac{3}{N_r-2}\times\frac{(N-2)!(k-1)!}{(N-3)!k!}\\
    &=\frac{3(N-2)}{k(N_r-2)}\approx \frac{3}{k\widehat\rho_r}.
\end{align*}
\noindent Therefore, for a tuple $t\in\Omega_r$, we have:
\begin{align*}
    w(t) &= \frac{\widehat\rho_r}{|\Omega_r|}\left(\frac{1}{1-\widehat\tau}-\mathds{1}_{\Lambda_r}(t)\frac{3\widehat\tau(N-2)}{k(1-\widehat\tau)(N_r-2)}\right)\approx \frac{\widehat\rho_r}{|\Omega_r|}\left(\frac{1}{1-\widehat\tau}-\mathds{1}_{\Lambda_r}(t)\frac{3\widehat\tau}{k\widehat\rho_r(1-\widehat\tau)}\right).
\end{align*}
\noindent Now, we note that it is possible that $w(t)<0$ if we pick a collided tuple $t$ from a small class $r\in[R]$. Specifically, given that $t\in\Omega_r$ is a collided tuple (i.e., $t\in\Lambda_r$) where $N_r\le\frac{3\widehat\tau(N-2)}{k}+2$. Then:
\begin{align*}
    N_r-2 \le \frac{3\widehat\tau(N-2)}{k} \implies \frac{3\widehat\tau(N-2)}{k(N_r-2)}\ge 1 \implies w(t)=\frac{\widehat\rho_r\left(1-\frac{3\widehat\tau(N-2)}{k(N_r-2)}\right)}{(1-\widehat\tau)\times|\Omega_r|}\le 0.
\end{align*}
\noindent The negative weight can potentially cause optimization instability during training. One work-around is to design the sampling distribution $q$ in a way that avoids collided tuples from small class entirely. In this work, we propose $q$ as follows:
\begin{align}
    \label{eq:subsampling_distribution_thiswork}
    \forall t\in\Omega_r: q(t)=\widehat\rho_r\times\begin{cases}
        {\mathds{1}_{\Theta_r}(t)}{|\Theta_r|^{-1}} &\textup{if } N_r \le \frac{3\widehat\tau(N-2)}{k}+2, 
        \\ 
        \frac{1}{|\Omega_r|} &\textup{otherwise}.
    \end{cases}
\end{align}
\noindent With the above chosen sampling distribution, for a given tuple $t$ whose anchor-positive pair belongs to a class $r\in[R]$ such that $N_r>\frac{3\widehat\tau(N-2)}{k}+2$ (meaning that $r$ is one of the major classes), the weightage of $t$ in the sub-sampled empirical risk is:
\begin{align*}
    \frac{w(t)}{q(t)} = \frac{1}{1-\widehat\tau}-\frac{|\Omega_r|}{|\Lambda_r|}\frac{\widehat\tau}{1-\widehat\tau}\mathds{1}_{\Lambda_r}(t) = \frac{1}{1-\widehat\tau} - \frac{3\widehat\tau(N-2)}{k(1-\widehat\tau)(N_r-2)}\mathds{1}_{\Lambda_r}(t).
\end{align*}
\noindent On the other hand, if $N_r\le \frac{3\widehat\tau(N-2)}{k}+2$, we have:
\begin{align*}
    \frac{w(t)}{q(t)} &= \frac{|\Theta_r|}{|\Omega_r|}(1-\widehat\tau)^{-1} = \frac{\binom{N_r}{2}\binom{N-N_r}{k}}{\binom{N_r}{2}\binom{N-2}{k}}(1-\widehat\tau)^{-1} \\
    &= \frac{\binom{N-N_r}{k}}{\binom{N-2}{k}}(1-\widehat\tau)^{-1} = \frac{(N-N_r)!}{k!(N-N_r-k)!}\times\frac{k!(N-k-2)!}{(N-2)!}(1-\widehat\tau)^{-1} \\
    &= \frac{(N-N_r)!(N-k-2)!}{(N-2)!(N-N_r-k)!}(1-\widehat\tau)^{-1} \\
    &= \frac{1}{1-\widehat\tau}\prod_{\ell=0}^{k-1}\frac{N-\ell-N_r}{N-\ell-2}.
\end{align*}
\noindent With all of the above computations handled, the calculation of sub-sampled estimation for $U_N(f)$ can be described in Algorithm \ref{alg:subsampling_thiswork}.

\begin{figure}[ht!]
	\begin{center}
		\centerline{\includegraphics[width=0.9\linewidth]{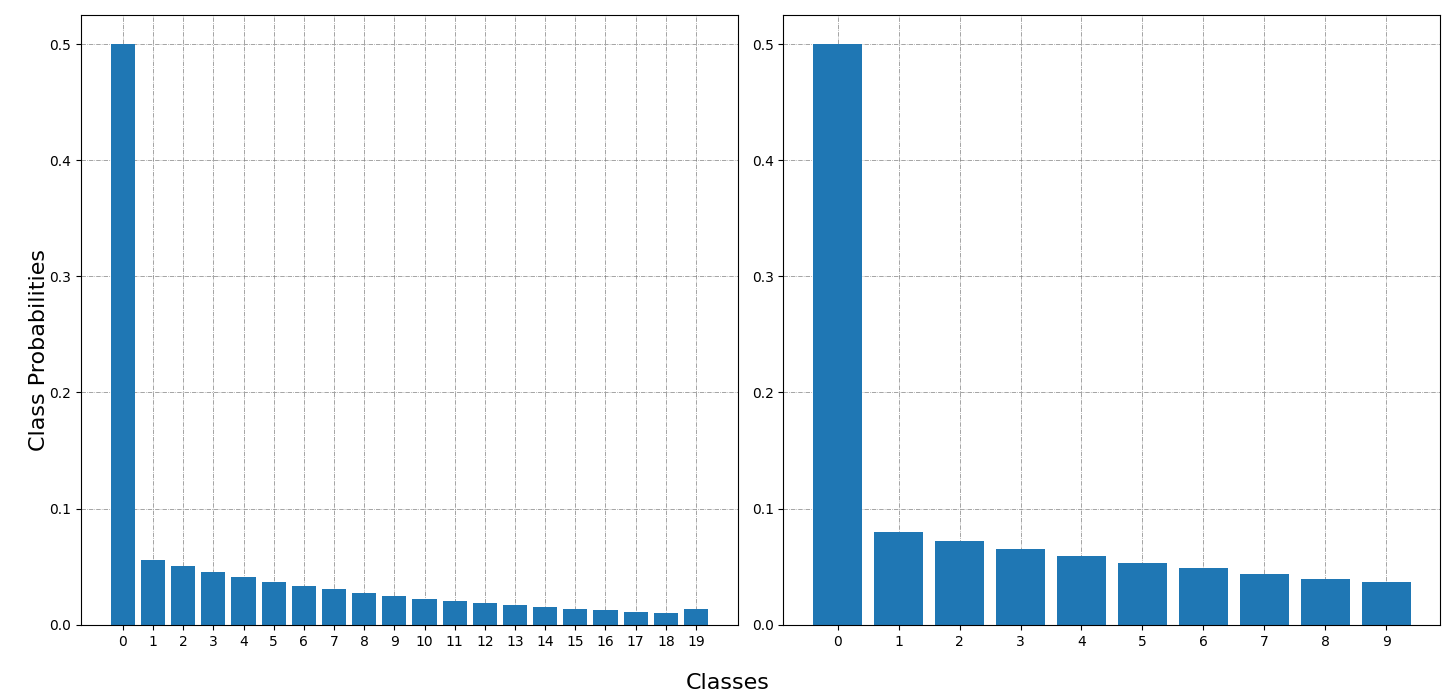}}
		\caption{Class distributions used for synthetic (left) and real (right) data experiments. For the real data experiments, we used the same class distribution structure as depicted in the figure above to create subsets from all real datasets (MNIST, FashionMNIST and CIFAR10).}
		\label{fig:class_distributions}
	\end{center}
\end{figure}

\begin{algorithm}[H]
\caption{Sub-sampled Estimation for $U_N(f)$}
\begin{algorithmic}[1]
\STATE Given a representation function $f\in\mathcal{F}$
\STATE Calculate $\widehat\tau:=\sum_{r=1}^R\widehat\rho_r(1-\widehat\rho_r)^k$
\FOR{$j$ in $1\dots M$}
    \STATE Select $r\in[R]$ with probability $\widehat\rho_r$
    \IF{$N_r\le\frac{3\widehat\tau(N-2)}{k}+2$}
        \STATE Select $t_j$ from $\Theta_r$ with equal probabilities \hfill\COMMENT{Avoid collision for small classes}
        \STATE $\omega_j\gets\frac{1}{1-\widehat\tau}\prod_{\ell=0}^{k-1}\frac{N-\ell-N_r}{N-\ell-2}$
    \ELSE
        \STATE Select $t_j$ from $\Omega_r$ with equal probabilities \hfill\COMMENT{Allow collision for large classes}
        \STATE $\omega_j\gets\frac{1}{1-\widehat\tau} - \frac{3\widehat\tau(N-2)}{k(1-\widehat\tau)(N_r-2)}\mathds{1}_{\Lambda_r}(t)$
    \ENDIF
\ENDFOR
\STATE $\mathrm{ERM}(f)\gets\frac{1}{M}\sum_{j=1}^M\omega_j\ell_{\phi, f}(t_j)$
\STATE \textbf{return} $\mathrm{ERM}(f)$
\end{algorithmic}
\label{alg:subsampling_thiswork}
\end{algorithm}

\begin{remark}
    From Algorithm \ref{alg:subsampling_thiswork}, there is an interesting effect of the chosen sub-sampling distribution $q$ in Eqn.~\eqref{eq:subsampling_distribution_thiswork} on how the sub-sampled estimation of $U_N$ behave on tail classes. Specifically:
    \begin{enumerate}[label=(\roman*)]
        \item For tuples whose anchor-positive pairs come from major classes, by default, they will have the largest weights (of $\frac{1}{1-\widehat\tau}$) if no class collision is found in their negative samples. 
        \item However, since these major classes are so plentiful in the labeled dataset, they also have the highest probabilities of class collision. Therefore, even though these major-class-anchor-positive-pair tuples are frequently sub-sampled for ERM, a lot of them are penalized for class collision (by a subtraction term of $\frac{3\widehat\tau(N-2)}{k(1-\widehat\tau)(N_r-2)}$).
        \item On the other hand, tuples with rare classes as their anchor-positive pairs receive lower weights by default (of $\frac{1}{1-\widehat\tau}\prod_{\ell=0}^{k-1}\frac{N-\ell-N_r}{N-\ell-2}$). However, since we avoid class-collision for these classes altogether, they are not at all penalized and often possess higher weights than tuples with major-class anchor-positive pairs.
    \end{enumerate}
    \noindent These effects interestingly make the sub-sampled estimator of $U_N$ possess a class-rarity-aware balancing feature that makes it favor tail classes more than $U_N^\mathrm{hl}$. However, we affirm our position that we \textbf{do not claim novelty} for this behavior since there has been a strong line of work in importance sampling and weighted empirical risk minimization prior to this work. We are primarily interested in providing sharper excess risk bounds. 
\end{remark}

\subsection{Experiment Settings}
In this work, we experimented with both synthetic and real datasets to compare the generalization capabilities of representation functions on tail classes when trained using empirical risks computed in Algorithm \ref{alg:subsampling_hieuledent} and Algorithm \ref{alg:subsampling_thiswork}. For the synthetic dataset, we randomly draw data points from a mixture of Gaussian distributions $\mathcal{\bar D}=\sum_{r=1}^R\rho_r\mathcal{N}(\cdot|\mu_r, \sigma_r^2)$ where $\rho_{\max}=0.5$, $\sigma_r^2=1$ for all $r\in[R]$ and other minor class probabilities decay exponentially. For real data experiments, we take small subsets of MNIST, FashionMNIST and CIFAR10 datasets such that, just like the synthetic dataset, tailed-ness is enforced. The specific class distributions of both synthetic and real datasets are visualized in Figure \ref{fig:class_distributions}. For the full experiment configurations, please see Table \ref{tab:exp_settings}.

\noindent\textbf{Hardware}: For all experiments, we use an Ubuntu desktop (Intel(R) Xeon(R) W-2133 CPU) with an \textbf{NVIDIA GeForce RTX 2080} GPU, \textbf{32GB RAM} and \textbf{12GB of GPU Memory}.

\begin{table}[t]
  \caption{Experiment settings for real and synthetic data experiments presented in the main text.}
  \label{tab:exp_settings}
  \begin{center}
    \begin{small}
        \begin{tabular}{c|c|c|c}
            \toprule
            & \textbf{Description} & \textbf{Synthetic} & \textbf{Real} \\
            \midrule
            $R$ & Number of classes & 20 & 10 \\
            $N$ & Number of labeled data points & 5000 & 5000 \\
            $M$ & Number of sub-sampled tuples  & 3000 & 3000 \\
            $k$ & Number of negative samples    & 5 & 3, 5, 7 \\
            \bottomrule
        \end{tabular}
    \end{small}
  \end{center}
  \vskip -0.1in
\end{table}

\subsection{Additional Experiments}
In order to further verify that the proposed estimator $U_N$ outperforms the class-wise estimator $U_N^\mathrm{hl}$ in typical extreme multi-class scenarios, we conducted further experiments on MNIST, FashionMNIST, CIFAR10 and CIFAR100 with convolutional neural networks (CNNs). For MNIST, FashionMNIST and CIFAR10, we simulate the class-imbalanced distribution illustrated in Figure \ref{fig:class_distributions} as originally done for deep neural networks. For CIFAR100, we use the entire balanced dataset without modifying the class distribution. The CNN architecture used in this experiment comprises of three $(\text{Conv}\to\text{BN}\to\text{ReLU}\to\text{MaxPool})$ blocks with $32\to64\to128$ channels, respectively. Aside from the comparison to $U^\mathrm{hl}_N$, we also ran this experiment on SupCon \citep{article:khosla2020} to provide a more comprehensive comparative study. Overall, the debiased estimator $U_N$ proposed in this work outperforms the class-wise estimator proposed in \citet{article:hieu2025}, as expected, with only a few exceptions on CIFAR10. Notably, in typical extreme multi-class scenario (CIFAR100), the performance of $U_N$ is better than that of both $U_N^\mathrm{hl}$ and SupCon.

\begin{table*}[ht]
	\begin{center}
		\begin{small}
        \caption{Average (macro) precisions, recalls and F1-scores of a simple \emph{convolutional} neural network trained using sub-sampled estimations of $U_N$, $U_N^\mathrm{hl}$ and SupCon as empirical risks. The one with higher score between the two estimators $U_N$ and $U_N^\mathrm{hl}$ is highlighted in bold font. The metrics of SupCon is underlined if it outperforms both $U_N$, $U_N^\mathrm{hl}$.}
			\label{tab:experiment_results_cnn_appendix}
			\centering
			\resizebox{\textwidth}{!}{\begin{tabular}{ll|ccc|ccc|ccc|ccc}
				\toprule
				&
				& \multicolumn{3}{c}{\textbf{MNIST} (imbl)}
				& \multicolumn{3}{c}{\textbf{FashionMNIST} (imbl)}
				& \multicolumn{3}{c}{\textbf{CIFAR10} (imbl)} 
                & \multicolumn{3}{c}{\textbf{CIFAR100}} \\
				\cmidrule(lr){3-5}
				\cmidrule(lr){6-8}
				\cmidrule(lr){9-11}
                \cmidrule(lr){12-14}
				\textbf{Metr.} & \textbf{$\sharp$Neg.}
				& $U_N$ & $U_N^{\mathrm{hl}}$ & SupCon 
				& $U_N$ & $U_N^{\mathrm{hl}}$ & SupCon
				& $U_N$ & $U_N^{\mathrm{hl}}$ & SupCon 
                & $U_N$ & $U_N^{\mathrm{hl}}$ & SupCon \\
				\midrule
				\multirow{3}{*}{Prec.} & $k=3$  &
				{\bf 0.9835} & 0.9238 & \multirow{3}{*}{0.9874} & {\bf 0.4146} & 0.3487 & \multirow{3}{*}{\underline{0.9382}} & {\bf 0.8128} & 0.7794 & \multirow{3}{*}{0.8057} & {\bf 0.3490} & 0.2836 & \multirow{3}{*}{0.2772} \\
				& $k=5$ &
				{\bf 0.9898} & 0.5031 & & {\bf 0.9212} & 0.3560 & & {\bf 0.8078} & 0.8045 & & {\bf 0.3239} & 0.3196 \\
				& $k=7$ &
				{\bf 0.9903} & 0.2845 & & {\bf 0.9253} & 0.3765 & & {0.7977} & {\bf 0.8033} & & {\bf 0.3173} & 0.2778 \\
				\midrule
				\multirow{3}{*}{Rec.} & $k=3$ &
				{\bf 0.9809} & 0.7599 & \multirow{3}{*}{0.9801} & {\bf 0.4998} & 0.4076 & \multirow{3}{*}{0.8435} & {\bf 0.5649} & 0.5404 & \multirow{3}{*}{\underline{0.6045}} & {\bf 0.3298} & 0.2675 & \multirow{3}{*}{0.3095} \\
				& $k=5$ &
				{\bf 0.9869} & 0.4315 & & {\bf 0.8629} & 0.4071 & & {\bf 0.5507} & {0.5324} & & {\bf 0.3422} & 0.2710 \\
				& $k=7$ &
				{\bf 0.9791} & 0.3711 & & {\bf 0.8451} & 0.3966 & & {\bf 0.5623} & 0.5515 & & {\bf 0.2895} & 0.2544 \\
				\midrule
				\multirow{3}{*}{F1} & $k=3$ &
				{\bf 0.9821} & 0.7771 & \multirow{3}{*}{0.9837} & {\bf 0.4195} & 0.3222 & \multirow{3}{*}{0.8745} & {\bf 0.6592} & 0.6328 & \multirow{3}{*}{\underline{0.6813}} & {\bf 0.3335} & 0.2723 & \multirow{3}{*}{0.2919} \\
				& $k=5$ &
				{\bf 0.9883} & 0.4209 & & {\bf 0.8862} & 0.3212 & & {\bf 0.6406} & {0.6366} & & {\bf 0.3306} & 0.2929 \\
				& $k=7$ &
				{\bf 0.9846} & 0.3141 & & {\bf 0.8744} & 0.3111 & & {0.6481} & {\bf 0.6505} & & {\bf 0.2997} & 0.2640 \\
				\bottomrule
			\end{tabular}}
		\end{small}
	\end{center}
    \vskip -0.1in
\end{table*}

\newpage
\section{Concentration of Sums of Independent Variables}
\label{sec:start_of_toolkit}
In this section, we recall some concentration inequalities required for our analysis. The results in this section as well as Sections~\ref{sec:UVstat} and~\ref{sec:SLTbasic} are classic and no claim of originality is made for these last three sections, which are included for completeness and self-containedness.
\subsection{Hoeffding's Inequality}
Concentration inequalities are the central tools for most of the modern results in learning theory. One of the most popular results was dated back to the work of \citet{article:hoeffding1948}, which focused concentration inequalities for sum of bounded independent random variables. We restate the key result below.

\begin{theorem}{{(Hoeffding's Inequality - \citet[Theorem 2]{article:hoeffding1948})}}
    \label{thm:hoeffding_second_thm}
    Let $X_1, \dots, X_n$ be independent random variables with expected values $\mu_1, \dots, \mu_n$ and $a_i\le X_i\le b_i$ with probability one for all $1\le i \le n$. Let $\overline{X}=\frac{1}{n}\sum_{i=1}^n X_i$ and $\mu=\E[\overline{X}]$. Then, for any constant $\epsilon>0$:
    \begin{align}
        \P(\overline{X}-\mu\ge \epsilon) \le \exp\left(
            -\frac{2n^2\epsilon^2}{\sum_{i=1}^n(b_i-a_i)^2}
        \right).
    \end{align}
\end{theorem}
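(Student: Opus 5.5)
The plan is a Chernoff/MGF argument. Hoeffding's inequality is classical, so the proof should be short and self-contained.

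\textbf{Exponential Markov.} Write $S=\sum_{i=1}^n(X_i-\mu_i)$, so that $\overline X-\mu = S/n$. For any $\lambda>0$, Markov's inequality applied to $e^{\lambda S}$ gives
\begin{align*}
\P(\overline X-\mu\ge\epsilon)=\P(S\ge n\epsilon)\le e^{-\lambda n\epsilon}\,\E\big[e^{\lambda S}\big]=e^{-\lambda n\epsilon}\prod_{i=1}^n\E\big[e^{\lambda(X_i-\mu_i)}\big].
\end{align*}
The factorization of the expectation uses independence.

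\textbf{Hoeffding's lemma.} The key step is the per-variable bound $\E[e^{\lambda(X_i-\mu_i)}]\le \exp(\lambda^2(b_i-a_i)^2/8)$. This is the only step with real content, and I expect it to be the main obstacle.

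\begin{enumerate}
\item Let $Y=X_i-\mu_i\in[a,b]$, where $a=a_i-\mu_i\le0\le b=b_i-\mu_i$ and $\E Y=0$.
\item By convexity of $y\mapsto e^{\lambda y}$ on $[a,b]$,
\begin{align*}
e^{\lambda Y}\le \frac{b-Y}{b-a}e^{\lambda a}+\frac{Y-a}{b-a}e^{\lambda b}.
\end{align*}
\item Taking expectations and using $\E Y=0$ gives $\E e^{\lambda Y}\le e^{L(h)}$. Here $h=\lambda(b-a)$, $p=-a/(b-a)\in[0,1]$, and $L(h)=-hp+\ln(1-p+pe^{h})$.
\item Check that $L(0)=L'(0)=0$.
\item Check that $L''(h)=u(1-u)\le 1/4$, where $u=pe^h/(1-p+pe^h)$.
\item Taylor's theorem with remainder then yields $L(h)\le h^2/8$.
\end{enumerate}

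\textbf{Optimizing $\lambda$.} Combining the two steps,
\begin{align*}
\P(\overline X-\mu\ge\epsilon)\le \exp\Big(-\lambda n\epsilon+\tfrac{\lambda^2}{8}\textstyle\sum_i(b_i-a_i)^2\Big).
\end{align*}
Minimizing the quadratic in the exponent gives $\lambda=4n\epsilon/\sum_i(b_i-a_i)^2$. Substituting this value produces the exponent $-2n^2\epsilon^2/\sum_i(b_i-a_i)^2$, which is the claimed bound.
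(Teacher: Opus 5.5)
Your proposal is correct and follows the same route the paper sketches: the exponential Markov (Chernoff) bound, factorization of the moment generating function by independence, Hoeffding's lemma for each factor, and then optimization over $\lambda$. The only difference is that you prove Hoeffding's lemma (via the convexity bound and $L''\le 1/4$) instead of citing it. The one unhandled case is $b_i=a_i$, where your convexity step divides by zero, but it is trivial because that factor then equals $1$.
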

\noindent The central idea for proving Theorem \ref{thm:hoeffding_second_thm} is to use the Chernoff bound to obtain the initial upper bound, which is a product of moment generating functions. Specifically, let $S_n = X_1+\dots+ X_n= n\overline{X}$, we have:
\begin{align*}
    \P(\overline{X}-\mu\ge\epsilon) &= \P(S_n - n\mu \ge n\epsilon) \\ 
    &\le e^{-n\epsilon t} \E[e^{t(S_n-n\mu)}] \quad \textup{(Chernoff Bound)} \\ 
    &= e^{-n\epsilon t}\prod_{i=1}^n \E e^{t(X_i-\mu_i)} \quad \textup{(By independence of $X_i$'s)}.
\end{align*}
\noindent Then, we can proceed to bound each moment generating function $\E e^{t(X_i-\mu_i)}$ separately using the Hoeffding's lemma. Another counterpart of Hoeffding's inequality is the Bernstein bound, which tend to be tighter, especially when the variances of the random variables considered are small. We state and provide the necessary technical lemma to prove the Bernstein bound below.

\subsection{Bernstein's Bound}
\begin{lemma}
    \label{lem:bernstein_bound_lemma}
    Let $X$ be a random variable with $\E[X]=0$ and $\E[X^2]=\kappa^2$. Suppose that there is some $c>0$ such that $|X|\le c$. Then, for any $t>0$, we have:
    \begin{align}
        \E e^{tX} \le \exp\left(
            t^2\kappa^2\left(
                \frac{e^{tc} - 1 - tc}{t^2c^2}
            \right)
        \right).
    \end{align}
\end{lemma}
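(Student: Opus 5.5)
The plan is the standard moment generating function argument via the power series of the exponential. Since $\E[X]=0$, I expand
\begin{align*}
\E e^{tX} = 1 + t\E[X] + \sum_{p\ge 2}\frac{t^p\E[X^p]}{p!} = 1 + \sum_{p\ge 2}\frac{t^p\E[X^p]}{p!}.
\end{align*}
Exchanging expectation and summation is justified by dominated convergence, because $|X|\le c$ gives the summable dominating series $\sum_p t^p c^p/p! = e^{tc}$.

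Next I control each higher moment using boundedness. For $p\ge 2$, I write $X^p = X^2 X^{p-2}$ and use $|X|^{p-2}\le c^{p-2}$, which gives $\E[X^p]\le \E[|X|^p]\le c^{p-2}\E[X^2] = c^{p-2}\kappa^2$. Substituting this into the series gives
\begin{align*}
\E e^{tX} \le 1 + \kappa^2\sum_{p\ge 2}\frac{t^p c^{p-2}}{p!} = 1 + \frac{\kappa^2}{c^2}\left(e^{tc}-1-tc\right) = 1 + t^2\kappa^2\,\frac{e^{tc}-1-tc}{t^2c^2}.
\end{align*}
The middle equality is where the tail of the exponential series is recognised.

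Finally, I apply the elementary inequality $1+u\le e^{u}$ with $u = t^2\kappa^2(e^{tc}-1-tc)/(t^2c^2)\ge 0$, which yields exactly the claimed bound.

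The only step needing any care is the moment bound for odd $p$, where $X^p$ may be negative. Bounding by $|X|^p$ handles this, since the coefficients $t^p/p!$ are positive for $t>0$. I do not expect a genuine obstacle here; this is the routine core of Bernstein's inequality.
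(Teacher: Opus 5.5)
Your proof is correct and follows the paper's argument essentially step for step: Taylor-expand $\E e^{tX}$, use $\E[X]=0$ to drop the linear term, bound $\E[X^p]\le c^{p-2}\kappa^2$ for $p\ge 2$, sum the tail of the exponential series, and finish with $1+u\le e^{u}$. Your version is slightly cleaner in two respects: you justify exchanging sum and expectation, and you never divide by $\kappa^2$, whereas the paper's auxiliary quantity $G$ does and is therefore undefined when $\kappa=0$.
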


\begin{proof}
    Let $G=\sum_{k=2}^\infty \frac{t^{k-2}\E[X^k]}{k!\kappa^2}$. Using Taylor's expansion, we have:
    \begin{align*}
        \E e^{tX} &= \E\left[
            1 + tX + \sum_{k=2}^\infty \frac{t^k X^k}{k!}
        \right] = 1 + \E\left[
            t^2X^2\sum_{k=2}^\infty \frac{t^{k-2}X^k}{k!X^2}
        \right] \\
        &= 1 + t^2\kappa^2\sum_{k=1}^\infty \frac{t^{k-2}\E[X^k]}{k!\kappa^2} = 1+t^2\kappa^2G.
    \end{align*}
    \noindent For $k\ge2$, we have $\E[X^k]=\E[X^2 X^{k-2}]\le \kappa^2c^{k-2}$. Therefore,
    \begin{align*}
        G &\le \sum_{k=2}^\infty \frac{t^{k-2}c^{k-2}\kappa^2}{k!\kappa^2} = \sum_{k=2}^\infty \frac{t^{k-2}c^{k-2}}{k!} \\
        &=\frac{1}{t^2c^2}\left[\sum_{k=0}^\infty \frac{t^kc^k}{k!} - 1 - tc\right] = \frac{e^{tc} - 1 - tc}{t^2c^2}.
    \end{align*}
    \noindent As a result, using the inequality $1+x\le e^x$, we have:
    \begin{align*}
        \E e^{tX} &= 1 + t^2\kappa^2G \le e^{t^2\kappa^2G} \le \exp\left(
                t^2\kappa^2\left(
                    \frac{e^{tc} - 1 - tc}{t^2c^2}
                \right)
            \right),
    \end{align*}
    \noindent as desired.
\end{proof}

\begin{theorem}[Bernstein Bound (\citet{book:concentration_inequalities} or \citet{book:vershynin2018} - Theorem 2.8.4)]
    \label{thm:bernstein_bound}
    Let $X_1, \dots, X_n$ be independent random variables with $0$ means. For all $i\in[n]$, let $\E[X_i^2] = \sigma_i^2$ and assume that $|X_i|\le M$ for some $M>0$. Then, for all $\epsilon>0$, we have:
    \begin{align}
        \P(|\overline X| \ge \epsilon) \le 2\exp\left(
            -\frac{n\epsilon^2/2}{\sigma^2 + M\epsilon/3}
        \right),
    \end{align}
    \noindent where $\sigma^2=\frac{1}{n}\sum_{i=1}^n \sigma_i^2$ and $\overline X = \frac{1}{n}\sum_{i=1}^n X_i$.
\end{theorem}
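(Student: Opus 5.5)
The plan is to derive the Bernstein bound by the standard Chernoff route, using Lemma \ref{lem:bernstein_bound_lemma} to control each moment generating function. By symmetry (apply the argument to $-X_i$, which satisfies the same hypotheses) it suffices to bound $\P(\overline X\ge\epsilon)$ by $\exp\bigl(-\frac{n\epsilon^2/2}{\sigma^2+M\epsilon/3}\bigr)$. A union bound over the two tails then gives the factor $2$.

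First, for $t>0$, Markov's inequality applied to $e^{tn\overline X}$ and independence give
\[
\P(\overline X\ge\epsilon)\le e^{-tn\epsilon}\prod_{i=1}^n\E e^{tX_i}.
\]
Invoking Lemma \ref{lem:bernstein_bound_lemma} with $\kappa^2=\sigma_i^2$ and $c=M$, each factor is at most $\exp\bigl(\sigma_i^2\frac{e^{tM}-1-tM}{M^2}\bigr)$. The product is therefore at most $\exp\bigl(n\sigma^2\frac{e^{tM}-1-tM}{M^2}\bigr)$.

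Second, I bound the function $g(u)=e^u-1-u$. Comparing Taylor coefficients via $k!\ge 2\cdot 3^{k-2}$ gives $g(u)\le\frac{u^2/2}{1-u/3}$ for $0\le u<3$. With $u=tM$, the exponent is at most
\[
-tn\epsilon+\frac{n\sigma^2t^2/2}{1-tM/3}.
\]

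Third, I choose $t=\frac{\epsilon}{\sigma^2+M\epsilon/3}$. This choice satisfies $tM<3$. It also gives $1-tM/3=\frac{\sigma^2}{\sigma^2+M\epsilon/3}$, so the second term equals $\frac{n t\epsilon}{2}$. The exponent is then $-\frac{nt\epsilon}{2}=-\frac{n\epsilon^2/2}{\sigma^2+M\epsilon/3}$, which is exactly the claimed rate.

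The main (minor) obstacle is the elementary inequality $g(u)\le\frac{u^2}{2(1-u/3)}$. It follows from
\[
\sum_{k\ge2}\frac{u^k}{k!}\le\frac{u^2}{2}\sum_{j\ge0}(u/3)^j,
\]
which holds because $k!\ge 2\cdot 3^{k-2}$ for every $k\ge 2$.
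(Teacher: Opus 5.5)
Your proof is correct. It shares the paper's first two steps: the Chernoff bound plus Lemma~\ref{lem:bernstein_bound_lemma}, which reduce everything to the exponent $-tn\epsilon+n\sigma^2(e^{tM}-1-tM)/M^2$. It then closes differently.

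\textbf{The paper's route.} The paper minimizes this exponent exactly, taking $t=M^{-1}\ln(1+M\epsilon/\sigma^2)$. This gives Bennett's form $\exp\bigl(-\frac{n\sigma^2}{M^2}h(M\epsilon/\sigma^2)\bigr)$ with $h(x)=(1+x)\ln(1+x)-x$. It then invokes $h(x)\ge\frac{x^2/2}{1+x/3}$, which it states without proof.

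\textbf{Your route.} You first bound $e^u-1-u\le\frac{u^2/2}{1-u/3}$ using $k!\ge 2\cdot 3^{k-2}$; this is the same sub-gamma mechanism as the paper's Proposition~\ref{prop:bernstein_condition}. You then plug in the explicit but non-optimal $t=\epsilon/(\sigma^2+M\epsilon/3)$, at which the exponent collapses exactly to the target.

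\textbf{What each buys.} The paper's argument yields the sharper Bennett inequality as an intermediate result. Yours is fully self-contained: the one elementary inequality you need is proved in a line, whereas the paper's relies on a logarithmic inequality it leaves unverified. The final constants are identical.

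\textbf{One small point.} Your claim $tM<3$ holds only when $\sigma^2>0$. If $\sigma^2=0$, every $X_i=0$ almost surely and the bound is trivial. The paper's choice of $t$ is equally undefined there, so you should state this degenerate case separately.
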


\begin{proof}
    First, we prove that $\P(\overline X \ge \epsilon)\le \exp\left(-\frac{n\epsilon^2/2}{\sigma^2+M\epsilon/3}\right)$. Using the Chernoff bound as usual:
    \begin{align*}
        \P(\overline X \ge \epsilon) &= \P\left(\sum_{i=1}^n X_i \ge n\epsilon\right) \le e^{-n\epsilon t}\prod_{i=1}^n \E^{tX_i} \qquad (t>0) \\
        &\le e^{-n\epsilon t}\prod_{i=1}^n \exp\left(
            t^2\sigma_i^2\left(
                \frac{e^{tM} - 1 - tM}{t^2M^2}
            \right)
        \right)\qquad \textup{(Lemma \ref{lem:bernstein_bound_lemma})} \\
        &= e^{-n\epsilon t} \exp\left(nt^2\sigma^2\left(
            \frac{e^{tM} - 1 - tM}{t^2M^2}
        \right)\right).
    \end{align*}
    \noindent Solving for the optimal value of $t$, we have $t=M^{-1}\ln(1+M\epsilon/\sigma^2)$. Let $f(x)=(1+x)\ln(1+x)-x$. Plugging the optimal value of $t$ back to the above right-hand-side, we have:
    \begin{align*}
        \P(\overline{X} \ge \epsilon) &\le \exp\left(
            -\frac{n\sigma^2}{M^2}f\left(\frac{M\epsilon}{\sigma^2}\right)
        \right) \le \exp\left(
            -\frac{n\epsilon^2/2}{\sigma^2+M\epsilon/3}
        \right),
    \end{align*}
    \noindent where the last inequality holds because $f(x)\ge \frac{x^2/2}{1+x/3}$ for all $x\ge0$. We repeat the same arguments to prove that $\P(-\overline X \ge \epsilon)\le \exp\left(-\frac{n\epsilon^2/2}{\sigma^2+M\epsilon/3}\right)$ and combine with the above result to obtain the desired two-sided inequality.
\end{proof}

\begin{proposition}{(High-probability Bernstein Bound - See \citet[Prop. F.21]{article:ledent2024matrixcomp})}
    \label{prop:bernstein_bound_high_prob}
    Let $X_1, \dots, X_n$ be independent random variables with means $0$. For $i\in[n]$, let $\E[X_i^2] = \sigma_i^2$ and assume that $|X_i|\le M$ for some $M>0$. Then, for any $\delta\in(0,1)$, with probability of at least $1-\delta$:
    \begin{align}
        |\overline X| \le \frac{8M}{3n}\ln 2/\delta + \sigma\sqrt{\frac{8\ln 2/\delta}{3n}},
    \end{align}
    \noindent where $\sigma^2=\frac{1}{n}\sum_{i=1}^n \sigma_i^2$ and $\overline X = \frac{1}{n}\sum_{i=1}^n X_i$.
\end{proposition}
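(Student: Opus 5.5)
The plan is to derive the high-probability statement directly from the tail bound of Theorem~\ref{thm:bernstein_bound} by inverting it. Set the right-hand side equal to the failure probability. That is, we choose $\epsilon>0$ such that
\begin{align*}
    2\exp\left(-\frac{n\epsilon^2/2}{\sigma^2+M\epsilon/3}\right)=\delta .
\end{align*}
Then $\P(|\overline X|\ge\epsilon)\le\delta$, so $|\overline X|<\epsilon$ with probability at least $1-\delta$. The rest of the argument is an explicit upper bound on this $\epsilon$ of the form claimed.

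Write $L=\ln(2/\delta)$. The defining equation becomes the quadratic
\begin{align*}
    n\epsilon^2-\tfrac{2ML}{3}\epsilon-2\sigma^2L=0 ,
\end{align*}
and we take its positive root
\begin{align*}
    \epsilon=\frac{ML}{3n}+\frac{1}{n}\sqrt{\frac{M^2L^2}{9}+2n\sigma^2L}.
\end{align*}
Next apply $\sqrt{a+b}\le\sqrt a+\sqrt b$. This gives
\begin{align*}
    \epsilon\le\frac{2ML}{3n}+\sigma\sqrt{\frac{2L}{n}} .
\end{align*}
This is the same manipulation the paper already carries out in Proposition~\ref{prop:concentration_of_collision_probability_estimator}.

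Finally, compare with the stated constants. We have $\tfrac{2}{3}\le\tfrac{8}{3}$ and $2\le\tfrac{8}{3}$, so
\begin{align*}
    \epsilon\le\frac{8M}{3n}L+\sigma\sqrt{\frac{8L}{3n}} ,
\end{align*}
which is the claimed bound. The stated constants are therefore looser than what the computation gives, and the proof is a direct check.

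No step is a real obstacle. The only point to verify is that the root chosen is the positive one, and that the tail probability is monotone in $\epsilon$, so that any larger threshold also works. The case $\sigma=0$ needs no special handling, since the formula still applies.
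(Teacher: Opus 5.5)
Your argument is correct. The positive root of $n\epsilon^2-\tfrac{2ML}{3}\epsilon-2\sigma^2L=0$ is computed correctly. After $\sqrt{a+b}\le\sqrt a+\sqrt b$ you get $\frac{2ML}{3n}+\sigma\sqrt{2L/n}$, which is dominated by the stated bound since $2/3\le 8/3$ and $2\le 8/3$.

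The paper inverts the same tail bound of Theorem~\ref{thm:bernstein_bound} differently. It splits into the cases $M\epsilon\ge\sigma^2$ and $\sigma^2\ge M\epsilon$, and bounds the denominator $\sigma^2+M\epsilon/3$ by $4M\epsilon/3$ or $4\sigma^2/3$ respectively. It then solves each one-term equation, which gives $\frac{8M}{3n}L$ and $\sigma\sqrt{8L/(3n)}$, and adds the two thresholds. This case analysis is where the constants $8/3$ come from. It is quick but lossy.

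Because the case condition depends on $\epsilon$ itself, the paper's argument has to be read as applying both estimates at the single threshold $\epsilon^*=\frac{8M}{3n}L+\sigma\sqrt{8L/(3n)}$. In either case the exponent is then at least $L$, so $\P(|\overline X|\ge\epsilon^*)\le\delta$.

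Your exact-quadratic route avoids that subtlety entirely. It also yields the sharper constants $2/3$ and $\sqrt2$, matching the computation the paper itself carries out in Proposition~\ref{prop:concentration_of_collision_probability_estimator}.

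One small remark: for $\sigma=0$ the claim holds trivially, since then $\overline X=0$ almost surely. The proof of Theorem~\ref{thm:bernstein_bound} as written divides by $\sigma^2$, so it does not itself cover that case.
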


\begin{proof}
    Fix $\delta\in(0,1)$ and let $\epsilon>0$. We divide the arguments into cases when $M\epsilon\ge\sigma^2$ and when $M\epsilon\le \sigma^2$. For the case $M\epsilon\ge \sigma^2$, we have $\P(|\overline X| \ge \epsilon)\le 2\exp\left(-\frac{n\epsilon^2/2}{4M\epsilon/3}\right)$. Setting the right-hand-side of the inequality to $\delta$ and solve for $\epsilon$, we have:
    \begin{align*}
        \frac{n\epsilon^2/2}{4M\epsilon/3} = \ln 2/\delta \implies \epsilon = \frac{8M}{3n}\ln 2/\delta.
    \end{align*}
    \noindent In other words, with probability of at least $1-\delta$, $|\overline X|\le \frac{8M}{3n}\ln 2/\delta$. Similarly, for the case $\sigma^2\ge M\epsilon$, we have $\P(|\overline{X}|\ge\epsilon)\le 2\exp\left(-\frac{n\epsilon^2/2}{4\sigma^2/3}\right)$. Setting the right-hand-side to $\delta$ yields:
    \begin{align*}
        \frac{n\epsilon^2/2}{4\sigma^2/3} = \ln 2/\delta \implies \epsilon = \sigma\sqrt{\frac{8\ln 2/\delta}{3n}}.
    \end{align*}
    \noindent Or, $|\overline{X}|\le \sigma\sqrt{\frac{8\ln 2/\delta}{3n}}$ with probability of at least $1-\delta$. Hence, for all cases, we have:
    \begin{align*}
        |\overline{X}| \le \frac{8M}{3n}\ln 2/\delta + \sigma\sqrt{\frac{8\ln 2/\delta}{3n}}
    \end{align*}
    \noindent with probability of at least $1-\delta$.
\end{proof}

\begin{proposition}{(Bernstein's Condition)}
    \label{prop:bernstein_condition}
    Let $X$ be a random variable with variance $\mathrm{Var}(X)=\sigma^2$ and $\E[X]=\mu$. Suppose that $|X-\mu| \le M$ with probability one. Then, we have:
    \begin{align*}
        \E e^{\lambda(X-\mu)} &\le \exp\Bigg(\frac{\lambda^2\sigma^2/2}{1-M|\lambda|/3}\Bigg),\quad \forall |\lambda| <\frac{3}{M}.
    \end{align*}
\end{proposition}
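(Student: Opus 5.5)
The natural route is the standard Taylor-expansion argument already used in Lemma \ref{lem:bernstein_bound_lemma}, but with the factorial growth of $k!$ controlled by a geometric series rather than by $e^{tc}$. Set $Y=X-\mu$, so $\E[Y]=0$, $\E[Y^2]=\sigma^2$ and $|Y|\le M$ almost surely. For any real $\lambda$, expand
\begin{align*}
    \E e^{\lambda Y} = 1 + \lambda\E[Y] + \sum_{j=2}^\infty \frac{\lambda^j\E[Y^j]}{j!} = 1 + \sum_{j=2}^\infty \frac{\lambda^j\E[Y^j]}{j!}.
\end{align*}
Exchanging the expectation and the series is justified by dominated convergence, since $\sum_j |\lambda|^j|Y|^j/j!\le e^{|\lambda|M}$.

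The next step bounds each moment. Using $|Y|\le M$ we get $|\E[Y^j]|\le \E[Y^2|Y|^{j-2}]\le \sigma^2M^{j-2}$ for $j\ge2$. This makes the bound work for negative $\lambda$ as well, because we work with $|\lambda|^j$. The key combinatorial inequality is $j!\ge 2\cdot 3^{j-2}$ for all $j\ge2$, which follows by induction since each factor beyond $2$ is at least $3$. Together these give
\begin{align*}
    \E e^{\lambda Y} \le 1 + \frac{\lambda^2\sigma^2}{2}\sum_{j=2}^\infty \left(\frac{|\lambda| M}{3}\right)^{j-2} = 1 + \frac{\lambda^2\sigma^2/2}{1-|\lambda|M/3},
\end{align*}
where the geometric series converges exactly under the hypothesis $|\lambda|<3/M$.

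The final step applies $1+x\le e^x$ to obtain the claimed bound $\exp\big(\frac{\lambda^2\sigma^2/2}{1-M|\lambda|/3}\big)$. There is no real obstacle here. The only points needing care are the inequality $j!\ge 2\cdot 3^{j-2}$, which is what produces the constant $3$, and the handling of negative $\lambda$ through absolute values.
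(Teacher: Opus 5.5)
Your proof is correct and follows the same route as the paper: expand the moment generating function in a Taylor series, bound the moments by $|\E[Y^j]|\le\sigma^2M^{j-2}$, use $j!\ge 2\cdot 3^{j-2}$ to reduce to a geometric series that converges for $|\lambda|<3/M$, and finish with $1+x\le e^x$. For negative $\lambda$, working with $|\lambda|^j$ and $|\E[Y^j]|$ from the start is slightly cleaner than the paper's intermediate steps, which bound $\lambda^k\E[(X-\mu)^k]$ termwise before switching to $|\lambda|$.
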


\begin{proof}
    Using Taylor expansion on the moment-generating function $\E e^{\lambda(X-\mu)}$, we have:
    \begin{align*}
        \E e^{\lambda(X-\mu)} &= 1 + \sum_{k=1}^\infty \frac{\lambda^k \E[(X-\mu)^k]}{k!} = 1 + \sum_{k=2}^\infty \frac{\lambda^k \E[(X-\mu)^k]}{k!}  \\
        &= 1 + \sum_{k=2}^\infty \frac{\lambda^k \E[(X-\mu)^2(X-\mu)^{k-2}]}{k!} \\
        &\le 1 + \sum_{k=2}^\infty \frac{\lambda^k \E[(X-\mu)^2|X-\mu|^{k-2}]}{k!} \\
        &\le 1 + \sum_{k=2}^\infty \frac{\lambda^k \E[(X-\mu)^2M^{k-2}]}{k!} \\
        &= 1 + \sum_{k=2}^\infty \frac{\lambda^k \sigma^2M^{k-2}}{k!} \le 1 + \lambda^2\sigma^2\sum_{k=1}^\infty \frac{(M|\lambda|)^{k-2}}{k!}.
    \end{align*}

    \noindent Using the inequality $k!\ge 2\cdot 3^{k-2}$ for any $k\ge2$, for any $|\lambda|<3/M$ we have:
    \begin{align*}
        \E e^{\lambda(X-\mu)} &\le 1 + \lambda^2\sigma^2\sum_{k=2}^\infty \frac{(M|\lambda|)^{k-2}}{k!} \\
        &\le 1 + \frac{\lambda^2\sigma^2}{2}\sum_{k=2}^\infty \Bigg(\frac{M|\lambda|}{3}\Bigg)^{k-2} = 1+\frac{\lambda^2\sigma^2/2}{1-M|\lambda|/3} \\
        &\le \exp\Bigg(\frac{\lambda^2\sigma^2/2}{1-M|\lambda|/3}\Bigg) \qquad (1+x\le e^x).
    \end{align*}

    \noindent Using the same approach, we can also show that:
    \begin{align*}
        \E e^{\lambda(\mu-X)} \le \exp\Bigg(\frac{\lambda^2\sigma^2/2}{1-M|\lambda|/3}\Bigg),\quad\forall |\lambda|<\frac{3}{M}.
    \end{align*}
    \noindent Hence, for all $|\lambda|<\frac{3}{M}$, we have:
    \begin{align*}
        \E e^{\lambda |X-\mu|} &\le \E e^{\lambda (X-\mu)} + \E e^{\lambda(\mu-X)} \le 2\exp\Bigg(\frac{\lambda^2\sigma^2/2}{1-M|\lambda|/3}\Bigg).
    \end{align*}
\end{proof}
\noindent The above inequality can be extended to the following matrix Bernstein inequality.
\begin{proposition}[Non-commutative Matrix Bernstein Inequality (cf. \citet{article:ledent2021}, Proposition F.3 or \citet{article:recht2009}, Theorem 4)]
    \label{prop:bernstein_matrix}
    Let $X_1,\dots, X_n$ be independent zero-mean matrices of dimension $m\times n$. For all $1\le k\le n$, assume that $\|X_k\|_\sigma\le M$\footnote{Where $\|\cdot\|_\sigma$ denotes the spectral norm.} almost surely. Denote $\rho_k^2=\max\left(\|\E[X_kX_k^\top]\|_\sigma,\|\E[X_k^\top X_k]\|_\sigma\right)$. Then, for any $\lambda>0$:
    \begin{align}
        \P\left(\left\|\sum_{k=1}^n X_k\right\|_\sigma\ge\lambda\right) \le (m+n)\exp\left(-\frac{\lambda^2/2}{\sum_{k=1}^n\rho_k^2+M\lambda/3}\right).
    \end{align}
\end{proposition}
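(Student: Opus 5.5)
Since the statement is classical, I would follow the Ahlswede--Winter route, which is enough here because the variance proxy appearing in the bound is $\sum_k\rho_k^2$ (a sum of norms) rather than the norm of a sum. To avoid the clash of notation, I write $d_1\times d_2$ for the matrix dimensions, so the prefactor $(m+n)$ becomes $d_1+d_2$, and I write $N$ for the number of summands.

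\textbf{Step 1 (reduction to the Hermitian case).} I would first pass to the symmetric case via the Hermitian dilation
\[
\mathcal{D}(X)=\begin{pmatrix}0 & X\\ X^\top & 0\end{pmatrix}\in\R^{(d_1+d_2)\times(d_1+d_2)}.
\]
This map is linear, so each $Y_k:=\mathcal{D}(X_k)$ is symmetric and has mean zero. The spectrum of $\mathcal{D}(X)$ is $\pm$ the singular values of $X$, padded with zeros. Hence $\|Y_k\|_\sigma\le M$, and
\[
\lambda_{\max}\Big(\sum_k Y_k\Big)=\Big\|\sum_k X_k\Big\|_\sigma .
\]
Moreover $\mathcal{D}(X)^2=\mathrm{diag}(XX^\top,X^\top X)$, so $\|\E Y_k^2\|_\sigma=\rho_k^2$. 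It therefore suffices to bound $\P\big(\lambda_{\max}(\sum_k Y_k)\ge\lambda\big)$ for symmetric matrices of dimension $d:=d_1+d_2$.

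\textbf{Step 2 (matrix Laplace transform).} For $\theta>0$, Markov's inequality applied to $\mathrm{tr}\,e^{\theta\sum_kY_k}\ge e^{\theta\lambda_{\max}(\sum_k Y_k)}$ gives
\[
\P\Big(\lambda_{\max}\Big(\sum_k Y_k\Big)\ge\lambda\Big)\le e^{-\theta\lambda}\,\E\,\mathrm{tr}\,e^{\theta\sum_k Y_k}.
\]
I would then peel off one summand at a time using the Golden--Thompson inequality $\mathrm{tr}\,e^{A+B}\le\mathrm{tr}(e^Ae^B)$. Conditioning on $Y_1,\dots,Y_{N-1}$, independence gives
\[
\E\,\mathrm{tr}\big(e^{\theta\sum_{k<N}Y_k}e^{\theta Y_N}\big)\le \|\E e^{\theta Y_N}\|_\sigma\,\E\,\mathrm{tr}\,e^{\theta\sum_{k<N}Y_k}.
\]
Here I use that $\mathrm{tr}(AB)\le\|B\|_\sigma\mathrm{tr}A$ for $A\succeq0$. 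Iterating over all $N$ summands and using $\mathrm{tr}\,I=d$ yields
\[
\E\,\mathrm{tr}\,e^{\theta\sum_kY_k}\le d\prod_k\|\E e^{\theta Y_k}\|_\sigma .
\]

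\textbf{Step 3 (per-summand mgf bound).} I would carry over the scalar computation of Proposition~\ref{prop:bernstein_condition}. For $|x|\le M$ and $0<\theta<3/M$,
\[
e^{\theta x}\le 1+\theta x+g(\theta)x^2,\qquad g(\theta)=\frac{\theta^2/2}{1-M\theta/3}.
\]
Since $Y_k$ is symmetric with spectrum in $[-M,M]$, the spectral mapping (transfer) rule turns this into the semidefinite inequality $e^{\theta Y_k}\preceq I+\theta Y_k+g(\theta)Y_k^2$. Taking expectations and using $\E Y_k=0$ gives
\[
\E e^{\theta Y_k}\preceq I+g(\theta)\E Y_k^2,
\]
hence
\[
\|\E e^{\theta Y_k}\|_\sigma\le 1+g(\theta)\rho_k^2\le e^{g(\theta)\rho_k^2}.
\]

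\textbf{Step 4 (optimization).} Combining Steps 1--3, with $\sigma^2:=\sum_k\rho_k^2$,
\[
\P\Big(\Big\|\sum_kX_k\Big\|_\sigma\ge\lambda\Big)\le d\exp\big(-\theta\lambda+g(\theta)\sigma^2\big).
\]
I would choose $\theta=\lambda/(\sigma^2+M\lambda/3)$. This choice satisfies $\theta<3/M$, and a direct computation of the exponent gives exactly $-\frac{\lambda^2/2}{\sigma^2+M\lambda/3}$, which is the claimed bound.

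The main obstacle is Step 2. The peeling argument needs Golden--Thompson in the conditional-expectation form above, together with the trace inequality $\mathrm{tr}(AB)\le\|B\|_\sigma\mathrm{tr}A$ for positive semidefinite $A$. Both facts are standard, and I would cite them rather than reprove them. The alternative is Lieb's concavity theorem, following Tropp, which gives the sharper variance $\|\sum_k\E Y_k^2\|_\sigma$; that sharpening is not needed for the stated form.
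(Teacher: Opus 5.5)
Your argument is correct. The Hermitian dilation, the Golden--Thompson/Ahlswede--Winter peeling with $\mathrm{tr}(AB)\le\|B\|_\sigma\,\mathrm{tr}A$, the scalar bound $e^{\theta x}\le 1+\theta x+g(\theta)x^2$ on $[-M,M]$ transferred to matrices, and the choice $\theta=\lambda/(\sigma^2+M\lambda/3)$ give exactly the stated exponent and the prefactor $d_1+d_2$. The paper gives no proof of its own and defers to \citet{article:recht2009}, whose argument follows this same route. The only nit is that $\theta<3/M$ needs $\sigma^2>0$; the case $\sigma^2=0$ is trivial, since it forces every $X_k=0$ almost surely.
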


\subsection{Chernoff's Bound}
\begin{proposition}[Multiplicative Chernoff Bound (cf. Lemma F.15, \citet{article:ledent2024matrixcomp})]
    \label{prop:multiplicative_chernoff}
    Let $X_1,\dots,X_N$ be independent random variables taking \textbf{values in $\{0, 1\}$}. Let $S_N=\sum_{j=1}^NX_j$ and $\mu=\E[S_N]$. Then, for any $\delta>0$:
    \begin{align}
        \label{eq:multiplicative_chernoff_I}
        \P(S_N\ge (1+\delta)\mu) &\le \left(\frac{e^\delta}{(1+\delta)^{1+\delta}}\right)^\mu.
    \end{align}
    \noindent Similarly, we can show that for $\delta\in(0,1)$:
    \begin{align}
        \label{eq:multiplicative_chernoff_II}
        \P(S_N\le(1-\delta)\mu)\le\left(\frac{e^{-\delta}}{(1-\delta)^{1-\delta}}\right)^\mu.
    \end{align}
\end{proposition}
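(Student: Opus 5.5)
This is the multiplicative Chernoff bound for independent $\{0,1\}$-valued variables, and I would prove it by the standard exponential-moment (Chernoff) method: bound the moment generating function of $S_N$ factor by factor, apply Markov's inequality to $e^{tS_N}$, and optimise over $t$.

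\textbf{Upper tail.} Write $p_j=\P(X_j=1)$, so that $\mu=\sum_j p_j$.

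1. For any $t>0$, compute each factor and bound it using $1+x\le e^x$:
\begin{align*}
\E e^{tX_j}=1+p_j(e^t-1)\le \exp\big(p_j(e^t-1)\big).
\end{align*}
By independence of the $X_j$, this gives $\E e^{tS_N}\le \exp\big(\mu(e^t-1)\big)$.

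2. Apply Markov's inequality to the nonnegative variable $e^{tS_N}$:
\begin{align*}
\P\big(S_N\ge(1+\delta)\mu\big)\le e^{-t(1+\delta)\mu}\,\E e^{tS_N}\le \exp\big(\mu(e^t-1)-t(1+\delta)\mu\big).
\end{align*}

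3. Choose $t=\ln(1+\delta)>0$, which minimises the exponent. The right-hand side then becomes $\big(e^\delta/(1+\delta)^{1+\delta}\big)^\mu$, which is \eqref{eq:multiplicative_chernoff_I}.

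\textbf{Lower tail.} For $\delta\in(0,1)$ the argument is symmetric.

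1. For $t>0$, apply Markov's inequality to $e^{-tS_N}$:
\begin{align*}
\P\big(S_N\le(1-\delta)\mu\big)\le e^{t(1-\delta)\mu}\,\E e^{-tS_N}.
\end{align*}

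2. By the same factorwise bound, $\E e^{-tS_N}\le\exp\big(\mu(e^{-t}-1)\big)$.

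3. Choose $t=-\ln(1-\delta)>0$. This gives $\big(e^{-\delta}/(1-\delta)^{1-\delta}\big)^\mu$, which is \eqref{eq:multiplicative_chernoff_II}.

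\textbf{Main point.} Since the $X_j$ may have different means, no single-distribution argument applies; the key step is the factorwise bound via $1+x\le e^x$, which combines the heterogeneous $p_j$ into the single quantity $\mu$. After that, both tails reduce to optimising a scalar exponent, and the stated choices of $t$ are the exact minimisers. No step is a real obstacle.

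The simplified forms used elsewhere in the paper, such as $e^{-\delta^2\mu/3}$ for the upper tail with $\delta\le1$, $e^{-\delta^2\mu/2}$ for the lower tail, and $(e/(1+\delta))^{(1+\delta)\mu}$-type bounds for large $\delta$, follow from the elementary inequalities $(1+\delta)\ln(1+\delta)-\delta\ge\delta^2/(2+\delta)$ and $(1-\delta)\ln(1-\delta)+\delta\ge\delta^2/2$. These are checked by differentiating the difference of the two sides in $\delta$.
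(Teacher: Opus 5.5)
Your proposal is correct and follows the same route as the paper: Markov's inequality applied to $e^{tS_N}$, the factorwise bound $1+p_j(e^t-1)\le e^{p_j(e^t-1)}$ via independence, and the optimal choice $t=\ln(1+\delta)$. You also write out the lower tail explicitly with $t=-\ln(1-\delta)$, which the paper only says follows by the same strategy, and your computation there is correct.
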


\begin{proof}
    For $t>0$, we have:
    \begin{align*}
        \P(S_N\ge (1+\delta)\mu) &= \P\left(e^{tS_N}\ge e^{t\mu(1+\delta)}\right) \\
        &\le e^{-t\mu(1+\delta)}M_{S_N}(t)\qquad\textup{(Markov's Inequality)} \\
        &\le e^{-t\mu(1+\delta)}\prod_{j=1}^N M_{X_j}(t) \qquad \textup{(Independence)}.
    \end{align*}
    \noindent Then, suppose that for each $j\in[N]$, $\E[X_j]=p_j$. Then, we have:
    \begin{align*}
        M_{X_j}(t) = 1 + p_j(e^t-1).
    \end{align*}
    \noindent Plugging this back to the above inequality, we have:
    \begin{align*}
        \P(S_N\ge (1+\delta)\mu) &\le e^{-t\mu(1+\delta)}\prod_{j=1}^N[1 + p_j(e^t-1)] \le e^{-t\mu(1+\delta)}\prod_{j=1}^N e^{p_j(e^t-1)} \qquad (1+x\le e^x) \\
        &= \frac{\exp\left[(e^t-1)\sum_{j=1}^Np_j\right]}{\exp[t\mu(1+\delta)]}  \\
        &= \left(\frac{\exp[e^t-1]}{\exp[t(1+\delta)]}\right)^\mu \qquad \qquad(\sum_{j=1}^Np_j=\mu).
    \end{align*}
    \noindent Solving for the optimal value of $t$, we have $t=\ln(1+\delta)$. Plugging this to the inequality:
    \begin{align}
        \P(S_N\ge (1+\delta)\mu) &\le \left(\frac{e^\delta}{(1+\delta)^{1+\delta}}\right)^\mu,
    \end{align}
    \noindent as desired. We can use the same proof strategy to prove the other inequality for $\delta\in(0,1)$.
\end{proof}

\begin{corollary}[Multiplicative Chernoff Bound]
    Let $S_N$ be defined in Theorem \ref{prop:multiplicative_chernoff}, we have:
    \begin{align}
        \P(S_N\ge (1+\delta)\mu) &\le e^{-\delta^2\mu/(2+\delta)},&& 0\le \delta. \\
        \P(S_N\le (1-\delta)\mu) &\le e^{-\delta^2\mu/2},&& 0\le \delta\le 1.\\
        \P(|S_N-\mu|\ge\delta\mu) &\le 2e^{-\delta^2\mu/3}, &&0\le\delta\le1.
    \end{align}
\end{corollary}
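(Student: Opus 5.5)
The plan is to derive all three inequalities directly from Proposition \ref{prop:multiplicative_chernoff} by bounding the logarithm of its right-hand sides with elementary calculus. For the upper tail, write
\begin{align*}
\left(\frac{e^\delta}{(1+\delta)^{1+\delta}}\right)^\mu = \exp\Big(-\mu\big[(1+\delta)\ln(1+\delta)-\delta\big]\Big).
\end{align*}
It therefore suffices to show that $g(\delta):=(1+\delta)\ln(1+\delta)-\delta-\frac{\delta^2}{2+\delta}\ge 0$ for all $\delta\ge0$. This is the step I expect to need the most care, and even that is modest.

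I would first use the standard bound $\ln(1+\delta)\ge \frac{2\delta}{2+\delta}$ for $\delta\ge0$. It holds because both sides vanish at $0$, and the derivative of the difference is
\begin{align*}
\frac{1}{1+\delta}-\frac{4}{(2+\delta)^2}=\frac{\delta^2}{(1+\delta)(2+\delta)^2}\ge0.
\end{align*}
Substituting this bound gives
\begin{align*}
(1+\delta)\ln(1+\delta)-\delta\ge \frac{2\delta(1+\delta)}{2+\delta}-\delta=\frac{\delta^2}{2+\delta},
\end{align*}
which is exactly the first claim.

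For the lower tail with $\delta\in(0,1)$, the exponent in \eqref{eq:multiplicative_chernoff_II} is $-\mu[(1-\delta)\ln(1-\delta)+\delta]$. Set $h(\delta)=(1-\delta)\ln(1-\delta)+\delta-\delta^2/2$. Then $h(0)=0$, $h'(\delta)=-\ln(1-\delta)-\delta\ge0$ (since $-\ln(1-\delta)\ge\delta$), so $h\ge0$ on $[0,1)$. The endpoints need separate treatment. The case $\delta=0$ is trivial. The case $\delta=1$ is handled directly: $\P(S_N\le 0)=\prod_j(1-p_j)\le e^{-\mu}\le e^{-\mu/2}$.

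Finally, for the two-sided bound with $0\le\delta\le1$, I would take a union bound of the two one-sided results. Both exponents dominate $\delta^2\mu/3$: we have $\delta^2/(2+\delta)\ge\delta^2/3$ because $\delta\le1$, and trivially $\delta^2/2\ge\delta^2/3$. This yields $2e^{-\delta^2\mu/3}$.
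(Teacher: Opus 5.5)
Your proof is correct. For the upper tail and the two-sided bound you follow the paper exactly. The bound $\ln(1+\delta)\ge\frac{2\delta}{2+\delta}$ turns the Chernoff exponent into $\delta^2\mu/(2+\delta)$. The two-sided statement is then a union bound using $\delta^2/(2+\delta)\ge\delta^2/3$ and $\delta^2/2\ge\delta^2/3$ on $[0,1]$.

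The lower tail is where you genuinely differ, and there your argument is the sound one. The paper tries to mirror the upper-tail trick by citing ``$\ln(1-\delta)\ge\frac{2\delta}{2-\delta}$''. That inequality is false, since the left side is negative. The true comparison is $-\ln(1-\delta)\ge\frac{2\delta}{2-\delta}$, which points the wrong way for upper-bounding $e^{-\mu\delta}(1-\delta)^{-\mu(1-\delta)}$.

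As a result, the paper's intermediate bound $e^{-\delta^2\mu/(2-\delta)}$ cannot be obtained from Proposition~\ref{prop:multiplicative_chernoff} at all. Indeed $\delta^2/(2-\delta)=\delta^2/2+\delta^3/4+\cdots$ strictly exceeds the exact Chernoff exponent $(1-\delta)\ln(1-\delta)+\delta=\delta^2/2+\delta^3/6+\cdots$ for small $\delta>0$. Only the final, weaker $e^{-\delta^2\mu/2}$ is valid.

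Your monotonicity argument proves exactly the needed inequality $(1-\delta)\ln(1-\delta)+\delta\ge\delta^2/2$, via $h(0)=0$ and $h'(\delta)=-\ln(1-\delta)-\delta\ge 0$. Your separate treatment of $\delta=1$, via $\P(S_N=0)=\prod_j(1-p_j)\le e^{-\mu}$, is also a real addition. Proposition~\ref{prop:multiplicative_chernoff} states the lower tail only for $\delta\in(0,1)$, so that endpoint is not otherwise covered.
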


\begin{proof}
    Using the logarithm inequality $\ln(1+\delta)\ge\frac{2\delta}{2+\delta}$, from Eqn.\ref{eq:multiplicative_chernoff_I}, we have:
    \begin{align*}
        \P(S_N\ge (1+\delta)\mu) &\le \left(\frac{e^\delta}{(1+\delta)^{1+\delta}}\right)^\mu= e^{\mu\delta}(1+\delta)^{-\mu(1+\delta)} \\
        &\le \exp\left(\mu\delta-\frac{2\mu\delta(1+\delta)}{2+\delta}\right) = e^{-\delta^2\mu/(2+\delta)}.
    \end{align*}
    \noindent Similarly, using the inequality $\ln(1-\delta)\ge\frac{2\delta}{2-\delta}$ for $\delta\in(0,1)$, from Eqn.~\eqref{eq:multiplicative_chernoff_II}, we have:
    \begin{align*}
        \P(S_N\le(1-\delta)\mu)&\le\left(\frac{e^{-\delta}}{(1-\delta)^{1-\delta}}\right)^\mu = e^{-\mu\delta}(1-\delta)^{-\mu(1-\delta)} \\
        &\le \exp\left(-\mu\delta + \frac{2\mu\delta(1-\delta)}{2-\delta}\right) = e^{-\delta^2\mu/(2-\delta)}\\
        &\le e^{-\delta^2\mu/2}.
    \end{align*}
    \noindent Finally, for $\delta\in(0,1)$, we have:
    \begin{align*}
        \P(|S_N-\mu|\ge\delta\mu) &\le \P(S_N-\mu\ge\delta\mu) + \P(S_N-\mu\le-\delta\mu) \\
        &\le e^{-\delta^2\mu/(2+\delta)} + e^{-\delta^2\mu/2} \\
        &\le 2e^{-\delta^2\mu/3}.
    \end{align*}
\end{proof}
\section{Concentration of U-Statistics \& V-Statistics}
\label{sec:UVstat}
In this section, we recall some classic results on U-statistics and V-statistics, which form key building blocks of our approach. 
One limitation of inequalities in \citet{article:hoeffding1948} or \citet[Theorem 2.8.4]{book:vershynin2018} is that the initial Chernoff bound can be decomposed elegantly to a product of moment generating functions thanks to the independence assumption. However, this approach is inapplicable to more complex sum structures involving dependence. To leverage the arguments of Theorem \ref{thm:hoeffding_second_thm}, we focus on the following specific class of sums of dependent random variables:
\begin{align}
    \label{eq:sums_of_dependent_variables}
    S = \sum_{i=1}^N \rho_i S_i, \textup{ where } \rho_i\ge0, \forall i\in[N]\textup{ and } \sum_{i=1}^N \rho_i=1,
\end{align}
\noindent where each $S_i$ is a \textbf{sum of (bounded) independent random variables} and these sums are \textbf{not mutually independent}. Suppose that for all $i\in[N]$, $S_i$ is a sum of $n_i$ independent random variables. Furthermore, for simplicity, suppose that all random variables in the sum $S_i$ have the same mean $\mu_i$ and $\E S_i = n_i\mu_i$, which means $\E S = \sum_{i=1}^N \rho_i n_i\mu_i$. By the Chernoff bound, for $t>0$:
\begin{align*}
    \P(S - \E S\ge \epsilon) &\le e^{-t\epsilon}\E \exp\Big(t(S-\E S)\Big) = e^{-t\epsilon}\E\exp\left(\sum_{i=1}^N \rho_i t(S_i - n_i\mu_i)\right) \\
        &\le e^{-t\epsilon}\E\Bigg[\sum_{i=1}^N \rho_i\exp\Big(t(S_i-n_i\mu_i)\Big)\Bigg] \qquad \textup{(Jensen's Inequality)} \\
        &= e^{-t\epsilon}\sum_{i=1}^N \rho_i \E\exp\Big(t(S_i-n_i\mu_i)\Big).
\end{align*}
\noindent Since for all $i\in[N]$, $S_i$ is a sum of independent random variables, the inequality in Theorem \ref{thm:hoeffding_second_thm} apply. Specifically, suppose that for $i\in[N]$, $S_i = \sum_{j=1}^{n_i} X_{ij}$, we have:
\begin{align*}
    \E\exp\left(t(S_i-n_i\mu_i)\right) &= \E\exp\left(
        \sum_{j=1}^{n_i}t(X_{ij} - \mu_i)
    \right) = \prod_{j=1}^{n_i} \E e^{t(X_{ij} - \mu_i)}.
\end{align*}
\noindent Therefore, for each $i\in[N]$, we can proceed to use the same arguments in Thm. \ref{thm:hoeffding_second_thm} to bound the expectation $\E e^{t(X_{ij} - \mu_i)}$. In the following section, we will consider some classes of statistics that takes similar form in Eqn.~\eqref{eq:sums_of_dependent_variables}.

\subsection{U-Statistics}
\textbf{One-sample U-Statistics}: Let $X_1, \dots, X_n$ be identically and independently distributed random variables on $\mathcal{X}$ and $g:\mathcal{X}^r\to\R$ be a function symmetric in its arguments, i.e., $g(x_1, \dots, x_r) = g(x_{i_1}, \dots, x_{i_r})$ for all $1\le i_1, \dots, i_r \le r$ and $i_j \ne i_k$ for all $j,k\in[r]$. The one-sample U-Statistic (of order $r$) used to estimate $\E g(X_1, \dots, X_r)$ is defined as:
\begin{equation}
    U_n(g) = \frac{1}{\binom{n}{r}}\sum_{i_1, \dots, i_r\in C_{n, r}} g(X_{i_1}, \dots, X_{i_r}),
\end{equation}
\noindent where $C_{n, r}$ denotes the set of all $r$-element tuples selected without replacement from the indices set $[n]$. Let $q=\lfloor n/r\rfloor$ and define $V(X_1, \dots, X_n)$ as follows:
\begin{align}
    \widetilde U(X_1, \dots, X_n) = \frac{1}{q}\sum_{j=1}^q g(X_{jr-r+1}, \dots, X_{jr}).
\end{align}
\noindent Then, clearly $\widetilde U(X_1, \dots, X_n)$ is an average over the values of $g$ applied on independent $r$-element blocks of the sequence of random variables $X_1, \dots, X_n$. Then, we can re-define $U_n(g)$ as follows:
\begin{align}
    U_n(g) &= \frac{1}{n!}\sum_{\pi\in \Pi[n]} \widetilde U(X_{\pi,1}, \dots, X_{\pi,n}), \quad 
    \Pi[n] = \Big\{\pi:[n]\to[n]\big| \pi \textup{ bijective}\Big\},
\end{align}
\noindent which is consistent with the form in Eqn.~\eqref{eq:sums_of_dependent_variables}. This decoupling technique was later rigorously utilized by later works \cite{article:arconesgine1993, book:pena1998} to derive concentration inequalities for U-Statistics. If $g(X_1, \dots, X_r)\in[a, b]$ with probability one for some $a,b\in\R$, applying Theorem \ref{thm:hoeffding_second_thm}, for all $\epsilon>0$, we have:
\begin{align}
    \P(U_n(g) - \E U_n(g) \ge \epsilon) \le e^{-2q\epsilon^2/(b-a)^2}.
\end{align}
\noindent\textbf{Two-sample (or Multi-sample) U-Statistics}: Let $X_1, \dots, X_n$ be a sample drawn i.i.d. from a distribution over $\mathcal{X}$ and $Y_1, \dots, Y_m$ be another sample drawn i.i.d. from another distribution over $\mathcal{X}$. Let $g:\mathcal{X}^{r+s}\to\R$ be a kernel that is symmetric in the first $r$ arguments and the subsequent $s$ arguments ($r\le n$ and $s\le m$). Then, we define the two-sample U-Statistic of orders $r$ and $s$ as follows:
\begin{align}
    U_{n,m}(g) = \frac{1}{\binom{n}{r}\times\binom{m}{s}}\sum_{\substack{i_1, \dots, i_r \in C_{n,r}\\j_1, \dots, j_s \in C_{m, s}}} g(X_{i_1}, \dots, X_{i_r}, Y_{j_1}, \dots, Y_{j_s}).
\end{align}
\noindent Let $q=\min(\lfloor n/r\rfloor, \lfloor m/s\rfloor)$. Using the same decoupling trick, we have:
\begin{align}
    \P(U_{n,m}(g) - \E U_{n,m}(g)\ge \epsilon) \le e^{-2q\epsilon^2/(b-a)^2},
\end{align}
\noindent if we know that $g(X_1, \dots, X_r, Y_1, \dots, Y_s)\in[a, b]$ with probability one for some $a,b\in \R$.

\subsection{V-Statistics}
\label{sec:VStats}
Let $X_1, \dots, X_n$ be identically and independently distributed random variables on $\mathcal{X}$ and $g:\mathcal{X}^r\to\R$ be a function symmetric in its arguments. The one-sample V-Statistics of order $r$ is defined as the average over all $r$-tuples sampled \emph{with replacement} from $X_1, \dots, X_n$:
\begin{align}
    V_{n,r}(g) &= \frac{1}{n^r}\sum_{i_1=1}^n \dots \sum_{i_r=1}^n g(X_{i_1}, \dots, X_{i_r}) \\
    &= \frac{1}{n^r} \sum_{t\in [n]^r} g({\bf X}_t),
\end{align}
\noindent where $[n]^r$ is the set of $r$-tuples sampled with replacement and we use ${\bf X}_t=(X_{t_1}, \dots, X_{t_r})$ for any tuple of indices $t=(t_1, \dots, t_n)$. The V-Statistics can be re-formulated as a U-Statistic, i.e., an average over the the $r$-tuples sampled \emph{without replacement} from $X_1, \dots, X_n$. Let $C_{n,r}$ denote the set of $r$-tuples selected with no replacement from the indices set $[n]$, we can re-write the V-Statistics as follows:
\begin{align}
    V_{n,r}(g) &= \frac{1}{\binom{n}{r}}\sum_{t\in C_{n,r}}\binom{n}{r}n^{-r}\sum_{b\in\mathrm{S}(t)} \omega_bg({\bf X}_b),
\end{align}
\noindent where $\mathrm{S}(t)$ is the set of $r$-tuples selected with replacement from $t\in C_{n, r}$ and $\omega_b=\binom{n-u(b)}{r-u(b)}^{-1}$ where $1\le u(b) \le r$ is the number of unique elements from $b$. The rescaling factor $\omega_b$ accounts for the re-appearance of $b$ in other tuples $t'\in C_{n, r}$. We can further break down $V_{n,r}(g)$ as follows:
\begin{align}
    V_{n, r}(g) = \frac{1}{\binom{n}{r}}\sum_{t\in C_{n,r}}\binom{n}{r}n^{-r}\sum_{u=1}^r \sum_{b\in\mathrm{S}_u(t)} \frac{1}{\binom{n-u}{r-u}}g({\bf X}_b),
\end{align}
\noindent where $\mathrm{S}_u(t) \subset \mathrm{S}(t)$ is the subset of tuples with $u$ unique elements. For $1\le u\le r$, we have:
\begin{align*}
    |\mathrm{S}_u(t)| = u!S(r, u)\binom{r}{u},
\end{align*}
\noindent where $S(r,u)$ is the Stirling number of the second kind. To justify the formula for the cardinality of $\mathrm{S}_u(t)$, we can select a $r$-tuple with $u$ unique elements from an original set of $r$ distinct elements as follows:
\begin{itemize}
    \item Select $u$ unique elements for the subset ($\binom{r}{u}$ ways).
    \item Among the selected unique elements, partition the $r$ positions of the tuple to $u$ bins, which correspond to the unique elements ($S(r, u)$ ways).
    \item Since the unique elements can be in any arbitrary order, there is an additional multiplicative factor of $u!$.
\end{itemize}

\noindent We can further verify that $\sum_{u=1}^r|S_u(t)|=r^r$, which is the total number of ways to sample from any $t\in C_{n, r}$ with replacement. Now, we define $g^*:\mathcal{X}^r\to\R$ as
\begin{align}
    \label{eq:V2U_kernel}
    \forall t\in C_{n,r}: g^*({\bf X}_t) = \binom{n}{r}n^{-r}\sum_{u=1}^r\sum_{b\in\mathrm{S}_u(t)} \frac{1}{\binom{n-u}{r-u}}g({\bf X}_b),
\end{align}

\noindent we can say that $V_{n, r}(g)$ is a U-Statistic with kernel $g^*$. Now, we would like to show that $g^*({\bf X}_t)$ is a weighted average of $g({\bf X}_b)$ for $b\in\mathrm{S}(t)$. In other words, we would like to show that $\sum_{u=1}^r\sum_{b\in\mathrm{S}_u(t)}\frac{\binom{n}{r}}{n^r\binom{n-u}{r-u}}=1$. We have:
\begin{equation}
\begin{aligned}
    \label{eq:sums_up_to_one}
    \sum_{u=1}^r\sum_{b\in\mathrm{S}_u(t)}\frac{1}{\binom{n-u}{r-u}} &= \sum_{u=1}^r |\mathrm{S}_u(t)|\cdot\frac{1}{\binom{n-u}{r-u}} = \sum_{u=1}^r u! S(r, u)\binom{r}{u}\cdot\frac{1}{\binom{n-u}{r-u}} \\
    &= \sum_{u=1}^r u!S(r, u)\frac{r!}{u!(r-u)!}\cdot\frac{(r-u)!(n-r)!}{(n-u)!} \\
    &= \sum_{u=1}^r S(r, u) \frac{r!(n-r)!}{(n-u)!} = \binom{n}{r}^{-1}\sum_{u=1}^r S(r, u)\frac{n!}{(n-u)!}.
\end{aligned}
\end{equation}
\noindent It is a well-known property of Stirling number that $\sum_{u=1}^r S(r, u)\frac{n!}{(n-u)!}=n^r$ (For example, see \citet{preprint:adell2024})\footnote{Similarly, $\sum_{u=1}^r|S_u(t)| = \sum_{u=1}^ru!S(r, u)\binom{r}{u}=\sum_{u=1}^rS(r, u)\frac{r!}{(r-u)!}=r^r$ for all $t\in C_{n, r}$, which is the number of ways to sample $r$ elements with replacement from a set with $t$ elements.}. Hence, we have $\sum_{u=1}^r\sum_{b\in\mathrm{S}_u(t)}\frac{1}{\binom{n-u}{r-u}}=\frac{n^r}{\binom{n}{r}}$, as desired.

\section{Classic Learning Theory Results}
\label{sec:SLTbasic}
\subsection{Rademacher Complexity}
\begin{definition}[Rademacher Complexity]
    Let $\mathcal{Z}$ be a vector space and $\mathcal{G}$ be a class of functions $g:\mathcal{Z}\to [a,b]$ where $a,b\in\R$ and $a < b$. Let $S=\{\z_1, \dots, \z_n\}$ be independent random variables such that $\z_i\sim P_i,\forall i\in[n]$. Then, the \emph{empirical Rademacher complexity} of $\mathcal{G}$ is defined as:
    \begin{equation}
        \ERC_S(\mathcal{G}) = \E_{\Rad_n}\left[
            \sup_{g\in\mathcal{G}}\frac{1}{n}\left|
                \sum_{i=1}^n \sigma_ig(\z_i)
            \right|
        \right],
    \end{equation}
    \noindent where $\Rad_n=(\sigma_1, \dots, \sigma_n)$ is a vector of $n$ independent Rademacher variables. Additionally, the \emph{expected Rademacher complexity} of $\mathcal{G}$ is defined as follows:
    \begin{equation}
        \RC_{\mathcal{Q}}(\mathcal{G}) = \E_S\left[\ERC_S(\mathcal{G})\right],
    \end{equation}
    \noindent where $\mathcal{Q}=\bigotimes_{i=1}^n P_i$ is the distribution of $S$. Intuitively, the Rademacher complexity is a measure of a function class' richness. If a class $\mathcal{G}$ is sufficiently diverse, there is a higher chance that given a random sequence of signs (represented by the sequence of Rademacher variables), we will be able to find a function $g\in\mathcal{G}$ that matches the signs. Hence, the Rademacher complexity will be large.
\end{definition}

\begin{lemma}
    \label{lem:rc_to_erc}
    Let $\mathcal{Z}$ be a vector space and $\mathcal{G}$ be a class of functions $g:\mathcal{Z}\to [a,b]$ where $a,b\in\R$ and $a < b$. Let $S=\{\z_1, \dots, \z_n\}$ be a dataset of independent random variables such that each $\z_i\sim P_i,\forall i\in[n]$. Then, for any $\delta\in(0,1)$, with probability of at least $1-\delta$, we have:
    \begin{equation}
        \RC_{\mathcal{Q}}(\mathcal{G}) \le \ERC_S(\mathcal{G}) + (b-a)\sqrt{\frac{\ln 1/\delta}{2n}},
    \end{equation}
    \noindent where $\mathcal{Q}=\bigotimes_{i=1}^n P_i$ is the distribution of the whole dataset $S$.
\end{lemma}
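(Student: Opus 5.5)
This is the standard bounded-differences argument applied to the empirical Rademacher complexity, viewed as a function of the sample $S=\{\z_1,\dots,\z_n\}$. Define
\[
\Phi(\z_1,\dots,\z_n):=\ERC_S(\mathcal{G})=\E_{\Rad_n}\Big[\sup_{g\in\mathcal{G}}\frac{1}{n}\Big|\sum_{i=1}^n\sigma_i g(\z_i)\Big|\Big].
\]
The $\z_i$ are independent, with $\z_i\sim P_i$, and $\E_S[\Phi]=\RC_{\mathcal{Q}}(\mathcal{G})$ by definition. So the claim is a one-sided lower-deviation bound for $\Phi$ around its mean.

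\textbf{Step 1: bounded differences.} I will show that $\Phi$ satisfies the bounded-differences property with constants $c_i=(b-a)/n$. Fix $i$ and replace $\z_i$ by some $\z_i'$; write $S'$ for the modified sample. For any fixed sign vector $\Rad_n$ and any $g$,
\[
\Big|\sum_j\sigma_j g(\z_j)\Big|-\Big|\sum_j\sigma_j g(\z'_j)\Big|\le|g(\z_i)-g(\z_i')|\le b-a,
\]
by the reverse triangle inequality, since only the $i$-th term differs. Taking the supremum over $g$ preserves this bound: $\sup A-\sup B\le\sup(A-B)$. Averaging over $\Rad_n$ then gives $|\Phi(S)-\Phi(S')|\le (b-a)/n$.

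\textbf{Step 2: McDiarmid's inequality.} Apply McDiarmid's bounded-differences inequality. It is not stated earlier in the paper, but it follows from the Hoeffding-lemma/Chernoff machinery of Theorem \ref{thm:hoeffding_second_thm} via the Doob martingale $\E[\Phi\mid\z_1,\dots,\z_i]$, whose increments lie in intervals of length $c_i$. This gives
\[
\P\big(\E\Phi-\Phi\ge\epsilon\big)\le\exp\Big(-\frac{2\epsilon^2}{\sum_i c_i^2}\Big)=\exp\Big(-\frac{2n\epsilon^2}{(b-a)^2}\Big).
\]
Setting the right-hand side equal to $\delta$ yields $\epsilon=(b-a)\sqrt{\ln(1/\delta)/(2n)}$. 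Hence, with probability at least $1-\delta$,
\[
\RC_{\mathcal{Q}}(\mathcal{G})\le\ERC_S(\mathcal{G})+(b-a)\sqrt{\frac{\ln 1/\delta}{2n}}.
\]

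\textbf{Main obstacle.} The only delicate point is the bounded-differences computation, and specifically that the absolute value inside the supremum is harmless. The reverse triangle inequality handles it. Beyond that, one should remark that McDiarmid only needs independence of the $\z_i$, not identical distribution, which matches the product measure $\mathcal{Q}$ here.
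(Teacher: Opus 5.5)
Your proposal is correct and matches the paper's proof: it shows that the empirical Rademacher complexity, viewed as a function of the sample, has bounded differences $(b-a)/n$, then applies the one-sided McDiarmid inequality and solves $\exp\left(-2n\epsilon^2/(b-a)^2\right)=\delta$ for $\epsilon$. Your reverse-triangle-inequality argument for the bounded-difference constant fills in a step the paper only asserts, and the paper invokes McDiarmid by citation rather than deriving it.
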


\begin{proof}
    Let $\phi:\mathcal{Z}^n \to \R_+$ be defined as $\phi(x_1, \dots, x_n) = \E_{\Rad_n}\left[\sup_{g\in\mathcal{G}}\frac{1}{n}\left|\sum_{j=1}^n \sigma_j g(x_j)\right|\right]$ where $x_i\in\mathcal{Z}$ for all $1\le i \le n$. Then, for all $1\le 1 \le n$, we have:
    \begin{align*}
        \sup_{x_i, x_i'\in\mathcal{Z}}\left|
            \phi(x_1, \dots, x_i, \dots, x_n) - \phi(x_1, \dots, x_i', \dots, x_n) 
        \right| \le \frac{b-a}{n}.
    \end{align*}
    \noindent Hence, by McDiarmid's inequality \cite{book:mcdiarmid1989}, for any $\epsilon>0$, we have:
    \begin{align*}
        \P\Big(\underbrace{\E\phi(\z_1,\dots, \z_n)}_{\RC_\mathcal{Q}(\mathcal{G})} - \underbrace{\phi(\z_1,\dots, \z_n)}_{\ERC_S(\mathcal{G})} \ge \epsilon\Big)
        &\le \exp\left(
            -\frac{2n\epsilon^2}{(b-a)^2}
        \right).
    \end{align*}
    \noindent Setting the right-hand-side to $\delta$, we have $\epsilon=(b-a)\sqrt{\frac{\ln 1/\delta}{2n}}$ and we obtained the desired bound.
\end{proof}

\begin{proposition}[Rademacher Complexity Bound]
    \label{prop:generic_rad_complexity_bound}
    Let $\mathcal{Z}$ be a vector space and $\mathcal{G}$ be a class of functions $g:\mathcal{Z}\to [a,b]$ where $a,b\in\R$ and $a < b$. Let $S=\{\z_1, \dots, \z_n\}$ be a dataset of independent variables such that $\z_i\sim P_i,\forall i\in[n]$. Then, for any $\delta\in(0,1)$, with probability of at least $1-\delta$:
    \begin{equation}
        \sup_{g\in\mathcal{G}} \left|\Theta(g) - \frac{1}{n}\sum_{i=1}^n g(\z_i)\right| \le 2\ERC_S(\mathcal{G}) + 3(b-a)\sqrt{\frac{\ln 4/\delta}{2n}}.
    \end{equation}
    \noindent where $\Theta(g)=\frac{1}{n}\sum_{i=1}^m \mathbb{E}_{\z\sim P_i}[g(\z)]$.
\end{proposition}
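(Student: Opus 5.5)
The plan is the standard three-step route: a bounded-differences argument for the supremum, symmetrization to pass to the expected Rademacher complexity, and Lemma~\ref{lem:rc_to_erc} to pass to the empirical one. Throughout, $\Theta(g)$ is read as $\frac{1}{n}\sum_{i=1}^n\E_{\z\sim P_i}[g(\z)]$, i.e. the mean of the empirical average.

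\textbf{Step 1 (concentration of the supremum).} Set
\[
\Phi(\z_1,\dots,\z_n):=\sup_{g\in\mathcal{G}}\Big|\Theta(g)-\frac{1}{n}\sum_{i=1}^n g(\z_i)\Big|.
\]
Replacing a single $\z_i$ by $\z_i'$ changes each inner quantity by at most $(b-a)/n$. Since a supremum of absolute values is $1$-Lipschitz with respect to these perturbations, $\Phi$ has bounded differences $(b-a)/n$. McDiarmid's inequality, used exactly as in the proof of Lemma~\ref{lem:rc_to_erc}, then gives with probability at least $1-\delta/2$:
\[
\Phi\le\E\Phi+(b-a)\sqrt{\frac{\ln 2/\delta}{2n}}.
\]

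\textbf{Step 2 (symmetrization).} Introduce a ghost sample $\z_1',\dots,\z_n'$, independent of $S$, with $\z_i'\sim P_i$. Then $\Theta(g)=\E_{S'}\big[\frac{1}{n}\sum_i g(\z_i')\big]$. Moving the supremum and absolute value inside $\E_{S'}$ by Jensen gives
\[
\E\Phi\le\E_{S,S'}\sup_g\Big|\frac{1}{n}\sum_i\big(g(\z_i')-g(\z_i)\big)\Big|.
\]
For each $i$, the pair $(\z_i,\z_i')$ is exchangeable, since both coordinates are independent draws from the same $P_i$. This holds even though the $P_i$ differ across $i$. Hence inserting independent signs $\sigma_i$ does not change the law of the summands. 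The triangle inequality then splits the result into two identical terms, so $\E\Phi\le 2\RC_{\mathcal{Q}}(\mathcal{G})$, with the absolute value inside the supremum matching the paper's definition of Rademacher complexity.

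\textbf{Step 3 (empirical complexity and union bound).} Apply Lemma~\ref{lem:rc_to_erc} with confidence $\delta/2$:
\[
\RC_{\mathcal{Q}}(\mathcal{G})\le\ERC_S(\mathcal{G})+(b-a)\sqrt{\frac{\ln 2/\delta}{2n}}.
\]
A union bound over the events in Steps~1 and~3 shows that, with probability at least $1-\delta$,
\[
\Phi\le 2\ERC_S(\mathcal{G})+3(b-a)\sqrt{\frac{\ln 2/\delta}{2n}}.
\]
Since $\ln 2/\delta\le\ln 4/\delta$, this yields the stated bound.

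The only delicate point is Step~2. One must handle the non-identically distributed sample, which is resolved by exchangeability within each pair $(\z_i,\z_i')$ rather than across indices. One must also justify moving the absolute value and supremum past the ghost expectation, which follows from convexity of $x\mapsto\sup_g|\cdot|$.
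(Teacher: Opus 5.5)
Your proof is correct and follows the same route as the paper: McDiarmid on the supremum, then symmetrization, then Lemma~\ref{lem:rc_to_erc}, then a union bound. The only difference is that you apply McDiarmid directly to the two-sided supremum, which has the same bounded differences $(b-a)/n$; the paper instead treats each one-sided supremum separately and pays an extra union bound, which is why you obtain $\ln(2/\delta)$ before relaxing to the stated $\ln(4/\delta)$.
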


\begin{proof}
    Let $\phi:\mathcal{Z}^n\to\R$ be defined as $\phi(x_1, \dots, x_n) =\sup_{g\in\mathcal{G}}\left[\Theta(g) - \frac{1}{n}\sum_{i=1}^n g(x_i)\right]$ and $\mathcal{Q}=\bigotimes_{i=1}^nP_i$, i.e., the distribution of $S$. Using McDiarmid's inequality, for all $\Delta\in(0,1)$, the following inequality holds with probability of at least $1-\Delta/2$:
    \begin{align*}
        \phi(\z_1, \dots, \z_n) &\le \E_S\phi(\z_1, \dots, \z_n) + (b-a)\sqrt{\frac{\ln 2/\Delta}{2n}} \\
        &\le 2\RC_\mathcal{Q}(\mathcal{G}) + (b-a)\sqrt{\frac{\ln 2/\Delta}{2n}}. \qquad \textup{(Symmetrization - Lemma \ref{lem:symmetrization_inequality})}
    \end{align*}
    \noindent Furthermore, we have $\RC_\mathcal{Q}(\mathcal{G}) \le \ERC_S(\mathcal{G}) + (b-a)\sqrt{\frac{\ln 2/\Delta}{2n}}$ with probability of at least $1-\Delta/2$ (Lemma \ref{lem:rc_to_erc}). Hence, by the union bound, with probability of at least $1-\Delta$, we have:
    \begin{align*}
        \sup_{g\in\mathcal{G}}\left[\Theta(g) - \frac{1}{n}\sum_{i=1}^n g(x_i)\right] \le  2\ERC_S(\mathcal{G}) + 3(b-a)\sqrt{\frac{\ln 2/\Delta}{2n}}.
    \end{align*}
    \noindent Let $\phi(x_1, \dots, x_n) =\sup_{g\in\mathcal{G}}\left[\frac{1}{n}\sum_{i=1}^n g(x_i)-\Theta(g)\right]$ and repeat the above argument, we have the inequality in the other direction with probability of at least $1-\Delta$. Hence, by the union bound, with probability of at least $1-2\Delta$, we have the following two-sided inequality:
    \begin{align*}
        \sup_{g\in\mathcal{G}} \left|\Theta(g) - \frac{1}{n}\sum_{i=1}^n g(\z_i)\right| \le 2\ERC_S(\mathcal{G}) + 3(b-a)\sqrt{\frac{\ln 2/\Delta}{2n}}.
    \end{align*}
    \noindent Setting $\Delta = \delta/2$ completes the proof.
\end{proof}

\subsection{Massart Lemma \& Dudley's Entropy Integral}
\begin{lemma}[Massart's Finite Lemma]
    \label{lem:massart}
    Given $S = \{\z_1, \dots, \z_n\}$ be a sample of independent random variables. Let $\mathcal{G}$ be a finite function class. Then, we have:
    \begin{equation}
        \E_{\Rad_n}\left[
            \sup_{g\in\mathcal{G}} \frac{1}{n}\left|
                \sum_{i=1}^n \sigma_i g(\z_i)
            \right|
        \right] \le B\sqrt{\frac{2\ln 2|\mathcal{G}|}{n}},
    \end{equation}
    \noindent where $\Rad_n=(\sigma_j)_{j\in[n]}$ are independent Rademacher variables, $B = \sup_{g\in\mathcal{G}}\left(\frac{1}{n}\sum_{i=1}^n |g(\z_i)|^2\right)^{1/2}$.
\end{lemma}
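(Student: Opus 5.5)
This is Massart's finite class lemma, and I would prove it with the standard moment-generating-function argument conditioned on the sample. Throughout, $S=\{\z_1,\dots,\z_n\}$ is fixed, so the only randomness is in $\Rad_n$.

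The first step is to remove the absolute value. Define the symmetrized class $\mathcal{G}^{\pm}:=\mathcal{G}\cup(-\mathcal{G})$, which has at most $2|\mathcal{G}|$ elements. Then
\begin{align*}
\sup_{g\in\mathcal{G}}\Big|\sum_{i=1}^n\sigma_i g(\z_i)\Big| = \sup_{h\in\mathcal{G}^{\pm}}\sum_{i=1}^n\sigma_i h(\z_i),
\end{align*}
and each $h\in\mathcal{G}^{\pm}$ still satisfies $\sum_i h(\z_i)^2\le nB^2$. This is where the $2|\mathcal{G}|$ inside the logarithm comes from.

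The second step is to fix $\lambda>0$ and write $Z:=\sup_{h\in\mathcal{G}^{\pm}}\sum_i\sigma_i h(\z_i)$. By Jensen's inequality,
\begin{align*}
\exp\big(\lambda\E_{\Rad_n}Z\big)\le\E_{\Rad_n} e^{\lambda Z}\le\sum_{h\in\mathcal{G}^{\pm}}\E_{\Rad_n}\exp\Big(\lambda\sum_{i=1}^n\sigma_i h(\z_i)\Big).
\end{align*}
The Rademacher variables are independent, so each expectation factorizes into $\prod_i\cosh(\lambda h(\z_i))$. The elementary bound $\cosh x\le e^{x^2/2}$ (equivalently, Hoeffding's lemma applied to a variable in $[-|h(\z_i)|,|h(\z_i)|]$, as in the proof of Theorem \ref{thm:hoeffding_second_thm}) then gives
\begin{align*}
\prod_{i=1}^n\cosh(\lambda h(\z_i))\le\exp\Big(\frac{\lambda^2}{2}\sum_{i=1}^n h(\z_i)^2\Big)\le e^{\lambda^2 nB^2/2}.
\end{align*}

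The third step takes logarithms, which yields
\begin{align*}
\E_{\Rad_n}Z\le\frac{\ln(2|\mathcal{G}|)}{\lambda}+\frac{\lambda nB^2}{2}.
\end{align*}
Choosing $\lambda=\sqrt{2\ln(2|\mathcal{G}|)/(nB^2)}$ makes the right-hand side equal to $B\sqrt{2n\ln(2|\mathcal{G}|)}$. Dividing by $n$ gives the claimed bound $B\sqrt{2\ln(2|\mathcal{G}|)/n}$.

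No step here is a real obstacle. The only care needed is to handle the absolute value through $\mathcal{G}^{\pm}$ rather than a union over two tails, so that the constant matches the $\ln 2|\mathcal{G}|$ in the statement, and to note that $B$ is computed from the empirical norms. The degenerate case $B=0$ is trivial, since then every $g$ vanishes on $S$.
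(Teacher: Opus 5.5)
Your proof is correct and is essentially the paper's argument. The paper handles the absolute value by bounding $\exp(\lambda\max(\theta_h,-\theta_h))\le e^{\lambda\theta_h}+e^{-\lambda\theta_h}$, which is the same as your sum over $\mathcal{G}\cup(-\mathcal{G})$. It then uses the same Jensen step, factorization over independent signs, the bound $\cosh x\le e^{x^2/2}$, and the same optimizing choice of $\lambda$.
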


\begin{proof}
    For each $h\in\mathcal{G}$, we denote $\theta_h = \frac{1}{n}{\sum_{i=1}^n \sigma_i g(\z_i)}$. Then, for $\lambda>0$, we have:
    \begin{align*}
        \lambda\E_{\Rad_n}\left[
            \sup_{h\in\mathcal{G}} \frac{1}{n}\left|
                \sum_{i=1}^n \sigma_i g(\z_i)
            \right|
        \right] &= \lambda\E_{\Rad_n}\left[\max_{h\in\mathcal{G}} |\theta_h|\right] \quad (\sup \to \max \text{ due to finite }\mathcal{G})\\
        &= \ln\exp \lambda \E_{\Rad_n}\left[\max_{h\in\mathcal{G}} |\theta_h|\right] \\
        &= \ln\exp \lambda \E_{\Rad_n}\left[\max_{h\in\mathcal{G}} \max(\theta_h, -\theta_h)\right] \\
        &\le \ln\E_{\Rad_n}\exp\left(
            \lambda \max_{h\in\mathcal{G}}\max(\theta_h, -\theta_h)
        \right) \quad (\text{Jensen's Inequality}) \\
        &= \ln \E_{\Rad_n} \max_{h\in\mathcal{G}} \exp\left(
            \lambda \max(\theta_h, -\theta_h)
        \right) \quad (\exp(\lambda x) \text{ is increasing}) \\
        &\le \ln\sum_{h\in\mathcal{G}}\E_{\Rad_n}\left[
            \exp(\lambda\theta_h) + \exp(-\lambda\theta_h)
        \right] \quad (\exp\max \le \exp \ \mathrm{sum}) \\
        &= \ln 2\sum_{h\in\mathcal{G}}\E_{\Rad_n}\left[\exp(\lambda\theta_h)\right] \quad (\text{By symmetry of } \sigma) \\
        &= \ln 2\sum_{h\in\mathcal{G}} \prod_{i=1}^n \E_{\Rad_n}\left[\exp\left(\frac{\lambda}{n}\sigma_ig(\z_i)\right)\right] \quad (\text{$\sigma_i$ are independent}) \\
        &= \ln 2\sum_{h\in\mathcal{G}} \prod_{i=1}^n \frac{\exp\left(-\frac{\lambda}{n}g(\z_i)\right) + \exp\left(\frac{\lambda}{n}g(\z_i)\right)}{2}.
    \end{align*}
    \noindent Using the inequality $e^x + e^{-x} \le 2e^{\frac{x^2}{2}}$, we have:
    \begin{align*}
        \lambda\E_{\Rad_n}\left[
            \sup_{h\in\mathcal{G}} \frac{1}{n}\left|
                \sum_{i=1}^n \sigma_i g(\z_i)
            \right|
        \right] &\le \ln 2\sum_{h\in\mathcal{G}}\prod_{i=1}^n \exp\left(
            \frac{\lambda^2g(\z_i)^2}{2n^2}
        \right) \\
        &= \ln 2 \sum_{h\in\mathcal{G}}\exp\left(
            \frac{\lambda^2\sum_{i=1}^n g(\z_i)^2}{2n^2}
        \right) \\
        &\le \ln 2|\mathcal{G}|\cdot \exp\left(
            \frac{\lambda^2 B^2}{2n}
        \right) \\
        &= \ln2|\mathcal{G}| + \frac{\lambda^2B^2}{2n}.
    \end{align*}
    \noindent From the above, let $\lambda = B^{-1}\sqrt{2n\ln 2|\mathcal{G}|}$ and we obtain the desired bound.
\end{proof}

\begin{theorem}[Dudley's Entropy Integral]
    \label{thm:dudley}
    Let $\mathcal{G}$ be a real-valued function class and the dataset $S=\{\z_1, \dots, \z_n\}$ consists of independent random variables. Then, we have:
    \begin{equation}
        \ERC_S(\mathcal{G}) \le \inf_{\alpha>0}\left(
            4\alpha + 12\int_\alpha^B \sqrt{\frac{\ln2\mathcal{N}(\mathcal{G}, \epsilon, \Lnorm_2(S))}{n}}d\epsilon
        \right),
    \end{equation}
    \noindent where $B = \sup_{g\in\mathcal{G}}\left(\frac{1}{n}\sum_{i=1}^n g(\z_i)^2\right)^{1/2}$ and $\ERC_S(\mathcal{G})$ is the empirical Rademacher complexity of the function class $\mathcal{G}$ (given the dataset $S$).
\end{theorem}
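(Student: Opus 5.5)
This is the classical Dudley chaining argument, carried out with respect to the empirical $\Lnorm_2(S)$ metric; Massart's finite lemma (Lemma \ref{lem:massart}) is the only probabilistic input.

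\textbf{Setting up the covers.} Fix $\alpha>0$ and set $\epsilon_j := 2^{-j}B$ for $j\ge 0$. Let $N$ be the largest index with $\epsilon_N>2\alpha$ (if no such index exists, the bound holds trivially). For each $j$, take a minimal $\epsilon_j$-cover $\mathcal{C}_j$ of $\mathcal{G}$ in $\Lnorm_2(S)$. For $j=0$ the singleton $\{0\}$ suffices, because every $g$ has $\|g\|_{\Lnorm_2(S)}\le B$. For each $g\in\mathcal{G}$, let $\pi_j(g)\in\mathcal{C}_j$ be a point with $\|g-\pi_j(g)\|_{\Lnorm_2(S)}\le\epsilon_j$.

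\textbf{Telescoping.} Write
\[
g=(g-\pi_N(g))+\sum_{j=1}^N(\pi_j(g)-\pi_{j-1}(g)).
\]
Take absolute values, the supremum over $\mathcal{G}$, and the expectation over $\Rad_n$.

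The residual term $g-\pi_N(g)$ is controlled by Cauchy--Schwarz:
\[
\frac1n\Big|\sum_i\sigma_i\big(g-\pi_N(g)\big)(\z_i)\Big|\le\|g-\pi_N(g)\|_{\Lnorm_2(S)}\le\epsilon_N.
\]

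Each increment $\pi_j(g)-\pi_{j-1}(g)$ ranges over a finite class of size at most $|\mathcal{C}_j||\mathcal{C}_{j-1}|\le|\mathcal{C}_j|^2$. Its $\Lnorm_2(S)$ norm is at most $\epsilon_j+\epsilon_{j-1}=3\epsilon_j$. Lemma \ref{lem:massart} then gives, for the $j$-th term,
\[
3\epsilon_j\sqrt{\frac{2\ln\big(2|\mathcal{C}_j|^2\big)}{n}}\le 6\epsilon_j\sqrt{\frac{\ln 2\mathcal{N}(\mathcal{G},\epsilon_j,\Lnorm_2(S))}{n}},
\]
up to the bookkeeping of the constant inside the logarithm. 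That bookkeeping uses $\ln(2M^2)\le 2\ln(2M)$.

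\textbf{Converting the sum to an integral.} Since $\epsilon_j=2(\epsilon_j-\epsilon_{j+1})$ and the covering number is nonincreasing in $\epsilon$,
\[
\epsilon_j\sqrt{\ln 2\mathcal{N}(\epsilon_j)}\le 2\int_{\epsilon_{j+1}}^{\epsilon_j}\sqrt{\ln 2\mathcal{N}(\epsilon)}\,d\epsilon.
\]
Summing over $j=1,\dots,N$ gives a total of at most $12\int_{\epsilon_{N+1}}^{B}\sqrt{\ln2\mathcal{N}/n}\,d\epsilon$.

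By the choice of $N$, we have $\epsilon_{N+1}\le 2\alpha$, and hence $\epsilon_N\le 4\alpha$. Also $\epsilon_{N+1}>\alpha$, so the lower limit of the integral can be lowered to $\alpha$. The final bound is therefore $4\alpha+12\int_\alpha^B\sqrt{\ln 2\mathcal{N}/n}\,d\epsilon$. Taking the infimum over $\alpha$ finishes the argument.

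\textbf{Main obstacle.} The only delicate part is the constant bookkeeping: making the product cover $|\mathcal{C}_j||\mathcal{C}_{j-1}|$ and the factor $2$ inside Massart's logarithm fit under the stated constant $12$, using the monotonicity of $\mathcal{N}$. Everything else is routine.
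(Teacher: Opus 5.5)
Your proposal is correct and follows the same route as the paper, which only cites the standard dyadic chaining argument in $\Lnorm_2(S)$ combined with the absolute-value Massart Lemma \ref{lem:massart}; your bookkeeping via $\ln(2M^2)\le 2\ln(2M)$ and $\epsilon_j=2(\epsilon_j-\epsilon_{j+1})$ gives exactly the constants $4$ and $12$. One small caveat concerns your ``trivial'' case: it is trivial only for $B/2\le\alpha\le B$ (where $\ERC_S(\mathcal{G})\le B\le 2\alpha$ and the integral is nonnegative), whereas for $\alpha>B$ the integral $\int_\alpha^B$ is strictly negative because the integrand is at least $\sqrt{\ln 2/n}$, and the inequality can then actually fail (e.g.\ $n=1$, $\mathcal{G}=\{g_0\}$, $\alpha=2B$), so the infimum must be read over $\alpha\in(0,B]$ --- this is a defect of the statement, not of your chaining.
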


\begin{proof}
    The above result is derived by a standard chaining argument (See, for example, \citet[Proposition 22]{article:ledent2021norm} or \citet[Lemma A.5]{article:bartlett2017}). Then, apply Massart's finite lemma. The difference is that instead of using the standard Massart's lemma for the regular notion of Rademacher Complexity (without absolute value), we apply Lemma \ref{lem:massart}.
\end{proof}

\begin{lemma}{(\citet{article:hieu2024})}
    \label{lem:conversion_of_covering_number}
    Let $\mathcal{F}$ be a class of representation functions $f:\X\to\R^d$ and $\ell:\R^k\to\R_+$ be a contrastive loss function that is $\ell^\infty$-Lipschitz with constant $\eta>0$. Let $S=\Big\{(\x_j,\x_j^+, \x_{j1:k}^-)\Big\}_{j=1}^m$ be a set of tuples. Then, we have:
    \begin{align}
        \ln\mathcal{N}\Big(\mathcal{H}, \epsilon, \Lnorm_2(S)\Big) &\le \ln\mathcal{N}\Bigg(\mathcal{F}, \frac{\epsilon}{4\eta\Gamma}, \Lnorm_{\infty,2}(\widetilde S)\Bigg),\\
        \Gamma&=\sup_{f\in\mathcal{F}}\sup_{\x\in\widetilde S}\|f(\x)\|_2.
    \end{align}
    \noindent where $\widetilde S$ is the set of all vectors, including anchors, positive and negative samples in $S$. 
\end{lemma}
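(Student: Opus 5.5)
The plan is to show that the map $f\mapsto \ell_{\phi,f}$ is Lipschitz from $(\mathcal{F},\Lnorm_{\infty,2}(\widetilde S))$ into $(\mathcal{H},\Lnorm_2(S))$ with constant $4\eta\Gamma$. Then any cover of $\mathcal{F}$ at scale $\epsilon/(4\eta\Gamma)$ pushes forward to an $\epsilon$-cover of $\mathcal{H}$ of the same cardinality, and taking logarithms gives the claim. Here $\Lnorm_{\infty,2}(\widetilde S)$ denotes the metric $\|f-\bar f\|_{\Lnorm_{\infty,2}(\widetilde S)}=\max_{\x\in\widetilde S}\|f(\x)-\bar f(\x)\|_2$. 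As in the paper's table of complexity measures, we take the cover $\mathcal{C}\subseteq\mathcal{F}$ to be internal. This ensures every cover element $\bar f$ also satisfies $\sup_{\x\in\widetilde S}\|\bar f(\x)\|_2\le\Gamma$, which the Lipschitz estimate needs.

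\textbf{Step 1: a pointwise bound for a fixed tuple.} Fix $f,\bar f\in\mathcal{F}$ and a tuple $t=(\x,\x^+,\x^-_{1:k})\in S$, and let $\delta:=\|f-\bar f\|_{\Lnorm_{\infty,2}(\widetilde S)}$. Since $\phi$ is $\ell^\infty$-Lipschitz with constant $\eta$,
\[
|\ell_{\phi,f}(t)-\ell_{\phi,\bar f}(t)|\le \eta\max_{i\in[k]}\Big|f(\x)^\top[f(\x^+)-f(\x_i^-)]-\bar f(\x)^\top[\bar f(\x^+)-\bar f(\x_i^-)]\Big|.
\]
For each inner product we add and subtract a cross term, for instance
\[
f(\x)^\top f(\x^+)-\bar f(\x)^\top\bar f(\x^+)=(f(\x)-\bar f(\x))^\top f(\x^+)+\bar f(\x)^\top(f(\x^+)-\bar f(\x^+)).
\]
By Cauchy--Schwarz this is at most $2\Gamma\delta$, and the negative-sample term is bounded in the same way. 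The triangle inequality then gives $|\ell_{\phi,f}(t)-\ell_{\phi,\bar f}(t)|\le 4\eta\Gamma\delta$.

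\textbf{Step 2: pass to the empirical norm.} Because the bound of Step 1 is uniform over tuples, $\|\ell_{\phi,f}-\ell_{\phi,\bar f}\|_{\Lnorm_2(S)}\le\max_{t\in S}|\ell_{\phi,f}(t)-\ell_{\phi,\bar f}(t)|\le 4\eta\Gamma\delta$.

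\textbf{Step 3: transfer the cover.} Take a minimal $\frac{\epsilon}{4\eta\Gamma}$-cover $\mathcal{C}\subseteq\mathcal{F}$ of $\mathcal{F}$ in $\Lnorm_{\infty,2}(\widetilde S)$. By Step 2, $\{\ell_{\phi,\bar f}:\bar f\in\mathcal{C}\}$ is an $\epsilon$-cover of $\mathcal{H}$ in $\Lnorm_2(S)$, which yields the inequality.

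The only delicate point is the use of $\Gamma$ for $\bar f$ in Step 1. If one insisted on external covers, one would first convert to an internal cover at half the scale. This changes only the constant, and it is avoided here by the convention that covers are subsets of $\mathcal{F}$.
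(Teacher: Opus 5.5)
Your proposal is correct and is the standard argument for this lemma. You show that $f\mapsto\ell_{\phi,f}$ is $4\eta\Gamma$-Lipschitz from $\Lnorm_{\infty,2}(\widetilde S)$ to $\Lnorm_2(S)$, using the $\ell^\infty$-Lipschitzness of $\phi$, an add-and-subtract Cauchy--Schwarz bound on the inner products, and the bound of the $\Lnorm_2(S)$ distance by the maximum over tuples; the paper itself gives no proof and simply imports the lemma from \citet{article:hieu2024}.

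You are right to flag that internal covers, which match the convention in the paper's table of complexity measures, are what give $\sup_{\x\in\widetilde S}\|\bar f(\x)\|_2\le\Gamma$ for the cover elements. With external covers the argument would only lose a constant factor.
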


\subsection{Symmetrization Inequality}
\begin{lemma}[Symmetrization]
    \label{lem:symmetrization_inequality}
    Let $S=\{\z_1, \dots, \z_n\}$ be a sample of independent random variables such that $\z_i\sim P_i,\forall i\in[n]$. Let $\mathcal{G}$ denote a function class. Then, for any real-valued non-decreasing function $\varphi$, we have:
    \begin{equation}
        \E_S\varphi\left[
            \sup_{g\in\mathcal{G}}\left|
                \frac{1}{n}\sum_{i=1}^n g(\z_i) - \Theta(g)
            \right|
        \right] \le \E_{S, \Rad_n}\varphi\left[2\mathcal{R}_\mathcal{G}^{S, \Rad_n}\right],
    \end{equation}
    \noindent where $\Rad_n=(\sigma_j)_{j\in[n]}$ are independent Rademacher variables and $\Theta(g)=\frac{1}{n}\sum_{i=1}^m \mathbb{E}_{\z\sim P_i}[g(\z)]$.
\end{lemma}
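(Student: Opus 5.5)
This is the standard symmetrization inequality; I will prove it by a ghost-sample argument followed by a sign-flip argument. Write $\Theta(g)=\frac{1}{n}\sum_{i=1}^n\E_{\z_i'\sim P_i}[g(\z_i')]$, where $S'=\{\z_1',\dots,\z_n'\}$ is an independent copy of $S$ with $\z_i'\sim P_i$.

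\textbf{Step 1 (ghost sample).} For each fixed $g$, the bias term can be written as
\[
\frac1n\sum_i g(\z_i)-\Theta(g)=\E_{S'}\Big[\frac1n\sum_i\big(g(\z_i)-g(\z_i')\big)\,\Big|\,S\Big].
\]
Taking absolute values, using $|\E[\cdot]|\le\E|\cdot|$, and then $\sup\E\le\E\sup$, gives
\[
\sup_{g\in\mathcal{G}}\Big|\frac1n\sum_i g(\z_i)-\Theta(g)\Big|\le \E_{S'}\Big[\sup_{g\in\mathcal{G}}\Big|\frac1n\sum_i\big(g(\z_i)-g(\z_i')\big)\Big|\,\Big|\,S\Big].
\]
Since $\varphi$ is non-decreasing, we may apply $\varphi$ to both sides. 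We then need to move $\varphi$ inside $\E_{S'}$. That requires Jensen's inequality, so here I would add the assumption that $\varphi$ is convex. This is harmless for this paper: every use (Lemma~\ref{lem:supporting_lemma_6}, with $\varphi(x)=e^{\lambda x}$) involves a convex non-decreasing $\varphi$. Taking $\E_S$, we obtain
\[
\E_S\varphi(\cdot)\le \E_{S,S'}\varphi\Big(\sup_{g\in\mathcal{G}}\Big|\frac1n\sum_i\big(g(\z_i)-g(\z_i')\big)\Big|\Big).
\]

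\textbf{Step 2 (sign flips).} For each $i$, the pair $(\z_i,\z_i')$ is exchangeable, and the pairs are independent across $i$. Hence for any fixed signs $\sigma\in\{\pm1\}^n$, the joint law of $\big(g(\z_i)-g(\z_i')\big)_{i,g}$ equals that of $\big(\sigma_i(g(\z_i)-g(\z_i'))\big)_{i,g}$. Averaging over independent Rademacher signs $\Rad_n$ therefore leaves the right-hand side unchanged.

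\textbf{Step 3 (split and average).} By the triangle inequality,
\[
\sup_g\Big|\frac1n\sum_i\sigma_i\big(g(\z_i)-g(\z_i')\big)\Big|\le \mathcal{R}_\mathcal{G}^{S,\Rad_n}+\mathcal{R}_\mathcal{G}^{S',\Rad_n}.
\]
Monotonicity of $\varphi$ lets us pass this bound through $\varphi$. Convexity then gives $\varphi(a+b)\le\frac12\varphi(2a)+\frac12\varphi(2b)$. Because $S$ and $S'$ have the same distribution, the two resulting terms have equal expectation, and their sum is $\E_{S,\Rad_n}\varphi\big(2\mathcal{R}_\mathcal{G}^{S,\Rad_n}\big)$, which is the claim.

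\textbf{Main obstacle.} The only delicate point is the hypothesis on $\varphi$. Convexity is used twice: once to pass $\varphi$ inside the conditional expectation over $S'$ (Step 1), and once in the midpoint step (Step 3). Mere monotonicity does not seem sufficient for either. I would therefore state the lemma for convex non-decreasing $\varphi$, which covers every application in the paper.
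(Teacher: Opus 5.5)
Your proof is correct and follows the paper's argument step for step: a ghost sample plus Jensen, sign flips justified by the exchangeability of each pair $(\z_i,\z_i')$, then the triangle inequality and the midpoint bound $\varphi(a+b)\le\frac12\varphi(2a)+\frac12\varphi(2b)$. Your added convexity hypothesis is exactly what the paper's proof silently uses (it cites Jensen's inequality at the same two places), and it is genuinely necessary: for $n=1$, $\mathcal{G}=\{z\mapsto z\}$, $z\sim\mathrm{Bern}(1/4)$ and the non-decreasing $\varphi(x)=\1{x\ge 1/4}$, the left side equals $1$ while the right side equals $1/4$.
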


\begin{proof} 
    Denote $\E_S$ as the expectation taken over $S$. We introduce another sample $S' = \{\z_1', \dots, \z_n'\}$, called the ``ghost" sample, such that $\z_i'$ is identically distributed as (and independent of) $\z_i$ for all $i\in[n]$. Then, we can write $\Theta(g)$ as $\Theta(g) = \mathbb{E}_{S'}\Big[\frac{1}{n}\sum_{i=1}^n g(\z_i')\Big].$
    \noindent Then, we have:
    \begin{align*}
        \E_S\varphi\left[
            \sup_{g\in\mathcal{G}}\left|
                \frac{1}{n}\sum_{i=1}^n g(\z_i) - \Theta(g)
            \right|
        \right]
        &= \E_S\varphi\Bigg[
            \sup_{g\in\mathcal{G}}\Bigg|\frac{1}{n}\sum_{i=1}^n g(\z_i) - \mathbb{E}_{S'}\Bigg[\frac{1}{n}\sum_{i=1}^n g(\z_i')\Bigg]\Bigg|
        \Bigg] \\
        &= \E_S\varphi\Bigg[
            \sup_{g\in\mathcal{G}}\Bigg|\mathbb{E}_{S'}\Bigg[\frac{1}{n}\sum_{i=1}^n g(\z_i) - \frac{1}{n}\sum_{i=1}^n g(\z_i')\Bigg]\Bigg|
        \Bigg] \\
        &\le \E_{S,S'}\varphi\Bigg[
            \sup_{g\in\mathcal{G}}\Bigg|\frac{1}{n}\sum_{i=1}^n \Big(g(\z_i)-g(\z_i')\Big)\Bigg|
        \Bigg] \qquad \textup{(Jensen's Ineq.)}
    \end{align*}
    \noindent Now, since for all $i\in[n]$, $\z_i$ and $\z_i'$ are identically distributed. Then, let $\sigma_i$ be a Rademacher random variable, $g(\z_i)-g(\z_i')$, $g(\z_i')-g(\z_i)$ and $\sigma_i\Big(g(\z_i)-g(\z_i')\Big)$ are all identically distributed random variables. Hence, let $\Rad_n=(\sigma_j)_{j\in[n]}$ be independent Rademacher variables, we have:
    \begin{align*}
        \E_S\varphi\left[
            \sup_{g\in\mathcal{G}}\left|
                \frac{1}{n}\sum_{i=1}^n g(\z_i) - \Theta(g)
            \right|
        \right]
        &\le \E_{S,S'}\varphi\Bigg[
            \sup_{g\in\mathcal{G}}\Bigg|\frac{1}{n}\sum_{i=1}^n \Big(g(\z_i)-g(\z_i')\Big)\Bigg|
        \Bigg] \\
        &= \E_{S,S',\Rad_n}\varphi\Bigg[
            \sup_{g\in\mathcal{G}}\Bigg|\frac{1}{n}\sum_{i=1}^n \sigma_i\Big(g(\z_i)-g(\z_i')\Big)\Bigg|
        \Bigg] \\
        &= \E_{S,S',\Rad_n}\varphi\Bigg[
            \sup_{g\in\mathcal{G}}\Bigg|\frac{1}{n}\sum_{i=1}^n \sigma_ig(\z_i) + \frac{1}{n}\sum_{i=1}^n (-\sigma_i)g(\z_i')\Bigg|
        \Bigg].
    \end{align*}
    \noindent Since the Rademacher variables $\sigma_i$ are symmetric, $\sigma_i$ and $-\sigma_i$ are identically distributed for all $i\in[n]$. Hence, we have:
    \begin{align*}
        &\E_S\varphi\left[
            \sup_{g\in\mathcal{G}}\left|
                \frac{1}{n}\sum_{i=1}^n g(\z_i) - \Theta(g)
            \right|
        \right] \\
        &\le \E_{S,S',\Rad_n}\varphi\Bigg[
            \sup_{g\in\mathcal{G}}\Bigg|\frac{1}{n}\sum_{i=1}^n \sigma_ig(\z_i) + \frac{1}{n}\sum_{i=1}^n \sigma_ig(\z_i')\Bigg|
        \Bigg] \qquad\textup{(}\sigma_i \textup{ are symmetric)} \\
        &\le \E_{S,S',\Rad_n}\varphi\Bigg[
            \sup_{g\in\mathcal{G}}\Bigg|\frac{1}{n}\sum_{i=1}^n \sigma_ig(\z_i)\Bigg| + \sup_{g\in\mathcal{G}}\Bigg|\frac{1}{n}\sum_{i=1}^n \sigma_ig(\z_i')\Bigg|
        \Bigg] \\
        &\le \frac{1}{2}\E_{S,\Rad_n}\varphi\Bigg[
            \sup_{g\in\mathcal{G}}\Bigg|\frac{2}{n}\sum_{i=1}^n \sigma_ig(\z_i)\Bigg|
        \Bigg] + \frac{1}{2}\E_{S',\Rad_n}\varphi\Bigg[
            \sup_{g\in\mathcal{G}}\Bigg|\frac{2}{n}\sum_{i=1}^n \sigma_ig(\z_i')\Bigg|
        \Bigg]\quad\textup{(Jensen's Ineq.)} \\
        &= \E_{S,\Rad_n}\varphi\Big[2\mathcal{R}_\mathcal{G}^{S, \Rad_n}\Big],
    \end{align*}
    \noindent as desired.
\end{proof}

\subsection{Sub-Gaussianity of Rademacher Complexity}
\begin{definition}[Sub-Gaussian Random Variable]
    Let $X$ be a random variable with mean $\E[X]=\mu$. $X$ is then called sub-Gaussian if there exists $\xi>0$ such that:
    \begin{equation}
        M_X(t) \le \exp\left(t\mu + \frac{t^2\xi^2}{2}\right), \quad \forall t > 0,
    \end{equation}
    \noindent where $M_X$ denotes the moment generating function of $X$. We call $X$ a sub-Gaussian random variable with variance proxy $\xi^2$, denoted as $X\in\mathcal{SG}(\xi^2)$.
\end{definition}

\begin{lemma}[\cite{book:concentration_inequalities}, Theorem 2.1]
    \label{lem:property_of_subgaussian_variable}
    Let $X$ be a random variable with mean $\E[X] = \mu$. If there exists $\xi>0$ such that the following holds:
    \begin{equation}
        \label{eq:bounded_tail_prob}
        \P(|X-\mu|\ge t) \le 2\exp\left(-\frac{t^2}{2\xi^2}\right),
    \end{equation}
    \noindent then the random variable $X$ is sub-Gaussian. Specifically, $X\in\mathcal{SG}(16\xi^2)$.
\end{lemma}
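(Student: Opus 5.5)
We would prove this by passing from the tail bound to moment bounds, and then from the moment bounds to a bound on the moment generating function via its Taylor series.

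Let $Y:=X-\mu$, so that $\E[Y]=0$ and $\P(|Y|\ge t)\le 2e^{-t^2/(2\xi^2)}$. It then suffices to show
\begin{align*}
\E e^{tY}\le \exp\left(8\xi^2t^2\right)=\exp\left(\tfrac{t^2(16\xi^2)}{2}\right),
\end{align*}
since $M_X(t)=e^{t\mu}\E e^{tY}$.

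\textbf{Step 1 (moments).} By the layer-cake formula, for every integer $p\ge1$,
\begin{align*}
\E|Y|^p=\int_0^\infty p s^{p-1}\P(|Y|\ge s)\,ds\le 2p\int_0^\infty s^{p-1}e^{-s^2/(2\xi^2)}ds .
\end{align*}
The substitution $u=s^2/(2\xi^2)$ turns the right-hand side into
\begin{align*}
p(2\xi^2)^{p/2}\Gamma(p/2).
\end{align*}

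\textbf{Step 2 (Taylor expansion).} Expanding $e^{tY}$ and using $\E[Y]=0$ to kill the linear term gives
\begin{align*}
\E e^{tY}\le 1+\sum_{p\ge2}\frac{|t|^p\E|Y|^p}{p!}.
\end{align*}

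\emph{Odd terms.} We would remove them with the pointwise inequality $|a|^{2q+1}\le \tfrac12(|a|^{2q}+|a|^{2q+2})$ applied to $a=tY$, together with $(2q+1)!\ge(2q)!$ and $(2q+1)!\ge (2q+2)!/(2q+2)$. This bounds every odd term by a constant multiple of the neighbouring even terms, so only even orders $p=2q$ remain.

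\emph{Even terms.} For $p=2q$, Step 1 gives $\E|Y|^{2q}\le 2q(2\xi^2)^q(q-1)!=2(2\xi^2)^q q!$. Combined with $(2q)!\ge 2^q(q!)^2$ (since $\binom{2q}{q}\ge 2^q$), the $q$-th even term is at most
\begin{align*}
\frac{2(2\xi^2t^2)^q q!}{(2q)!}\le \frac{2(\xi^2t^2)^q}{q!}.
\end{align*}

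\textbf{Step 3 (resummation).} After the odd terms are absorbed, we expect
\begin{align*}
\E e^{tY}\le 1+\sum_{q\ge1}\frac{(c\,\xi^2t^2)^q}{q!}
\end{align*}
with an absolute constant $c$. Using $1+\sum_{q\ge1}x^q/q!=e^{x}$, this yields $\E e^{tY}\le e^{c\xi^2t^2}$. It then remains to check that $c\le 8$, which gives membership in $\mathcal{SG}(16\xi^2)$.

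\textbf{Main obstacle.} The main difficulty is this constant bookkeeping: the factor $2$ in the tail bound and the absorption of the odd terms both inflate $c$.

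If the direct count does not land at $8$, we would fall back on a cruder route that trades the Taylor argument for the constant. Instead of the full series, we would bound $\E e^{\lambda Y^2}$ via the layer-cake formula for $\lambda<1/(2\xi^2)$, and then use
\begin{align*}
e^{tY}\le tY+e^{t^2Y^2}\quad\text{(valid for all real }tY),
\end{align*}
which gives $\E e^{tY}\le \E e^{t^2Y^2}$. This handles $|t|$ small, i.e. $t^2\lesssim 1/\xi^2$. For large $|t|$ we would use
\begin{align*}
tY\le \tfrac{t^2}{2\lambda'}+\tfrac{\lambda'Y^2}{2},
\end{align*}
which again reduces to a bound on $\E e^{\lambda' Y^2/2}$. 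The constant $16$ leaves enough slack for either route.
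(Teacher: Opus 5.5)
Your argument is correct and reaches the stated constant, but by a different route from the paper. The step you left open needs one correction. Since $(2q+1)!=(2q+2)!/(2q+2)$ exactly, the interpolation $|a|^{2q+1}\le\tfrac12(|a|^{2q}+|a|^{2q+2})$ charges the odd term of order $2q+1$ with $(q+1)$ times the even term of order $2q+2$. So the multiple is not a constant, as you claimed.

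This is harmless. Write $x=\xi^2t^2$ and use your even bound $2x^q/q!$:
\begin{align*}
\E e^{tY}\le 1+\sum_{q\ge1}\frac{2x^q}{q!}+\sum_{q\ge1}\left[\frac{x^q}{(2q+1)\,q!}+\frac{2x^{q+1}}{q!}\right]\le 1+\left(\tfrac73+2x\right)\left(e^x-1\right).
\end{align*}
The coefficient of $x^m/m!$ on the right is $\tfrac73$ for $m=1$ and $\tfrac73+2m$ for $m\ge2$. Both are at most $8^m$, and in fact at most $3^m$. Hence $\E e^{tY}\le e^{8x}$, i.e. $X\in\mathcal{SG}(16\xi^2)$, and your count actually gives $X\in\mathcal{SG}(6\xi^2)$.

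Your fallback also closes:
\begin{itemize}
\item For $t^2\le 1/(4\xi^2)$, the layer-cake bound $\E e^{\lambda Y^2}\le 1+\frac{4\lambda\xi^2}{1-2\lambda\xi^2}$ at $\lambda=t^2$ gives $1+8\xi^2t^2\le e^{8\xi^2t^2}$.
\item For larger $t$, the choice $\lambda'=1/(2\xi^2)$ gives $\E e^{tY}\le 3e^{\xi^2t^2}\le e^{8\xi^2t^2}$, since $7\xi^2t^2>7/4>\ln 3$.
\end{itemize}

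The paper avoids odd moments altogether by symmetrizing. It takes an independent copy $\tilde Z$ of $Z=X-\mu$, so all odd moments of $Z-\tilde Z$ vanish. By convexity it bounds $\E(Z-\tilde Z)^{2q}\le 2^{2q}\E Z^{2q}\le 2^{2q}q!(4\xi^2)^q$. It then recovers $\E e^{\lambda Z}$ from $\E e^{\lambda(Z-\tilde Z)}=\E e^{\lambda Z}\,\E e^{-\lambda\tilde Z}$, using $\E e^{-\lambda\tilde Z}\ge 1$ by Jensen. The estimate $(2q)!/q!\ge 2^q q!$ then gives exactly $e^{8\lambda^2\xi^2}$.

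That trick handles the odd terms with no bookkeeping, at the price of the factor $2^{2q}$ on the even moments, which is exactly where the $16$ comes from. Your direct expansion needs no auxiliary copy and gives a better constant, but it requires the interpolation count above. Your fallback is the standard $\E e^{\lambda Y^2}$ argument, and with the thresholds above it gives exactly the constant $16$.
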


\begin{proof}
    Let $Z = X-\mu$ be the centered random variable derived by translating $X$ by its mean. Firstly, we prove that Eqn. \eqref{eq:bounded_tail_prob} implies that $\E|Z|^{2q} \le q!(4\xi^2)^q$ for all integers $q\ge1$. Using the identity $\E|Z|^q = \int_0^\infty qt^{q-1}\P(|Z|\ge t)dt$, we have:
    \begin{align*}
        \E|Z|^{2q} &= 2q\int_0^\infty t^{2q-1}\P(|Z|\ge t)dt \\
            &\le 4q\int_0^\infty t^{2q-1}\exp\left(-\frac{t^2}{2\xi^2}\right)dt.
    \end{align*}
    \noindent Letting $u=\frac{t^2}{2\xi^2}$, hence $t^2 = 2u\xi^2$ and $dt = \frac{\xi^2du}{t}$, the above integral becomes:
    \begin{align*}
        \E|Z|^{2q} &\le 4q\xi^2\int_0^\infty t^{2q-2}e^{-u}du \\
            &= 4q\xi^2\int_{0}^\infty (2u\xi^2)^{q-1}e^{-u}du \\
            &= 2q\cdot(2\xi^2)^q \underbrace{\int_0^\infty u^{q-1}e^{-u}du}_{\Gamma(q)} \\
            &= 2q!(2\xi^2)^q \le q!(4\xi^2)^q.
    \end{align*}
    \noindent Let $\tilde Z$ be the i.i.d. copy of $Z$. Hence, $Z-\tilde Z$ is symmetric about $0$, which means that $\E[(Z-\tilde Z)^p] = 0$ for odd-order $p$-moments. Therefore, For all $\lambda>0$, we have:
    \begin{align*}
        M_Z(\lambda)M_{-\tilde Z}(\lambda) &= M_{Z-\tilde Z}(\lambda) \quad (\text{Due to independence})\\
            &= \E\exp\left(\lambda(Z - \tilde Z)\right) \\
            &= 1 + \sum_{q=1}^\infty \frac{\lambda^{2q}\E\left[(Z - \tilde Z)^{2q}\right]}{(2q)!}.
    \end{align*}
    \noindent By the convexity of $f(z) = z^{2q}$, for all $t\in(0,1)$, we have:
    \begin{align*}
        \left[tZ + (1-t)(-\tilde Z)\right]^{2q} \le tZ^{2q} + (1-t)\tilde Z^{2q}.
    \end{align*}
    \noindent Setting $t=\frac{1}{2}$, we have:
    \begin{align*}
        \left[\frac{Z - \tilde Z}{2}\right]^{2q} \le \frac{Z^{2q} + \tilde Z^{2q}}{2} \implies (Z - \tilde Z)^{2q} \le 2^{2q-1}(Z^{2q} + \tilde Z^{2q}).
    \end{align*}
    \noindent As a result, we have $\E[(Z-\tilde Z)^{2q}] \le 2^{2q-1}(\E[Z^{2q}] + \E[\tilde Z^{2q}]) = 2^{2q}\E[Z^{2q}]$. Plugging this back to the formula of $M_{Z-\tilde Z}(\lambda)$, we have:
    \begin{align*}
        \E[e^{\lambda Z}]\E[e^{-\lambda \tilde Z}] &\le 1 + \sum_{q=1}^\infty \frac{\lambda^{2q}2^{2q}\E[Z^{2q}]}{(2q)!} \\
        &\le 1 + \sum_{q=1}^\infty \frac{\lambda^{2q}2^{2q}(4\xi^2)^{q}q!}{(2q)!}.
    \end{align*}
    \noindent Since $\E[e^{-\lambda \tilde Z}]\ge 1$ for all $\lambda>0$ and 
    \begin{align*}
        \frac{(2q)!}{q!} &= \prod_{j=1}^q (q+j) \ge \prod_{j=1}^q (2j) = 2^qq!,
    \end{align*}
    \noindent We have:
    \begin{align*}
        \E[e^{\lambda Z}] &\le 1 + \sum_{q=1}^\infty \frac{(2\lambda^2\cdot 4\xi^2)^q}{q!} = 1 + \sum_{q=1}^\infty \frac{(8\lambda^2\xi^2)^q}{q!} = e^{8\lambda^2\xi^2}.
    \end{align*}
    \noindent As a result, we have:
    \begin{align*}
        M_X(t) &= e^{\lambda\mu}\mathbb{E}[e^{\lambda Z}] \le e^{\lambda\mu + 8\lambda^2\xi^2}.
    \end{align*}
    \noindent Hence, by definition, we have $X\in\mathcal{SG}(16\xi^2)$ as desired.
\end{proof}

\begin{lemma}
    \label{lem:subgaussianity_of_rad_complexity}
    Let $\F$ be a class of bounded functions $f:\X\to[0,\mathcal{B}]$ and let $S=\{\x_1, \dots, \x_n\}$ be sampled i.i.d. from a given distribution $\mathcal{P}$. Let $\Rad_n=\set{\sigma_j}_{j=1}^n$ be a sequence of independent Rademacher variables and define the following random variable:
    \begin{equation}
        \mathcal{R}_{\F}^{S, \Rad_n} = \sup_{f\in\F}\left|\frac{1}{n}\sum_{j=1}^n \sigma_j f(\x_j)\right|,
    \end{equation}
    \noindent which is a function of both $S$ and $\Rad_n$. Then, we have:
    \begin{align*}
        \E_{S, \Rad_n}\exp\Big(t\mathcal{R}_{\F}^{S, \Rad_n}\Big) \le \exp\left(
            t\mathfrak{R}_n(\F) + \frac{16t^2\mathcal{B}^2}{2n}
        \right), \quad \forall t>0,
    \end{align*}
    \noindent where $\mathfrak{R}_n(\F)$ is the expected Rademacher complexity, defined as $\mathfrak{R}_n(\F) = \E_{S, \Rad_n}\left[\mathcal{R}_{\F}^{S, \Rad_n}\right]$.
\end{lemma}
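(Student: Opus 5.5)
The bound follows from a bounded-differences argument for the supremum, combined with a symmetrization estimate for its mean and a sub-Gaussian moment generating function bound. Write $Z:=\mathcal{R}_{\F}^{S,\Rad_n}$ and view it as a function of the $2n$ independent variables $(\x_1,\sigma_1),\dots,(\x_n,\sigma_n)$. Equivalently, treat $Z$ as a function of the $n$ independent pairs $\z_j=(\x_j,\sigma_j)$. I will check the bounded-difference property and then derive a sub-Gaussian tail for $Z-\E Z$. From that tail I will obtain the MGF bound, either through Lemma \ref{lem:property_of_subgaussian_variable} or directly through the MGF form of McDiarmid's inequality.

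\textbf{Step 1 (bounded differences).} Replace one pair $(\x_j,\sigma_j)$ by $(\x_j',\sigma_j')$. For every $f$, the inner quantity $\frac1n\sum_i\sigma_i f(\x_i)$ changes by at most $\frac1n|\sigma_jf(\x_j)-\sigma_j'f(\x_j')|\le \frac{2\mathcal{B}}{n}$, since $f\in[0,\mathcal{B}]$ and $\sigma\in\{\pm1\}$. Because $\big|\sup_f|a_f|-\sup_f|b_f|\big|\le\sup_f|a_f-b_f|$, the supremum $Z$ also changes by at most $c_j=2\mathcal{B}/n$.

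\textbf{Step 2 (tail bound).} McDiarmid's inequality gives
\begin{align*}
\P(|Z-\E Z|\ge t)\le 2\exp\left(-\frac{2t^2}{\sum_j c_j^2}\right)=2\exp\left(-\frac{nt^2}{2\mathcal{B}^2}\right).
\end{align*}
This is the hypothesis \eqref{eq:bounded_tail_prob} with $\xi^2=\mathcal{B}^2/n$.

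\textbf{Step 3 (MGF).} Lemma \ref{lem:property_of_subgaussian_variable} then gives $Z\in\mathcal{SG}(16\mathcal{B}^2/n)$, that is,
\begin{align*}
\E e^{tZ}\le\exp\left(t\E Z+\frac{16t^2\mathcal{B}^2}{2n}\right),
\end{align*}
and $\E Z=\mathfrak{R}_n(\F)$ by definition. The same conclusion follows more directly from the Hoeffding-lemma form of McDiarmid's inequality, $\E e^{t(Z-\E Z)}\le e^{t^2\sum_j c_j^2/8}=e^{t^2\mathcal{B}^2/(2n)}$, which is stronger than the stated bound.

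The main point needing care is Step 1: the differences must be taken jointly over $(\x_j,\sigma_j)$, so the bound is $2\mathcal{B}/n$ and not $\mathcal{B}/n$. In every other respect the argument is a direct chain of the cited lemmas, and I would present it via Lemma \ref{lem:property_of_subgaussian_variable} so that it matches the constant $16$ in the statement.
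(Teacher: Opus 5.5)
Your argument is correct and follows the same route as the paper: bounded differences, then McDiarmid, then Lemma~\ref{lem:property_of_subgaussian_variable}. The only difference is how you group the independent coordinates, and it affects the constant.

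\begin{itemize}
\item The paper applies McDiarmid to the $2n$ separate variables $\x_1,\dots,\x_n,\sigma_1,\dots,\sigma_n$, each with difference $2\mathcal{B}/n$. This gives $\sum_j c_j^2=8\mathcal{B}^2/n$ and $\xi^2=2\mathcal{B}^2/n$.
\item Its proof therefore only concludes $\exp\bigl(t\mathfrak{R}_n(\F)+16t^2\mathcal{B}^2/n\bigr)$, which is twice the quadratic term displayed in the statement.
\item Your grouping into the $n$ independent pairs $(\x_j,\sigma_j)$ halves $\sum_j c_j^2$ to $4\mathcal{B}^2/n$. This is exactly what recovers the stated $16t^2\mathcal{B}^2/(2n)$, and that constant is the one used later in Proposition~\ref{prop:supporting_proposition_2}.
\end{itemize}

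Your side remark is also correct. The MGF form of McDiarmid gives $\E e^{t(Z-\E Z)}\le e^{t^2\mathcal{B}^2/(2n)}$ directly. This is sixteen times sharper and bypasses Lemma~\ref{lem:property_of_subgaussian_variable} entirely.
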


\begin{proof}
    Let $S_i$ be the copy of $S$ with the $i^{th}$ element replaced with $\x_i'$, an i.i.d. copy of $\x_i$. Similarly, let $\Rad_n^{(l)}$ be the copy of $\Rad_n$ where the $l^{th}$ element is replaced with $\sigma_l'$, an i.i.d. copy of $\sigma_l$. Then, we have the following bounded-difference properties:
    \begin{align*}
        \left|\mathcal{R}_\F^{S, \Rad_n} - \mathcal{R}_\F^{S_i, \Rad_n}\right| &\le \sup_{f\in\F}\left|
            \frac{1}{n}\sigma_i (f(\x_i) - f(\x_i'))
        \right| \le \frac{2\mathcal{B}}{n},\\
        \left|\mathcal{R}_\F^{S, \Rad_n} - \mathcal{R}_\F^{S, \Rad_n^{(l)}}\right| &\le \sup_{f\in\F}\left|
            \frac{1}{n}f(\x_l)(\sigma_l - \sigma_l')
        \right| \le \frac{2\mathcal{B}}{n}.
    \end{align*}
    \noindent Hence, by McDiarmid's inequality, we have:
    \begin{align*}
        \P\left(
            \Big|\mathcal{R}_{\F}^{S, \Rad_n} - \mathfrak{R}_n(\F)\Big| \ge t
        \right) \le 2\exp\left(
            -\frac{nt^2}{4\mathcal{B}^2}
        \right), \quad \forall t>0.
    \end{align*}
    \noindent Let $\xi^2 = \frac{2\mathcal{B}^2}{n}$. Then, by lemma \ref{lem:property_of_subgaussian_variable}, the random variable $\mathcal{R}_\F^{S, \Rad_n}$ is sub-Gaussian with variance proxy of $16\xi^2 = \frac{32\mathcal{B}^2}{n}$. Hence, we have:
    \begin{align*}
        \E_{S, \Rad_n}\exp\left(t \mathcal{R}_\F^{S, \Rad_n}\right) \le \exp\left(
        t\mathfrak{R}_n(\F) + \frac{16t^2\mathcal{B}^2}{n}
        \right), \quad \forall t>0,
    \end{align*}
    \noindent as desired.
\end{proof}

\end{document}